\DeclareFontFamily{U}{mathx}{\hyphenchar\font45}
\DeclareFontShape{U}{mathx}{m}{n}{
      <5> <6> <7> <8> <9> <10>
      <10.95> <12> <14.4> <17.28> <20.74> <24.88>
      mathx10
      }{}
\DeclareSymbolFont{mathx}{U}{mathx}{m}{n}
\DeclareMathAccent{\widecheck}{0}{mathx}{"71}
\newtheorem{definition}{Definition}
\newtheorem{assumption}{Assumption}
\newtheorem{proposition}{Proposition}
\newtheorem{theorem}{Theorem}
\newtheorem{lemma}{Lemma}
\newtheorem{corollary}{Corollary}
\newtheoremstyle{myrem}%
{2pt}
{2pt}
{}
{}
{\bfseries}
{.}
{.5em}
{}%
\theoremstyle{myrem}
\newcommand{\N}{\mathbb{N}}
\newcommand{\R}{\mathbb{R}}
\newcommand{\E}{\mathbb{E}}
\newcommand{\V}{\mathbb{V}}
\newcommand{\cE}{\mathcal E}
\newcommand{\cK}{\mathcal K}
\newcommand{\cO}{\mathcal O}
\newcommand{\cI}{\mathcal I}
\renewcommand{\P}{\mathbb{P}}
\newcommand{\Proba}{\mathbb{P}}
\newcommand{\hg}{\widehat{g}}
\newcommand{\thetat}{\theta^{(t)}}
\newcommand{\ind}[1]{\mathbf 1_{#1}}
\newcommand{\setint}[1]{\llbracket #1 \rrbracket}
\DeclareMathOperator*{\argmin}{argmin}
\DeclareMathOperator{\proj}{proj}
\DeclareMathOperator{\gmed}{GMed}
\newcommand{\diag}{\mathrm{diag}}
\newcommand{\sign}{\mathrm{sign}}
\DeclareMathOperator{\median}{median}
\newcommand{\bX}{\boldsymbol X}
\newcommand{\cX}{\mathcal X}
\newcommand{\cY}{\mathcal Y}
\newcommand{\wh}{\widehat}
\newcommand{\grad}{\nabla}
\title{Robust supervised learning with coordinate gradient descent}
\author{
St\'ephane Ga\"iffas%
\thanks{LPSM, UMR 8001, Universit\'e Paris Diderot, Paris, France and DMA, École normale supérieure}
\and
Ibrahim Merad%
\thanks{LPSM, UMR 8001, Universit\'e Paris Diderot, Paris, France}\\
}
\begin{document}

\maketitle

\begin{abstract}
    This paper considers the problem of supervised learning with linear methods when both features and labels can be corrupted, either in the form of heavy tailed data and/or corrupted rows. 
    We introduce a combination of coordinate gradient descent as a learning algorithm together with robust estimators of the partial derivatives.
    This leads to robust statistical learning methods that have a numerical complexity \emph{nearly identical} to non-robust ones based on empirical risk minimization.
    The main idea is simple: while robust learning with gradient descent requires the computational cost of robustly estimating the whole gradient to update all parameters, a parameter can be updated immediately using a robust estimator of a single partial derivative in coordinate gradient descent.
    We prove upper bounds on the generalization error of the algorithms derived from this idea, that control both the optimization and statistical errors with and without a strong convexity assumption of the risk.
    Finally, we propose an efficient implementation of this approach in a new \texttt{Python} library called \texttt{linlearn}, and demonstrate through extensive numerical experiments that our approach introduces a new interesting compromise between robustness, statistical performance and numerical efficiency for this problem.

    \medskip
    \noindent
    \emph{Keywords.} Robust methods; Heavy-tailed data; Outliers; Robust gradient descent; Coordinate gradient descent; Generalization error.
\end{abstract}

\section{Introduction}
\label{sec:intro}

Outliers and heavy tailed data are a fundamental problem in supervised learning.
As explained by~\cite{hawkins1980identification}, an outlier is a sample that differs from the data's ``global picture''.
A rule-of-thumb is that a typical dataset may contain between 1\% and 10\% of outliers~\citep{hampel2011robust}, or even more than that depending on the considered application. 
For instance, the inherently complex and random nature of users' web browsing makes web-marketing datasets contain a significant proportion of outliers and have heavy-tailed distributions~\citep{Gupta2016}.
Statistical handling of outliers was already considered in the early 50's~\citep{dixon1950analysis, grubbs1969procedures} and motivated in the 70's the development of \emph{robust statistics}~\citep{huber19721972, huber1981wiley}.

\paragraph{Setting.}

In this paper, we consider the problem of large-scale supervised learning, where we observe possibly corrupted samples $(X_i, Y_i)_{i=1}^n$ of a random variable $(X, Y) \in \cX \times \cY$ with distribution $P$, where $\cX \subset \R^d $ is the feature space and $\cY \subset \R$ is the set of label values.
We focus on linear methods, where the learning task corresponds to finding an approximation of an optimal parameter
\begin{equation}
    \label{eq:truerisk}
    \theta^\star \in \argmin_{\theta \in \Theta} R(\theta) \quad \text{where} \quad R(\theta):= \E \big[ \ell (X^\top \theta, Y) \big],
\end{equation}
where $\Theta$ is a convex compact subset of $\R^d$ with diameter $\Delta$ containing the origin and $\ell: \R \times \cY \rightarrow \R_+$ is a loss function satisfying the following. We denote $\ell'(z, y):= \partial \ell(z, y) / \partial z$.
\begin{assumption}
\label{assump:lipsmoothloss}
The loss $z \mapsto \ell(z, y)$ is convex for any $y \in \mathcal Y,$ differentiable and $\gamma$-smooth in the sense that $|\ell'(z, y) - \ell'(z', y)| \leq \gamma |z - z'|$
for all $z, z' \in \R$ and $y \in \cY$. Moreover\textup, there exist $q\in[1, 2]$, which we will call the asymptotic polynomial degree, and positive constants $C_{\ell, 1}, C_{\ell, 2}, C_{\ell, 1}'$ and $C_{\ell, 2}'$ such that 
\begin{equation*}
    |\ell(z, y)| \leq C_{\ell, 1} + C_{\ell, 2}|z-y|^{q} \quad \text{and} \quad |\ell'(z, y)| \leq C_{\ell, 1}' + C_{\ell, 2}'|z-y|^{q-1}
\end{equation*}
for all $z\in\R$ and $y\in\cY$.
\end{assumption}
Note that Assumption~\ref{assump:lipsmoothloss} holds for the majority of loss functions used both for regression and classification, such as the square loss $\ell(z, y) = (z - y)^2 / 2$ with $q=2$ or the Huber loss~\citep{huber1964robust} $\ell(z, y) = r_\tau(z - y)$ for $z, y \in \R$ with $\gamma = 1$ and $q=1$, where $r_\tau(u) = \frac 12 u^2 \ind{|u| \leq \tau} + \tau (|u| - \frac 12 \tau) \ind{|u| > \tau}$ with $\tau > 0$ and the logistic loss $\ell(z, y) = \log(1 + e^{-yz})$ for $z \in \R$ and $y \in \{-1, 1\}$ with $\gamma = 1/4$ and $q=1$. We will see shortly that a smaller degree $q$ associated to the loss entails looser requirements on the data distribution.
If $P$ were known, one could approximate $\theta^\star$ using a first-order optimization algorithm such as \emph{gradient descent} (GD), using iterations of the form
\begin{equation}
    \label{eq:gd-known-P}
    \theta_{t+1} \gets \theta_{t} - \eta \grad R(\theta_t) \quad \text{ with } \quad 
    \grad R(\theta) = \E[\ell'(X^\top \theta, Y) X]
\end{equation}
for $t=1, 2, \ldots$ where $\eta > 0$ is a learning rate.

\paragraph{Empirical risk minimization.}

With $P$ unknown, most supervised learning algorithms rely on \emph{empirical risk minimization} (ERM)~\citep{vapnik2013nature, geer2000empirical}, which requires 
(a)~the fact that samples are independent and with the same distribution $P$ and~(b) that $P$ has sub-Gaussian tails, as explained below.
Such assumptions are hardly ever met in practice, and entail implicitly that, for real-world applications, the construction of a training dataset requires involved data preparation, such as outlier detection and removal, data normalization and other issues related to feature engineering~\citep{zheng2018feature, kuhn2019feature}.
An \emph{implicit}\footnote{By \emph{implicit}, we mean defined as the $\argmin$ of some functional, as opposed to the \emph{explicit} iterations of an optimization algorithm: an implicit estimator differs from the exact algorithm applied on the data, while an \emph{explicit} algorithm does not.}
ERM estimator of $\theta^\star$ is a minimizer of the empirical risk $R_n$ given by
\begin{equation}
\label{empricialgrad}
    \wh \theta_n^{\mathtt{erm}} \in \argmin_{\theta \in \Theta} R_n(\theta) \quad \text{where} \quad R_n(\theta):= \frac 1n \sum_{i=1}^n \ell (X_i^\top \theta, Y_i),
\end{equation}
for which one can prove sub-Gaussian deviation bounds under strong hypotheses such as boundedness of $\ell$ or sub-Gaussian concentration~\citep{massart2006risk, lecue2013learning}. 
In the general case, ERM leads to poor estimations of $\theta^\star$ whenever~(a) and/or~(b) are not met, corresponding to situations where (a)~the dataset contains outliers and~(b) the data distribution has heavy tails.
This fact motivated the theory of robust statistics~\citep{huber1964robust, huber2004robust, hampel1971, hampel2011robust, tukey1960}. 
The poor performance of ERM stems from the loose deviation bounds of the empirical mean estimator.
Indeed, as explained by~\cite{catoni2012challenging} for the estimation of the expectation of a real random variable, the Chebyshev inequality provably provides the best concentration bound for the empirical mean estimator in the general case, so that the error is $\Omega(1/\sqrt{n\delta})$ for a confidence $1 - \delta$.
Gradient Descent (GD) combined with ERM leads to an \emph{explicit} algorithm using iterations~\eqref{eq:gd-known-P} with gradients estimated by an average over the samples
\begin{equation}
    \label{eq:gradient-erm}
    \wh \grad^{\mathtt{erm}} R(\theta):= \nabla R_n(\theta) = \frac 1n \sum_{i=1}^n \ell'(X_i^\top \theta, Y_i) X_i,
\end{equation}
which is, as explained above, a poor estimator of $\grad R(\theta)$ beyond (a) and (b).

\paragraph{Robust gradient descent.}

A growing literature about robust GD estimators~\citep{HeavyTails, DBLP:journals/corr/abs-1901-08237, holland2019robust, geoffrey2020robust} suggests to perform GD iterations with $\wh \grad^{\mathtt{erm}} R(\theta)$ replaced by some robust estimator of $\grad R(\theta)$.
An implicit estimator is considered by~\cite{lecue2020robust1}, based on the minimization of a robust estimate of the risk objective using median-of-means.
Robust estimators of $\grad R(\theta)$ can be built using several approaches including geometric median-of-means~\citep{HeavyTails}; robust coordinate-wise estimators~\citep{pmlr-v97-holland19a} based on a modification of~\cite{catoni2012challenging}; coordinate-wise median-of-means or trimmed means~\citep{DBLP:journals/corr/abs-1901-08237} or robust vector means through projection and truncation~\citep{HeavyTails}.
Other works achieve robustness by performing standard training on disjoint subsets of data and aggregating the resulting estimators into a robust one~\citep{minsker2015geometric, brownlees2015empirical}. 
We discuss such alternative methods in more details in Section~\ref{sec:relatedworks} below.

These procedures based on GD require to run \emph{costly} subroutines (at the exception of~\cite{lecue2020robust1, geoffrey2020robust}) that induce a considerable computational overhead compared to the non-robust approach based on ERM.
The aim of this paper is to introduce \emph{robust} and \emph{explicit} learning algorithms, with performance guarantees under weak assumptions on $(X_i, Y_i)_{i=1}^n$, that have a computational cost \emph{comparable} to the non-robust ERM approach.
As explained in Section~\ref{sec:robustcgd} below, the main idea is to combine \emph{coordinate gradient descent} with robust estimators of the partial derivatives $\partial R(\theta) / \partial \theta_j$, that are scalar (univariate) functionals of the unknown distribution $P$.

We denote $|A|$ as the cardinality of a finite set $A$ and use the notation $\setint k = \{ 1, \ldots, k\}$ for any integer $k \in \N \setminus \{ 0 \}$. 
We denote $x^{j}$ as the $j$-th coordinate of a vector $x$.
We will work under the following assumption.

\begin{assumption}
\label{assump:data}
The indices of the training samples $\setint n$ can be divided into two disjoint subsets $\setint n = \mathcal{I} \cup \mathcal{O}$ of \emph{outliers} $\mathcal O$ and \emph{inliers} $\mathcal I$ for which  
we assume the following\textup:~$(a)$ we have $|\mathcal I| > |\mathcal O|;$~$(b)$ the pairs $(X_i, Y_i)_{i \in \mathcal I}$ are i.i.d with distribution $P$ and the outliers $(X_i, Y_i)_{i \in \mathcal O}$ are arbitrary\textup;~$(c)$ there is $\alpha \in (0, 1]$ such that
\begin{equation}
    \label{eq:ass-X-moments}
    \E\big[ |X^j|^{\max(2, q(\alpha + 1))} \big] < +\infty, \quad \E\big[|Y^{q-1} X^j|^{1 + \alpha}\big] < +\infty \quad
    \text{and} \quad \E\big[|Y|^q\big] < +\infty
\end{equation}
for any $j \in \setint d$ where $q \in [1, 2]$ is the loss' asymptotic polynomial degree from Assumption~\ref{assump:lipsmoothloss}.
\end{assumption}

Assumption~\ref{assump:data} is purposely vague about $|\mathcal{I}|$ and $|\mathcal{O}|$ and the value of $\alpha \in (0, 1]$.
Indeed, conditions on $|\mathcal{O}|$ and $\alpha$ will depend on the considered robust estimator of the partial derivatives, as explained in Section~\ref{sec:robust-estimators} below, including theoretical guarantees with $\alpha < 1$ and cases with $\E [Y^2] = +\infty$ (for the Huber loss for instance). The existence of a second moment for $X$ is indispensable for the objective $R(\theta)$ to be Lipschitz-smooth, see Section~\ref{sub:theory-strong-cvx} below.

\medskip
\noindent
\emph{Square loss.}
For the square loss we have $q=2$ and $\E[Y^2] < +\infty$ is required for the risk $R(\theta)$ and its partial derivatives to be well-defined. Note that we have $\E[ | \ell'(X^\top \theta, Y) X^j |^{1+\alpha} ] = \E[ |Y X^j |^{1+\alpha} ]$ for $\theta = 0 \in \Theta$, which makes~\eqref{eq:ass-X-moments} somewhat minimal in order to ensure the existence of the moment we need for the loss derivative for all $\theta \in \Theta$. 

\medskip
\noindent
\emph{Huber loss.}
For the Huber loss, we have $q=1$ and the only requirement on $Y$ is $\E|Y| < +\infty$ and we have $\max(2, q(\alpha + 1)) = 2$ ensuring that $\E[|X^j|^2] < +\infty$, a requirement for the Lipschitz-smoothness of $R(\theta)$, as detailed in Section~\ref{sub:theory-strong-cvx}.

\medskip
\noindent
\emph{Logistic loss.} For the logistic loss we have $|Y| \leq 1$ and $q=1$ so that the only assumption is once again $\E[|X^j|^2] < +\infty$.

\paragraph{Main contributions.} 

We believe that this paper introduces a new interesting compromise between robustness, statistical performance and numerical efficiency for supervised learning with linear methods through the following main contributions:
\begin{itemize}
    \item We introduce a new approach for robust supervised learning with linear methods by combining coordinate gradient descent (CGD) with robust estimators of the partial derivatives used in its iterations (Section~\ref{sec:robustcgd}). 
    We explain that this simple idea turns out to be very effective experimentally (Section~\ref{sec:experiments}), and amenable to an in-depth theoretical analysis (see Section~\ref{sub:theory-strong-cvx} for guarantees under strong convexity and Section~\ref{sec:theory-not-strongly-cvx} without it).
    \item We consider several estimators of the partial derivatives using state-of-the-art robust estimators (Section~\ref{sec:robust-estimators}) and provide theoretical guarantees for CGD combined with each of them. For some robust estimators, our analysis requires only weak moments (allowing $\E[Y^2] = +\infty$ in some cases) together with strong corruption (large $|\mathcal O|$). 
    We provide guarantees for several variants of CGD namely random uniform sampling, importance sampling and deterministic sampling of the coordinates (Section~\ref{sub:theory-strong-cvx}).
    \item We perform extensive numerical experiments, both for regression and classification on several datasets (Section~\ref{sec:experiments}). 
    We compare many combinations of gradient descent, coordinate gradient descent and robust estimators of the gradients and partial derivatives. 
    Some of these combinations correspond to state-of-the-art algorithms~\citep{lecue2020robust2, pmlr-v97-holland19a, HeavyTails}, and we consider also several supplementary baselines such as Huber regression~\citep{owen2007}, classification with the modified Huber loss~\citep{10.1145/1015330.1015332}, Least Absolute Deviation (LAD)~\citep{10.2307/23036355} and RANSAC~\citep{10.1145/358669.358692}.
    Our experiments provide comparisons of the statistical performances and numerical complexities involved in each algorithm, leading to an in-depth comparison of state-of-the-art robust methods for supervised linear learning.
    \item All the algorithms studied and compared in the paper are made easily accessible in a few lines of code through a new \texttt{Python} library called \texttt{linlearn}, open-sourced under the BSD-3 License on \texttt{GitHub} and available here\footnote{\url{https://github.com/linlearn/linlearn}}.
    This library follows the API conventions of \texttt{scikit-learn}~\citep{pedregosa2011scikit-learn}.
\end{itemize}

\section{Robust coordinate gradient descent}
\label{sec:robustcgd}

CGD is well-known for its efficiency and fast convergence properties based on both theoretical and practical studies~\citep{nesterov2012efficiency, Shevade2003ASA, genkin2007large, wuLange} and is the de-facto standard optimization algorithm used in many machine learning libraries.
In this paper, we suggest to use CGD with robust estimators $\wh g_j(\theta)$ of the partial derivatives $g_j(\theta):= \partial R(\theta) / \partial \theta_j \in \R$ of the true risk given by Equation~\eqref{eq:truerisk}, 
several robust estimators $\wh g_j(\theta)$ are described in Section~\ref{sec:robust-estimators} below.

\subsection{Iterations} 
\label{sub:coordinate_gradient_descent}

At iteration $t+1$, given the current iterate $\theta^{(t)}$, CGD proceeds as follows. 
It chooses a coordinate $j_t \in \setint d$ (several sampling mechanisms are possible, as explained below) and the parameter is updated using
\begin{equation}
    \label{eq:iter-strgly-cvx}
    \begin{cases}
    \theta_{j}^{(t+1)} \gets \theta_j^{(t)} - \beta_j \wh g_j(\theta^{(t)}) & \text{ if } j = j_t \\
    \theta_{j}^{(t+1)} \gets \theta_j^{(t)} & \text{ otherwise}
    \end{cases}
\end{equation}
for all $j \in \setint d$, where $\beta_j > 0$ is a step-size for coordinate $j$.
A \emph{single} coordinate is updated at each iteration of CGD, and we will designate $d$ iterations of CGD as a \emph{cycle}.
The CGD procedure is summarized in Algorithm~\ref{alg:robust-cgd} below, where we denote by $\bX \in \R^{n \times d}$ the features matrix with rows $X_1^\top, \ldots, X_n^\top$ and where $\bX_{\bullet}^j \in \R^n$ stands for its $j$-th column.
\begin{algorithm}[!ht]
    \caption{Robust coordinate gradient descent}
    \label{alg:robust-cgd}
    \begin{algorithmic}[1]
      \STATE \textbf{Inputs:} Learning rates $\beta_1, \ldots, \beta_d > 0$; estimators $(\wh g_j(\cdot))_{j=1}^d$ of the the partial derivatives; initial parameter $\theta^{(0)};$  distribution $p = [p_1 \cdots p_d]$ over $\setint d$ and number of iterations $T$.
      \STATE Compute $I^{(0)} \gets \bX \theta^{(0)}$
      \FOR{$t=0, \ldots, T-1$}
      \STATE Sample a coordinate $j_t \in \{1, \ldots, d\}$ with distribution $p$ independently of $j_1, \ldots, j_{t-1}$
      \STATE Compute $\wh g_{j_t}(\theta^{(t)})$ using $I^{(t)}$ and put $D^{(t)} \gets - \beta_{j_t} \wh g_{j_t}(\theta^{(t)})$
      \STATE Update the inner products using $I^{(t+1)} \gets I^{(t)} + \bX_{\bullet}^{j_t} D^{(t)}$ 
      \STATE Apply the update $\theta_{j_t}^{(t+1)} \gets \theta_{j_t}^{(t)} + D^{(t)}$
      \ENDFOR
      \RETURN {The last iterate $\theta^{(T)}$}
    \end{algorithmic}
\end{algorithm}

\noindent

A simple choice for the distribution $p$ is the uniform distribution over $\setint d$, but improved convergence rates can be achieved using importance sampling, as explained in Theorem~\ref{thm:linconv1expect} below, where the choice of the step-sizes $(\beta_j)_{j=1}^d$ is described as well.
The partial derivatives estimators $(\wh g_j(\cdot))_{j=1}^d$ described in Section~\ref{sec:robust-estimators} will determine the statistical error of this explicit learning procedure.
Note that line~6 of Algorithm~\ref{alg:robust-cgd} uses the fact that
\begin{align*}
    I^{(t+1)} = \bX \theta^{(t+1)} &= \sum_{j \neq j_t} \bX_{\bullet}^j \theta_{j}^{(t+1)} + \bX_{\bullet}^{j_t} \theta_{j_t}^{(t+1)} \\ 
    &= \sum_{j \neq j_t} \bX_{\bullet}^j \theta_{j}^{(t)} + \bX_{\bullet}^{j_t} \big(\theta_{j_t}^{(t)} + D^{(t)}\big) = I^{(t)} + \bX_{\bullet}^{j_t} D^{(t)}.
\end{align*}
This computation has complexity $O(n)$, and we will see in Section~\ref{sec:robust-estimators} that the complexity of the considered robust estimators $\wh g_{j_t}(\theta^{(t)})$ at line~5 is also $O(n)$, so that the overall complexity of one iteration of robust CGD is also $O(n)$.
This makes the complexity of one cycle of robust CGD $O(n d)$, which corresponds to the complexity of \emph{one iteration of GD using the non-robust estimator} $\wh \grad^{\mathtt{erm}} R(\theta)$, see Equation~\eqref{eq:gradient-erm}.
A more precise study of these complexities is discussed in Section~\ref{sec:robust-estimators}, see in particular Table~\ref{table:1}.
Moreover, we will see experimentally in Section~\ref{sec:experiments} that our approach is indeed very competitive computationally in terms of the compromise between computations and statistical accuracy, compared to all the considered baselines.

\paragraph{Comparison with robust gradient descent.} 
\label{par:comparison_with_robust_gradient_descent}

Robust estimators of the expectation of a random vector (such as the geometric median by~\cite{minsker2015geometric}) require to solve a $d$-dimensional optimization problem at each iteration step while, in the univariate case, a robust estimator of the expectation can be obtained at a cost comparable to that of an ordinary empirical average. 
Of course, one can combine such univariate estimators into a full gradient: this is the approach considered for instance by~\cite{pmlr-v97-holland19a, holland2019robust, holland2019efficient, DBLP:journals/corr/abs-1901-08237, tu2021variance}, but this approach accumulates errors into the overall estimation of the gradient.
This paper introduces an alternative approach, where univariate estimators of the partial derivatives are used \emph{immediately} to update the current iterate.
We believe that this is the main benefit of using CGD in this context: even if our theoretical analysis hardly explains this, our understanding is that one iteration of CGD is impacted by the estimator error of a \emph{single} partial derivative, that can be corrected straight away in the next iteration, while one iteration of GD is impacted by the accumulated estimation errors of the $d$ partial derivatives, when using $d$ univariate estimators for efficiency, instead of a computationally involved $d$-dimensional estimator (such as geometric median).

\subsection{Theoretical guarantees under strong convexity}
\label{sub:theory-strong-cvx}

In this Section, we provide theoretical guarantees in the form of upper bounds on the risk $R(\theta^{(T)})$ (see Equation~\eqref{eq:truerisk}) for the output $\theta^{(T)}$ of Algorithm~\ref{alg:robust-cgd}.
These upper bounds are generic with respect to the considered robust estimators $\big(\wh g_j(\cdot)\big)_{j=1}^d$ and rely on the following definition.

\begin{definition}
    \label{def:error-vector}
    Let $\delta \in (0, 1)$ be a failure probability. 
    We say that a partial derivatives estimator $\widehat{g}$ has an \emph{error vector} $\epsilon (\delta) \in \R_+^d$ if it satisfies
    \begin{equation}
    \label{eq:uniform-bound-prototype}
        \Proba \Big[ \sup_{\theta\in\Theta}\big| \widehat{g}_j(\theta) - g_j(\theta) \big| \leq \epsilon_j(\delta) \Big] \geq 1 - \delta
    \end{equation}    
    for all $j\in\setint d$.
\end{definition}
In Section~\ref{sec:robust-estimators} below, we specify a value of $\epsilon_j(\delta)$ for each considered robust estimator which will lead to upper bounds on the risk.
Recall that $g_j(\theta) = \partial R(\theta) / \partial \theta_j$ and let us denote as $e_j$ the $j$-th canonical basis vector of $\R^d$.
We need the following extra assumptions on the optimization problem itself.
\begin{assumption}
    \label{ass:mintheta-and-smoothness}
    There exists $\theta^\star \in\Theta$ satisfying the stationary gradient condition $\grad R(\theta^\star) = 0$.
    Moreover, we assume that there are Lipschitz constants $L_j > 0$ such that
    \begin{equation*}
        \big|g_j(\theta + h e_j) - g_j(\theta)\big| \leq L_j |h|
    \end{equation*}
    for any $j \in \setint d$, $h \in \R$ and $\theta \in \Theta$ such that $\theta + h e_j\in\Theta$. We also consider $L > 0$ such that
    \begin{equation*}
        \big\| g(\theta + h) - g(\theta) \big\| \leq L \| h\|
    \end{equation*}
    for any $h \in \Theta$ and $\theta \in \Theta$ such that $\theta+h\in\Theta$. We denote $L_{\max}:= \max_{j \in \setint d} L_j$ and $L_{\min}:= \min_{j \in \setint d} L_j$.
\end{assumption}

Under Assumptions~\ref{assump:lipsmoothloss} and~\ref{assump:data}, we know that the Lipschitz constants $(L_j)_{j \in \setint d}$ and $L$ do exist. 
Indeed, the Hessian matrix of the risk $R(\theta)$ is given by
\begin{equation*}
    \grad^2 R(\theta) = \E \big[ \ell '' (\theta^\top X, Y)X X^\top \big],
\end{equation*}
where  $\ell''(z, y):= \partial^2 \ell(z, y) / \partial z^2$, so that 
\begin{equation}\label{eq:lipschitz_constants}
    L_j = \sup_{\theta \in \Theta} \E \big[ \ell''(\theta^\top X, Y) (X^j)^2\big] \quad \text{and} \quad
    L = \sup_{\theta \in \Theta} \big\| \grad^2 R(\theta) \big\|_{\mathrm{op}},
\end{equation}
where $\big\| H \big\|_{\mathrm{op}}$ stands for the operator norm of a matrix $H$.
Assumption~\ref{assump:lipsmoothloss} entails $L_j \leq \gamma \E \big[(X^j)^2\big]$, which is finite because of Equation~\eqref{eq:ass-X-moments} from Assumption~\ref{assump:data}.
In order to derive \emph{linear} convergence rates for CGD, it is standard to require strong convexity~\citep{nesterov2012efficiency, wright2015coordinate}.
Here, we require strong convexity on the risk $R(\theta)$ itself, as described in the following.

\begin{assumption}
    \label{ass:strongconvexity}
    We assume that the risk $R$ given by Equation~\eqref{eq:truerisk} is $\lambda$-strongly convex\textup, namely that
    \begin{equation}
        \label{eq:strongconvexity}
        R(\theta_2) \geq R(\theta_1) + \langle \nabla R(\theta_1), \theta_2 - \theta_1 \rangle
        + \frac{\lambda}{2}\|\theta_2 - \theta_1\|^2
    \end{equation}
    for any $\theta_1, \theta_2\in \Theta$.
\end{assumption}

Assumption~\ref{ass:strongconvexity} is satisfied whenever $\lambda_{\min}\big(\grad^2 R(\theta)\big) \geq \lambda$ for any $\theta \in \Theta$, where $\lambda_{\min}(H)$ stands for the smallest eigenvalue of a symmetric matrix $H$.
For the least-squares loss, this translates into the condition $\lambda_{\min}\big(\E [X X^\top] \big) \geq \lambda$.
Note that one can always make the risk $\lambda$-strongly convex by considering ridge penalization, namely by replacing $R(\theta)$ by $R(\theta) + \frac{\lambda}{2}\| \theta \|_2^2$, but we provide also guarantees without this Assumption in Section~\ref{sec:theory-not-strongly-cvx} below.
The following Theorem provides an upper bound over the risk of Algorithm~\ref{alg:robust-cgd} whenever the estimators $\wh g_j(\cdot)$ have an error vector $\epsilon(\delta)$, as defined in Definition~\ref{def:error-vector}.
We introduce for short $R^\star = R(\theta^\star) = \min_{\theta \in \Theta} R(\theta)$.

\begin{theorem}
    \label{thm:linconv1expect}
    Grant Assumptions~\ref{assump:lipsmoothloss}\textup,~\ref{ass:mintheta-and-smoothness} and~\ref{ass:strongconvexity}. 
    Let $\theta^{(T)}$ be the output of Algorithm~\ref{alg:robust-cgd} with step-sizes $\beta_j = 1 /  L_j,$ an initial iterate $\theta^{(0)},$ uniform coordinates sampling $p_j = 1 / d$ and estimators of the partial derivatives with error vector $\epsilon(\cdot)$. 
    Then\textup, we have 
    \begin{equation}
        \label{eq:thm1-uniform}
        \E \big[ R(\theta^{(T)}) \big] - R^\star \leq 
        \big(R(\theta^{(0)}) - R^\star\big) \Big( 1 - \frac{\lambda}{L_{\max}d} \Big)^T 
        + \frac{L_{\max}}{2\lambda L_{\min} } \big\| \epsilon ( \delta ) \big\|_2^2
    \end{equation}
    with probability at least $1 - \delta$, where the expectation is w.r.t. the sampling of the coordinates.
    Now\textup, if Algorithm~\ref{alg:robust-cgd} is run as before\textup, but with an \emph{importance sampling} distribution $p_j = L_j / \sum_{k \in  \setint d} L_{k},$ we have
    \begin{equation}
        \label{eq:thm1-importance-sampling}
        \E \big[ R(\theta^{(T)}) ] - R^\star \leq \big(R(\theta^{(0)}) - R^\star\big)
        \Big( 1 - \frac{\lambda}{\sum_{j \in  \setint d} L_{j}} \Big)^T 
        + \frac{1}{2\lambda } \big\| \epsilon ( \delta ) \big\|_2^2
    \end{equation}
     with probability at least $1 - \delta$.
\end{theorem}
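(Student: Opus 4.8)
The plan is to establish a single-step descent inequality that holds on the data event where the uniform error bound of Definition~\ref{def:error-vector} is satisfied, then average over the random coordinate choice, invoke strong convexity to turn the gradient-norm term into a suboptimality term, and finally unroll the resulting linear recursion. Throughout I would condition on the event $\mathcal E = \{ \forall j,\ \sup_{\theta \in \Theta} | \widehat g_j(\theta) - g_j(\theta)| \leq \epsilon_j(\delta)\}$, which (after a union bound, or by the convention that $\epsilon(\delta)$ is chosen so that all coordinates hold jointly) has probability at least $1 - \delta$. Crucially, $\mathcal E$ depends only on the data and is therefore independent of the i.i.d.\ sampling of the coordinates $j_0, j_1, \dots$, so the expectation in the statement can be taken over the sampling alone while remaining on $\mathcal E$.

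The core of the argument is the coordinate-wise descent lemma. Assumption~\ref{ass:mintheta-and-smoothness} gives $|g_j(\theta + h e_j) - g_j(\theta)| \leq L_j|h|$, which upon integration yields
\[
    R(\theta + h e_j) \leq R(\theta) + g_j(\theta) h + \frac{L_j}{2} h^2 .
\]
Applying this with $j = j_t$ and the CGD update $h = -\beta_{j_t}\widehat g_{j_t}(\theta^{(t)})$ at the chosen step-size $\beta_j = 1/L_j$, and writing $\widehat g_j(\theta^{(t)}) = g_j(\theta^{(t)}) + \xi_j$ with $|\xi_j| \leq \epsilon_j(\delta)$ on $\mathcal E$, the cross terms cancel exactly and one is left with
\[
    R(\theta^{(t+1)}) \leq R(\theta^{(t)}) - \frac{1}{2 L_{j_t}}\, g_{j_t}(\theta^{(t)})^2 + \frac{1}{2 L_{j_t}}\, \epsilon_{j_t}(\delta)^2 .
\]
It is exactly here that the \emph{uniform}-in-$\theta$ nature of Definition~\ref{def:error-vector} is indispensable: $\theta^{(t)}$ is random (it depends on the past coordinate draws), so without the supremum over $\Theta$ we could not guarantee $|\xi_{j_t}| \leq \epsilon_{j_t}(\delta)$ at the realized iterate.

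Next I would take the conditional expectation over $j_t$ given $\theta^{(t)}$. For uniform sampling $p_j = 1/d$ I would bound $1/L_j \in [1/L_{\max}, 1/L_{\min}]$ to get
\[
    \E_t\big[R(\theta^{(t+1)})\big] \leq R(\theta^{(t)}) - \frac{1}{2 L_{\max} d}\big\| \grad R(\theta^{(t)})\big\|_2^2 + \frac{1}{2 L_{\min} d}\big\|\epsilon(\delta)\big\|_2^2 ,
\]
whereas for importance sampling $p_j = L_j / \sum_k L_k$ the weights cancel the $1/L_j$ factors exactly, producing the cleaner coefficient $\tfrac{1}{2\sum_k L_k}$ in front of both $\|\grad R(\theta^{(t)})\|_2^2$ and $\|\epsilon(\delta)\|_2^2$. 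Strong convexity (Assumption~\ref{ass:strongconvexity}) yields the Polyak--\L{}ojasiewicz inequality $\|\grad R(\theta)\|_2^2 \geq 2\lambda\,(R(\theta) - R^\star)$, obtained by minimizing the quadratic lower bound~\eqref{eq:strongconvexity} over $\theta_2$ and using $\min_{\theta_2} R(\theta_2) = R^\star$; substituting it converts each recursion into the linear form $a_{t+1} \leq \rho\, a_t + c$ with $a_t := \E[R(\theta^{(t)})] - R^\star$.

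Finally I would unroll this recursion. With $\rho = 1 - \lambda/(L_{\max} d)$ and $c = \|\epsilon(\delta)\|_2^2/(2 L_{\min} d)$ for uniform sampling, summing the geometric series gives $a_T \leq \rho^T a_0 + c/(1-\rho)$, and $c/(1-\rho) = \tfrac{L_{\max}}{2\lambda L_{\min}}\|\epsilon(\delta)\|_2^2$, which is~\eqref{eq:thm1-uniform} since $\theta^{(0)}$ is deterministic; the importance-sampling constants $\rho = 1 - \lambda/\sum_k L_k$ and $c = \|\epsilon(\delta)\|_2^2/(2\sum_k L_k)$ give $c/(1-\rho) = \tfrac{1}{2\lambda}\|\epsilon(\delta)\|_2^2$, which is~\eqref{eq:thm1-importance-sampling}. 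The computations are routine; the only genuinely delicate points are the clean cancellation of the cross terms, which hinges on the choice $\beta_j = 1/L_j$, and the careful separation of the data event $\mathcal E$ from the sampling randomness so that the uniform error bound may be applied along the random trajectory.
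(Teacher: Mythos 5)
Your proposal is correct and follows essentially the same route as the paper's own proof: the same one-step descent inequality with exact cancellation of the cross terms under $\beta_j = 1/L_j$, the same conditional expectation over $j_t$ (bounding $1/L_{j_t}$ by $1/L_{\max}$ and $1/L_{\min}$ for uniform sampling, exact cancellation for importance sampling), the same Polyak--{\L}ojasiewicz step from strong convexity, and the same geometric unrolling yielding the constants $\tfrac{L_{\max}}{2\lambda L_{\min}}$ and $\tfrac{1}{2\lambda}$. The points you flag as delicate (conditioning on the data event $\mathcal E$ separately from the coordinate sampling, and the need for the uniform-in-$\theta$ error bound at the random iterate) are exactly the ones the paper relies on.
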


The proof of Theorem~\ref{thm:linconv1expect} is given in Appendix~\ref{sec:proofs}.
It adapts standard arguments for the analysis of CGD~\citep{nesterov2012efficiency,wright2015coordinate} with inexact estimators of the partial derivatives.
The statistical error $\| \epsilon( \delta ) \|_2^2$ is studied in Section~\ref{sec:robust-estimators} for each considered robust estimator of the partial derivatives.
Both~\eqref{eq:thm1-uniform} and~\eqref{eq:thm1-importance-sampling} are upper bounds on the excess risk with exponentially vanishing optimization errors (called \emph{linear} rate in optimization) and a constant statistical error.
The optimization error term of~\eqref{eq:thm1-importance-sampling}, given by
\begin{equation*}
    \big(R(\theta^{(0)}) - R^\star\big) \Big( 1 - \frac{\lambda}{\sum_{j \in  \setint d} L_{j}} \Big)^T,
\end{equation*}
goes to $0$ exponentially fast as the number of iterations $T$ increases, with a contraction constant better than that of~\eqref{eq:thm1-uniform} since $\sum_{j \in  \setint d} L_{j} \leq d L_{\max}$.
This can be understood from the fact that importance sampling better exploits the knowledge of the Lipschitz constants $L_j$.
Also, note that $T$ is the number of iterations of CGD, so that $T = C d$ where $C$ is the number of CGD cycles.
Therefore, defining $ L':= \frac 1d \sum_{j \in  \setint d} L_j $, we have
\begin{equation*}
    \Big( 1 - \frac{\lambda}{d L'} \Big)^{C d} \leq \Big( 1 - \frac{\lambda}{L'} \Big)^{C},
\end{equation*}
for $d \geq 1$, which leads to a linear rate at least similar to the one of GD~\citep{bubeck2015convex}.

Theorem~\ref{thm:linconv1expect} proves an upper bound on the excess risk $R(\theta^{(T)}) - R^\star$ of the iterates of robust CGD directly, without using an intermediate upper bound on $\|\theta^{(T)} - \theta^\star\|_2^2$.
This differs from the approaches used by~\cite{HeavyTails, pmlr-v97-holland19a} that consider robust GD (while we introduce robust CGD here) to bound the excess risk of the iterates.
This allows us to obtain a better contraction factor for the optimization error and a better constant in front of the statistical error.
Note that we can derive also an upper bound on $\|\theta^{(T)} - \theta^\star\|_2^2$, see Theorem~\ref{thm:linconvparam} in Appendix~\ref{sec:proofs}.

Note that the iterations considered in Algorithm~\ref{alg:robust-cgd} do not perform a projection in $\Theta$.
Indeed, one can show that $\|\thetat - \theta^\star \|$ is also subject to a contraction and is therefore decreasing w.r.t. $t$.
Thus, if $\theta^{(0)} = 0$, iterates $\theta^{(t)}$ naturally belong to the $\ell_2$ ball of radius $2\|\theta^\star\|$.

\paragraph{Step-sizes.}

The step-sizes $\beta_j = 1 / L_j$ are unknown, since they are functionals of the unknown distribution $P$. 
So, we provide, in Appendix~\ref{sub:unknown-step-sizes}, theoretical guarantees similar to that of Theorem~\ref{thm:linconv1expect} using step-sizes $\wh \beta_j = 1 / \wh L_j$, where $\wh L_j$ is a robust estimator of the upper bound  $\overline L_j:= \gamma \E\big[(X^j)^2\big] \geq L_j$ of the Lipschitz constant $L_j$.

\paragraph{A deterministic result.} 

The previous Theorem~\ref{thm:linconv1expect} provides upper bounds on the expectation of the excess risk with respect to the sampling of the coordinates used in CGD.
In Theorem~\ref{thm:linconvdeterministic} below, we provide an upper bound similar to the one from Theorem~\ref{thm:linconv1expect}, but with a fully deterministic variant of CGD, where we replace line~4 of Algorithm~\ref{alg:robust-cgd} with a deterministic cycling through the coordinates.

\begin{theorem}
    \label{thm:linconvdeterministic}
    Grant Assumptions~\ref{assump:lipsmoothloss}\textup, \ref{ass:mintheta-and-smoothness} and~\ref{ass:strongconvexity}. 
    Let $\theta^{(T)}$ be the output of Algorithm~\ref{alg:robust-cgd} with step-sizes $\beta_j = 1 /  L_j,$ an initial iterate $\theta^{(0)},$ deterministic cycling over $ \setint d$ such that
    \begin{equation*}
        \{ j_{td + 1}, j_{td + 2}, \dots, j_{(t + 1)d - 1} \} =  \setint d   
    \end{equation*}
    for any $t$ and estimators of the partial derivatives with error vector $\epsilon(\cdot)$.
    Then\textup, we have 
    \begin{equation*}
        R(\theta^{(T)}) - R^\star \leq \big(R(\theta^{(0)}) - R^\star\big) \big(1 - 2 \lambda \kappa\big)^T 
        + \frac{3}{8\lambda \kappa L_{\min} } \big\| \epsilon ( \delta ) \big\|_2^2
    \end{equation*}
    with probability at least $1 - \delta,$ where we introduced the constant
    \begin{equation*}
        \kappa = \frac{1}{8L_{\max}(1 + d(L_{\max}/L_{\min}))}.
    \end{equation*}
\end{theorem}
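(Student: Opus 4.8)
The plan is to prove a per-cycle contraction and then unroll it over the $C = T/d$ cycles. Throughout, I work on the event of probability at least $1-\delta$ on which the error-vector bound $\sup_{\theta\in\Theta}|\wh g_j(\theta) - g_j(\theta)| \leq \epsilon_j(\delta)$ holds simultaneously for every $j \in \setint d$; since this event is fixed and the cycling is deterministic, no expectation over coordinate sampling appears and the whole bound is deterministic, which is the point of the statement. First I would establish a per-iteration descent lemma: for the update of coordinate $j = j_t$ with step $\beta_j = 1/L_j$, the coordinate-wise smoothness of Assumption~\ref{ass:mintheta-and-smoothness} gives $R(\theta^{(t+1)}) \leq R(\theta^{(t)}) + g_j(\theta^{(t)})(\theta^{(t+1)}_j - \theta^{(t)}_j) + \tfrac{L_j}{2}(\theta^{(t+1)}_j - \theta^{(t)}_j)^2$. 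Substituting $\theta^{(t+1)}_j - \theta^{(t)}_j = -\tfrac{1}{L_j}\wh g_j(\theta^{(t)})$ and writing $\wh g_j = g_j + \xi_j$ with $|\xi_j| \leq \epsilon_j(\delta)$, the cross terms cancel to leave $R(\theta^{(t+1)}) \leq R(\theta^{(t)}) - \tfrac{1}{2L_j}g_j(\theta^{(t)})^2 + \tfrac{1}{2L_j}\epsilon_j(\delta)^2$. Summing over one full cycle $t = sd, \dots, (s+1)d - 1$ (each coordinate visited once) and using $L_{\min} \leq L_j \leq L_{\max}$ yields
\[
    R(\theta^{((s+1)d)}) \leq R(\theta^{(sd)}) - \frac{1}{2L_{\max}}\,S_s + \frac{1}{2L_{\min}}\big\|\epsilon(\delta)\big\|_2^2, \qquad S_s := \sum_{t=sd}^{(s+1)d-1} g_{j_t}(\theta^{(t)})^2 .
\]

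The crux is to lower bound $S_s$ — a sum of squared partial derivatives evaluated at the \emph{drifting} iterates — by the full-gradient norm $\|g(\theta^{(sd)})\|_2^2$ at the start of the cycle. Writing $t_j$ for the iteration at which coordinate $j$ is updated and splitting $g_j(\theta^{(sd)}) = \big(g_j(\theta^{(sd)}) - g_j(\theta^{(t_j)})\big) + g_j(\theta^{(t_j)})$, the inequality $(a+b)^2 \leq 2a^2 + 2b^2$ summed over $j$ gives $\|g(\theta^{(sd)})\|_2^2 \leq 2D_s + 2S_s$ with drift term $D_s := \sum_{j} \big(g_j(\theta^{(sd)}) - g_j(\theta^{(t_j)})\big)^2$. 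I would control $D_s$ by noting that the increments within a cycle lie along \emph{distinct} coordinate axes, so that $\|\theta^{(t_j)} - \theta^{(sd)}\|_2^2 = \sum_{r<t_j}(\theta^{(r+1)}_{j_r} - \theta^{(r)}_{j_r})^2 \leq L_{\min}^{-2}\sum_{r} \wh g_{j_r}(\theta^{(r)})^2$, and then transferring this displacement into the change of each partial derivative via smoothness. Using the \emph{cross-term} Hessian bound $|[\grad^2 R(\theta)]_{jk}| \leq \sqrt{L_j L_k}$ (a consequence of positive semidefiniteness together with the diagonal identities in~\eqref{eq:lipschitz_constants}) rather than the crude global constant $L$, and feeding back $\wh g_{j_r}^2 \leq 2 g_{j_r}^2 + 2\epsilon_{j_r}(\delta)^2$, should yield a bound of the form $D_s \leq c\, d\,(L_{\max}/L_{\min})\,S_s + (\text{an } \|\epsilon(\delta)\|_2^2\text{-term})$, which is exactly the dependence producing the factor $1 + d(L_{\max}/L_{\min})$ inside $\kappa$.

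Combining this with the Polyak--\L ojasiewicz inequality $\|g(\theta)\|_2^2 \geq 2\lambda\big(R(\theta) - R^\star\big)$ — itself immediate from $\lambda$-strong convexity (Assumption~\ref{ass:strongconvexity}) by minimizing the lower bound~\eqref{eq:strongconvexity} over its free argument at $\theta_1 = \theta$ — turns the per-cycle descent into a recursion $a_{s+1} \leq (1 - 2\lambda\kappa d)\,a_s + b$, where $a_s := R(\theta^{(sd)}) - R^\star$ and $b$ is proportional to $\tfrac{d}{L_{\min}}\|\epsilon(\delta)\|_2^2$. Unrolling over $C = T/d$ cycles and summing the resulting geometric series gives the statistical constant $b/(2\lambda\kappa d) = \tfrac{3}{8\lambda\kappa L_{\min}}\|\epsilon(\delta)\|_2^2$, while the Bernoulli inequality $(1 - 2\lambda\kappa d)^{C} \leq \big((1-2\lambda\kappa)^{d}\big)^{C} = (1-2\lambda\kappa)^{T}$ converts the per-cycle optimization factor into the per-iteration factor $(1-2\lambda\kappa)^{T}$ stated in the theorem.

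I expect the main obstacle to be the drift bound on $D_s$. Because the coordinates are cycled \emph{deterministically}, the convenient averaging identity $\E_{j}[g_j(\theta)^2] = d^{-1}\|g(\theta)\|_2^2$ that drives the random-sampling analysis of Theorem~\ref{thm:linconv1expect} is unavailable, so one must instead track the accumulated displacement within a cycle and transfer it, coordinate by coordinate, into the change of each partial derivative. Obtaining the clean single factor $d(L_{\max}/L_{\min})$ — rather than a spurious $d^2$ blow-up or a worse $L^2/L_{\min}^2$ dependence — requires carefully exploiting the orthogonality of the within-cycle increments and the cross-term Hessian bound, all while simultaneously keeping track of the error contributions $\epsilon_j(\delta)^2$ that are injected through $\wh g_{j_r}^2$ both into the descent inequality and into the displacement.
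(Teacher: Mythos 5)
Your overall architecture is the same as the paper's: condition on the uniform error event, prove the per\nobreakdash-iteration descent inequality $R(\theta^{(t+1)})\leq R(\theta^{(t)})-\tfrac{1}{2L_j}g_j(\theta^{(t)})^2+\tfrac{1}{2L_j}\epsilon_j(\delta)^2$, sum over a cycle, relate the sum of squared partial derivatives at the drifting iterates to $\|g(\theta^{(sd)})\|_2^2$ via a Mean Value Theorem decomposition and $(a+b)^2\leq 2a^2+2b^2$, close with the Polyak--{\L}ojasiewicz consequence of Assumption~\ref{ass:strongconvexity}, and unroll the resulting contraction. All of that matches the paper's proof step for step (the paper even keeps the $1/L_j$ weights inside the cycle sum and writes the MVT decomposition in matrix form, $g(\theta^{(t)})_{j+1}=\widetilde h_{j+1}^\top\widetilde g_t+h_{j+1}^\top A^{-1}\delta^{(t)}$ with $A=\diag(L_j)$, but these are cosmetic differences).

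The genuine gap is exactly where you flag it: the drift bound $D_s\lesssim d(L_{\max}/L_{\min})\,S_s+(\|\epsilon(\delta)\|_2^2\text{-term})$ is asserted (``should yield''), not proved, and it is the only step in which the constant $\kappa=\bigl(8L_{\max}(1+dL_{\max}/L_{\min})\bigr)^{-1}$ is actually earned. Worse, the specific tool you propose --- the entrywise cross bound $|[\grad^2 R(\theta)]_{jk}|\leq\sqrt{L_jL_k}$ --- does not deliver the clean factor $d$. Carrying it out in the natural way, $|g_j(\theta^{(sd)})-g_j(\theta^{(t_j)})|\leq\sum_{r}\sqrt{L_jL_{j_r}}\,|\wh g_{j_r}|/L_{j_r}$ followed by Cauchy--Schwarz over the $\leq d$ terms gives $D_s\leq d\bigl(\sum_jL_j\bigr)\sum_r\wh g_{j_r}^2/L_{j_r}\leq 2d^2(L_{\max}/L_{\min})\bigl(S_s+\|\epsilon(\delta)\|_2^2\bigr)$ --- precisely the $d^2$ blow\nobreakdash-up you were worried about; the orthogonality of the within\nobreakdash-cycle increments does not rescue this, because the loss occurs in summing the cross terms $\sqrt{L_jL_k}$ over pairs, not in measuring the displacement. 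The paper instead bounds the spectral norm of the strictly lower triangular matrix $H$ of Hessian entries at the intermediate points by its Frobenius norm and uses a \emph{row-wise} bound $\|h_j\|^2\leq\|\nabla g_j(\gamma_j^{(t)})\|^2\leq L_j^2$, so that $\|H\|^2\leq\sum_jL_j^2\leq dL_{\max}^2$, which is the single place the lone factor $d$ comes from. Without supplying this (or an equivalent) row-wise control, your argument as written proves a theorem with $\kappa$ of order $1/(dL_{\max}\cdot dL_{\max}/L_{\min})$, i.e.\ an extra factor of $d$ in the contraction rate, not the stated one.
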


The proof of Theorem~\ref{thm:linconvdeterministic} is given in Appendix~\ref{sec:proofs} and uses arguments from~\cite{doi:10.1137/120887679} and~\cite{li2017faster}.
It provides an extra guarantee on the convergence of CGD, for a very general choice of coordinates cycling, at the cost of degraded constants compared to Theorem~\ref{thm:linconv1expect}, both for the optimization and statistical error terms.

\section{Robust estimators of the partial derivatives} 
\label{sec:robust-estimators}

We consider three estimators of the partial derivatives
\begin{equation*}
    g_j(\theta) = \frac{\partial R(\theta)}{\partial \theta_j} = \E\big[ \ell'(X^\top \theta, Y) X^j \big]
\end{equation*}
that can be used within Algorithm~\ref{alg:robust-cgd}: Median-of-Means in Section~\ref{sub:median-of-means}, Trimmed mean in Section~\ref{sub:trimmed-mean} and an estimator that we will call ``Catoni-Holland'' in Section~\ref{sub:catoni-holland}.
We provide, for each estimator, a concentration inequality for the estimation of $g_j(\theta)$ for fixed $\theta$ under a weak moments assumption (Lemmas~\ref{lem:basicMOM},~\ref{lem:basicTMean} and~\ref{lem:basicCH}). 
We derive also uniform versions of the bounds in each case (Propositions~\ref{prop:uniformMOM},~\ref{prop:RademacherMOM},~\ref{prop:uniformTM} and~\ref{prop:uniformCH}) which define the error vectors to be plugged into Theorems~\ref{thm:linconv1expect} and~\ref{thm:linconvdeterministic}.
We also discuss in details the numerical complexity of each estimator and explain that they all are, in their own way, an interpolation between the empirical mean and the median.
We wrap up these results in Table~\ref{table:1} below.

\begin{table}[H]
\centering
\begin{tabular}{l|cccc}
 & \makecell{Optimal \\ deviation bound} & \makecell{Robustness \\to outliers} & \makecell{Numerical \\ complexity} & \makecell{Hyper- \\ parameter} \\
\hline
$\mathtt{ERM}$ & No & None & $O(n)$ & None \\[5pt]
$\mathtt{MOM}$ & Yes & Yes for $|\mathcal{O}| < K/2$ & $O(n + K)$ & $K \in \setint n$ \\[5pt]
$\mathtt{CH}$ & Yes & None &  $O(n)$ & Scale $s$ \\[5pt]
$\mathtt{TM}$ & Yes & Yes for $|\mathcal{O}| < n / 8$ & $O(n)$ & Proportion $\epsilon \in [0, 1/2)$ \\
\end{tabular}
\caption{Properties of some robust estimators, where $\mathtt{ERM}=$ Empirical Risk Minimizer (ordinary mean), $\mathtt{MOM}=$ Median-of-Means, $\mathtt{CH}=$ Catoni-Holland and $\mathtt{TM}=$ Trimmed Mean. We recall that $n=$ sample size and $|\mathcal O|=$ number of outliers. The parameters of each estimators are: the number of blocks $K$ in $\mathtt{MOM}$, a scale parameter $s > 0$ in $\mathtt{CH}$ and a proportion of samples $\epsilon$ in $\mathtt{TM}$.}
\label{table:1}
\end{table}
The deviation bound optimality in Table~\ref{table:1} is meant in terms of the dependence, up to a constant, on the sample size $n$, required confidence $\delta\in (0,1)$ and distribution variance\footnote{or more generally the centered moment of order $1+\alpha$ for $\alpha \in (0,1]$, see below.}. An estimator's deviation bound is deemed optimal if it fits the lower bounds given by Theorems~1 and~3 in~\cite{lugosi2019mean}.
Let us introduce the centered moment of order $1 + \alpha$ of the partial derivatives and its maximum over $\Theta$, given by
\begin{equation}
    \label{eq:partial-derivative-moment}
    m_{\alpha, j}(\theta):= \E\Big[ \big| \ell'(X^\top\theta, Y) X^j - \E[\ell'(X^\top\theta, Y) X^j] \big|^{1+\alpha} \Big] \quad \text{and} \quad M_{\alpha, j} = \sup_{\theta \in \Theta} m_{\alpha, j}(\theta)
\end{equation}
for $\alpha \in (0, 1]$.
Note that $m_{1, j}(\theta) = \V\big[\ell'(X^\top\theta, Y) X^j\big]$ and we know that $m_{\alpha, j}(\theta)$ exists, as explained in the next Lemma.

\begin{lemma}
    \label{lem:partial-deriv-moments}
    Under Assumptions~\ref{assump:lipsmoothloss} and~\ref{assump:data} the risk $R(\theta)$ is well defined for all $\theta \in \Theta$ and we have
    \begin{equation*}
        \E\big[ \big|\ell'(X^\top \theta, Y) X^j \big|^{1 + \alpha} \big] < +\infty
    \end{equation*}
    for any $j \in  \setint d$ and $\theta \in \Theta$.
\end{lemma}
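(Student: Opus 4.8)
The plan is to reduce both claims to the polynomial growth bounds of Assumption~\ref{assump:lipsmoothloss} evaluated at $z = X^\top\theta$, combined with the moment conditions of Assumption~\ref{assump:data}. Throughout I use that $\Theta$ is compact and contains the origin, so that $|\theta_k| \leq \Delta$ for every coordinate $k$ and every $\theta \in \Theta$; hence $|X^\top\theta| \leq \Delta \sum_{k=1}^d |X^k|$, which lets me trade the linear form $X^\top\theta$ for the individual coordinate moments $\E[|X^k|^r]$ at the cost of a constant depending only on $d$, $\Delta$ and $r$ (via the power-mean inequality when $r \geq 1$ and subadditivity of $t \mapsto t^r$ when $r \leq 1$).

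For the risk itself, I would start from $|\ell(X^\top\theta, Y)| \leq C_{\ell,1} + C_{\ell,2}|X^\top\theta - Y|^q$ and bound $|X^\top\theta - Y|^q \leq 2^{q-1}(|X^\top\theta|^q + |Y|^q)$ using $q \geq 1$. The term $\E[|X^\top\theta|^q]$ reduces to a constant times $\sum_k \E[|X^k|^q]$, which is finite because $q \leq 2 \leq \max(2, q(\alpha+1))$, so these moments are dominated by the largest coordinate moment assumed finite; the term $\E[|Y|^q]$ is finite directly by Assumption~\ref{assump:data}. This gives $\E[|\ell(X^\top\theta, Y)|] < \infty$, so $R(\theta)$ is well defined for all $\theta \in \Theta$.

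For the partial derivative, I would plug $z = X^\top\theta$ into $|\ell'(z,y)| \leq C_{\ell,1}' + C_{\ell,2}'|z-y|^{q-1}$, multiply by $|X^j|$, and raise to the power $1+\alpha$ using $(a+b)^{1+\alpha} \leq 2^\alpha(a^{1+\alpha} + b^{1+\alpha})$. This leaves two contributions: a harmless term $\E[|X^j|^{1+\alpha}]$, finite since $1+\alpha \leq 2$, and the cross term $\E\big[|X^\top\theta - Y|^{(q-1)(1+\alpha)}\,|X^j|^{1+\alpha}\big]$. Writing $p = (q-1)(1+\alpha) \in [0,2]$ and applying $|a-b|^p \leq c_p(|a|^p + |b|^p)$ splits this into $\E[|X^\top\theta|^p |X^j|^{1+\alpha}]$ and $\E[|Y|^p |X^j|^{1+\alpha}]$. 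The decisive observation is that the second of these equals $\E\big[|Y^{q-1} X^j|^{1+\alpha}\big]$ exactly, which is one of the moments assumed finite in Assumption~\ref{assump:data}; and in the first, the total degree in the coordinates of $X$ is $p + (1+\alpha) = q(1+\alpha)$, so after bounding $|X^\top\theta|^p$ by a constant multiple of $\sum_k |X^k|^p$ and equalizing exponents with weighted AM-GM (Young's inequality), each resulting term is controlled by $\E[|X^k|^{q(1+\alpha)}] \leq \E\big[|X^k|^{\max(2, q(\alpha+1))}\big] < \infty$.

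The only real subtlety is the degree bookkeeping: one must check that the leading power of $X$ is precisely $q(1+\alpha)$, which is exactly why Assumption~\ref{assump:data} asks for the moment of order $\max(2, q(\alpha+1))$ and no more, and that the mixed $X$–$Y$ term reproduces the second moment condition verbatim. I would also treat the boundary case $q = 1$ separately, where $p = 0$, the cross term collapses to $\E[|X^j|^{1+\alpha}]$, and the only requirement is $\E[|X^j|^2] < \infty$. The remaining steps—choosing the Young/Hölder exponents and collecting constants—are routine, so I expect the tracking of exponents, rather than any single inequality, to be the main point to get right.
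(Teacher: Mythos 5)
Your proof is correct and follows essentially the same route as the paper's: bound the loss and its derivative by their polynomial envelopes, expand $|X^\top\theta - Y|$ into coordinate and label contributions, and observe that the mixed $Y$--$X^j$ term reproduces the assumed moment $\E[|Y^{q-1}X^j|^{1+\alpha}]$ verbatim while the pure-$X$ term has total degree $q(1+\alpha)$. The only cosmetic difference is that you equalize exponents pointwise via Young's inequality where the paper applies H\"older's inequality in expectation with the same exponents $a = q(1+\alpha)/((q-1)(1+\alpha))$ and $b = q$; both reduce to $\E[|X^k|^{q(1+\alpha)}] < \infty$, and both treat $q=1$ as the degenerate case.
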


The proof of Lemma~\ref{lem:partial-deriv-moments} involves simple algebra and is provided in Appendix~\ref{sec:proofs}.
Let us introduce
\begin{equation}
    \label{eq:sample-partial-derivative}
    g^i_j(\theta):= \ell'(X_i^\top \theta, Y_i) X_{i}^j,
\end{equation}
the sample $i\in\setint{n}$ partial derivative for coordinate $j\in\setint{d}$.

\subsection{Median-of-Means}
\label{sub:median-of-means}

The Median-Of-Means ($\mathtt{MOM}$) estimator is the median
\begin{equation}
    \label{eq:mom-estimator}
    \wh g_j^{\mathtt{MOM}}(\theta):= \median \big(\wh g_j^{(1)}(\theta), \ldots, \wh g_j^{(K)}(\theta)\big)
\end{equation}
of the block-wise empirical means
\begin{equation}
    \label{eq:mom-block-mean}
    \wh g_j^{(k)}(\theta):= \frac{1}{|B_k|} \sum_{i \in B_k} g_j^i(\theta)
\end{equation}
within blocks $B_1, \ldots, B_K$ of roughly equal size that form a partition of $\setint n$ and that are sampled uniformly at random.
This estimator depends on the choice of the number $K$ of blocks used to compute it, 
which can be understood as an ``interpolation'' parameter between the ordinary mean ($K=1$) and the median ($K=n$).
It is robust to heavy-tailed data and a limited number of outliers as explained in the following lemma.

\begin{lemma}
    \label{lem:basicMOM}
    Grant Assumptions~\ref{assump:lipsmoothloss} and~\ref{assump:data} with $\alpha \in (0, 1]$. If $|\mathcal{O}| \leq K / 12,$ we have\textup:
    \begin{equation*}
    \P\bigg[ \big| \wh g_j^{\mathtt{MOM}}(\theta) - g(\theta)_j \big| > (24 m_{\alpha, j} (\theta))^{1/(1+\alpha)} \Big( \frac Kn \Big)^{\alpha / (1+\alpha)} \bigg] \leq e^{-K/18}
    \end{equation*}
    for any fixed $j \in  \setint d$ and $\theta \in \Theta$.
    If we fix a confidence level $\delta \in (0, 1)$ and choose $K:= \lceil 18 \log (1 / \delta) \rceil ,$ we have
    \begin{align}
        \nonumber
        \big| \wh g_j^{\mathtt{MOM}}(\theta) - g(\theta)_j \big| 
        &\leq c_\alpha m_{\alpha, j} (\theta)^{1 / (1+\alpha)} \Big( \frac{\log(1/\delta)}{n} \Big)^{\alpha / (1 + \alpha)} \\
        &\leq c_\alpha M_{\alpha, j}^{1 / (1+\alpha)} \Big( \frac{\log(1/\delta)}{n} \Big)^{\alpha / (1 + \alpha)}
    \end{align}
    with a probability larger than $1 - \delta$, where $c_\alpha:= 2^{(3 + \alpha) / (1 + \alpha)} 3^{(1 + 2 \alpha)/ (1 + \alpha)}.$
\end{lemma}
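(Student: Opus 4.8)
The plan is to follow the standard median-of-means template, adapted to the weak $(1+\alpha)$-moment regime. First I would analyze a single \emph{clean} block, i.e. a block $B_k$ containing only inliers. Since the inliers are i.i.d. with distribution $P$, the centered summands $g_j^i(\theta) - g_j(\theta)$ appearing in $\wh g_j^{(k)}(\theta)$ are i.i.d., mean-zero, with $(1+\alpha)$-moment equal to $m_{\alpha,j}(\theta)$. Because $1+\alpha \in [1,2]$, the usual Chebyshev step is unavailable (there may be no second moment when $\alpha<1$), so I would invoke the von Bahr--Esseen (Marcinkiewicz--Zygmund type) moment inequality to get
\[
    \E\big|\wh g_j^{(k)}(\theta) - g_j(\theta)\big|^{1+\alpha} \leq \frac{2\, m_{\alpha,j}(\theta)}{|B_k|^{\alpha}},
\]
and then apply Markov's inequality to obtain, for clean blocks,
\[
    \P\big[\, |\wh g_j^{(k)}(\theta) - g_j(\theta)| > \epsilon \,\big] \leq \frac{2\, m_{\alpha,j}(\theta)}{|B_k|^{\alpha}\,\epsilon^{1+\alpha}}.
\]
Choosing $\epsilon = (24\, m_{\alpha,j}(\theta))^{1/(1+\alpha)}(K/n)^{\alpha/(1+\alpha)}$ and using $|B_k| \approx n/K$ makes this probability at most $1/12$.

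Next I would run the median argument. The empirical median of the $K$ block means lies outside the interval $(g_j(\theta)-\epsilon,\, g_j(\theta)+\epsilon)$ only if at least $K/2$ of the block means fall outside it. A block mean can do so either because the block is corrupted by an outlier---there are at most $|\mathcal{O}| \leq K/12$ such blocks---or because it is a clean block whose mean happens to deviate. Writing $Z$ for the number of clean blocks with deviating means, a failure of the median forces $Z \geq K/2 - |\mathcal{O}| \geq 5K/12$. Since $Z$ is stochastically dominated by a $\mathrm{Bin}(K, 1/12)$ variable (the per-block deviation probability is at most $1/12$ and the clean blocks are independent), with mean at most $K/12$, a Chernoff/Hoeffding right-tail bound yields $\P[Z \geq 5K/12] \leq e^{-K/18}$, which is the claimed first inequality.

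Finally, for the confidence-level statement I would set $K = \lceil 18\log(1/\delta)\rceil$ so that $e^{-K/18} \leq \delta$, substitute this $K$ into the expression for $\epsilon$, and collect the numerical constants: since $24\cdot 18^{\alpha} = 2^{3+\alpha}3^{1+2\alpha}$, taking the $(1+\alpha)$-th root produces exactly $c_\alpha = 2^{(3+\alpha)/(1+\alpha)}3^{(1+2\alpha)/(1+\alpha)}$. The second, uniform-in-$\theta$ inequality then follows immediately from $m_{\alpha,j}(\theta) \leq M_{\alpha,j}$.

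I expect the main obstacle to be precisely the weak-moment control of a single clean block: with only a $(1+\alpha)$-th moment available one must use a sharp moment inequality such as von Bahr--Esseen and keep track of the constant $2$, so that the target constants $24$ and ultimately $c_\alpha$ come out exactly. A secondary point requiring care is the bookkeeping that each outlier corrupts at most one block, together with the independence of the clean-block indicators needed to apply the binomial tail bound to $Z$; the rounding in $K=\lceil 18\log(1/\delta)\rceil$ and the unequal block sizes $|B_k|$ also have to be absorbed cleanly into the stated constants.
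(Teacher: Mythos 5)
Your proposal is correct and follows essentially the same route as the paper: a weak-moment deviation bound for a single uncorrupted block (the paper invokes Lemma~3 of \cite{bubeck2013bandits}, which plays exactly the role of your von Bahr--Esseen plus Markov step, with constant $3$ in place of $2$), followed by the standard median/binomial argument accounting for at most $|\mathcal O|$ contaminated blocks and a Hoeffding tail bound; your per-block failure probability of $1/12$ versus the paper's $1/4$ only changes the exponent in your favor, and the constant bookkeeping $24\cdot 18^{\alpha}=2^{3+\alpha}3^{1+2\alpha}$ matches. The caveats you flag (ceiling in $K$, roughly equal block sizes) are glossed over in the paper as well.
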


The proof of Lemma~\ref{lem:basicMOM} is given in Appendix~\ref{sec:proofs} and it adapts simple arguments from~\cite{lugosi2019mean} and~\cite{lecue2020robust1}.
Compared to~\cite{lugosi2019mean}, it provides additional robustness with respect to $|\mathcal O| \geq 1$ outliers and compared to~\cite{lecue2020robust1} it provides guarantees with weak moments $\alpha < 1$.
An inspection of the proof of Lemma~\ref{lem:basicMOM} shows that it holds also under the assumption $|\cO| \leq (1 - \varepsilon) K / 2$ for any $\varepsilon \in (0, 1)$ with an increased constant $c_\alpha = 8 \times 3^{1 / (1 + \alpha)} / \varepsilon^{(1 + 2\alpha) / (1 + \alpha)}$.
This concentration bound is optimal under the $(1+\alpha)$-moment assumption (see Theorems~1 and~3 in~\cite{lugosi2019mean}) and is sub-Gaussian when $\alpha=1$ (finite variance).
The next proposition provides a \emph{uniform} deviation bound over $\Theta$ for $\wh g_j^{\mathtt{MOM}}(\theta)$.

\begin{proposition}
    \label{prop:uniformMOM}
    Grant Assumptions~\ref{assump:lipsmoothloss} and~\ref{assump:data} with $\alpha \in (0, 1]$ and $|\cO|\leq K/12$. 
    We have
    \begin{equation*}
        \Proba \Big[ \sup_{\theta\in\Theta}\big| \widehat{g}_j^{\mathtt{MOM}}(\theta) - g_j(\theta) \big| \leq \epsilon_j^{\mathtt{MOM}}(\delta) \Big] \geq 1 - \delta \quad
    \end{equation*}
    for any $j \in \setint{d},$ with
    \begin{align*}
        \epsilon_j^{\mathtt{MOM}}(\delta) &:= c_{\alpha} \Big( M_{j, \alpha} + \frac{m_{L,\alpha}}{n^{\alpha}} \Big)^{1/(1+\alpha)} \Big( \frac{\log(d/\delta) + d\log(3\Delta n^{\alpha/(1+\alpha)} /2)}{n} \Big)^{\alpha/(1+\alpha)} \\
        &\quad + (\overline{L} + L_{j})\Big(\frac{1}{n}\Big)^{\alpha/(1+\alpha)}
    \end{align*}
     where $\overline{L} = \gamma \E \|X\|^2,$ $m_{L,\alpha} = \E |\gamma \|X\|^2 - \overline{L} |^{1+\alpha}$ and $c_{\alpha} = 2^{(3+2\alpha)/(1+\alpha)}3^{(1+3\alpha)/(1+\alpha)}$.
\end{proposition}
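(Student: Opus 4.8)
The plan is to upgrade the pointwise deviation bound of Lemma~\ref{lem:basicMOM} to a bound uniform over $\Theta$ by an $\epsilon$-net argument, with the net resolution tuned so that the discretization error matches the statistical rate. Concretely, I would fix the resolution $\rho := n^{-\alpha/(1+\alpha)}$ and take a $\rho$-net $\mathcal N \subset \Theta$, that is a finite set such that every $\theta \in \Theta$ is within distance $\rho$ of some $\theta_k \in \mathcal N$. Since $\Theta$ is convex with diameter $\Delta$, such a net exists with $|\mathcal N| \le (3\Delta/(2\rho))^d = \big(3\Delta\,n^{\alpha/(1+\alpha)}/2\big)^d$. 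Applying Lemma~\ref{lem:basicMOM} at each net point and each coordinate with per-event confidence $\delta/(d|\mathcal N|)$, a union bound over the $d|\mathcal N|$ events will hold with probability at least $1-\delta$ and will feature the factor $\log(d|\mathcal N|/\delta) = \log(d/\delta) + d\log\big(3\Delta\,n^{\alpha/(1+\alpha)}/2\big)$, which is exactly the logarithmic term inside $\epsilon_j^{\mathtt{MOM}}(\delta)$.

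On this event, for an arbitrary $\theta \in \Theta$ with nearest net point $\theta_k$, I would use the decomposition
\[
\big|\wh g_j^{\mathtt{MOM}}(\theta) - g_j(\theta)\big| \le \big|\wh g_j^{\mathtt{MOM}}(\theta) - \wh g_j^{\mathtt{MOM}}(\theta_k)\big| + \big|\wh g_j^{\mathtt{MOM}}(\theta_k) - g_j(\theta_k)\big| + \big|g_j(\theta_k) - g_j(\theta)\big|.
\]
The middle term is controlled directly by Lemma~\ref{lem:basicMOM}, producing the $M_{\alpha,j}^{1/(1+\alpha)}$ contribution against the logarithmic factor above. The last term is deterministic: by the Lipschitz continuity of the true partial derivative $g_j$ (Assumption~\ref{ass:mintheta-and-smoothness}) it is at most $L_j\rho = L_j(1/n)^{\alpha/(1+\alpha)}$, which supplies the $L_j(1/n)^{\alpha/(1+\alpha)}$ summand of the statement.

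The first term, the fluctuation of the median-of-means estimator in $\theta$, is where I expect the main difficulty, because the median is non-smooth; the key is to use the same fixed block partition $B_1,\dots,B_K$ for all $\theta$, so that $\theta \mapsto \wh g_j^{\mathtt{MOM}}(\theta)$ is a median of the block-mean functions $\theta \mapsto \wh g_j^{(k)}(\theta)$. Two facts then suffice. First, since $z \mapsto \ell'(z,y)$ is $\gamma$-Lipschitz (Assumption~\ref{assump:lipsmoothloss}), the per-sample partial derivatives satisfy $|g_j^i(\theta) - g_j^i(\theta_k)| \le \gamma|X_i^\top(\theta-\theta_k)|\,|X_i^j| \le \gamma\|X_i\|^2\rho$. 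Second, the median is $1$-Lipschitz for the $\ell_\infty$ norm of its arguments, so
\[
\big|\wh g_j^{\mathtt{MOM}}(\theta) - \wh g_j^{\mathtt{MOM}}(\theta_k)\big| \le \gamma\rho \max_{1\le k\le K} \frac{1}{|B_k|}\sum_{i\in B_k}\|X_i\|^2.
\]
Writing $\gamma\|X_i\|^2 = \overline L + (\gamma\|X_i\|^2 - \overline L)$ with $\overline L = \gamma\E\|X\|^2$, the constant term contributes the deterministic bound $\overline L\rho = \overline L(1/n)^{\alpha/(1+\alpha)}$, which completes the $(\overline L + L_j)(1/n)^{\alpha/(1+\alpha)}$ summand, while the centered fluctuations of the block averages of $\gamma\|X_i\|^2$ around $\overline L$ must be controlled uniformly. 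I would obtain this by re-running the moment/Markov arguments of Lemma~\ref{lem:basicMOM} on the scalar variable $\gamma\|X\|^2$, whose centered $(1+\alpha)$-moment is $m_{L,\alpha}$; this yields a stochastic correction of order $(m_{L,\alpha}/n^\alpha)^{1/(1+\alpha)}$ against the same logarithmic factor, after multiplication by $\rho = n^{-\alpha/(1+\alpha)}$.

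It remains to collect the contributions. The two moment-dependent terms, scaling as $M_{\alpha,j}^{1/(1+\alpha)}$ and $(m_{L,\alpha}/n^\alpha)^{1/(1+\alpha)}$ against the common factor $(\log(\cdot)/n)^{\alpha/(1+\alpha)}$, are merged via the elementary inequality $a^{1/(1+\alpha)} + b^{1/(1+\alpha)} \le 2^{\alpha/(1+\alpha)}(a+b)^{1/(1+\alpha)}$, giving the single bracket $\big(M_{\alpha,j} + m_{L,\alpha}/n^\alpha\big)^{1/(1+\alpha)}$ of the statement, with all numerical factors absorbed into $c_\alpha$. The only genuinely delicate bookkeeping is to spend the union-bound budget consistently across the net, the coordinates, and the auxiliary control of $\|X\|^2$, so that the overall failure probability is $\delta$ and the logarithmic factor is exactly $\log(d/\delta) + d\log\big(3\Delta\,n^{\alpha/(1+\alpha)}/2\big)$.
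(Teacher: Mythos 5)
Your overall architecture (net of resolution $n^{-\alpha/(1+\alpha)}$, union bound over $d\,|\mathcal N|$ events giving the stated logarithmic factor, Lipschitz term $L_j\rho$, merging the two moment terms) matches the paper, but the way you control the first term of your three-term decomposition contains a genuine gap. You bound $\big|\wh g_j^{\mathtt{MOM}}(\theta) - \wh g_j^{\mathtt{MOM}}(\theta_k)\big|$ via the $\ell_\infty$-Lipschitz property of the median, which forces you to control
$\max_{1\le k\le K}\frac{1}{|B_k|}\sum_{i\in B_k}\gamma\|X_i\|^2$, i.e.\ \emph{every} block simultaneously. This breaks in both regimes the proposition is designed for. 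Under corruption, a single outlier with huge $\|X_i\|$ sits in some block and makes that block's average — hence the max — arbitrarily large, so your bound on the first term is vacuous even though the MOM estimator itself is unaffected (the median discards that block). Under heavy tails with only a $(1+\alpha)$-moment, requiring all $K$ block means of $\gamma\|X\|^2$ to concentrate costs a union bound over blocks with only Markov-type tails available per block: the resulting deviation is of order $\big(K\,m_{L,\alpha}/(\delta (n/K)^{\alpha})\big)^{1/(1+\alpha)}$, i.e.\ polynomial in $1/\delta$, whereas the claimed $\epsilon_j^{\mathtt{MOM}}(\delta)$ depends on $\delta$ only logarithmically. The whole point of median-of-means is that only a \emph{majority} of blocks needs to be well-behaved; a max over blocks forfeits exactly that advantage.

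The paper's proof avoids this by never comparing $\wh g_j^{\mathtt{MOM}}(\theta)$ to $\wh g_j^{\mathtt{MOM}}(\widetilde\theta)$. Instead it uses the \emph{monotonicity} of the MOM functional in each of its $n$ entries: since $g_j^i(\theta)\le g_j^i(\widetilde\theta)+\varepsilon\gamma\|X_i\|^2$ entrywise (and symmetrically from below), $\wh g_j^{\mathtt{MOM}}(\theta)$ is sandwiched between the MOM estimators $\widecheck g_j^{\mathtt{MOM}}(\widetilde\theta)$ computed from the shifted samples $g_j^i(\widetilde\theta)\pm\varepsilon\gamma\|X_i\|^2$. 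These shifted samples are still i.i.d.\ on the inliers with mean $g_j(\widetilde\theta)\pm\varepsilon\overline L$, so the standard MOM argument applies to them directly: a block is declared bad if \emph{either} its gradient mean \emph{or} its $\gamma\|X\|^2$ mean deviates, each bad with probability at most $\delta'$, and the binomial--Hoeffding step only needs fewer than half the blocks to be bad. This yields the $\varepsilon\,(\overline L + \eta_{L,\alpha})$ correction with logarithmic confidence and preserves robustness to $|\cO|\le K/12$ outliers. To repair your argument you would need to replace the $\ell_\infty$-Lipschitz step by this sandwiching (or an equivalent device that only requires a majority of blocks to be controlled); as written, the first term of your decomposition cannot be bounded at the stated rate.
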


The proof of Proposition~\ref{prop:uniformMOM} is given in Appendix~\ref{sec:proofs} and uses methods similar to Lemma~\ref{lem:basicMOM} with an $\varepsilon$-net argument.
This defines the error vector $\epsilon^{\mathtt{MOM}}(\delta)$ of the $\mathtt{MOM}$ estimator of the partial derivatives in the sense of Definition~\ref{def:error-vector}, 
that can be combined directly with the convergence results from Theorems~\ref{thm:linconv1expect} and~\ref{thm:linconvdeterministic} from Section~\ref{sec:robustcgd}.
Since the optimization error decreases exponentially w.r.t. the number of iterations $T$ in these theorems, while the estimator error $\|\epsilon (\delta) \|_2$ is fixed, one only needs $T = O(\|\epsilon (\delta) \|_2)$ to make both terms of the same order.

\paragraph{About uniform bounds.}

What is necessary to obtain a control of the excess risk of robust CGD is a control of the noise terms $|\widehat{g}_j(\theta^{(t)}) - g_j(\theta^{(t)})|$, where both iterates $\theta^{(t)}$ and estimators $\widehat{g}_j(\cdot)$ of the partial derivatives depend on the same data.
This forbids the direct use of a deviation such a the one from Lemma~\ref{lem:basicMOM} (and Lemmas~\ref{lem:basicTMean} and~\ref{lem:basicCH} below) where $\theta$ must be deterministic.
We use in this paper an approach based on  uniform deviation bounds (Propositions~\ref{prop:uniformMOM},~\ref{prop:uniformTM} and~\ref{prop:uniformCH}) in order to bypass this problem, similarly to~\cite{holland2019efficient} and many other papers using empirical process theory. 
This is of course pessimistic, since $\theta^{(t)}$ goes to $\theta^\star$ as $t$ increases.
Another approach considered in~\cite{HeavyTails} is to split data into segments of size $n/T$ and to compute the gradient estimator using a segment independent of the ones used to compute the current iterate.
This approach departs strongly from what is actually done in practice, and leads to controls on the excess risk expressed with $\widetilde{\delta} = \delta/T$ and $\widetilde{n} = n/T$ instead of $\delta$ and $n$, hence a deterioration of the control of the excess risk.
Our approach based on uniform deviations also suffers from a deterioration, due to the use of an $\varepsilon$-net argument, observed in Proposition~\ref{prop:uniformMOM} through the extra $d^{\alpha/(1+\alpha)}$ factor when compared to Lemma~\ref{lem:basicMOM}.
Avoiding such deteriorations is an open difficult problem, either using uniform bounds or data splitting.

\medskip
\noindent
In addition to Proposition~\ref{prop:uniformMOM}, we propose another uniform deviation bound for $\wh g_j^{\mathtt{MOM}}(\theta)$ using the Rademacher complexity, which is a 
fundamental tool in statistical learning theory and empirical process theory~\citep{ledoux1991probability,koltchinskii2006local,bartlett2005local}.
Let us introduce 
\begin{equation*}
    \mathcal{R}_j(\Theta) = \E \Big[\sup_{\theta\in\Theta} \sum_{i\in\mathcal{I}} \varepsilon_i g_j^i(\theta) \Big]
\end{equation*}
for $j\in\setint{d}$, where $(\varepsilon_i)_{i \in \cI}$ are i.i.d Rademacher variables and where we recall that $\cI$ contains the inliers indices (see Assumption~\ref{assump:data}).
\begin{proposition}
    \label{prop:RademacherMOM}
    Grant Assumptions~\ref{assump:lipsmoothloss} and~\ref{assump:data} with $\alpha \in (0, 1]$. If $|\mathcal{O}| \leq K / 12,$ we have
    \begin{equation*}
        \Proba \bigg[\sup_{\theta \in\Theta} \big| \wh g_j^{\mathtt{MOM}}(\theta) - g_j(\theta)\big| \geq \max \Big(\Big(\frac{36M_{\alpha, j}}{(n/K)^\alpha}\Big)^{1/(1+\alpha)} , \frac{64 \mathcal{R}_j(\Theta)}{n}\Big)\bigg] \leq e^{-K/18}
    \end{equation*}
    for any $j \in  \setint d$.
    If we fix a confidence level $\delta \in (0, 1)$ and choose $K:= \lceil 18 \log (1 / \delta) \rceil ,$ we have
    \begin{equation}
        \label{eq:RademacherMOM}
        \sup_{\theta\in\Theta} \big| \wh g_j^{\mathtt{MOM}}(\theta) - g(\theta)_j \big| 
        \leq \max \Big(c_\alpha M_{\alpha, j}^{1/(1+\alpha)}\Big(\frac{\log(d/\delta)}{n}\Big)^{\alpha/(1+\alpha)} , \frac{64 \mathcal{R}_j(\Theta)}{n}\Big)        
    \end{equation}
     with a probability larger than $1 - \delta$ for all $j\in\setint{d},$ where $c_\alpha:= 2^{(2 + \alpha) / (1 + \alpha)} 3^{2}.$ Moreover\textup, if $\mu_{X,j}^{2(1+\alpha)}:= \E[(X^j)^{2(1+\alpha)}] < +\infty$ for all $j\in\setint{d}$ we have
    \begin{equation*}
        \mathcal{R}_j(\Theta) \leq \gamma \Delta C_{\alpha} \Big(n \mu_{X,j}^{1+\alpha} \sum_{k\in\setint{d}} \mu_{X,k}^{1+\alpha} \Big)^{1/(1+\alpha)} = O((nd)^{1/(1+\alpha)}),
    \end{equation*}
    where $C_{\alpha}$ is a constant depending only on $\alpha$.
\end{proposition}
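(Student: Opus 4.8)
The plan is to follow the standard median-of-means empirical-process blueprint (as in Lugosi--Mendelson and Lecué--Lerasle), reducing the uniform deviation to a count of ``bad'' blocks and then balancing a quantile (moment) requirement against a Rademacher requirement on the radius, which is exactly what produces the maximum in the statement. Fix $j\in\setint d$, write $b = n/K$ for the common block size, set $Z_k(\theta) = \widehat{g}_j^{(k)}(\theta) - g_j(\theta)$ and, for a radius $r>0$, $W_k(\theta) = \ind{|Z_k(\theta)| > r}$. Since the median of the $K$ block estimates deviates from $g_j(\theta)$ by more than $r$ only if at least $K/2$ of the $W_k(\theta)$ equal $1$, and since at most $|\cO|\le K/12$ blocks are contaminated by an outlier, the event $\{\sup_{\theta}|\widehat{g}_j^{\mathtt{MOM}}(\theta) - g_j(\theta)| > r\}$ forces the uncontaminated blocks to contribute at least $K/2 - K/12 = 5K/12$ indicators. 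So it suffices to show that, for a suitable $r$, $\sup_{\theta}\sum_{k\text{ clean}} W_k(\theta) < 5K/12$ with probability at least $1 - e^{-K/18}$.

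First I would control the mean: for a clean block a scalar von Bahr--Esseen inequality gives $\E|Z_k(\theta)|^{1+\alpha}\le 2 m_{\alpha,j}(\theta)/b^\alpha \le 2M_{\alpha,j}/b^\alpha$, so Markov yields $\E W_k(\theta)\le 2M_{\alpha,j}/(b^\alpha r^{1+\alpha})$; the choice $r_1 := (36 M_{\alpha,j}/(n/K)^\alpha)^{1/(1+\alpha)}$ makes this at most $1/18$ uniformly in $\theta$, which is the first entry of the maximum. Next I would control the fluctuation $\sup_{\theta}\sum_{k\text{ clean}}(W_k(\theta) - \E W_k(\theta))$ by bounded differences (each $W_k\in\{0,1\}$, so altering one block changes the supremum by at most $1$), giving a deviation of order $K/6$ with probability $1-e^{-K/18}$, and then bound its expectation by symmetrization, $\E\sup_{\theta}\sum_{\text{clean}}(W_k-\E W_k)\le 2\,\E\sup_{\theta}\sum_{\text{clean}}\varepsilon_k W_k(\theta)$. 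Since $W_k$ is an indicator and hence not Lipschitz, I would dominate it by a ramp $\varphi$ that is $0$ below $r/2$, equals $1$ above $r$ and is $(2/r)$-Lipschitz, and apply Talagrand's contraction principle to pass from $\varphi(Z_k(\theta))$ to $Z_k(\theta)$; the deterministic centering $g_j(\theta)$ drops out under the Rademacher signs. This leaves a block-level Rademacher average of $\widehat{g}_j^{(k)}(\theta) = b^{-1}\sum_{i\in B_k} g_j^i(\theta)$, and since replacing one sign per block by one sign per sample only increases a symmetrized average, it is at most $b^{-1}\E\sup_{\theta}\sum_{i\in\cI}\varepsilon_i g_j^i(\theta) = b^{-1}\mathcal{R}_j(\Theta)$. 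Collecting the $1/r$ from the ramp with the $1/(Kb)=1/n$ from averaging, the expected fluctuation is at most $C\,r^{-1}\mathcal{R}_j(\Theta)/n$, so requiring $r \ge 64\,\mathcal{R}_j(\Theta)/n$ renders it negligible against the $5K/12$ budget; together with $r\ge r_1$ this produces the maximum and the exponent $e^{-K/18}$. The $\delta$-version follows by taking $K = \lceil 18\log(d/\delta)\rceil$ and a union bound over $j\in\setint d$, which converts $(n/K)^\alpha$ into the announced $(\log(d/\delta)/n)^{\alpha/(1+\alpha)}$ rate with $c_\alpha = 2^{(2+\alpha)/(1+\alpha)}3^2$.

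For the Rademacher bound itself I would work conditionally on $(X_i,Y_i)_{i\in\cI}$ and first strip the loss derivative: the map $z\mapsto \ell'(z,Y_i)X_i^j$ is $\gamma|X_i^j|$-Lipschitz by Assumption~\ref{assump:lipsmoothloss} and the inner argument $\theta\mapsto X_i^\top\theta$ is linear, so the contraction principle gives $\E_\varepsilon\sup_{\theta}\sum_{i\in\cI}\varepsilon_i g_j^i(\theta)\le \gamma\,\E_\varepsilon\sup_{\theta}\sum_{i\in\cI}\varepsilon_i X_i^j (X_i^\top\theta)$, using $\varepsilon_i|X_i^j|\overset{d}{=}\varepsilon_i X_i^j$. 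Writing the last sum as $\langle\theta,\sum_{i}\varepsilon_i X_i^j X_i\rangle$ and using $\|\theta\|\le\Delta$ (as $\Theta$ contains the origin and has diameter $\Delta$), Cauchy--Schwarz gives $\mathcal{R}_j(\Theta)\le \gamma\Delta\,\E\|\sum_{i\in\cI}\varepsilon_i X_i^j X_i\|$. Then I would bound $\E\|S\|\le(\E\|S\|^{1+\alpha})^{1/(1+\alpha)}$ and apply the vector-valued von Bahr--Esseen inequality (valid since $1+\alpha\le 2$ and $\R^d$ is of type $2$) to get $\E\|S\|^{1+\alpha}\le C_\alpha\sum_{i}\E[|X_i^j|^{1+\alpha}\|X_i\|^{1+\alpha}]$. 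Finally, since $\|x\|_2\le\|x\|_{1+\alpha}$ for $1+\alpha\le 2$, one has $\|X\|^{1+\alpha}\le\sum_k|X^k|^{1+\alpha}$, and Cauchy--Schwarz on each term gives $\E[|X^j|^{1+\alpha}|X^k|^{1+\alpha}]\le \mu_{X,j}^{1+\alpha}\mu_{X,k}^{1+\alpha}$; summing over $k$ and over the $|\cI|\le n$ inliers yields exactly $\gamma\Delta C_\alpha(n\mu_{X,j}^{1+\alpha}\sum_{k}\mu_{X,k}^{1+\alpha})^{1/(1+\alpha)} = O((nd)^{1/(1+\alpha)})$.

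The delicate steps, and where I expect the real work, are in the first part: dominating the indicator $W_k$ by a Lipschitz ramp and applying contraction so that the $1/r$ factor it introduces combines with the $1/n$ to turn the Rademacher control into the clean requirement $r\gtrsim\mathcal{R}_j(\Theta)/n$ (this is precisely what yields a $\max$ rather than a product), together with the block-to-sample comparison of symmetrized averages. Making the constants land on exactly $36$, $64$ and $e^{-K/18}$ demands careful bookkeeping through the von Bahr--Esseen constants, the bounded-differences deviation and the symmetrization/contraction factors. By contrast, the Rademacher-complexity computation in the second part is essentially a deterministic chain of contraction, Cauchy--Schwarz and a moment inequality, and should be routine once the vector von Bahr--Esseen inequality is invoked.
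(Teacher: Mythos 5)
Your proposal is correct and follows essentially the same route as the paper's proof for the main deviation bound: reduce the event to a count of ``bad'' blocks among the uncorrupted ones, bound the mean of the block indicators by a weak-moment (von Bahr--Esseen type) bound plus Markov to get the quantile term $(36M_{\alpha,j}/(n/K)^\alpha)^{1/(1+\alpha)}$, and control the supremum of the centered indicator sum by bounded differences, symmetrization, and the contraction principle applied to a Lipschitz surrogate of the indicator --- your ramp $\varphi$ (zero below $r/2$, one above $r$, $(2/r)$-Lipschitz) is exactly the paper's function $\phi(2\,\cdot/x)$ --- followed by the block-to-sample comparison that converts the block-level Rademacher average into $\mathcal{R}_j(\Theta)/n$; balancing the two requirements is precisely what produces the maximum and the exponent $e^{-K/18}$, as you say. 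The only genuine divergence is in the bound on $\mathcal{R}_j(\Theta)$: after stripping the loss derivative by contraction (same as the paper), you apply Cauchy--Schwarz in the Euclidean norm and invoke a vector-valued von Bahr--Esseen/type-$(1+\alpha)$ inequality in $(\R^d,\|\cdot\|_2)$ together with $\|x\|_2^{1+\alpha}\le\sum_k|x^k|^{1+\alpha}$, whereas the paper bounds $\sup_{\theta}\langle\theta,v\rangle$ by $\Delta\|v\|_1$ and applies a scalar Khintchine-type inequality (its Lemma on $\E|\sum_i\varepsilon_i x_i|$) coordinatewise; both yield the identical final expression $\gamma\Delta C_\alpha(n\mu_{X,j}^{1+\alpha}\sum_k\mu_{X,k}^{1+\alpha})^{1/(1+\alpha)}$, and your variant is arguably slightly cleaner since it avoids proving the scalar Khintchine variant, at the cost of citing the Hilbert-space smoothness needed for the vector inequality.
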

The proof of Proposition~\ref{prop:RademacherMOM} is given in Appendix~\ref{sec:proofs} and borrows arguments from~\cite{lecue2020robust1, boucheron2013concentration}. 
For $\alpha=1$, the bound~\eqref{eq:RademacherMOM} leads to a $O(\sqrt{n d})$ bound  similar to that of Theorem~2 from~\cite{lecue2020robust1}, although we consider here a different quantity (Rademacher complexity of the partial derivatives, towards the study of the \emph{explicit} robust CGD algorithm, while \emph{implicit} algorithms are studied therein).
Note also that we do not prove similar uniform bounds using the Rademacher complexity for the $\mathtt{TM}$ and $\mathtt{CH}$ algorithms considered below, an interesting open question.

\paragraph{Comparison with~\cite{HeavyTails, pmlr-v97-holland19a}.} 

A first distinction of our results compared to~\cite{HeavyTails, pmlr-v97-holland19a} is the use and theoretical study of robust CGD instead of robust GD.
A second distinction is that we work under $1+\alpha$ moments on the partial derivatives of the risk, while~\cite{HeavyTails, pmlr-v97-holland19a} require $\alpha=1$.
Our setting is similar but more general than the one laid out in~\cite{pmlr-v97-holland19a} since the latter does not consider the presence of outliers. Theorem 5 from~\cite{pmlr-v97-holland19a} states linear convergence of the optimization error thanks to strong convexity similarly to our Theorem~\ref{thm:linconv1expect}. 
Their management of the statistical error is quite similar and leads to the same rate. 
However, our bound involves the sum of the coordinatewise moments of the gradient thanks to Proposition~\ref{prop:uniformMOM}, an improvement over the bound from~\cite{pmlr-v97-holland19a} which is only stated in terms of a uniform bound on the coordinate variances.
Another reference point is the heavy-tailed setting of~\cite{HeavyTails}, which deals with heavy-tails independently from the problem of corruption and requires $\alpha=1$.
More importantly, the approach considered in~\cite{HeavyTails} relies on data-splitting, which departs significantly from what is done in practice, while we do not perform data-spitting but use uniform bounds, as discussed above.

\paragraph{Complexity of $\wh g_j^{\mathtt{MOM}}(\theta)$.}

The computation of $\wh g_j^{\mathtt{MOM}}(\theta)$ requires (a)~to sample a permutation of $\setint n$ to sample the blocks $B_1, \ldots, B_K$, (b)~to compute averages within the blocks and (c)~to compute the median of $K$ numbers.
Sampling a permutation of $\setint n$ has complexity $O(n)$ using the Fischer-Yates algorithm~\citep{knuth1997seminumerical}, and so does the computation of the averages, so that $(a)$ and $(b)$ have complexity $O(n)$.
The computation of the median of $K$ numbers can be done using the quickselect algorithm~\citep{10.1145/366622.366647} with $O(K)$ average complexity, leading to a complexity $O(n + K) = O(n)$ since $K < n$.

\subsection{Trimmed Mean estimator}
\label{sub:trimmed-mean}

The idea of the Trimmed Mean ($\mathtt{TM}$) estimator is to exclude a proportion of data in the tails of their distribution to achieve robustness. 
We are aware of two variants: (1)~one in which samples in the tails are removed, the remaining samples being used to compute an empirical mean and (2)~another variant in which samples in the tails are clipped but not removed from the empirical mean.
Variant~(1) is robust to $\eta$-corruption\footnote{We call ``$\eta$-corruption'' the context where the outlier set $\cO$ in Assumption~\ref{assump:data} satisfies $|\cO| = \eta n$ with $\eta \in[0, 1/2)$} whenever the data distribution is sub-exponential~\citep{DBLP:journals/corr/abs-1901-08237} or sub-Gaussian~\citep{diakonikolas2019efficient, diakonikolas2019robust,diakonikolas2019sever}.
Variant~(2), also known as \emph{Winsorized mean}, enjoys a sub-Gaussian deviation~\citep{lugosi2019mean} for heavy-tailed distributions. 
 Both robustness properties are shown simultaneously (sub-Gaussian deviations under a heavy-tails assumption and $\eta$-corruption) in~\cite{lugosi2021robust} (see Theorem~1 therein).
 We consider below variant~(2), which proceeds as follows.

First, the $\mathtt{TM}$ estimator splits $\setint n = \setint{n/2} \cup \setint{n/2}^\complement$ where $\setint{n/2}^\complement = \setint n \setminus \setint{n/2}$, assuming without loss of generality that $n$ is even, and it computes the sample derivatives $g^i_j(\theta)$ given by~\eqref{eq:sample-partial-derivative} for all $i \in \setint n$.
Then, given a proportion $\epsilon \in [0, 1/2)$, it computes the $\epsilon$ and $1 - \epsilon$ quantiles of $(g^i_j(\theta))_{i \in \setint{n/2}}$ given by
\begin{equation*}
    q_\epsilon:= g^{([\epsilon n / 2])}_j(\theta) \quad \text{and} \quad  q_{1 - \epsilon}:= g^{([(1 - \epsilon) n / 2])}_j(\theta),
\end{equation*}
where $g^{(1)}_j(\theta) \leq \cdots \leq g^{(n/2)}_j(\theta)$ is the order statistics of $(g^i_j(\theta))_{i \in \setint{n/2}}$ and where $[x]$ is the lower integer part of $x \in \N$.
Finally, the estimator is computed as 
\begin{equation}
    \label{eq:tmean-estimator}
    \wh g_j^{\mathtt{TM}}(\theta) = \frac{2}{n} \sum_{i \in \setint{n/2}^\complement}
    q_\epsilon \vee g^i_j(\theta) \wedge q_{1 - \epsilon},
\end{equation}
where $a \wedge b:= \min(a, b)$ and $a \vee b:= \max(a, b)$, namely it is the average of the partial derivatives from samples in $\setint{n/2}^\complement$ clipped in the interval $[q_\epsilon, q_{1 - \epsilon}]$.
Note that $\wh g_j^{\mathtt{TM}}(\theta)$ is also some form of ``interpolation'' between the average and the median through $\epsilon$: it is the average of the partial derivatives for $\epsilon=0$ and their median for $\epsilon=1/2$.
As explained in the next lemma, the $\mathtt{TM}$ estimator is robust both to a proportion of corrupted samples and heavy-tailed data.

\begin{lemma}
\label{lem:basicTMean}
Grant Assumptions~\ref{assump:lipsmoothloss} and~\ref{assump:data} with $\alpha \in (0, 1]$
and assume that $|\cO| \leq \eta n$ with $\eta < 1/8$.
If we fix a confidence level $\delta \in (0, 1)$ and choose $\epsilon = 8\eta + 12 \log(4/\delta) / n,$  we have
\begin{align*}
    | \wh g_j^{\mathtt{TM}}(\theta) - g_j(\theta) | &\leq 7 m_{\alpha, j}(\theta)^{1 / (1 + \alpha)} \Big(4 \eta + \frac{6 \log (4/\delta)}{n} \Big)^{\alpha / (1 + \alpha)} \\
    &\leq 7 M_{\alpha, j}^{1 / (1 + \alpha)} \Big(4 \eta + \frac{6 \log (4/\delta)}{n} \Big)^{\alpha / (1 + \alpha)}
\end{align*}
with a probability larger than $1 - \delta$.
\end{lemma}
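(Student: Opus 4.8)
The plan is to exploit the sample-splitting built into the estimator: the thresholds $q_\epsilon, q_{1-\epsilon}$ are computed on $\setint{n/2}$ while the truncated average runs over $\setint{n/2}^\complement$, so that conditionally on the first half the inlier samples of the second half are fresh i.i.d.\ copies. Write $Z_i := g_j^i(\theta)$, $\mu := g_j(\theta) = \E[Z_i]$ for $i \in \cI$, $m := m_{\alpha,j}(\theta)$ and $\phi(z) := q_\epsilon \vee z \wedge q_{1-\epsilon}$, and let $Q_p$ denote the $p$-quantile of the inlier law of $Z_i$. The single elementary fact used throughout is the Markov/quantile bound $\max(Q_{1-p}-\mu,\ \mu-Q_p) \le (m/p)^{1/(1+\alpha)}$, obtained from $\Proba[|Z-\mu|>t]\le m/t^{1+\alpha}$.

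First I would establish the threshold concentration, which is the technical core. Conditioning on the fixed outlier set and applying Bernstein/Chernoff bounds to the binomial counts $\#\{i\in\setint{n/2}: Z_i \le t\}$, I would show that, up to the $\le \eta n$ corrupted points shifting the empirical level by at most $2\eta$ and a statistical fluctuation of order $\sqrt{\epsilon\log(1/\delta)/n}$, the empirical thresholds sandwich true quantiles at levels of order $\epsilon$: with probability at least $1-\delta/2$ one has $a:=q_\epsilon \in [Q_{c_1\epsilon}, Q_{c_2\epsilon}]$ and $b:=q_{1-\epsilon}\in[Q_{1-c_2\epsilon},Q_{1-c_1\epsilon}]$ for absolute constants $0<c_1<c_2$, with $\mu\in[a,b]$. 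The crucial point is that choosing $\epsilon \ge 12\log(4/\delta)/n$ makes this fluctuation a constant fraction of $\epsilon$, which is precisely what converts a naive $\sqrt{\cdot}$ error into the claimed \emph{linear} $\log(4/\delta)/n$ contribution. The sandwich yields simultaneously the width bound $T := b-a \le Q_{1-c_1\epsilon}-Q_{c_1\epsilon}\le 2(m/(c_1\epsilon))^{1/(1+\alpha)}$ and the tail bounds $\max(\Proba[Z>b],\Proba[Z<a])\le c_2\epsilon$.

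Next I would decompose, using $|\setint{n/2}^\complement| = n/2$,
\[
\wh g_j^{\mathtt{TM}}(\theta) - \mu = \frac{2}{n}\sum_{i\in\setint{n/2}^\complement\cap\cI}\big(\phi(Z_i)-\mu\big) + \frac{2}{n}\sum_{i\in\setint{n/2}^\complement\cap\cO}\big(\phi(Z_i)-\mu\big),
\]
and bound three pieces, each shown to be $O\big(m^{1/(1+\alpha)}\epsilon^{\alpha/(1+\alpha)}\big)$. (i) The outlier sum is at most $\frac{2}{n}|\cO|\,T \le 2\eta T$, and since $\epsilon \ge 8\eta$ gives $\eta \le \epsilon/8$, this is $\lesssim m^{1/(1+\alpha)}\epsilon^{\alpha/(1+\alpha)}$ — this is the term that produces the correct exponent from a width scaling like $\epsilon^{-1/(1+\alpha)}$. (ii) The clipping bias $|\E[\phi(Z)\mid \setint{n/2}]-\mu|$ I would bound by $\E[|Z-\mu|\,\ind{Z\notin[a,b]}]$ and Hölder with exponents $(1+\alpha,(1+\alpha)/\alpha)$, giving $\le m^{1/(1+\alpha)}(c_2\epsilon)^{\alpha/(1+\alpha)}$. (iii) The centered inlier fluctuation I would control with Bernstein conditionally on the first half: the summands lie in $[a,b]$ (range $T$) with conditional variance $\le T^{1-\alpha}m$, using $|\phi(Z)-\mu|\le |Z-\mu|$ (valid since $\mu\in[a,b]$), so that $(\phi(Z)-\mu)^2\le T^{1-\alpha}|Z-\mu|^{1+\alpha}$; both the variance term and the range term reduce to $m^{1/(1+\alpha)}\epsilon^{\alpha/(1+\alpha)}$ precisely because $\epsilon\gtrsim\log(1/\delta)/n$, with probability at least $1-\delta/2$.

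Finally I would take a union bound over the threshold event and the Bernstein event, sum the three contributions, and substitute $\epsilon = 8\eta + 12\log(4/\delta)/n = 2\big(4\eta + 6\log(4/\delta)/n\big)$, so that $\epsilon^{\alpha/(1+\alpha)}$ becomes $(4\eta+6\log(4/\delta)/n)^{\alpha/(1+\alpha)}$ and the accumulated absolute constants collapse into the factor $7$ (rounding from the integer parts being absorbed routinely). The main obstacle I anticipate is the first step: obtaining the two-sided quantile sandwich with the linear-in-$\log(4/\delta)/n$ level perturbation while simultaneously accounting for the up-to-$\eta n$ adversarial points, and guaranteeing $\mu\in[a,b]$ so that the clean inequality $|\phi(Z)-\mu|\le|Z-\mu|$ remains available in steps (ii)–(iii); the remainder is a conditional Bernstein estimate together with bookkeeping of the constants.
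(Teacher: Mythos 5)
Your plan is, in substance, the paper's own proof (an adaptation of Theorem~1 of Lugosi--Mendelson to $(1+\alpha)$-moments): Bernstein bounds on the binomial counts to sandwich the empirical thresholds between population quantiles at levels proportional to $\epsilon$, a quantile--Markov bound giving the clipping width $T\lesssim (m/\epsilon)^{1/(1+\alpha)}$, and then a three-way split into clipping bias (H\"older), centered fluctuation of the clipped inliers (Bernstein, with the choice $\epsilon\gtrsim \eta+\log(4/\delta)/n$ turning the square-root term into $m^{1/(1+\alpha)}\epsilon^{\alpha/(1+\alpha)}$), and an outlier contribution of at most $2\eta\,T$. Your variance bound $(\phi(Z)-\mu)^2\le T^{1-\alpha}|Z-\mu|^{1+\alpha}$ is a slightly slicker packaging of the paper's term-by-term computation, but it is the same estimate.

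The one assertion you cannot establish as stated is $\mu\in[a,b]$. Take the inlier law with mass $p$ at $-V$ and mass $1-p$ at $Vp/(1-p)$, so $\mu=0$: for $p$ much smaller than $\epsilon$, every quantile $Q_{c\epsilon}$ with $c\epsilon>p$ equals $Vp/(1-p)>\mu$, so the sandwich forces $a>\mu$ and no tightening of the constants $c_1,c_2$ can prevent this. This is exactly the ``pathological case'' the paper treats separately ($Q_{2\epsilon}(\overline X)\ge 0$), and since you lean on $|\phi(Z)-\mu|\le|Z-\mu|$ in both (ii) and (iii), the step as written fails there. The repair is routine but must be made: for the variance in (iii), center at $\phi(\mu)$ rather than $\mu$ (variance is translation invariant, $\phi$ is $1$-Lipschitz and $\phi(\mu)\in[a,b]$, so $(\phi(Z)-\phi(\mu))^2\le T^{1-\alpha}|Z-\mu|^{1+\alpha}$ survives); for (i) and (ii), the surplus over your bounds is $\mathrm{dist}(\mu,[a,b])$ multiplied by a factor of order $\epsilon$ (the outlier fraction, resp.\ $\Proba[Z\notin[a,b]]$), and the upper-tail Markov bound gives $\mathrm{dist}(\mu,[a,b])\le (m/(1-c_2\epsilon))^{1/(1+\alpha)}\lesssim m^{1/(1+\alpha)}$, so the surplus is $O(m^{1/(1+\alpha)}\epsilon)=O\big(m^{1/(1+\alpha)}\epsilon^{\alpha/(1+\alpha)}\big)$. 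With that patch your argument closes and matches the paper's.
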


The proof of Lemma~\ref{lem:basicTMean} is given in Appendix~\ref{sec:proofs} and extends  Theorem~1 from~\cite{lugosi2021robust} to $\alpha \in (0, 1]$ instead of $\alpha = 1$ only.
It shows that the $\mathtt{TM}$ estimator has the remarkable quality of being simultaneously robust to heavy-tailed and a \emph{fraction} of corrupted data, as opposed to $\mathtt{MOM}$ which is only robust to a limited \emph{number} of outliers.
Note that for the computation of the $\mathtt{TM}$ estimator, the splitting $\setint n = \setint{n/2} \cup \setint{n/2}^\complement$ is a technical theoretical requirement used to induce independence between $q_\epsilon, q_{1 - \epsilon}$ and the sample partial derivatives $(g^i_j(\theta))_{i \in \setint{n/2}^\complement}$ involved in the average~\eqref{eq:tmean-estimator}.
Our implementation does not use this splitting.

\paragraph{Comparison with~\cite{HeavyTails}.}

A comparison between Lemma~\ref{lem:basicTMean} and the results by~\cite{HeavyTails} pertaining to the corrupted setting is relevant here. 
We first point out that corruption in~\cite{HeavyTails} is modeled as receiving data from the ``$\eta$-contaminated'' distribution $(1 - \eta)P + \eta Q$ with $Q$ an arbitrary distribution.
On the other hand, Lemma~\ref{lem:basicTMean} considers the more general $\eta$-corrupted setting where an $\eta$-proportion of the data is replaced by arbitrary outliers \emph{after} sampling. 
In this case, Lemma~\ref{lem:basicTMean} results in a statistical error with a dependence of order $\sqrt{\eta d}$ in the corruption (on the vector euclidean norm). 
On the other hand, Lemma~1 in~\cite{HeavyTails} yields a better dependence of order $\sqrt{\eta \log d}$ in the corresponding case. Keep in mind, however, that Algorithm 2 from~\cite{HeavyTails} which achieves this rate requires recursive SVD decompositions to compute a robust gradient making it computationally heavy and impractical for moderately high dimension.
Additionally, the relevant results in~\cite{HeavyTails} require a stronger moment assumption on the gradient and impose additional constraints on the corruption rate $\eta$.
We also mention Algorithm~5 from~\cite{HeavyTails} which yields an even better dependence on the dimension (see their Lemma~2), although it involves a computationally costly procedure as well. 
Besides, knowledge of the trace and operator norm of the covariance matrix of the estimated vector is required which makes the algorithm more difficult to use in practice.

\begin{proposition}
    \label{prop:uniformTM}
    Grant Assumptions~\ref{assump:lipsmoothloss} and~\ref{assump:data} with $\alpha \in (0, 1]$ and $|\cO|\leq \eta n$.
    We have
    \begin{equation*}
        \Proba \Big[ \sup_{\theta\in\Theta}\big| \widehat{g}_j^{\mathtt{TM}}(\theta) - g_j(\theta) \big| \leq \epsilon_j^{\mathtt{TM}}(\delta) \Big] \geq 1 - \delta \quad
    \end{equation*}
    for any $j\in\setint{d}$ with
    \begin{align*}
    \epsilon_j^{\mathtt{TM}}(\delta) &:= 28 \Big(M_{j, \alpha} + \frac{m_{L,\alpha}}{n^{\alpha(1+\alpha)}}\Big)^{1/(1+\alpha)} 
    \Big(2\eta +3 \frac{\log(4d/\delta) + d\log(3\Delta n^{\alpha/(1+\alpha)}/2)}{n}\Big)^{\alpha/(1+\alpha)} \\
     & \quad + \frac{\overline{L} + L_{j}}{n^{\alpha/(1+\alpha)}}
    \end{align*}    
    where $\overline{L}$ and $m_{L,\alpha}$ are as in Proposition~\ref{prop:uniformMOM}.
\end{proposition}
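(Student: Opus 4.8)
The plan is to mirror the proof of Proposition~\ref{prop:uniformMOM}: reduce the uniform control over $\Theta$ to the pointwise bound of Lemma~\ref{lem:basicTMean} by means of an $\varepsilon$-net, combined with a control of the modulus of continuity of both $\theta\mapsto\wh g_j^{\mathtt{TM}}(\theta)$ and $\theta\mapsto g_j(\theta)$. Fix a coordinate $j\in\setint d$ and set $\varepsilon = n^{-\alpha/(1+\alpha)}$. Since $\Theta$ is convex compact with diameter $\Delta$, it admits a minimal $\varepsilon$-net $\Theta_\varepsilon$ with $|\Theta_\varepsilon|\le (3\Delta/(2\varepsilon))^d = (3\Delta n^{\alpha/(1+\alpha)}/2)^d$. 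Applying Lemma~\ref{lem:basicTMean} at each net point with rescaled confidence and a union bound (over the net points, and over coordinates to also control the auxiliary average of $\gamma\|X_i\|^2$ appearing below) yields, on an event of probability at least $1-\delta$, a control of $\max_{\theta_k\in\Theta_\varepsilon}|\wh g_j^{\mathtt{TM}}(\theta_k)-g_j(\theta_k)|$ in which $\log(4/\delta)$ is replaced by $\log(4d/\delta)+d\log(3\Delta n^{\alpha/(1+\alpha)}/2)$, exactly the logarithmic factor inside $\epsilon_j^{\mathtt{TM}}(\delta)$. The trimming level fed to the estimator is taken slightly below the value of Lemma~\ref{lem:basicTMean}, which accounts for the passage from $4\eta$ to $2\eta$ and for the inflation of the leading constant from $7$ to $28$, leaving room to absorb the discretization error.

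It then remains to bound $\sup_{\|\theta-\theta'\|\le\varepsilon}\big(|\wh g_j^{\mathtt{TM}}(\theta)-\wh g_j^{\mathtt{TM}}(\theta')|+|g_j(\theta)-g_j(\theta')|\big)$ by $(\overline L + L_j)\varepsilon$, which produces the additive term $(\overline L + L_j)/n^{\alpha/(1+\alpha)}$. For the population partial derivative this follows from Assumption~\ref{assump:lipsmoothloss}: writing $g_j(\theta)-g_j(\theta') = \E[(\ell'(X^\top\theta,Y)-\ell'(X^\top\theta',Y))X^j]$ and using the $\gamma$-smoothness of $\ell$ together with Assumption~\ref{ass:mintheta-and-smoothness} controls this increment by the coordinate Lipschitz constant, contributing the $L_j\varepsilon$ part. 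For the estimator, I would start from the representation $q_\epsilon\vee g_j^i(\theta)\wedge q_{1-\epsilon}=\median(q_\epsilon,g_j^i(\theta),q_{1-\epsilon})$, which is $1$-Lipschitz in each of its three arguments, and split the increment into an \emph{averaged} term $\tfrac2n\sum_{i\in\setint{n/2}^\complement}|g_j^i(\theta)-g_j^i(\theta')|$ and a \emph{threshold} term driven by $|q_\epsilon(\theta)-q_\epsilon(\theta')|$ and $|q_{1-\epsilon}(\theta)-q_{1-\epsilon}(\theta')|$. Smoothness of $\ell'$ gives $|g_j^i(\theta)-g_j^i(\theta')|\le\gamma\|X_i\|^2\|\theta-\theta'\|$, so the averaged term is at most $\gamma\big(\tfrac2n\sum_{i\in\setint{n/2}^\complement}\|X_i\|^2\big)\varepsilon$. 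The empirical average of $\gamma\|X_i\|^2$ concentrates around its mean $\overline L = \gamma\E\|X\|^2$, whose contribution is the $\overline L\varepsilon$ Lipschitz term, while the fluctuation of this average, governed by the centered moment $m_{L,\alpha}=\E|\gamma\|X\|^2-\overline L|^{1+\alpha}$, is precisely what enters as the correction $m_{L,\alpha}/n^{\alpha(1+\alpha)}$ added to $M_{j,\alpha}$ inside the moment factor.

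The main obstacle is the threshold term, namely the uniform control of the $\theta$-dependence of the data-driven trimming levels $q_\epsilon(\theta)$ and $q_{1-\epsilon}(\theta)$. These are order statistics of $(g_j^i(\theta))_{i\in\setint{n/2}}$, and the sorting map is only nonexpansive for $\|\cdot\|_\infty$, so the naive estimate gives merely $|q_\epsilon(\theta)-q_\epsilon(\theta')|\le\max_{i}|g_j^i(\theta)-g_j^i(\theta')|\le\gamma\max_i\|X_i\|^2\,\varepsilon$; the delicate point is to tame this worst-case maximum so that it is reabsorbed into the same $\overline L$-type average rather than a genuine $\max_i\|X_i\|^2$, which is where the argument is most technical. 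A secondary point of care is that the independence between $(q_\epsilon,q_{1-\epsilon})$ and the summands of the average induced by the split $\setint n=\setint{n/2}\cup\setint{n/2}^\complement$, exploited in Lemma~\ref{lem:basicTMean}, must be preserved throughout the union bound over the net. Combining the net-point bound with the modulus-of-continuity estimate and collecting constants finally yields $\epsilon_j^{\mathtt{TM}}(\delta)$ and shows it is an error vector in the sense of Definition~\ref{def:error-vector}.
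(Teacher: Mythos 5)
Your plan correctly identifies the overall architecture ($\varepsilon$-net of cardinality $(3\Delta/2\varepsilon)^d$ with $\varepsilon=n^{-\alpha/(1+\alpha)}$, union bound over net points and coordinates, a pointwise application of the trimmed-mean concentration, and a modulus-of-continuity term producing $(\overline L+L_j)\varepsilon$), but it contains a genuine gap exactly where you flag it yourself: the stability of the data-driven thresholds $q_\epsilon(\theta)$, $q_{1-\epsilon}(\theta)$ in $\theta$. The order-statistic map is nonexpansive only for $\|\cdot\|_\infty$, so your decomposition leaves you with a term of size $\gamma\max_i\|X_i\|^2\,\varepsilon$, and under Assumption~\ref{assump:data} only low-order moments of $\|X\|^2$ are finite; $\max_{i\le n}\|X_i\|^2$ can grow faster than $n^{\alpha/(1+\alpha)}$, so this term need not vanish and cannot be ``reabsorbed into an $\overline L$-type average'' by any argument you have supplied. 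Announcing that this is ``where the argument is most technical'' does not close the hole; as written, the proof does not go through.

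The paper avoids this obstacle entirely by never comparing $\wh g_j^{\mathtt{TM}}(\theta)$ to $\wh g_j^{\mathtt{TM}}(\widetilde\theta)$. Instead it exploits that the trimmed mean (thresholds included) is monotone nondecreasing in each input entry when the others are held fixed. Since $g_j^i(\theta)\le g_j^i(\widetilde\theta)+\varepsilon\gamma\|X_i\|^2$ for every $i$ (by $\gamma$-smoothness and Cauchy--Schwarz), one gets, assuming WLOG $\wh g_j^{\mathtt{TM}}(\theta)-g_j(\widetilde\theta)\ge 0$,
\begin{equation*}
\big|\wh g_j^{\mathtt{TM}}(\theta)-g_j(\widetilde\theta)\big|\;\le\;\big|\widecheck g_j^{\mathtt{TM}}(\widetilde\theta)-g_j(\widetilde\theta)\big|,
\end{equation*}
where $\widecheck g_j^{\mathtt{TM}}(\widetilde\theta)$ is the trimmed mean computed on the shifted samples $g_j^i(\widetilde\theta)+\varepsilon\gamma\|X_i\|^2$, which depend only on the net point. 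Lemma~\ref{lem:tmeanGeneric} is then applied directly to these i.i.d.\ shifted samples: the shift moves the mean by $\varepsilon\overline L$ (whence the $\overline L/n^{\alpha/(1+\alpha)}$ term) and inflates the centered $(1+\alpha)$-moment to at most $2^\alpha(m_{j,\alpha}(\widetilde\theta)+\varepsilon^{1+\alpha}m_{L,\alpha})$ (whence the $m_{L,\alpha}$ correction inside the moment factor, rather than via a separate concentration of $\frac1n\sum_i\gamma\|X_i\|^2$ as you suggest). This monotonicity-plus-shift device is the one idea your proposal is missing, and without it or a substitute the threshold term is not controlled.
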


The proof of Proposition~\ref{prop:uniformTM} is given in Appendix~\ref{sec:proofs} and uses an $\varepsilon$-net argument to obtain a uniform bound.
The error vector $\epsilon^{\mathtt{TM}}(\delta)$ can be plugged into Theorem~\ref{thm:linconv1expect} for example.
Similarly to $\mathtt{MOM}$, the resulting statistical error has optimal dependence on the $(1+\alpha)$-moments of the partial derivatives~\eqref{eq:partial-derivative-moment}. 

\paragraph{Complexity of $\wh g_j^{\mathtt{TM}}(\theta)$.}

The most demanding part for the computation of $\wh g_j^{\mathtt{TM}}(\theta)$ is the computation of $q_\epsilon$ and $q_{1 - \epsilon}$.
A naive idea is to sort all $n$ values at an average cost $O(n \log n)$ with quicksort for example~\citep{10.1145/366622.366647} and to simply retrieve the desired order statistics afterwards.
Of course, better approaches are possible, including the median-of-medians algorithm (not to be confused with $\mathtt{MOM}$), which remarkably manages to keep the cost of finding an order statistic with complexity $O(n)$ even in the worst case (see for instance Chapter~9 of~\cite{cormen2009introduction}). 
However, the constant hidden in the previous big-O notations seriously impact performances in real-world implementations: we compared several implementations experimentally and concluded that a variant of the quickselect algorithm~\citep{10.1145/366622.366647} was the fastest for this problem.

\subsection{Catoni-Holland estimator}
\label{sub:catoni-holland}

This estimator is a variation of the robust mean estimator by~\cite{catoni2012challenging} introduced by~\cite{pmlr-v97-holland19a} for robust statistical learning, hence the name ``Catoni-Holland'', that we will denote $\wh g_j^{\mathtt{CH}}(\theta)$.
It is defined as an M-estimator which consists in solving
\begin{equation}
    \label{eq:meanMestimator}
    \sum_{i=1}^n \psi \Big(\frac{g_j^i(\theta) - \zeta}{\wh s_j(\theta)} \Big) = 0
\end{equation}
with respect to $\zeta$, where $\psi$ is an uneven function satisfying $\psi(0) = 0$, $\psi(x) \sim x$ when $x \sim 0$ and $\psi(x) = o(x)$ when $x \to +\infty$ and where $\wh s_j(\theta) > 0$ is a scale estimator. 
An approximate solution can be found using the fixed-point iterations
\begin{equation*}
    \zeta_{k+1} = \zeta_k + \frac{\wh s_j(\theta)}{n} \sum_{i=1}^n \psi\Big( \frac{g_j^i(\theta) - \zeta_k}{\wh s_j(\theta)} \Big),
\end{equation*}
which can easily be shown to converge to the desired value thanks to the monotonicity and Lipschitz-property of $\psi$. 
Following~\cite{pmlr-v97-holland19a}, we use the function $\psi(x) = 2 \arctan(\exp(x)) - \pi/2$, while functions satisfying $-\log(1 - x +x^2/2) \leq \psi(x) \leq \log(1 + x + x^2 / 2)$ are considered in~\cite{catoni2012challenging}.
As explained in~\cite{pmlr-v97-holland19a}, the scale estimator is given by 
\begin{equation}
    \label{eq:ch-scale-estimator}
    \wh s_j(\theta):= \wh \sigma_j(\theta) \sqrt{ \frac{n}{2 \log(4 / \delta)} },
\end{equation}
for a confidence level $\delta \in (0, 1)$, where $\wh \sigma_j(\theta)$ is an estimator of the standard deviation of the partial derivative $\sigma_j(\theta):= m_{1, j}(\theta)^{1/2} = \V[\ell'(X^\top \theta, Y) X^j]^{1/2}$, see~\eqref{eq:partial-derivative-moment}.
The estimator $\wh \sigma_j(\theta)$ is defined through another M-estimator solution to
\begin{equation}
    \label{eq:scaleMestimator}
    \sum_{i=1}^n \chi \Big( \frac{g_j^i(\theta) - \bar g_j(\theta)}{\sigma} \Big) = 0
\end{equation}
with respect to $\sigma$, where $\bar g_j(\theta) = \frac 1n \sum_{i=1}^n g_j^i(\theta)$ and $\chi$ is an even function satisfying $\chi(0) < 0$ and $\chi(x) > 0$ as $x \to +\infty$.
We use the same function as in~\cite{pmlr-v97-holland19a} given by $\chi(u) = u^2 / (1 + u^2) - c$ where $c$ is such that $\E \chi(Z) = 0$ for $Z$ a standard Gaussian random variable.
To compute $\wh \sigma_j(\theta)$ we use also fixed-point iterations
\begin{equation}
    \label{eq:iterscaleMestimator}
    \sigma_{k+1} = \sigma_{k} \Big(1 - \frac{\chi(0)}{n}\sum_{i=1}^n \chi \Big( \frac{g_j^i(\theta) - \bar g_j(\theta)}{\sigma_{k}} \Big) \Big).
\end{equation}
We refer to the supplementary material of~\cite{pmlr-v97-holland19a} for further details on this procedure.

The $\mathtt{CH}$ estimator can be understood, once again, as an interpolation between the average and the median of the partial derivatives.
Indeed, whenever $s$ is large, the function $\psi(\cdot / s)$ is close to the $\sign$ function, which, if used in~\eqref{eq:meanMestimator}, leads to an $M$-estimator corresponding to the median~\citep{van2000asymptotic}.
For $s$ small, $\psi(\cdot / s)$ is close to the identity, so that minimizing~\eqref{eq:meanMestimator} leads to an ordinary average.
As explained in the next lemma, this estimator is robust to heavy-tailed data (with $\alpha=1$).

\begin{lemma}
    \label{lem:basicCH}
    Grant Assumptions~\ref{assump:lipsmoothloss} and~\ref{assump:data} with $\alpha = 1$ and assume that $\cO = \emptyset$ \textup(no outliers\textup).
    For some failure probability $\delta > 0,$ assume that we have\textup, with probability at least $ 1 - \delta/2,$ that $\sigma_j(\theta) / C' \leq \wh \sigma_j(\theta) \leq C' \sigma_j(\theta)$ for some constant $C' > 1$.
    Then\textup, we have
    \begin{equation*}
        |\widehat{g}^{\mathtt{CH}}_j(\theta) - g_j(\theta)| 
        \leq C' \sigma_j(\theta) \sqrt{\frac{8\log(4/\delta)}{n}} 
        \leq C' \Sigma_j \sqrt{\frac{8\log(4/\delta)}{n}} 
    \end{equation*}
    with probability at least $1 - \delta$, where $\Sigma_j = M_{1, j} = \sup_{\theta \in \Theta} \sigma_j(\theta)$.
\end{lemma}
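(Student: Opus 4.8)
The plan is to exploit that $\widehat g_j^{\mathtt{CH}}(\theta)$ is, by construction of~\eqref{eq:meanMestimator}, the unique zero of the strictly decreasing map
\[
    \zeta \mapsto B(\zeta) := \frac 1n \sum_{i=1}^n \psi\Big(\frac{g_j^i(\theta) - \zeta}{\widehat s_j(\theta)}\Big),
\]
so controlling the estimation error reduces to the two one-sided statements $B(\mu + \Delta) \le 0$ and $B(\mu - \Delta) \ge 0$, which by monotonicity force $\mu - \Delta \le \widehat g_j^{\mathtt{CH}}(\theta) \le \mu + \Delta$. Here I write $\mu = g_j(\theta)$ and $v = \sigma_j(\theta)^2 = m_{1,j}(\theta)$, which is finite since $\alpha = 1$ by Lemma~\ref{lem:partial-deriv-moments}, and I treat the scale $\widehat s_j(\theta)$ as a fixed value $s$ for the moment. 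First I would run the classical Catoni exponential-moment argument: using the upper sandwich bound $\psi(x) \le \log(1 + x + x^2/2)$ together with $1 + u \le e^u$ and the independence of the samples (here $\mathcal O = \emptyset$),
\[
    \E\big[e^{nB(\mu+\Delta)}\big] \le \prod_{i=1}^n \E\Big[1 + \tfrac{g_j^i(\theta) - \mu - \Delta}{s} + \tfrac{(g_j^i(\theta) - \mu - \Delta)^2}{2s^2}\Big] \le \exp\Big(n\big[-\tfrac{\Delta}{s} + \tfrac{v + \Delta^2}{2s^2}\big]\Big),
\]
after computing the first two moments $\E[g_j^i(\theta) - \mu - \Delta] = -\Delta$ and $\E[(g_j^i(\theta) - \mu - \Delta)^2] = v + \Delta^2$. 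A Chernoff bound then yields $\P[B(\mu+\Delta) \ge 0] \le \exp(n[-\Delta/s + (v+\Delta^2)/(2s^2)])$, and the symmetric lower sandwich $-\log(1 - x + x^2/2) \le \psi(x)$ gives the identical exponent for $\P[B(\mu - \Delta) \le 0]$.

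Next I would force both exponents below $-\log(4/\delta) =: -L$. Setting $a = L/n + v/(2s^2)$ and $u = \Delta/s$, the requirement reads $u - u^2/2 \ge a$, whose smallest root satisfies $u \le 2a$ for $a \le 1/2$; it therefore suffices to take $\Delta = 2as = 2sL/n + v/s$. Minimising $s \mapsto 2sL/n + v/s$ recovers precisely the scale $s^\star = \sigma_j(\theta)\sqrt{n/(2L)}$ prescribed in~\eqref{eq:ch-scale-estimator} with $\widehat\sigma_j$ replaced by $\sigma_j$, for which $\Delta = \sigma_j(\theta)\sqrt{8L/n}$. The algorithm uses instead $\widehat s_j(\theta) = \widehat\sigma_j(\theta)\sqrt{n/(2L)}$; substituting and regrouping gives $\Delta = \big(\widehat\sigma_j(\theta) + \sigma_j(\theta)^2/\widehat\sigma_j(\theta)\big)\sqrt{2L/n}$. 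On the assumed scale event one has both $\widehat\sigma_j(\theta) \le C'\sigma_j(\theta)$ and $\sigma_j(\theta)^2/\widehat\sigma_j(\theta) \le C'\sigma_j(\theta)$, which collapses the radius to $\Delta \le 2C'\sigma_j(\theta)\sqrt{2L/n} = C'\sigma_j(\theta)\sqrt{8\log(4/\delta)/n}$, the claimed bound; the second inequality in the statement then follows from $\sigma_j(\theta) \le \Sigma_j = \sup_{\theta\in\Theta}\sigma_j(\theta)$. A final union bound accounts for the two tail events (probability $\delta/4$ each) and the scale event (probability $\delta/2$), for a total failure probability $\delta$.

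The main obstacle is that $\widehat s_j(\theta)$ is itself a data-dependent random variable, so the exponential-moment computation, which silently treated $s$ as deterministic, cannot be applied verbatim with $s = \widehat s_j(\theta)$ inside the expectation. I would resolve this by working on the scale event $E_\sigma = \{\sigma_j(\theta)/C' \le \widehat\sigma_j(\theta) \le C'\sigma_j(\theta)\}$, on which $\widehat s_j(\theta)$ is confined to the deterministic multiplicative window $[\,s^\star/C',\, C's^\star\,]$, and by choosing $\Delta$ to be the worst case of $s \mapsto 2sL/n + v/s$ over this window. Since that map is convex with minimiser $s^\star$, its maximum over the window is attained at an endpoint and equals $(C' + 1/C')\sigma_j(\theta)\sqrt{2L/n} \le C'\sigma_j(\theta)\sqrt{8L/n}$, so that the single deterministic radius above simultaneously dominates $\Delta(\widehat s_j(\theta))$ for every admissible realisation of the scale. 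Making this comparison fully rigorous requires controlling the Chernoff exponent uniformly over the admissible scale window rather than at a single deterministic $s$, which is the delicate step; intersecting the resulting bound with $E_\sigma$ and applying the union bound then delivers the stated $1 - \delta$ guarantee.
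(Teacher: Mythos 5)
Your proposal is correct and follows essentially the same route as the paper: the paper's proof simply invokes Lemma~1 of Holland and Ikeda (2019a) — whose content is exactly the Catoni exponential-moment/Chernoff argument you reconstruct, yielding $|\widehat{g}^{\mathtt{CH}}_j(\theta) - g_j(\theta)| \le \sigma_j^2(\theta)/s + 2s\log(4/\delta)/n$ for the fixed-scale estimator — and then performs the identical final steps: substitution of $\widehat{s}_j(\theta)$ from \eqref{eq:ch-scale-estimator}, the bound $\widehat\sigma_j + \sigma_j^2/\widehat\sigma_j \le 2C'\sigma_j$ on the scale event, and a union bound. The data-dependence of $\widehat{s}_j(\theta)$ that you rightly flag as the delicate step is not treated in the paper's proof either (it applies the cited fixed-scale deviation bound directly with the random scale), so your worst-case-over-the-window sketch is, if anything, more explicit about the one genuine subtlety than the paper itself.
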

The proof of Lemma~\ref{lem:basicCH} is given in Appendix~\ref{sec:proofs} and is an almost direct application of the deviation bound from~\cite{pmlr-v97-holland19a}. 
If $C' \approx 1$, the deviation bound of $\widehat{g}^{\mathtt{CH}}_j(\cdot)$ is better than the ones given in Lemmas~\ref{lem:basicMOM} and~\ref{lem:basicTMean} with $\alpha = 1$.
This stems from the fact that the analysis of Catoni's estimator~\citep{catoni2012challenging} results in a deviation with the best possible constant~\citep{devroye2016sub}.
However, contrary to $\mathtt{MOM}$ and $\mathtt{TM}$, an estimator of the scale is necessary: it makes $\mathtt{CH}$ computationally much more demanding (see Figure~\ref{fig:estimator_runtimes} below), since it requires to perform two fixed-point iterations to approximate both $\wh \sigma_j(\theta)$ and $\wh g_j^{\mathtt{CH}}(\theta)$ and it requires Assumption~\ref{assump:data} with $\alpha = 1$ so that $\sigma_j(\theta) < +\infty$.
Moreover, there is no guaranteed robustness to outliers, a fact confirmed by the numerical experiments performed in Section~\ref{sec:experiments} below.

\begin{proposition}
    \label{prop:uniformCH}
    Grant Assumptions~\ref{assump:lipsmoothloss} and~\ref{assump:data} with $\alpha = 1$ and $\cO = \emptyset$. 
    Denote $\overline{L} = \E [\gamma \|X\|^2 ],$ $\sigma_L^2 = \V [\gamma \|X\|^2 ]$ and assume that for all $\theta, \widetilde \theta \in \Theta$ such that $\|\theta - \widetilde \theta\| \leq 1/\sqrt{n}$ we have 
    \begin{equation*}
        \frac 1 2 \sigma_j^2(\widetilde \theta) \leq \sigma^2_j(\theta ) \leq 2 \sigma^2_j(\widetilde \theta) \quad \text{ and }\quad \frac{\sigma_j(\theta)}{ \sigma_L} \geq \frac{1}{\sqrt{n}}.
    \end{equation*}
    Furthermore\textup, assume that for all $\theta \in \Theta,$ the variance estimator $\wh \sigma_j(\theta)$ defined by~\eqref{eq:scaleMestimator} satisfies $\sigma_j(\theta) / C' \leq \wh \sigma_j(\theta) \leq C' \sigma_j(\theta)$ for some constant $C' > 1$ with probability at least $ 1 - \delta/2$.
    Then\textup, we have
    \begin{equation*}
        \Proba \Big[ \sup_{\theta\in\Theta}\big| \widehat{g}_j^{\mathtt{CH}}(\theta) - g_j(\theta) \big| \leq \epsilon_j^{\mathtt{CH}}(\delta) \Big] \geq 1 - \delta \quad
    \end{equation*}    
    for any $j\in\setint{d}$ with
    \begin{equation*}
        \epsilon_j^{\mathtt{CH}}(\delta):= 4C' \Big( 2 \Sigma_j + \frac{\sigma_L}{\sqrt{n}} \Big) 
        \sqrt{\frac{\log(4d/\delta) + d\log(3\Delta \sqrt{n}/2)}{n}} + \frac{\overline{L} + L_{j}}{\sqrt{n}} 
    \end{equation*}
    where $\overline{L}$ is as in Proposition~\ref{prop:uniformMOM}.
\end{proposition}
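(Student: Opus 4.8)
The plan is to upgrade the pointwise deviation bound of Lemma~\ref{lem:basicCH} into a uniform bound over $\Theta$ by an $\varepsilon$-net argument, exactly mirroring the structure already used in Propositions~\ref{prop:uniformMOM} and~\ref{prop:uniformTM}. First I would fix the mesh size $h = 1/\sqrt{n}$ and construct an $h$-net $\Theta_h \subset \Theta$ of the compact set $\Theta$ of diameter $\Delta$; since $\Theta \subset \R^d$, such a net can be chosen with cardinality $|\Theta_h| \leq (3\Delta/(2h))^d = (3\Delta\sqrt{n}/2)^d$. Applying Lemma~\ref{lem:basicCH} at each net point with a union bound at level $\delta' = \delta/|\Theta_h|$ controls $|\widehat{g}_j^{\mathtt{CH}}(\theta_0) - g_j(\theta_0)|$ simultaneously over all $\theta_0 \in \Theta_h$; the $\log(4/\delta')$ term then produces the $\log(4d/\delta) + d\log(3\Delta\sqrt{n}/2)$ numerator appearing inside the square root in $\epsilon_j^{\mathtt{CH}}(\delta)$. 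The hypothesis $\tfrac12\sigma_j^2(\widetilde\theta) \leq \sigma_j^2(\theta) \leq 2\sigma_j^2(\widetilde\theta)$ for $\|\theta - \widetilde\theta\| \leq 1/\sqrt{n}$ is precisely what lets me replace the local scale $\sigma_j(\theta_0)$ by the global $\Sigma_j = \sup_\theta \sigma_j(\theta)$ (up to the factor $2$ and the additive $\sigma_L/\sqrt{n}$ correction), which is why the leading constant carries a $2\Sigma_j$ rather than $\Sigma_j$.

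The second and more delicate half is the \emph{discretization error}: for arbitrary $\theta \in \Theta$, choosing its nearest net point $\theta_0$ with $\|\theta - \theta_0\| \leq h$, I must bound $|\widehat{g}_j^{\mathtt{CH}}(\theta) - \widehat{g}_j^{\mathtt{CH}}(\theta_0)|$ and $|g_j(\theta) - g_j(\theta_0)|$ separately. The population term is immediate from Assumption~\ref{ass:mintheta-and-smoothness}: $|g_j(\theta) - g_j(\theta_0)| \leq L_j\|\theta - \theta_0\| \leq L_j/\sqrt{n}$, contributing the $L_j/\sqrt{n}$ piece of the final additive term. The empirical term requires showing that $\theta \mapsto \widehat{g}_j^{\mathtt{CH}}(\theta)$ is Lipschitz; here I would exploit that $\widehat{g}_j^{\mathtt{CH}}(\theta)$ is defined implicitly by the M-estimation equation~\eqref{eq:meanMestimator}. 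The natural route is to differentiate~\eqref{eq:meanMestimator} via the implicit function theorem, using $\psi' \leq 1$ and the $\gamma$-smoothness of $\ell'$ from Assumption~\ref{assump:lipsmoothloss} to bound how fast each summand $\psi((g_j^i(\theta) - \zeta)/\widehat{s}_j(\theta))$ moves with $\theta$; the empirical Lipschitz constant should concentrate around $\overline{L} = \gamma\,\E\|X\|^2$, which is the source of the $\overline{L}/\sqrt{n}$ contribution. This gives the full additive correction $(\overline{L} + L_j)/\sqrt{n}$.

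The main obstacle I anticipate is controlling the \emph{scale estimator} $\widehat{s}_j(\theta)$ uniformly over $\theta$ and propagating its fluctuation into the Lipschitz bound for $\widehat{g}_j^{\mathtt{CH}}(\theta)$. Unlike $\mathtt{MOM}$ and $\mathtt{TM}$, the $\mathtt{CH}$ estimator is a ratio-type M-estimator whose behavior depends sensitively on $\widehat{\sigma}_j(\theta)$ through~\eqref{eq:ch-scale-estimator}, and $\widehat{\sigma}_j(\theta)$ is itself only defined implicitly by~\eqref{eq:scaleMestimator}. The assumed two-sided control $\sigma_j(\theta)/C' \leq \widehat{\sigma}_j(\theta) \leq C'\sigma_j(\theta)$ is stated to hold pointwise with probability $1-\delta/2$, so to use it at every net point I again union-bound at level $\delta/(2|\Theta_h|)$, and the technical regularity hypotheses $\tfrac12\sigma_j^2(\widetilde\theta) \leq \sigma_j^2(\theta) \leq 2\sigma_j^2(\widetilde\theta)$ together with the lower bound $\sigma_j(\theta)/\sigma_L \geq 1/\sqrt{n}$ are exactly the conditions needed to keep $\widehat{s}_j$ bounded away from zero and slowly varying, preventing the implicit-function Jacobian from blowing up. Once these scale-control ingredients are in place, assembling the net term and the two discretization terms and collecting constants yields the stated $\epsilon_j^{\mathtt{CH}}(\delta)$; I would carry out the constant-tracking last, since it is routine but tedious.
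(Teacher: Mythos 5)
Your overall skeleton (an $\varepsilon$-net of mesh $1/\sqrt{n}$ and cardinality $(3\Delta\sqrt{n}/2)^d$, a union bound over net points producing the $\log(4d/\delta)+d\log(3\Delta\sqrt n/2)$ numerator, and the population term $|g_j(\theta)-g_j(\widetilde\theta)|\leq L_j/\sqrt n$) matches the paper. The genuine gap is in how you handle the empirical discretization error $|\widehat g_j^{\mathtt{CH}}(\theta)-\widehat g_j^{\mathtt{CH}}(\widetilde\theta)|$. You propose to differentiate the M-estimation equation~\eqref{eq:meanMestimator} via the implicit function theorem and argue that the resulting Lipschitz constant of $\theta\mapsto\widehat g_j^{\mathtt{CH}}(\theta)$ concentrates around $\overline L$. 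This step would not go through from the stated hypotheses: the IFT denominator is $\sum_i\psi'\big((g_j^i(\theta)-\zeta)/\widehat s_j(\theta)\big)$, and with $\psi(x)=2\arctan(e^x)-\pi/2$ one has $\psi'(x)=\mathrm{sech}(x)$, which decays exponentially; for heavy-tailed $g_j^i(\theta)$ nothing in the assumptions keeps this sum bounded away from zero uniformly in $\theta$. You would additionally need to control $\partial_\theta\widehat s_j(\theta)$, where $\widehat\sigma_j(\theta)$ is itself only an implicit M-estimator, and the assumed two-sided bounds on $\sigma_j$ and $\widehat\sigma_j$ give no derivative information. Your remark that the regularity hypotheses are ``exactly the conditions needed to prevent the Jacobian from blowing up'' is where the proof is missing, not where it is finished.

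The paper avoids differentiation entirely. It observes that, at a \emph{fixed} scale $s$, the M-estimator $\zeta_s$ solving~\eqref{eq:meanMestimator} is monotone nondecreasing in each entry $g_j^i$, and that the entrywise perturbation satisfies $|g_j^i(\theta)-g_j^i(\widetilde\theta)|\leq\varepsilon\gamma\|X_i\|^2$ by $\gamma$-smoothness and Cauchy--Schwarz. Hence $\widehat g_j^{\mathtt{CH}}(\theta)$ is sandwiched by the same estimator applied to the shifted sample $g_j^i(\widetilde\theta)+\varepsilon\gamma\|X_i\|^2$, which depends on $\theta$ only through the scale $s(\theta)$. One then applies the pointwise concentration lemma directly to this shifted i.i.d.\ sample, whose mean is $g_j(\widetilde\theta)+\varepsilon\overline L$ and whose variance is at most $2(\sigma_j^2(\widetilde\theta)+\varepsilon^2\sigma_L^2)$; this is where the assumptions $\tfrac12\sigma_j^2(\widetilde\theta)\leq\sigma_j^2(\theta)\leq2\sigma_j^2(\widetilde\theta)$ and $\sigma_j(\theta)/\sigma_L\geq1/\sqrt n$ enter (to reconcile the scale $s(\theta)$ with the shifted variance at $\widetilde\theta$), and it is the \emph{bias} $\varepsilon\overline L$ of the majorant's mean --- not any empirical Lipschitz constant --- that produces the $\overline L/\sqrt n$ term. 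To repair your argument, replace the implicit-function step by this monotonicity-plus-shift device.
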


The proof of Proposition~\ref{prop:uniformCH} is given in Appendix~\ref{sec:proofs}.
It uses again an $\varepsilon$-net argument combined with a careful control of the variations of $\widehat{g}_j^{\mathtt{CH}}(\theta)$ with respect to $\theta$.
Compared with~\cite{pmlr-v97-holland19a}, we make a different use of the $\mathtt{CH}$ estimator: while it is used therein to estimate the whole gradient $\grad R(\theta)$ during the robust GD iterations, we use it here to estimate the partial derivatives $g_j(\theta)$ during  iterations of robust CGD.
The numerical experiments from Section~\ref{sec:experiments} confirm, in particular, that our approach leads to a considerable speedup and improved statistical performances when compared to~\cite{pmlr-v97-holland19a}.

The statements of Lemma~\ref{lem:basicCH} and Proposition~\ref{prop:uniformCH} require $\alpha=1$, while a very recent extension of Catoni's bound~\citep{chen2021generalized} is available for $\alpha\in(0,1)$.
However, the necessity to estimate the centered $(1+\alpha)$-moment subsists (standard-deviation for $\alpha=1$).
Although iteration~\eqref{eq:iterscaleMestimator} may be adapted to this case, theoretical guarantees for it do lack.
Note that even for $\alpha=1$, the statements of Lemma~\ref{lem:basicCH} and Proposition~\ref{prop:uniformCH} require assumptions on $\sigma_j^2(\theta)$ and $\wh \sigma_j(\theta)$: an extension to $\alpha \in (0, 1]$ would lead to a set of even more intricate assumptions.

\paragraph{Complexity of $\wh g_j^{\mathtt{CH}}(\theta)$.}

It is not straightforward to analyze the complexity of this estimator, since it involves fixed-point iterations with a number of iterations that can vary from one run to the other.
However, each iteration has complexity $O(n)$ and we observe empirically that the number of iterations is of constant order (usually smaller than $10$) independently from the required confidence.
Therefore, the overall complexity remains in $O(n)$ as demonstrated also by Figure~\ref{fig:estimator_runtimes} below.
The latter also shows that the numerical complexity of $\mathtt{CH}$ is larger than that of $\mathtt{MOM}$ and $\mathtt{TM}$, which later impacts the overall training time.

\subsection{A comparison of the numerical complexities}

As explained above, all the considered estimators of the partial derivatives have a numerical complexity $O(n)$.
However, they perform different computations and have very different running times in practice.
So, in order to compare their actual computational complexities we perform the following experiment.
We consider an increasing sample size $n$ between $10^2$ and $10^6$ on a logarithmic scale and run all the estimators:  $\mathtt{MOM}$, $\mathtt{TM}$, $\mathtt{CH}$ and $\mathtt{ERM}$, which is the average of the per-sample partial derivatives $g_j^i(\theta)$.
We fix their parameters so as to obtain deviation bounds with confidence $1 - \delta = 99\%$: this corresponds to 82 blocks for $\mathtt{MOM}$, $\epsilon=72/n$ for $\mathtt{TM}$ and $\delta=0.01$ for $\mathtt{CH}$, but the conclusion is similar with different combinations of parameters.
We use random samples with student $t(2.1)$ distribution (a finite variance distribution but with heavy tails, although run times do not differ by much when using different distributions).
This leads to the display proposed in Figure~\ref{fig:estimator_runtimes}, where we display the averaged timings over 100 repetitions (together with standard-deviations).

\begin{figure}[!ht]
    \centering
    \includegraphics[width=0.6\textwidth]{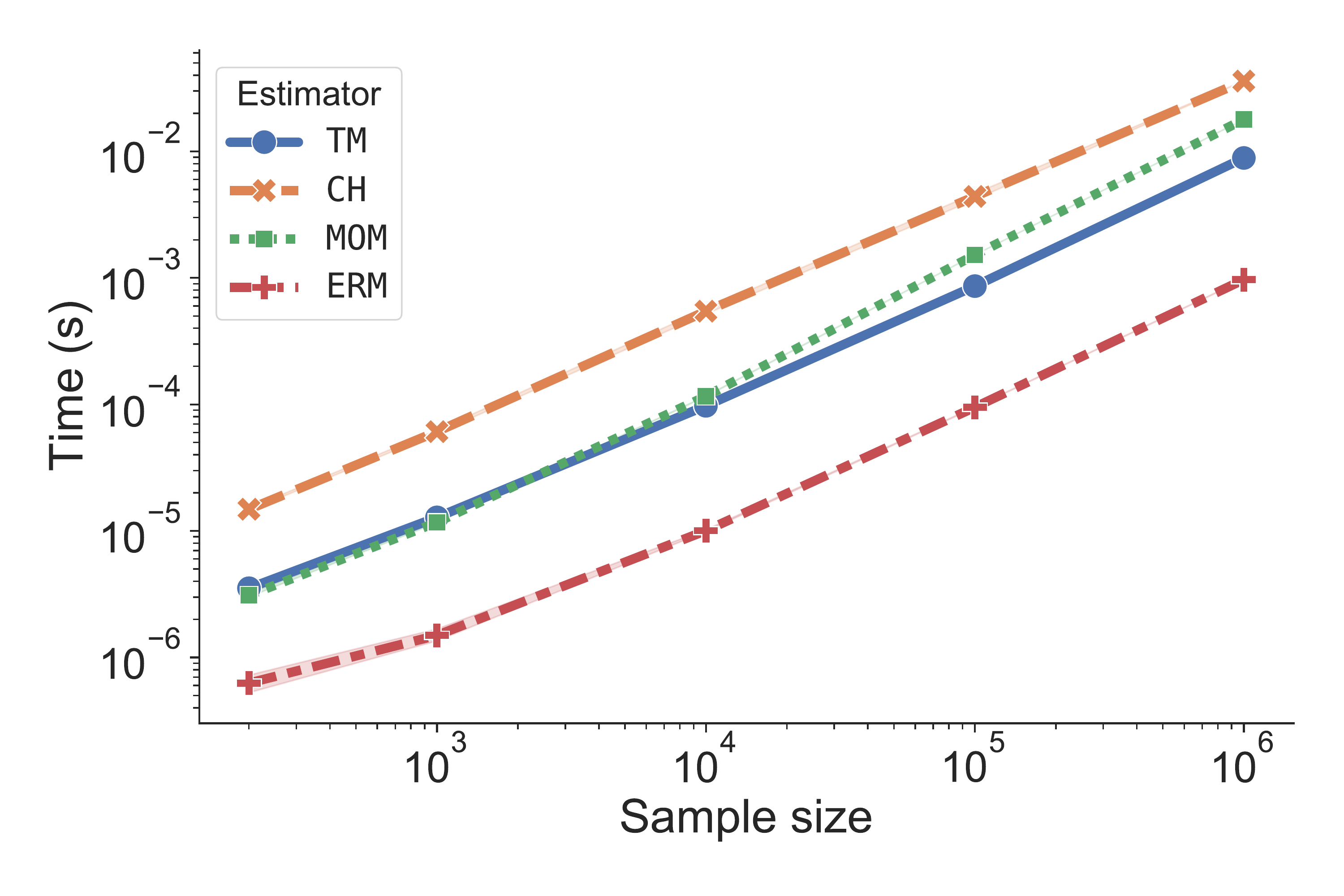}
    \caption{\small Average running time ($y$-axis) of all the considered estimators against an  increasing sample size ($x$-axis). The run times increase with a similar slope (on a logarithmic scale), confirming $O(n)$ complexities, but differ significantly: $\mathtt{ERM}$ is of course the fastest, followed by $\mathtt{TM}$ and $\mathtt{MOM}$ (both are close) and finally $\mathtt{CH}$, which is the slowest.}
    \label{fig:estimator_runtimes}
\end{figure}

We observe that the run times of the estimators increase with a similar slope (on a logarithmic scale) against the sample size, confirming the $O(n)$ complexities. 
However, their timings differ significantly. 
$\mathtt{MOM}$ and $\mathtt{TM}$ share similar timings ($\mathtt{TM}$ becomes faster than $\mathtt{MOM}$ for large samples) and are about 10 times slower than $\mathtt{ERM}$. $\mathtt{CH}$ is the slowest of all and is roughly 50 times slower than $\mathtt{ERM}$.
This is of course related to the fact that $\mathtt{CH}$ requires to perform the fixed-point iterations each of which roughly costing $\Theta(n)$. 
In all cases, the estimators' complexities remain in $O(n)$ so that the complexity of a single iteration of robust CGD (see Algorithm~\ref{alg:robust-cgd}) using either of them is $O(n)$, which is identical to the complexity of a non-robust ERM-based CGD. 
This means that Algorithm~\ref{alg:robust-cgd} achieves robustness at a limited cost, where the computational difference lies only in the constants in front of the big O notations.

\section{Related works}
\label{sec:relatedworks}

Robust statistics have received a longstanding interest and started in the 60s with the pioneering works of~\cite{tukey1960} and~\cite{huber1964robust}. 
Since then, several works pursued the development of robust statistical methods including 
non-convex $M$-estimators~\citep{huber2004robust}, $\ell_1$ tournaments~\citep{Devroye1987NonparametricDE,10.1214/aos/1176350820} and methods based on depth functions~\citep{chen2017robust, gao2020robust, mizera2002depth}, the latter being difficult to use in practice because of their numerical complexity.

A renewal of interest has manifested recently, related, on the one hand, to the increasing need for algorithms able to learn from large non-curated datasets and on the other hand, to the development of robust mean estimators with good theoretical guarantees under weak moment assumptions, including Median-of-Means (MOM)~\citep{nemirovskij1983problem, alon1999space, JERRUM1986169} and Catoni's estimator~\citep{catoni2012challenging}.
Under adversarial corruption~\citep{charikar2017learning}, several statistical learning problems related to robustness are studied, such as parameter estimation~\citep{lai2016agnostic, pmlr-v108-prasad20a, minsker2018sub, diakonikolas2019robust,lugosi2021robust}, regression~\citep{klivans2018efficient, liu2020high, cherapanamjeri2020optimal, bhatia2017consistent}, classification~\citep{lecue2020robust1, klivans2009learning, liu2015classification}, PCA~\citep{li2017robust, candes2011robust,paul2021robust} and most recently online learning~\citep{pmlr-v134-vanerven21a}.

In the heavy-tailed setting, a robust learning approach introduced in~\cite{brownlees2015empirical} proposes to optimize a robust estimator of the risk based on Catoni's mean estimator~\citep{catoni2012challenging} resulting in an implicit estimator for which they show near-optimal guarantees under weak assumptions on the data.
However, the new risk may not be convex (even if the considered loss is), so that its minimization may be expensive and lead to an estimator unrelated to the one theoretically studied, potentially making the associated guarantees inapplicable.
More recently, an explicit variant was proposed in~\cite{Zhang20181regressionWH} which applies Catoni's influence function to each term of the sum defining the empirical risk for linear regression. 
The associated optimum enjoys a sub-Gaussian bound on the excess risk, albeit with a slow rate since the $\ell_1$ loss was used. 
A follow-up extended this result under weaker distribution assumptions~\citep{chen2021generalized}. 
The main drawback of this approach is that the unconventional use of the influence function introduces a considerable amount of bias which appears in the excess risk bounds.

Another approach proposed in~\cite{minsker2015geometric, hsu2016loss} aims at obtaining a robust estimator by computing standard ERMs on disjoint subsets of the data and aggregating them using a multidimensional MOM. 
This approach has recently been used as well in~\cite{pmlr-v130-holland21a} with various aggregation strategies in order to perform robust distributed learning. 
Although the previous works use easily implementable aggregation procedures, the associated deviation bounds are sub-optimal (see for instance~\cite{lugosi2019mean}).
Moreover, dividing the data into multiple subsets makes the method impractical for small sample sizes and may introduce bias coming from the choice of such a subdivision. 

In the setting where an $\eta$-proportion of the data consist of arbitrary outliers, a robust meta-algorithm is introduced in~\cite{diakonikolas2019sever}, which repeatedly trains a given base learner and filters outliers based on an eccentricity score. The method reaches the target $\sigma\sqrt{\eta}$ error rate with $\sigma$ the gradient standard deviation, although the requirement of multiple training rounds may be computationally expensive.

More recently, robust solutions to classification problems were proposed in~\cite{lecue2020robust1} by using MOM to estimate the risk and computing gradients on trustworthy data subsets in order to perform descent.
A variant was also proposed by the same authors in~\cite{lecue2020robust2} where a pair of parameters is alternately optimized for a min-max objective.
The resulting algorithm is efficient numerically, though it requires a vanishing step-size to converge due to the variance coming from gradient estimation. 
Moreover, the provided theoretical guarantees concern the optimum of the formulated problem but not the optimization algorithm put to use.

Several recent papers~\citep{HeavyTails, holland2019efficient, holland2019robust, pmlr-v97-holland19a, chen2017distributed} perform a form of robust gradient descent, where learning is guided by various robust estimators of the true gradient $\grad R(\theta)$. 
Two robust gradient estimation algorithms are proposed in~\cite{HeavyTails}. 
The first one is a vector analog of MOM where the scalar median is replaced by the geometric median
\begin{equation}
    \label{eq:gmom}
    \gmed(g_1, \dots, g_K):= \argmin_{g \in \R^d} \sum_{j=1}^K \|g - g_j\|_2,
\end{equation} 
which can be computed using the algorithm given in~\cite{Vardi1423}. 
This vector mean estimator enjoys improved concentration properties over the standard mean as shown in~\cite{minsker2015geometric} although these remain sub-optimal (see also~\cite{lugosi2019mean}). 
A line of works~\citep{lugosi2019sub, Hopkins2018MeanEW, pmlr-v99-cherapanamjeri19b, Depersin2019RobustSE, lugosi2021robust, lei2020fast} specifically addresses the issue of devising efficient procedures with optimal deviation bounds.

Supervised learning with robustness to heavy-tails and a limited number of outliers is thus achieved but at a possibly high computational cost. 
The second algorithm called ``Huber gradient estimator'' is intended for Huber's $\epsilon$-contamination setting. 
It uses recursive SVD decompositions followed by projections and truncations in order to filter out corruption.
The method proves to be robust to data corruption but its computational cost becomes prohibitive as soon as the data has moderately large dimensionality.

\section{Theoretical guarantee without strong convexity}
\label{sec:theory-not-strongly-cvx}

In this section we provide an upper bound similar to that of Theorem~\ref{thm:linconv1expect}, but without the strong convexity condition from Assumption~\ref{ass:strongconvexity}.
As explained in Theorem~\ref{thm:not-strgly-cvx} below, without strong convexity, the optimization error shrinks at a slower sub-linear rate when compared to Theorem~\ref{thm:linconv1expect} (a well-known fact, see~\cite{bubeck2015convex}).
In order to ensure that robust CGD, which uses ``noisy'' partial derivatives, remains a descent algorithm, we assume that the parameter set can be written as a product $\Theta = \prod_{j\in \setint{d}}\Theta_j$ and replace the iterations~\eqref{eq:iter-strgly-cvx} (corresponding to Line~5 in Algorithm~\ref{alg:robust-cgd}) by
\begin{equation}
    \label{eq:iter-not-strgly-cvx}
    \begin{cases}
    \theta^{(t+1)}_j \gets \proj_{\Theta_j}\big( \theta_j^{(t)} - \beta_j \tau_{\epsilon_{j}}\big(\wh g_j(\theta^{(t)})\big)\big) & \text{ if } j = j_t \\
    \theta^{(t+1)}_j \gets \theta_j^{(t)} & \text{ otherwise},
    \end{cases}
\end{equation}
where $\proj_{\Theta_j}$ is the projection onto $\Theta_j$ and $\tau_\epsilon$ is the soft-thresholding operator given by $\tau_{\epsilon}(x) = \sign(x)(|x| - \epsilon)_+$ with $(x)_+ = \max(x, 0)$.
In Theorem~\ref{thm:not-strgly-cvx} below we use $\epsilon_j = \epsilon_j(\delta)$, the $j$-th coordinate of the error vector from Definition~\ref{def:error-vector}, which is instantiated for each robust estimator in Section~\ref{sec:robust-estimators}.
Since it depends on the moment $m_{\alpha, j}$, it is not observable, so we propose in Lemma~\ref{lem:mom-for-grad-moment} from Appendix~\ref{sub:observable-upper-bound-moment} an observable upper bound deviation for it based on $\mathtt{MOM}$.

This use of soft-thresholding of the partial derivatives can be understood as a form of partial derivatives (or gradient) clipping.
However, note that it is rather a theoretical artifact than something to use in practice (we never use $\tau_\epsilon$ in our numerical experiments from Section~\ref{sec:experiments} below).
Indeed, the operator $\tau_\epsilon$ naturally appears for the following simple reason: consider a convex $L$-smooth scalar function $f: \R \to \R$ with derivative  $g(x):= f'(x)$.
An iteration of gradient descent from $x_0$ uses an increment $\delta$ that minimizes the right-hand side of the following inequality:
\begin{equation*}
  f(x_0 + \delta) \leq Q(\delta, x_0):= f(x_0) + \delta g(x_0) + \frac{L}{2}\delta^2,
\end{equation*}
namely $\argmin_{\delta} Q(\delta, x_0) = -g(x_0) / L$ leading to the iterate $x_0 - g(x_0) / L$ with ensured improvement of the objective. 
In our context, $g(x)$ is unknown and we use an estimator $\wh g(x)$ satisfying $|\wh g(x) - g(x)| \leq \epsilon$ with a large probability. 
Taking this uncertainty into account leads to the upper bound
\begin{equation*}
  f(x_0 + \delta) \leq \widetilde{Q}(\delta, x_0):= f(x_0) + \delta \hg(x_0) + \frac{L}{2}\delta^2 + \epsilon|\delta|,
\end{equation*}
and, after projection onto the parameter set, to the iteration~\eqref{eq:iter-not-strgly-cvx} since $\argmin_\delta \widetilde Q(\delta, x_0) = x_0 - \tau_{\epsilon} (\wh g(x_0)) / L$, with guaranteed decrease of the objective. 

The clipping of partial derivatives is unnecessary in the strongly convex case since each iteration translates into a contraction of the excess risk, so that the degradations caused by the gradient errors remain controlled (see the proof of Theorem~\ref{thm:linconv1expect}). 
No such contraction can be established without strong convexity, and clipping prevents gradient errors to accumulate uncontrollably.

\begin{theorem}
    \label{thm:not-strgly-cvx}
    Grant Assumptions~\ref{assump:lipsmoothloss} and~\ref{ass:mintheta-and-smoothness} with $\Theta = \prod_{j\in\setint{d}}\Theta_j$. 
    Let $\theta^{(T)}$ be the output of Algorithm~\ref{alg:robust-cgd} where we replace iterations~\eqref{eq:iter-strgly-cvx} by~\eqref{eq:iter-not-strgly-cvx} with step-sizes $\beta_j = 1 /  L_j,$ an initial iterate $\theta^{(0)}\in\Theta,$ uniform coordinates sampling $p_j = 1 / d$ and estimators of the partial derivatives with error vector $\epsilon(\cdot)$. 
    Then\textup, we have with probability at least $1-\delta$
    \begin{align*}
        \E \big[ R(\theta^{(T)}) ] - R^\star &\leq \frac{d}{T + 1} \bigg( \sum_{j\in\setint{d}} \frac{L_{j}}{2} \big( \theta^{(0)}_j - \theta^\star_j\big)^2 + R(\theta^{(0)}) \bigg) +
        \frac{2 \| \epsilon ( \delta ) \|_2}{T + 1} \sum_{t=0}^T \| \theta^{(t)} - \theta^\star \|_2,
    \end{align*}
    where the expectation is w.r.t the sampling of the coordinates. 
    Moreover\textup, we have 
    \begin{equation*}
        \| \thetat - \theta^\star \big\|_2 \leq \| \theta^{(t-1)} - \theta^\star \|_2
    \end{equation*}
    with the same probability\textup, for all $t\in \setint{T}$.
\end{theorem}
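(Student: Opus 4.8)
The plan is to track the Lyapunov function $V_t := \big(R(\theta^{(t)}) - R^\star\big) + \sum_{j\in\setint d}\tfrac{L_j}{2}\big(\theta_j^{(t)} - \theta_j^\star\big)^2$ and show it obeys a telescoping recursion. First I would fix the good event $\mathcal A$ on which $\sup_{\theta\in\Theta}|\widehat g_j(\theta) - g_j(\theta)| \leq \epsilon_j(\delta)$ holds simultaneously for all $j\in\setint d$; by Definition~\ref{def:error-vector} and the error vectors of Section~\ref{sec:robust-estimators}, whose $\log(d/\delta)$ terms are calibrated to absorb the union bound over coordinates, this event has probability at least $1-\delta$, and all subsequent deterministic inequalities are understood on $\mathcal A$. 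The engine of the proof is the observation, already sketched before the theorem, that the update of the selected coordinate $j=j_t$ is the minimiser over $\Theta_j$ of the surrogate $\widetilde Q(u) := R(\theta^{(t)}) + \widehat g_j(\theta^{(t)})(u - \theta_j^{(t)}) + \tfrac{L_j}{2}(u - \theta_j^{(t)})^2 + \epsilon_j|u - \theta_j^{(t)}|$.

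I would assemble the one-step inequality from three ingredients: (i) coordinatewise $L_j$-smoothness (Assumption~\ref{ass:mintheta-and-smoothness}) together with $|g_j - \widehat g_j|\leq \epsilon_j$ gives $R(\theta^{(t+1)}) \leq \widetilde Q(\theta_j^{(t+1)})$; (ii) since $\widetilde Q$ is $L_j$-strongly convex in $u$, its constrained minimiser satisfies $\widetilde Q(\theta_j^{(t+1)}) + \tfrac{L_j}{2}(\theta_j^\star - \theta_j^{(t+1)})^2 \leq \widetilde Q(\theta_j^\star)$, where $\theta_j^\star\in\Theta_j$ because $\Theta = \prod_j\Theta_j$; (iii) bounding $\widehat g_j$ by $g_j + \epsilon_j$ once more gives $\widetilde Q(\theta_j^\star) \leq R(\theta^{(t)}) + g_j(\theta^{(t)})(\theta_j^\star - \theta_j^{(t)}) + \tfrac{L_j}{2}(\theta_j^\star - \theta_j^{(t)})^2 + 2\epsilon_j|\theta_j^\star - \theta_j^{(t)}|$. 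The two quadratic terms $\tfrac{L_j}{2}(\theta_j^\star-\theta_j^{(t+1)})^2$ and $\tfrac{L_j}{2}(\theta_j^\star-\theta_j^{(t)})^2$ are exactly the coordinate-$j$ contributions to $S_{t+1}$ and $S_t$, while every other coordinate is frozen, so adding the frozen terms to both sides yields the clean recursion $V_{t+1} \leq V_t + g_{j_t}(\theta^{(t)})(\theta_{j_t}^\star - \theta_{j_t}^{(t)}) + 2\epsilon_{j_t}|\theta_{j_t}^\star - \theta_{j_t}^{(t)}|$.

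Next I would take the expectation over the uniform draw of $j_t$, turning $\tfrac1d\sum_j g_j(\theta^{(t)})(\theta_j^\star - \theta_j^{(t)}) = \tfrac1d\langle\nabla R(\theta^{(t)}),\theta^\star - \theta^{(t)}\rangle$ into $\tfrac1d(R^\star - R(\theta^{(t)}))$ by convexity of $R$, and $\tfrac1d\sum_j\epsilon_j|\theta_j^\star-\theta_j^{(t)}|$ into $\tfrac1d\|\epsilon(\delta)\|_2\|\theta^{(t)}-\theta^\star\|_2$ by Cauchy–Schwarz, giving $\E_{j_t}[V_{t+1}] \leq V_t - \tfrac1d(R(\theta^{(t)})-R^\star) + \tfrac2d\|\epsilon(\delta)\|_2\|\theta^{(t)}-\theta^\star\|_2$. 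Summing over $t=0,\dots,T$, telescoping $V_t$ and discarding $V_{T+1}\geq0$, I bound $\tfrac1d\sum_{t=0}^T\E[R(\theta^{(t)})-R^\star]$ by $V_0 + \tfrac2d\|\epsilon(\delta)\|_2\sum_t\|\theta^{(t)}-\theta^\star\|_2$. Finally the choice $u=\theta_j^{(t)}$ in the surrogate shows $R(\theta^{(t+1)}) \leq \widetilde Q(\theta_j^{(t+1)}) \leq \widetilde Q(\theta_j^{(t)}) = R(\theta^{(t)})$, so the risk sequence is non-increasing on $\mathcal A$; hence $\E[R(\theta^{(T)})-R^\star]$ is at most the running average $\tfrac1{T+1}\sum_{t=0}^T\E[R(\theta^{(t)})-R^\star]$. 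Multiplying through by $d$ and using $V_0 = (R(\theta^{(0)})-R^\star) + S_0 \leq R(\theta^{(0)}) + \sum_j\tfrac{L_j}{2}(\theta_j^{(0)}-\theta_j^\star)^2$ (since $R^\star\geq0$) delivers the announced bound.

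For the monotonicity of $\|\theta^{(t)}-\theta^\star\|_2$, only coordinate $j=j_{t-1}$ moves, so it suffices to prove $|\theta_j^{(t)}-\theta_j^\star| \leq |\theta_j^{(t-1)}-\theta_j^\star|$. Since $\theta_j^\star\in\Theta_j$ and $\proj_{\Theta_j}$ is nonexpansive with $\theta_j^\star$ as a fixed point, this reduces to $|\theta_j^{(t-1)} - \beta_j v - \theta_j^\star| \leq |\theta_j^{(t-1)} - \theta_j^\star|$ with $v = \tau_{\epsilon_j}(\widehat g_j(\theta^{(t-1)}))$, i.e.\ to the alignment inequality $v\,(\theta_j^{(t-1)} - \theta_j^\star) \geq \tfrac{\beta_j}{2}v^2$. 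The good event supplies half of this: unwinding the soft-threshold shows that whenever $v\neq0$ it shares the sign of $g_j(\theta^{(t-1)})$ with $|g_j(\theta^{(t-1)})|\geq|v|$, so $g_j(\theta^{(t-1)})\,v \geq v^2 \geq 0$, and the step-size $\beta_j = 1/L_j$ with $L_j$-smoothness prevents the one-dimensional step from overshooting. The main obstacle is precisely converting this into the \emph{displacement} alignment $v(\theta_j^{(t-1)}-\theta_j^\star)\geq \tfrac{\beta_j}{2}v^2$: it requires the thresholded partial derivative to point towards $\theta^\star$ along coordinate $j$, which must be extracted from the geometry of the minimiser rather than from convexity alone (convexity only gives $g_j(\theta^{(t-1)})(\theta_j^{(t-1)}-\theta_j^\star)\geq R(\theta^{(t-1)}) - R(\theta^{(t-1)}+(\theta_j^\star-\theta_j^{(t-1)})e_j)$, which need not be positive). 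This is the only genuinely delicate point; I expect the remainder to be routine bookkeeping of constants.
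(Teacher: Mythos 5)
Your proof of the excess-risk bound is correct and follows essentially the same route as the paper's. Your Lyapunov function $V_t$ is exactly the paper's $\Psi(\theta^{(t)})-R^\star$ with $\Psi(\theta)=\tfrac12\sum_{j}L_j(\theta_j-\theta_j^\star)^2+R(\theta)$, and the passage to expectation over $j_t$ via convexity and Cauchy--Schwarz, the telescoping, the use of $R^\star\geq 0$, and the monotone decrease of the risk to replace the running average by the last iterate all coincide with the paper's argument. The only difference is how the one-step recursion is obtained: you invoke the three-point property of the $L_j$-strongly convex surrogate $\widetilde Q$ at its constrained minimiser, whereas the paper writes out the first-order optimality condition of $u_{j_t}(\theta^{(t)})$ and expands $\Phi(\theta^{(t+1)})$ by hand. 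The two computations are equivalent and land on the same inequality $V_{t+1}\leq V_t+g_{j_t}(\theta^{(t)})(\theta_{j_t}^\star-\theta_{j_t}^{(t)})+2\epsilon_{j_t}|\theta_{j_t}^\star-\theta_{j_t}^{(t)}|$; your packaging is arguably cleaner.

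The second claim, however, you leave unproven, and this is a genuine gap. You correctly reduce $\|\theta^{(t)}-\theta^\star\|_2\leq\|\theta^{(t-1)}-\theta^\star\|_2$ (via nonexpansiveness of $\proj_{\Theta_j}$) to the alignment inequality $v\,(\theta_j^{(t-1)}-\theta_j^\star)\geq\tfrac{\beta_j}{2}v^2$ with $v=\tau_{\epsilon_j}(\wh g_j(\theta^{(t-1)}))$, and you correctly establish on the good event that $v$ shares the sign of $g_j(\theta^{(t-1)})$ with $|v|\leq|g_j(\theta^{(t-1)})|$ --- but you then stop, stating that you cannot derive the required sign condition from convexity. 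The paper closes this step by asserting that ``convexity of $R$ along coordinate $j_t$'' yields $g_{j_t}(\theta^{(t)})(\theta_{j_t}^{(t)}-\theta_{j_t}^\star)\geq 0$, and then bounding $|\delta_t|\leq|g_{j_t}(\theta^{(t)})-g_{j_t}(\theta^\star)|/L_{j_t}\leq|\theta_{j_t}^{(t)}-\theta_{j_t}^\star|$. Your hesitation is in fact well placed: for a non-separable convex risk the inequality $g_j(\theta)(\theta_j-\theta_j^\star)\geq 0$ does not follow from convexity alone (take $R(\theta)=(\theta_1+\theta_2)^2+\tfrac{1}{100}\theta_1^2$, so $\theta^\star=0$, and evaluate at $\theta=(1,-3/2)$: then $g_1(\theta)=-0.98<0$ while $\theta_1-\theta_1^\star=1>0$), and the displayed Lipschitz bound controls a variation of $g_{j_t}$ between $\theta^{(t)}$ and $\theta^\star$, which differ in all coordinates, using only the single-coordinate constant $L_{j_t}$. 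So the step you could not supply is precisely where the paper's own justification is thinnest. As a submission, though, your proof of the theorem as stated is incomplete on this point, and the incompleteness matters for the first display as well, since the term $\sum_{t=0}^T\|\theta^{(t)}-\theta^\star\|_2$ is only controlled by $(T+1)\|\theta^{(0)}-\theta^\star\|_2$ through the monotonicity claim.
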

The proof of Theorem~\ref{thm:not-strgly-cvx} is given in Appendix~\ref{sec:proofs} and is based on the proof of Theorem~5 from~\cite{nesterov2012efficiency} and Theorem~1 from~\cite{shalev2011stochastic} while managing noisy partial derivatives. 
The optimization error term vanishes at a sublinear $1 / T$ rate and is initially of order $R(\theta^{(0)})$ plus the potential $\Phi(\theta) = \sum_{j=1}^d L_j(\theta_j - \theta^\star_j)^2/2$ which is instrumental in the proof.
Notice that $\| \epsilon ( \delta ) \|_2$ appears without the square which translates into ``slow'' $1/\sqrt{n}$ rates instead of ``fast'' $1/n$ rates stated achieved by the bounds from Section~\ref{sec:robustcgd}.
This degradation is an unavoidable consequence of the loss of strong convexity of the risk~\citep{srebro2010optimistic}.

\section{Numerical Experiments}
\label{sec:experiments}

The theoretical results given in Sections~\ref{sec:robustcgd},~\ref{sec:robust-estimators} and~\ref{sec:theory-not-strongly-cvx} can be applied to a wide range of linear methods for supervised learning, with guaranteed robustness both with respect to heavy-tailed data and outliers. 
We perform below experiments that confirm these robustness properties for several tasks (regression, binary classification and multi-class classification) on several datasets including a comparison with many baselines including the state-of-the-art. 

\subsection{Algorithms}

The algorithms introduced in this paper are compared with several baselines among the following large set of algorithms.
For all algorithms, we use, unless specified otherwise, the least-squares loss for regression, and the logistic loss for classification (both for binary and multiclass problems, using the multiclass logistic loss).
The algorithms studied and compared below can be used easily in a few lines of \texttt{Python} code with our library called \texttt{linlearn}, open-sourced under the BSD-3 License on \texttt{GitHub} and available here: \url{https://github.com/linlearn/linlearn}.
This library follows the API conventions of \texttt{scikit-learn}~\citep{pedregosa2011scikit-learn}.
    
\paragraph{CGD algorithms: $\mathtt{MOM}$, $\mathtt{CH}$, $\mathtt{TM}$ and $\mathtt{CGD\ ERM}$.}

The $\mathtt{MOM}$, $\mathtt{CH}$ and $\mathtt{TM}$ algorithms are the different variants of robust CGD (Algorithm~\ref{alg:robust-cgd}) introduced in this paper, respectively based on median-of-means, trimmed mean and Catoni-Holland estimators of the partial derivatives introduced in Section~\ref{sec:robust-estimators}. 
We include also $\mathtt{CGD\ ERM}$ which is CGD using a non-robust estimation of the partial derivatives based on a mean.

\paragraph{GD algorithms: $\mathtt{ERM}$, $\mathtt{LLM}$, $\mathtt{HG}$, $\mathtt{GMOM}$, $\mathtt{CH\ GD}$ and  $\mathtt{Oracle}$.}

These are all GD algorithms using different estimators of the gradients.
$\mathtt{ERM}$ uses a non-robust gradient based on a simple mean. 
$\mathtt{LLM}$ corresponds to Algorithm~1 from~\cite{lecue2020robust1}. 
It uses a MOM estimation of the risk and performs GD using gradients computed as the mean of the gradients from the block corresponding to the median of the risk.
$\mathtt{HG}$ is Algorithm 2 from~\cite{HeavyTails}, called Huber Gradient Estimator, which uses recursive SVD decompositions and truncations to compute a robust gradient.
$\mathtt{GMOM}$ is Algorithm~3 from~\cite{HeavyTails}, which estimates gradients using a geometric MOM (based on the geometric median).
$\mathtt{CH\ GD}$ is the robust GD algorithm from~\cite{pmlr-v97-holland19a}, which uses gradients computed as coordinate-wise $\mathtt{CH}$ estimators.
We consider also $\mathtt{Oracle}$,  which is GD performed with ``oracle'' gradients, namely the gradient of the unobserved true risk (only available for linear regression experiments using  simulated data).

\paragraph{Extra algorithms: $\mathtt{RANSAC}$, $\mathtt{HUBER}$ and $\mathtt{LAD}$.}

We consider also the following extra algorithms.
For regression, we consider $\mathtt{RANSAC}$~\citep{10.1145/358669.358692}, using the implementation available in the scikit-learn library~\citep{pedregosa2011scikit-learn}.
$\mathtt{HUBER}$ stands for ERM learning with the modified Huber loss~\citep{10.1145/1015330.1015332} for classification and Huber loss~\citep{owen2007} for regression.
$\mathtt{LAD}$ is ERM learning using the least absolute deviation loss~\citep{10.2307/23036355}, namely regression using the mean absolute error instead of least-squares.

\subsection{Regression on simulated datasets}
\label{sub:regression-simulated}

We consider the following simulation setting for linear regression with the square loss.
We generate features $X \in \R^d$ with $d=5$ with a non-isotropic Gaussian distribution with covariance matrix $\Sigma$ and labels  $Y = X^\top \theta^\star + \xi$ for a fixed $\theta^\star \in \R^d$ and simulated noise $\xi$.
Since all distributions are known in such simulated data, we can compute the true risk and true gradients (used in $\mathtt{Oracle}$).

We consider the following simulation settings: (a) $\xi$ is centered Gaussian; (b) $\xi$ is Student with $\nu=2.1$ degrees of freedom (heavy-tailed noise).
We consider then settings (c), (d), (e) and (f) where $\xi$ is the same in (b) but 1\% of the data is replaced by outliers as follows. 
For case $(c)$, $X \in \R^5$ is replaced by a constant equal to $\lambda_{\max}(\Sigma)$ (largest eigenvalue of $\Sigma$) and labels are replaced by $2 y_{\max}$ with $y_{\max} = \max_{i\in \mathcal I} |y_i|$; for (d) we do the same as (c) and additionally multiply labels by $-1$ with probability $1/2$; for (e) we sample $X = 10\lambda_{\max}(\Sigma) v + Z$ where $v \in \R^5$ is a fixed unit vector chosen at random and $Z$ is a standard Gaussian vector and labels are i.i.d. Bernoulli random variables; finally for (f) we sample $X = 10\lambda_{\max}(\Sigma) V$ where $V$ is uniform on the unit sphere and labels $y = y_{\max} \times (\varepsilon + U)$ where $\varepsilon$ is a Rademacher variable and $U$ is uniform in $[-1/5, 1/5]$.

For this experiment on simulated datasets, we fix the parameters of the robust estimators of the partial derivatives using the confidence level $\delta=0.01$ and the number of outliers for $\mathtt{MOM}$ and $\mathtt{TM}$. 
We report, for all considered simulation settings (a)-(f), the average over 30 repetitions of the excess risk for the square loss ($y$-axis) of all considered algorithms along their iterations ($x$-axis, corresponding to cycles for CGD and iterations for GD) in Figure~\ref{fig:lin_reg}.
\begin{figure}[!ht]
     \centering
     \includegraphics[width=0.8\textwidth]{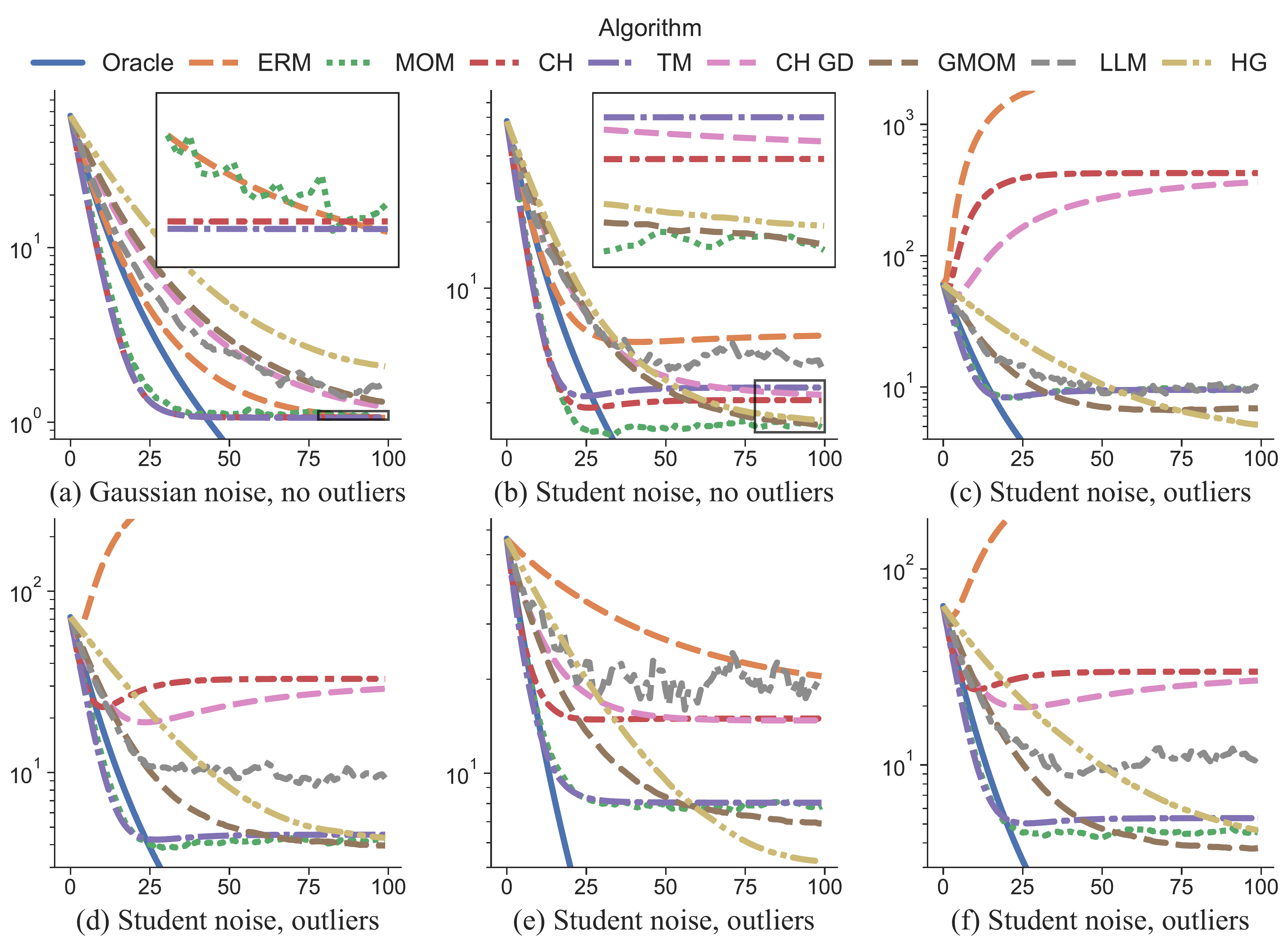}
        \caption{Excess-risk for the square loss ($y$-axis) against iterations ($x$-axis) for all the considered algorithms in the simulation settings~(a)-(c) (top row) and (d)-(f) (bottom row). We zoom-in the last iterations for simulation settings~(a) and~(b) to improve readability.}
        \label{fig:lin_reg}
\end{figure}
We observe that CGD-based algorithms generally converge faster than GD-based ones, independently of the quality of the optimum found.
For~(a) with Gaussian noise and no outliers, the final performance of all algorithms is roughly similar to that of $\mathtt{ERM}$ (as expected since the sample mean has optimal deviation guarantees for sub-Gaussian distributions) except for $\mathtt{LLM}$ and $\mathtt{HG}$ that converge slowly. 
For~(b) with heavy-tailed noise and no outliers, $\mathtt{ERM}$ clearly degrades when compared to robust methods, with $\mathtt{HG}$ reaching the best result and $\mathtt{LLM}$ the worst.
For settings~(c)-(f) with heavy-tailed noise and outliers, we observe different behaviours. 
We observe that $\mathtt{ERM}$ and $\mathtt{CH}$ are the most sensitive to outliers, especially in setting~(c) and~(e) (single-direction corruption of the gradients), where $\mathtt{GMOM}$ and $\mathtt{HG}$ (based on robust gradient estimation) perform best while $\mathtt{MOM}$ and $\mathtt{TM}$ are close competitors.
For settings~(d) and~(f) where gradients can be corrupted in multiple directions, the performance difference between $\mathtt{GMOM}$/$\mathtt{HG}$ and $\mathtt{MOM}$/$\mathtt{TM}$ is small.
A remarkable property to keep in mind is that $\mathtt{MOM}$/$\mathtt{TM}$ always converge faster.
Finally, while we observe that $\mathtt{LLM}$ is robust to heavy tails and outliers, its use of a median mini-batch and vanishing descent steps makes it unstable and prevents it from converging to a good minimum, compared to other algorithms.

\subsection{Classification on several datasets}
\label{sub:classification-datasets}

We consider classification tasks (binary and multiclass) on several datasets from the UCI Machine Learning Repository~\citep{Dua:2019}, see Appendix~\ref{sec:exp_details} for more details.
We use the logistic loss for binary classification and the multiclass logistic loss for multiclass problems. 
For $k$-class problems with $k > 2$, the iterates are $d \times k$ matrices and CGD is performed block-wise along the class axis. 
In this case, a CGD cycle performs again $d$ iterations (one for each feature coordinate) and each iteration updates the $k$ corresponding model weights (a form of block coordinate gradient descent, see~\cite{blondel2013block} for arguments in favor of this approach).

For each considered dataset, we corrupt an increasing random fraction of samples with uninformative outliers or a heavy-tailed noise.
Each algorithm is hyper-optimized using cross-validation over an appropriate grid of hyper-parameters. 
See Appendix~\ref{sec:exp_details} for further details about the experiments.
Then, each algorithm is trained again using the full training dataset 10 times over to account for the randomness lying within each method (although most procedures remain very stable across runs) and we finally report in Figure~\ref{fig:classif} the median accuracy obtained on a 15\% test-set ($y$-axis) for each dataset, corruption level ($x$-axis) and algorithm.
\begin{figure}[!ht]
    \centering
    \includegraphics[width=0.8\textwidth]{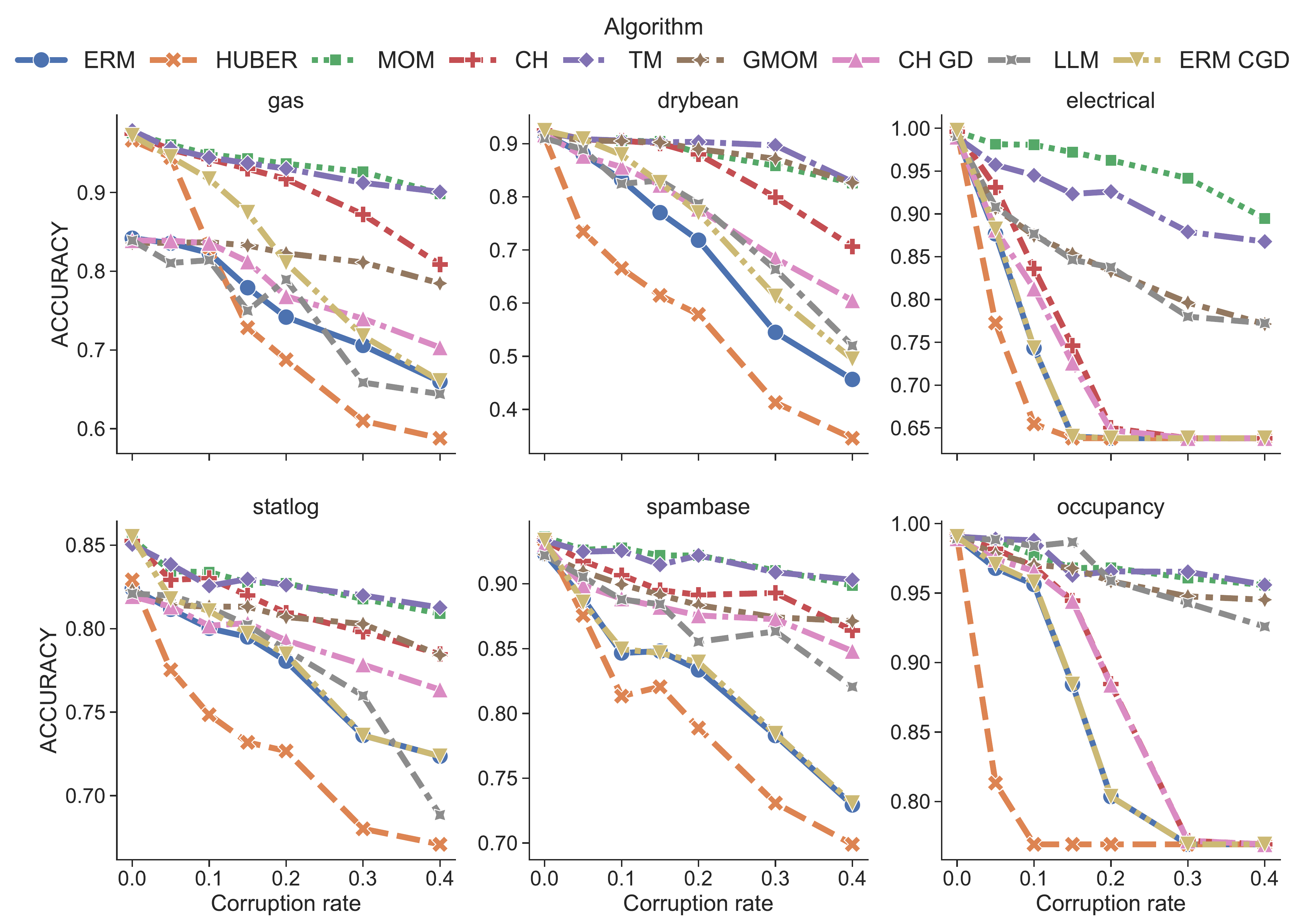}
    \caption{Test accuracy ($y$-axis) against the proportion of corrupted samples ($x$-axis) for six datasets and the considered algorithms.}
    \label{fig:classif}
\end{figure}

We observe that, as expected, the accuracy of each algorithm deteriorates with an increasing proportion of corrupted samples.
We observe that the robust CGD algorithms introduced in this paper are almost always superior, to all the considered baselines, and only suffer from a reasonable decrease in accuracy along the $x$-axis (from $0\%$ to $40\%$ corrupted samples) compared to all baselines.
In particular, $\mathtt{TM}$ and $\mathtt{MOM}$ are generally the best with $\mathtt{GMOM}$ being the closest competitor, and as expected from the theory, $\mathtt{CH}$ is less robust to corruption than both  $\mathtt{TM}$ and $\mathtt{MOM}$. 
Finally, note that the mere use of CGD instead of GD can give a significant advantage in order to find better optima as can be seen for the gas and statlog datasets.

In order to illustrate the computational performance of each method, we report in Figure~\ref{fig:classif_time} the test accuracy ($y$-axis) against the training time ($x$-axis) along iterations of each algorithm for two datasets (rows) and $0\%$, $15\%$ and $30\%$ corruption  (resp. first, middle and last column).
With $0\%$ corruption (first column), most algorithms reach their final accuracy within few iterations for the two considered datasets and our algorithms are somewhat slower than standard methods such as $\mathtt{ERM}$ and $\mathtt{HUBER}$. 
When corruption is present, our robust CGD algorithms reach a better accuracy, and they do so faster than other robust algorithms, such as $\mathtt{CH\ GD}$ and $\mathtt{GMOM}$.
Also, we can observe on this display, once again, the lack of stability of $\mathtt{LLM}$.

\begin{figure}[!ht]
    \centering
    \includegraphics[width=0.8\textwidth]{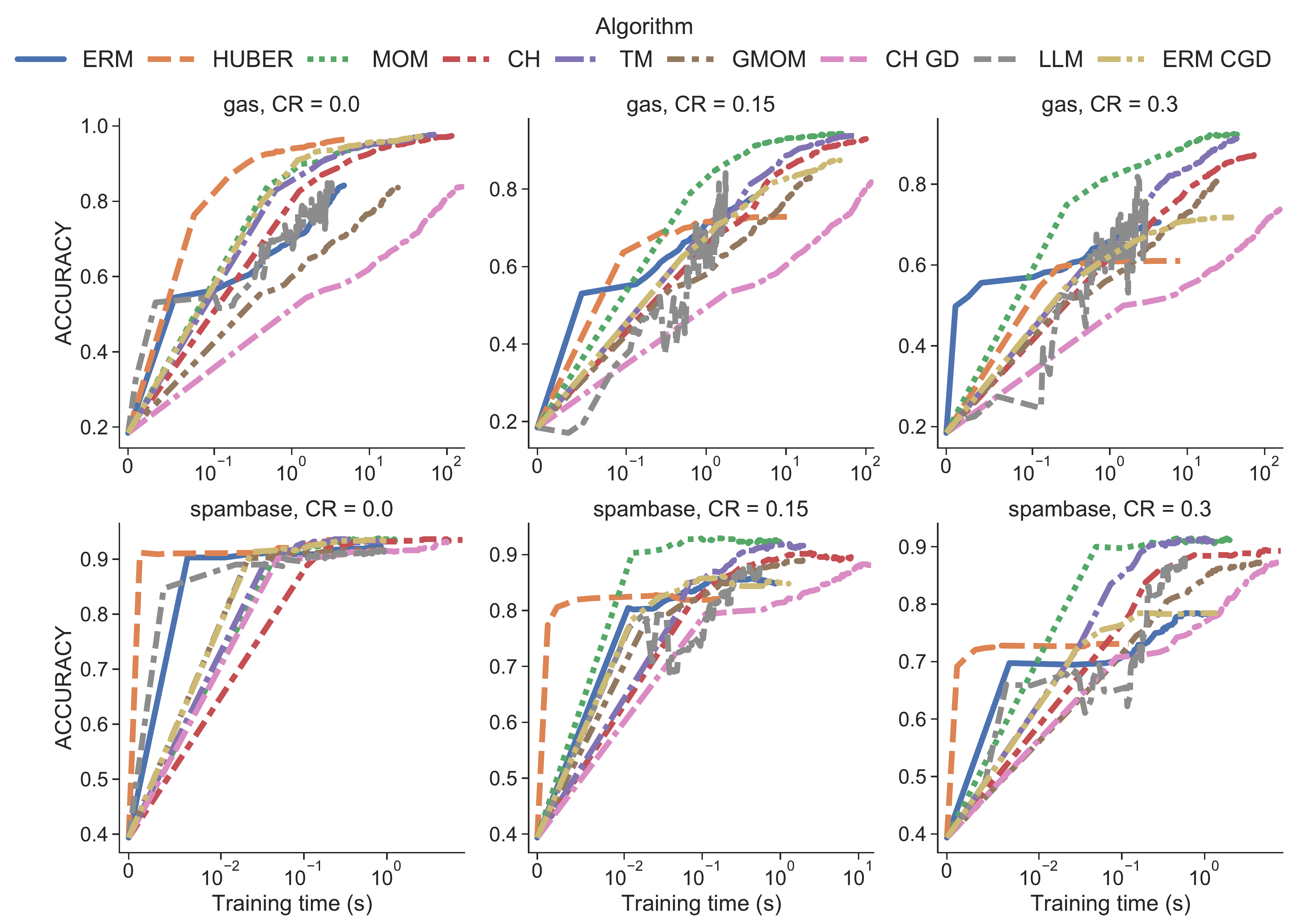}
    \caption{Test accuracy ($y$-axis) against computation time ($x$-axis) along training iterations on two datasets (rows) for $0\%$ corruption (first column), $15\%$ corruption (middle column) and $30\%$ corruption (last column).}
    \label{fig:classif_time}
\end{figure}

\subsection{Regression on several datasets}

We consider the same experimental setting (data corruption, hyper-optimization of algorithms) as in Section~\ref{sub:classification-datasets} but on different datasets from the UCI Machine Learning Database for regression tasks, see Appendix~\ref{sec:exp_details} for details.
We use the square loss for training and use the mean squared error (MSE) as a test metric, excepted for $\mathtt{HUBER}$, $\mathtt{RANSAC}$ and $\mathtt{LAD}$ which proceed differently.
We report the results in Figures~\ref{fig:reg} and~\ref{fig:reg_time}.
Figure~\ref{fig:reg} shows the test MSE ($y$-axis) against the proportion of corrupted samples ($x$-axis) for several datasets and algorithms while Figure~\ref{fig:reg_time} displays the test MSE against the training time analogously to Figure~\ref{fig:classif_time}.
Note that $\mathtt{RANSAC}$, $\mathtt{HUBER}$ and $\mathtt{LAD}$ appear through vertical lines only in Figure~\ref{fig:reg_time} since these use the scikit-learn implementations that do not give access to the training history.
We observe that $\mathtt{TM}$ and $\mathtt{MOM}$ are, once again, clear favorites.
Despite the fact that $\mathtt{HG}$ and $\mathtt{GMOM}$ prove to be very robust and are able to improve $\mathtt{MOM}$ and $\mathtt{TM}$ in certain instances by a small margin, their running times is slower and for some datasets orders of magnitude larger, as observed in Figure~\ref{fig:reg_time}.
This confirms the results observed as well on classification problems, that our robust CGD algorithms ($\mathtt{TM}$ and $\mathtt{MOM}$) offer an excellent compromise between statistical accuracy, robustness and computational effort.
Note also that we observe again the strong sensitivity of $\mathtt{CH}$ to outliers and the unstable performance of $\mathtt{LLM}$.
\begin{figure}[!ht]
    \centering
    \includegraphics[width=0.8\textwidth]{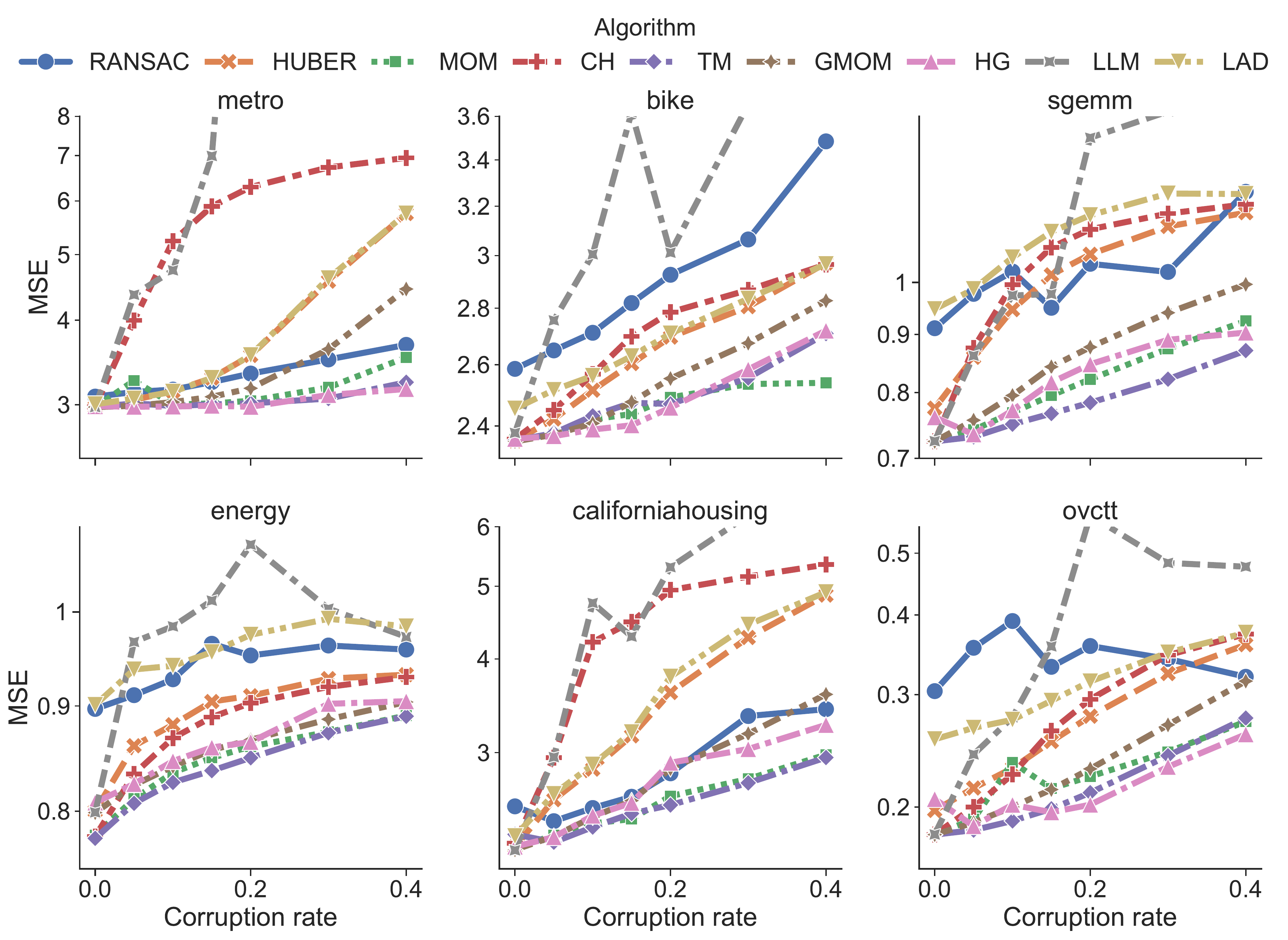}
    \caption{Mean squared error ($y$-axis) against the proportion of corrupted samples ($x$-axis) for six datasets and the considered algorithms.}
    \label{fig:reg}
\end{figure}
\begin{figure}[!ht]
    \centering
    \includegraphics[width=0.8\textwidth]{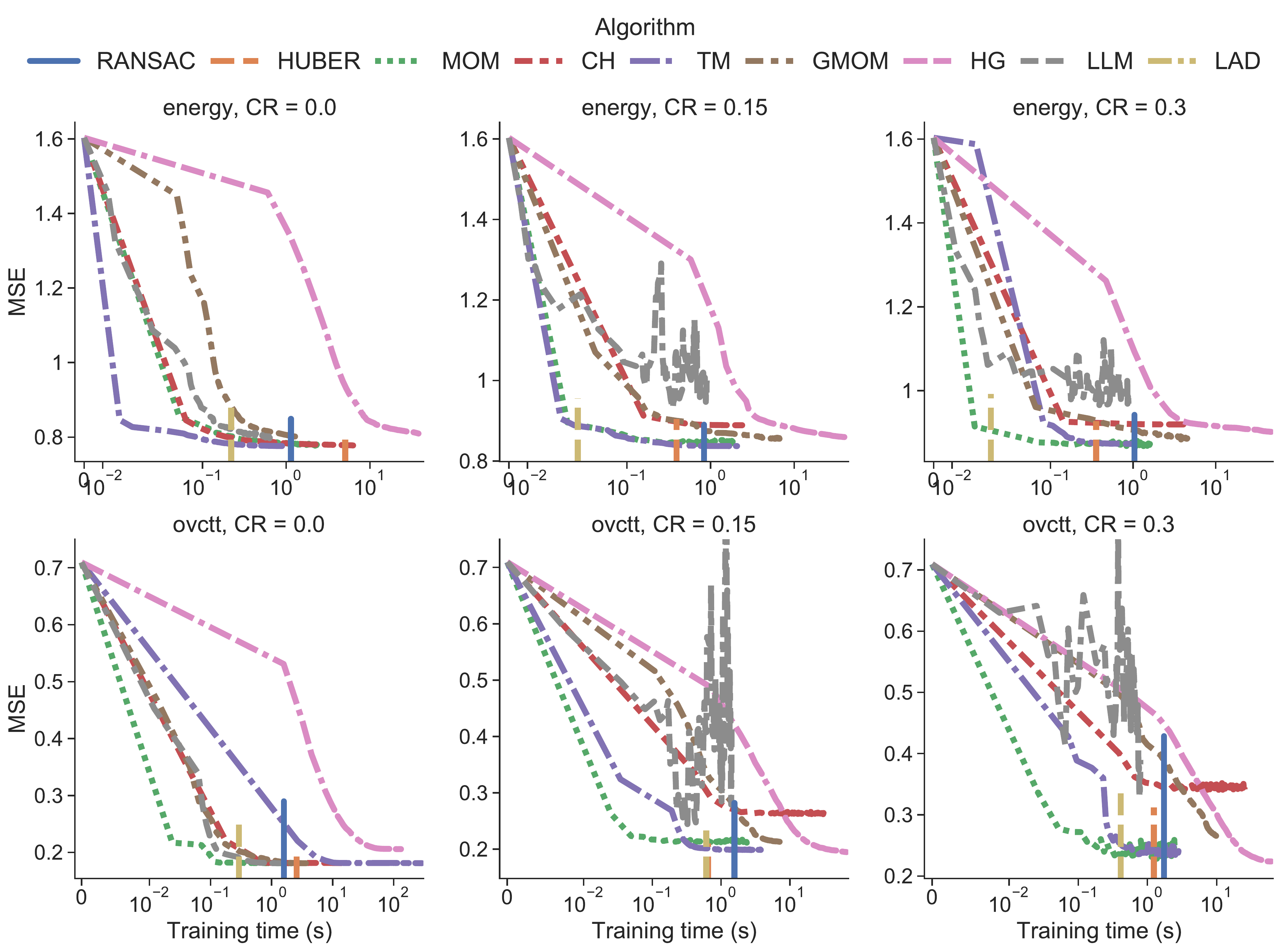}
    \caption{Mean squared error ($y$-axis) against computation time ($x$-axis) along training iterations on two datasets (rows) for $0\%$ corruption (first column), $15\%$ corruption (middle column) and $30\%$ corruption (last column).}
    \label{fig:reg_time}
\end{figure}

\section{Conclusion}

In this paper, we introduce new robust algorithms for supervised learning by combining two ingredients: robust CGD and several robust estimators of the partial derivatives. 
We derive convergence results for several variants of CGD with noisy partial derivatives and prove deviation bounds for all the considered robust estimators of the partial derivatives under somewhat minimal moment assumptions, including cases with infinite variance, and the presence of arbitrary outliers (except for the $\mathtt{CH}$ estimator).
This leads to very robust learning algorithms, with a numerical cost comparable to that of non-robust approaches based on empirical risk minimization, since it lets us bypass the need of a robust \emph{vector mean} estimator and allows to update model weights immediately using a robust estimator of a \emph{single partial derivative} only.
This is substantiated in our numerical experiments, that confirm the fact that our approach offers an excellent compromise between statistical accuracy, robustness and computational effort.
Perspectives include robust learning algorithms in high dimension, achieving sparsity-aware generalization bounds, which is beyond the scope of this paper, since it would require different algorithms based on methods such as mirror descent with an appropriately chosen divergence, see for instance~\cite{shalev2011stochastic, Juditsky2020SparseRB}.


\appendix

\newpage

\section{Supplementary theoretical results and details on experiments}

\subsection{The Lipschitz constants \texorpdfstring{$L_j$}{} are unknown}
\label{sub:unknown-step-sizes}

The step-sizes $(\beta_j)_{j \in \setint d}$ used in Theorems~\ref{thm:linconv1expect} and~\ref{thm:linconvdeterministic} are given by $\beta_j = 1 / L_j$, where the Lipschitz constants $L_j$ are defined by~\eqref{eq:lipschitz_constants}.
This makes them non-observable, since they depend on the unknown distribution of the non-corrupted features $P_{X_i}$ for $i \in \cI$.
We cannot use line-search~\citep{pjm/1102995080} here, since it requires to evaluate the objective $R(\theta)$, which is unknown as well.
In order to provide theoretical guarantees similar to that of Theorem~\ref{thm:linconv1expect} without knowing $(L_j)_{j=1}^d$, we use the following  approach.
First, we use the upper bound
\begin{equation}
    U_j:= \gamma \E\big[(X^j)^2\big] \geq L_j,
\end{equation}
which holds under Assumption~\ref{assump:lipsmoothloss} and estimate $\E[(X^j)^2]$ to build a robust estimator of $U_j$. 
In order to obtain an observable upper bound and to control its deviation with a large probability, we introduce the following condition.
\begin{definition}
    \label{def:lzeta-lxi-condition}
    We say that a real random variable $Z$ satisfies the $L^{\zeta}$-$L^{\xi}$ condition with constant $C \geq 1$ whenever it satisfies
    \begin{equation}
        \big( \E \big[ |Z - \E Z|^{\zeta} \big] \big)^{1/\zeta} \leq C \big( \E \big[ |Z - \E Z|^{\xi} \big] \big)^{1/\xi}.
    \end{equation}
\end{definition}

Using this condition, we can use the $\mathtt{MOM}$ estimator to obtain a high probability upper bound on $\E[(X^j)^2]$ as stated in the following lemma.
\begin{lemma}
    \label{lem:mom-for-Lj}
    Grant Assumption~\ref{assump:data} with $\alpha \in (0, 1]$ and suppose that for all $j\in\setint{d},$ the variable $(X^j)^2$ satisfies the $L^{(1+\alpha)}$-$L^1$ condition with a known constant $C$. 
    For any fixed $j \in  \setint d,$ let $\wh{\sigma}^2_j$ be the $\mathtt{MOM}$ estimator of $\E[(X^j)^2]$ with $K$ blocks. 
    If $|\mathcal{O}| \leq K / 12,$ we have
    \begin{equation*}
        \Proba \Big[ \Big(1 - 12^{1/(1+\alpha)}C\Big( \frac{K}{ n} \Big)^{\alpha/(1+\alpha)} \Big)^{-1}\wh{\sigma}_j^2 \leq \E [(X^j)^2 ] \Big] \leq \exp(-K/18).
    \end{equation*}
    If we fix a confidence level $\delta \in (0, 1)$ and choose $K:= \lceil 18 \log (1 / \delta) \rceil,$ we have
    \begin{equation*}
        \Big( 1 - 216^{1/(1+\alpha)}C\Big( \frac{\log(1/\delta)}{ n} \Big)^{\alpha/(1+\alpha)} \Big)^{-1} \wh{\sigma}_j^2 > \E[ (X^j)^2 ]
    \end{equation*}
    with a probability larger than $1 - \delta$.
\end{lemma}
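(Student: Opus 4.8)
The plan is to apply a one-sided median-of-means deviation bound to the \emph{non-negative} random variable $Z^j := (X^j)^2$, whose mean is $\mu_j := \E[(X^j)^2]$, and then to turn the resulting centered $(1+\alpha)$-moment into a multiple of $\mu_j$ itself, using both the $L^{(1+\alpha)}$-$L^1$ condition and the non-negativity of $Z^j$. Writing $\wh\sigma_j^2$ for the $\mathtt{MOM}$ estimator of $\mu_j$ with $K$ blocks, the reformulation I aim for is that the bracketed event $\{(1 - b)^{-1}\wh\sigma_j^2 \le \mu_j\}$, with $b := 12^{1/(1+\alpha)} C (K/n)^{\alpha/(1+\alpha)}$, is equivalent to $\{\mu_j - \wh\sigma_j^2 \ge b\,\mu_j\}$, so that it suffices to establish a lower-tail deviation of $\wh\sigma_j^2$ below $\mu_j$.

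First I would reproduce, for the variable $Z^j$, the argument already used in Lemma~\ref{lem:basicMOM}: a von Bahr--Esseen bound controls the centered $(1+\alpha)$-moment of each block average by $2\,V_{\alpha,j}/(n/K)^{\alpha}$ with $V_{\alpha,j} := \E\big[\,|(X^j)^2 - \mu_j|^{1+\alpha}\,\big]$, a Markov step bounds the probability that a given inlier block mean falls below $\mu_j - t$, and a binomial/Chernoff count over the at least $11K/12$ inlier blocks (using $|\cO| \le K/12$) yields the one-sided statement $\Proba\big[\mu_j - \wh\sigma_j^2 > (c\,V_{\alpha,j})^{1/(1+\alpha)}(K/n)^{\alpha/(1+\alpha)}\big] \le e^{-K/18}$ for an absolute constant $c$. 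Next, the $L^{(1+\alpha)}$-$L^1$ condition gives $V_{\alpha,j}^{1/(1+\alpha)} \le C\,\E\big[\,|(X^j)^2 - \mu_j|\,\big]$, and non-negativity of $Z^j$ gives $\E\big[\,|(X^j)^2 - \mu_j|\,\big] \le 2\mu_j$ (indeed $\E|Z^j - \mu_j| = 2\,\E[(\mu_j - Z^j)_+]$ while $(\mu_j - Z^j)_+ \le \mu_j$ since $Z^j \ge 0$), the numerical factor being absorbed into the reported constant. Substituting these two estimates into the deviation threshold shows that on the complement of the bad event one has $\mu_j - \wh\sigma_j^2 \le b\,\mu_j$, which is exactly the first displayed inequality.

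For the second statement I would set $K := \lceil 18\log(1/\delta)\rceil$, so that $e^{-K/18} \le \delta$ and the good event holds with probability at least $1 - \delta$. Plugging this $K$ into $b$ and using $K \le 18\log(1/\delta)$ up to the harmless ceiling, the factor becomes $12^{1/(1+\alpha)} 18^{\alpha/(1+\alpha)}(\log(1/\delta)/n)^{\alpha/(1+\alpha)}$; since $\alpha \le 1$ we have $18^{\alpha/(1+\alpha)} \le 18^{1/(1+\alpha)}$, and $12 \cdot 18 = 216$, which yields the stated $216^{1/(1+\alpha)}$ constant.

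The main obstacle I anticipate is not the $\mathtt{MOM}$ concentration itself---that is routine given Lemma~\ref{lem:basicMOM}---but rather obtaining a \emph{multiplicative} (rather than additive) upper bound on the unknown $\mu_j$: one must convert the additive deviation, which is expressed through the centered moment $V_{\alpha,j}$, into a fraction of $\mu_j$, and this is precisely where both the $L^{(1+\alpha)}$-$L^1$ condition and the non-negativity of $(X^j)^2$ enter. Carefully tracking the absolute constants through these two reductions (and through the ceiling in $K$) so that they collapse to the clean $12^{1/(1+\alpha)}$ and $216^{1/(1+\alpha)}$ is the only delicate bookkeeping.
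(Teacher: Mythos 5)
Your proposal tracks the paper's proof essentially step for step: block means of $(X^j)^2$, the weak-moment deviation bound (Lemma~\ref{lem:mean-deviation-weak-moments}) applied per block, the binomial/Hoeffding count over blocks with $|\mathcal{O}|\leq K/12$ yielding $e^{-K/18}$, and the conversion of the centered $(1+\alpha)$-moment into a multiple of $\E[(X^j)^2]$ via the $L^{(1+\alpha)}$-$L^1$ condition followed by non-negativity. The only divergence is in that last reduction, where you (correctly) bound $\E\big|(X^j)^2-\E[(X^j)^2]\big|\leq 2\,\E[(X^j)^2]$ while the paper uses the bound without the factor $2$ to land exactly on $12^{1/(1+\alpha)}$; your extra factor is real (the factor-free bound can fail for non-negative variables) and would inflate the stated constants slightly, exactly as you anticipate.
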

The proof of Lemma~\ref{lem:mom-for-Lj} is given in Appendix~\ref{sec:proofs}. 
Denoting $\wh U_j$ the upper bounds it provides on $\E[(X^j)^2],$ we can readily bound the Lipschitz constants as $L_j \leq \gamma \wh U_j$ which leads to the following statement.

\begin{corollary}
\label{cor:estimate-lipschitz-constants}
    Grant the same assumptions as in Theorem~\ref{thm:linconv1expect} and Proposition~\ref{prop:uniformMOM}. Suppose additionally that for all $j\in\setint{d}$, the variable $(X^j)^2$ satisfies the $L^{(1+\alpha)}$-$L^1$ condition with a known constant $C$ and fix $\delta \in (0, 1)$. 
    Let $\theta^{(T)}$ be the output of Algorithm~\ref{alg:robust-cgd} with step-sizes $\wh \beta_j = 1 / \overline L_j$ where $\overline L_j:= \gamma \wh U_j$ and $\wh U_j$ are the upper bounds from Lemma~\ref{lem:mom-for-Lj} with confidence $\delta/2d,$ an initial iterate $\theta^{(0)},$ importance sampling distribution $p_j = \overline L_j / \sum_{k \in  \setint d} \overline L_{k}$ and estimators of the partial derivatives with error vector $\epsilon(\cdot)$.
    Then\textup, we have 
        \begin{equation}
        \E \big[ R(\theta^{(T)}) ] - R^\star \leq (R(\theta^{(0)}) - R^\star)
        \Big( 1 - \frac{\lambda}{\sum_{j \in  \setint d} \overline L_{j}} \Big)^T 
        + \frac{1}{2\lambda } \big\| \epsilon ( \delta/2 ) \big\|_2^2
    \end{equation}
     with probability at least $1 - \delta$.
\end{corollary}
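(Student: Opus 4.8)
The plan is to reduce the statement to the importance-sampling part of Theorem~\ref{thm:linconv1expect} (namely the bound~\eqref{eq:thm1-importance-sampling}), exploiting the fact that its proof uses the step-sizes only through the coordinate-wise descent inequality, which stays valid when the true Lipschitz constants $L_j$ are replaced by any upper bounds. First I would split the failure probability into two halves. On the one hand, I apply Lemma~\ref{lem:mom-for-Lj} with confidence $\delta/2d$ to each coordinate $j \in \setint d$ and take a union bound, so that with probability at least $1 - \delta/2$ the estimates satisfy $\wh U_j \geq \E[(X^j)^2]$ simultaneously for all $j$; combined with the bound $L_j \leq \gamma \E[(X^j)^2]$ recorded below Assumption~\ref{assump:lipsmoothloss}, this yields $\overline L_j = \gamma \wh U_j \geq L_j$ for every $j$ on this event. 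On the other hand, Proposition~\ref{prop:uniformMOM} applied with confidence $\delta/2$ gives, with probability at least $1 - \delta/2$, the uniform control $\sup_{\theta \in \Theta}|\wh g_j(\theta) - g_j(\theta)| \leq \epsilon_j(\delta/2)$ for all $j$. Intersecting the two events and taking a union bound produces a single event of probability at least $1 - \delta$ on which both the Lipschitz over-estimation and the partial-derivative control hold.

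Next I would condition on the data, so that the estimated constants $\overline L_j$, hence the step-sizes $\wh \beta_j = 1/\overline L_j$ and the importance-sampling probabilities $p_j = \overline L_j / \sum_k \overline L_k$, become deterministic, leaving only the coordinate draws $j_0, \dots, j_{T-1}$ (independent of the data) as a source of randomness. On the good event above, the crucial point is that the coordinate-wise smoothness bound from Assumption~\ref{ass:mintheta-and-smoothness}, $|g_j(\theta + h e_j) - g_j(\theta)| \leq L_j |h| \leq \overline L_j |h|$, still holds with $\overline L_j$ in place of $L_j$. Consequently the quadratic majorization $R(\theta + h e_j) \leq R(\theta) + h g_j(\theta) + \tfrac{\overline L_j}{2}h^2$ used in the proof of Theorem~\ref{thm:linconv1expect} remains valid with $\overline L_j$ substituted for $L_j$ (the step-size $1/\overline L_j \leq 1/L_j$ only makes the descent step more conservative), and the argument carries through verbatim. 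Since the statistical-error constant in~\eqref{eq:thm1-importance-sampling} is $\tfrac{1}{2\lambda}$, which carries no dependence on the $L_j$, only the contraction factor is affected, and one recovers exactly~\eqref{eq:thm1-importance-sampling} with $L_j$ everywhere replaced by $\overline L_j$: the contraction $\big(1 - \lambda/\sum_{j} \overline L_j\big)^T$ together with the statistical error $\tfrac{1}{2\lambda}\|\epsilon(\delta/2)\|_2^2$.

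The only point requiring genuine care is the bookkeeping of randomness: the step-sizes and the sampling law are now data-dependent random objects, so the CGD recursion must be run \emph{conditionally} on the data, and the expectation in the statement is understood, as in Theorem~\ref{thm:linconv1expect}, only with respect to the coordinate sampling. Because the two probabilistic inputs (Lemma~\ref{lem:mom-for-Lj} and Proposition~\ref{prop:uniformMOM}) are deviation statements about the same underlying data, they may be intersected directly, and the resulting bound holds on an event of probability at least $1-\delta$ with a contraction constant and a statistical error that are themselves functions of the realized $\overline L_j$. I do not anticipate any difficulty beyond this conditioning argument, since no new optimization estimate is needed: the corollary is essentially Theorem~\ref{thm:linconv1expect} with observable over-estimates of the Lipschitz constants substituted for the unknown ones.
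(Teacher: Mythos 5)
Your proposal is correct and follows essentially the same route as the paper's own (very terse) proof: apply Lemma~\ref{lem:mom-for-Lj} at confidence $\delta/2d$ per coordinate, union bound to get $\overline L_j \geq L_j$ on an event of probability $1-\delta/2$, rerun the Theorem~\ref{thm:linconv1expect} argument with $\overline L_j$ in place of $L_j$ and error vector at level $\delta/2$, and intersect the two events. Your additional care about conditioning on the data so that the step-sizes and sampling law are treated as fixed, and your observation that the statistical constant $1/(2\lambda)$ is unaffected because the $\sum_k \overline L_k$ factors cancel, are correct details that the paper leaves implicit.
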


The proof of Corollary~\ref{cor:estimate-lipschitz-constants} is given in Appendix~\ref{sec:proofs}.
It is a direct consequence of Theorem~\ref{thm:linconv1expect} and Lemma~\ref{lem:mom-for-Lj} and shows that an upper bound similar to that of Theorem~\ref{thm:linconv1expect} can be achieved with \emph{observable} step-sizes.
One may argue that the $L^{(1+\alpha)}$-$L^1$ condition simply bypasses the difficulty of deriving an observable upper bound by arbitrarily assuming that a ratio of moments is observed. 
However, we point out that a hypothesis of this nature is indispensable to obtain bounds such as the one above (alternatively, consider a real random variable with an infinitesimal mass drifting towards infinity). 
In fact, the $L^{(1+\alpha)}$-$L^1$ condition is much weaker than the requirement of boundedness (with known range) common to most known empirical bounds~\citep{Maurer2009EmpiricalBB, AUDIBERT20091876, mnih2008empirical}.

\subsection{Observable upper bound for the moment  $m_{\alpha, j}$}
\label{sub:observable-upper-bound-moment}

Since the moment $m_{\alpha, j}$, it is not observable, so we propose in Lemma~\ref{lem:mom-for-grad-moment} below an observable upper bound deviation for it based on $\mathtt{MOM}$.
Let us introduce now a robust estimator $\wh{m}^{\mathtt{MOM}}_{\alpha, j}(\theta)$ of the unknown moment $m_{\alpha, j}(\theta)$ using the following ``two-step'' $\mathtt{MOM}$ procedure.
First, we compute $\wh{g}^{\mathtt{MOM}}_j(\theta)$, the $\mathtt{MOM}$ estimator of $g_j(\theta)$ with $K$ blocks given by~\eqref{eq:mom-estimator}.
Then, we compute again a $\mathtt{MOM}$ estimator on $| g^i_j(\theta) - \wh{g}^{\mathtt{MOM}}_j(\theta) |^{1+\alpha}$ for $i\in \setint{n}$, namely
\begin{equation}
    \label{eq:moment-mom-estimator}
    \wh m_{\alpha, j}^{\mathtt{MOM}}(\theta):= \median \big( \wh m_{\alpha, j}^{(1)}(\theta), \ldots, \wh m_{\alpha, j}^{(K)}(\theta) \big),
\end{equation}
where
\begin{equation*}
    \wh m_{\alpha, j}^{(k)}(\theta):= \frac{1}{|B_k|} \sum_{i \in B_k} \big| g^i_j(\theta) - \wh{g}^{\mathtt{MOM}}_j(\theta) \big|^{1+\alpha},
\end{equation*}
using uniformly sampled blocks $B_1, \ldots, B_K$ of equal size that form a partition of $\setint n$.

\begin{lemma}
    \label{lem:mom-for-grad-moment}
    Grant Assumptions~\ref{assump:lipsmoothloss} and~\ref{assump:data} with $\alpha \in (0, 1]$ and suppose that for all $j\in\setint{d}$ and $\theta \in \Theta$ the partial derivatives $\ell'(X^\top \theta, Y)X^j$  satisfy the $L^{(1+\alpha)^2}$-$L^{(1+\alpha)}$ condition with known constant $C$ for any $j\in\setint{d}$ \textup(see Definition~\ref{def:lzeta-lxi-condition}\textup).
    Then\textup, if $|\mathcal{O}| \leq K / 12,$ we have
    \begin{equation*}
        \P \big[ \wh{m}^{\mathtt{MOM}}_{\alpha, j}(\theta) \leq  (1 - \kappa)m_{\alpha, j}(\theta) \big] \leq 2\exp(-K/18)
    \end{equation*}
    where $\kappa = \epsilon + 24 (1 + \alpha) \big(\frac{(1 + \epsilon) K}{n} \big)^{\alpha/(1+\alpha)}$ and $\epsilon = (24 (1 + C^{(1+\alpha)^2} ))^{1/(1+\alpha)} \big(\frac Kn)^{\alpha / (1 + \alpha)}$.
\end{lemma}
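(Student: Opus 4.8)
Throughout I write $\mu := g_j(\theta)$, $m := m_{\alpha, j}(\theta)$, $\wh g := \wh g_j^{\mathtt{MOM}}(\theta)$ and, for each block, $A_k := \frac{1}{|B_k|}\sum_{i\in B_k}|g_j^i(\theta) - \mu|^{1+\alpha}$, the block mean of the nonnegative variables $W_i := |g_j^i(\theta) - \mu|^{1+\alpha}$ centered at the \emph{true} mean $\mu$ rather than at $\wh g$. The plan is to reduce the statement to a two-sided $\mathtt{MOM}$ concentration for the $W_i$ (whose mean is exactly $m$), coupled with the basic deviation bound for $\wh g$ provided by Lemma~\ref{lem:basicMOM}.

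First I would isolate the two elementary inequalities that drive the proof. By convexity of $x \mapsto |x|^{1+\alpha}$ and its subgradient inequality at $v = g_j^i(\theta)-\mu$, one has $|g_j^i(\theta) - \wh g|^{1+\alpha} \geq |g_j^i(\theta) - \mu|^{1+\alpha} - (1+\alpha)\,|g_j^i(\theta)-\mu|^{\alpha}\,|\wh g - \mu|$. Averaging over $B_k$ and using the Jensen inequality $\frac{1}{|B_k|}\sum_{i\in B_k}|g_j^i(\theta)-\mu|^{\alpha} \leq A_k^{\alpha/(1+\alpha)}$ (concavity of $t \mapsto t^{\alpha/(1+\alpha)}$), this yields the block-wise lower bound
\begin{equation*}
  \wh m_{\alpha, j}^{(k)}(\theta) \;\geq\; A_k - (1+\alpha)\,|\wh g - \mu|\,A_k^{\alpha/(1+\alpha)}.
\end{equation*}
The crucial gain is that the correction term is now expressed solely through $A_k$, so that a single two-sided control of the $A_k$ will control the whole block estimate.

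Next I would control the $A_k$. Since the $W_i$ share the same outlier set $\mathcal{O}$ with $|\mathcal{O}| \leq K/12$, I would run the $\mathtt{MOM}$ argument behind Lemma~\ref{lem:basicMOM} on the $W_i$ in place of the $g_j^i(\theta)$. Their centered moment is bounded via the $L^{(1+\alpha)^2}$-$L^{(1+\alpha)}$ condition: because $W_i \geq 0$ and $m = \E W_i$, splitting on $\{W_i \geq m\}$ gives $\E|W_i - m|^{1+\alpha} \leq \E W_i^{1+\alpha} + m^{1+\alpha} = \E|g_j^i(\theta)-\mu|^{(1+\alpha)^2} + m^{1+\alpha} \leq (1 + C^{(1+\alpha)^2})\,m^{1+\alpha}$, avoiding any spurious constant. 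The per-block Markov bound and the usual median counting then show that, on an event $G_2$ of probability at least $1 - e^{-K/18}$, strictly more than half of the blocks satisfy $(1-\epsilon)m \leq A_k \leq (1+\epsilon)m$ with $\epsilon = (24(1+C^{(1+\alpha)^2}))^{1/(1+\alpha)}(K/n)^{\alpha/(1+\alpha)}$. Simultaneously, Lemma~\ref{lem:basicMOM} provides an event $G_1$ of probability at least $1 - e^{-K/18}$ on which $|\wh g - \mu| \leq (24m)^{1/(1+\alpha)}(K/n)^{\alpha/(1+\alpha)}$.

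Finally, on $G_1 \cap G_2$ I would combine these bounds for the majority of blocks for which $(1-\epsilon)m \leq A_k \leq (1+\epsilon)m$: inserting $A_k \geq (1-\epsilon)m$ in the leading term and $A_k \leq (1+\epsilon)m$ together with the deviation of $\wh g$ in the correction term, the block-wise inequality becomes $\wh m_{\alpha,j}^{(k)}(\theta) \geq (1-\epsilon)m - (1+\alpha)\,24^{1/(1+\alpha)}\,m\,((1+\epsilon)K/n)^{\alpha/(1+\alpha)} \geq (1-\kappa)m$, using $24^{1/(1+\alpha)} \leq 24$ for $\alpha \in (0,1]$. As this holds for strictly more than half the blocks, their median $\wh m_{\alpha,j}^{\mathtt{MOM}}(\theta)$ is at least $(1-\kappa)m$, and a union bound over $G_1^c$ and $G_2^c$ gives the failure probability $2e^{-K/18}$. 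I expect the main obstacle to be the coupling between $\wh g$ and the block estimates—both built from the same data—which forbids treating the centering error as independent noise; the resolution is precisely the reduction above, where the Jensen step collapses the correction onto $A_k$ so that one two-sided $\mathtt{MOM}$ bound suffices, while the one-sided splitting of $\E|W_i - m|^{1+\alpha}$ delivers the constant $1 + C^{(1+\alpha)^2}$ cleanly.
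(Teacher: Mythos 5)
Your proposal is correct and follows essentially the same route as the paper's proof (via its Lemma on MOM estimation of the $(1+\alpha)$-moment): the subgradient inequality for $x\mapsto|x|^{1+\alpha}$ plus Jensen to collapse the centering error onto the block means $A_k$ of $|g_j^i(\theta)-\mu|^{1+\alpha}$, a two-sided MOM/Markov control of those block means using the $L^{(1+\alpha)^2}$-$L^{(1+\alpha)}$ condition, Lemma~\ref{lem:basicMOM} for $|\wh g - \mu|$, and a union bound. Your one-sided splitting of $\E|W_i-m|^{1+\alpha}$ (saving the paper's $2^{\alpha}$ factor) and your phrasing of the argument as an intersection of two events rather than a conditioning are minor cosmetic improvements, not a different approach.
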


The proof of Lemma~\ref{lem:mom-for-grad-moment} is given in Appendix~\ref{sec:proofs}.

\subsection{Experimental details}
\label{sec:exp_details}

We provide in this section supplementary information about the numerical experiments conducted in Section~\ref{sec:experiments}.

\subsubsection{Datasets}

The main characteristics of the datasets used from the UCI repository are given in Table~\ref{tab:dataset-descrip} and their direct URLs are given in Table~\ref{tab:data-source}.

\begin{table}[!ht]
    \centering
    \small
\begin{tabular}{lcccccc}
    \hline
    Dataset & \# Samples & \# Features & \# Categorical & \# Classes \\
    \hline
    statlog & 6,435 & 36 & 0 & 6 \\
    spambase & 4,601 & 57 & 0 & 2 \\
    electrical & 10,000 & 13 & 0 & 2 \\
    occupancy~\citep{candanedo2016accurate} & 20,560 & 5 & 0 & 2 \\
    gas~\citep{vergara2012chemical} & 13,910 & 128 & 0 & 6 \\
    drybean~\citep{KOKLU2020105507} & 13,611 & 16 & 0 & 7 \\
    \hline
    energy~\citep{CANDANEDO201781} & 19,735 & 27 & 0 & - \\
    bike~\citep{fanaee2014event} & 17,379 & 10 & 5 & - \\
    metro & 48,204 & 6 & 1 & - \\
    sgemm~\citep{ballester2019sobol} & 241,600 & 14 & 0 & - \\
    ovctt & 68,784 & 20 & 2 & - \\
    californiahousing & 20,640 & 8 & 0 & - \\
    \hline
\end{tabular}
\caption{Main characteristics of the datasets used in experiments, including number of samples, number of features, number of categorical features and number of classes.}
\label{tab:dataset-descrip}
\end{table}

\begin{table}[!ht]
    \centering
    \footnotesize
    \begin{sideways}
    \begin{tabular}{ll}
    \toprule
        Dataset & URL \\
        \midrule
        statlog & \url{https://archive.ics.uci.edu/ml/datasets/Statlog+\%28Landsat+Satellite\%29} \\
        spambase & \url{https://archive.ics.uci.edu/ml/datasets/spambase} \\
        electrical & \url{https://archive.ics.uci.edu/ml/datasets/Electrical+Grid+Stability+Simulated+Data+} \\
        occupancy & \url{https://archivehttps://archive.ics.uci.edu/ml/datasets/Occupancy+Detection+} \\
        gas & \url{https://archive.ics.uci.edu/ml/datasets/Gas+Sensor+Array+Drift+Dataset} \\
        drybean & \url{https://archive.ics.uci.edu/ml/datasets/Dry+Bean+Dataset} \\
        \hline
        energy & \url{https://archive.ics.uci.edu/ml/datasets/Appliances+energy+prediction} \\
        bike & \url{https://archive.ics.uci.edu/ml/datasets/Bike+Sharing+Dataset} \\
        metro & \url{https://archive.ics.uci.edu/ml/datasets/Metro+Interstate+Traffic+Volume} \\
        sgemm & \url{https://archive.ics.uci.edu/ml/datasets/SGEMM+GPU+kernel+performance} \\
        ovctt & \url{https://archive.ics.uci.edu/ml/datasets/Online+Video+Characteristics+and+Transcoding+Time+Dataset} \\
        californiahousing & loaded from \texttt{scikitlearn.datasets} \\
        \bottomrule
    \end{tabular}
    \end{sideways}
    \caption{The URLs of all the datasets used in the paper, giving direct download links and supplementary details.}
    \label{tab:data-source}
\end{table}

\subsubsection{Data corruption}

For a given corruption rate $\eta$, we obtain a corrupted version of a dataset by replacing an $\eta$-fraction of its samples with uninformative elements. 
For a dataset of size $n$ we choose $\cO \subset \setint{n}$ which satisfies $|\cO| = \eta n$ up to integer rounding. 
The corruption is applied prior to any preprocessing except in the regression case where label scaling is applied before. 
The affected subset is chosen uniformly at random. 
Since many datasets contain both continuous and categorical data features, we distinguish two different corruption mechanisms which we apply depending on their nature. 
The labels are corrupted as continuous or categorical values when the task is respectively regression or classification.
Denote $\widetilde{\bX} \in \R^{n\times (d+1)}$ the data matrix with the vector of labels added to its columns. 
Let $\widetilde{J} \subset \setint{d+1}$ denote the index of continuous columns, we compute $\wh \mu_j$ and $\wh \sigma_j$ their empirical means and standard deviations respectively for $j \in \widetilde J$. 
We also sample a random unit vector $u$ of size $|\widetilde{J}|$.
\begin{itemize}
    \item For categorical feature columns, for each corrupted index $i \in \cO$, we replace $\bX_{i,j}$ with a uniformly sampled value among $\{\bX_{\bullet ,j}\}$ i.e. among the possible modalities of the categorical feature in question.
    \item For continuous features, for each corrupted index $i \in \cO$, we replace $\bX_{i, \widetilde{J}}$ with equal probability with one of the following possibilities:
    \begin{itemize}
        \item a vector $\xi$ sampled coordinatewise according to $\xi_j = r_j + 5 \wh\sigma_j \nu $ where $r_j$ is a value randomly picked in the column $\bX_{\bullet ,j}$ and $\nu$ is a sample from the Student distribution with $2.1$ degrees of freedom.
        \item a vector $\xi$ sampled coordinatewise according to $\xi_j = \wh \mu_j + 5\wh\sigma_j u_j + z $ where $z$ is a standard gaussian.
        \item a vector $\xi$ sampled according to $\xi = \wh \mu + 5\wh \sigma \otimes w $ where $w$ is a uniformly sampled unit vector.
    \end{itemize}
\end{itemize}

\subsection{Preprocessing}

We apply a minimal amount of preprocessing to the data before applying the considered learning algorithms. More precisely, categorical features are one-hot encoded while centering and standard scaling is applied to the continuous features.

\subsection{Parameter hyper-optimization}

We use the \texttt{hyperopt} library to find optimal hyper-parameters for all algorithms. 
For each dataset, the available samples are split into training, validation and test sets with proportions $70\%, 15\%, 15\%$. 
Whenever corruption is applied, it is restricted to the training set. 
We run 50 rounds of hyper-parameter optimization which are trained on the training set and evaluated on the validation set. 
Then, we report results on the test set for all hyper-optimized algorithms. 
For each algorithm, the hyper-parameters are tried out using the following sampling mechanism (the one we specify to \texttt{hyperopt}):
\begin{itemize}
    \item $\mathtt{MOM}$, $\mathtt{GMOM}$, $\mathtt{LLM}$: we optimize the number of blocks $K$ used for the median-of-means computations. This is done through a \texttt{block\_size} $=K/n$ hyper-parameter chosen with log-uniform distribution over $[10^{-5}, 0.2]$
    \item $\mathtt{CH}$ and $\mathtt{CH\: GD}$:  we optimize the confidence $\delta$ used to define the $\mathtt{CH}$ estimator's scale parameter (see Equation~\eqref{eq:ch-scale-estimator}) chosen with log-uniform distribution over $[e^{-10}, 1]$
    \item $\mathtt{TM}$, $\mathtt{HG}$: we optimize the percentage used for trimming uniformly in $[10^{-5}, 0.3]$
    \item $\mathtt{RANSAC}$: we optimize the value of the \texttt{min\_samples} parameter in the scikit-learn implementation, chosen as $4 + m$ with $m$ an integer chosen uniformly in $\setint{100}$
    \item $\mathtt{HUBER}$: we optimize the \texttt{epsilon} parameter in the scikit-learn implementation chosen uniformly in $[1.0, 2.5]$
\end{itemize}

\section{Proofs}
\label{sec:proofs}

\subsection{Proof of Theorem~\ref{thm:linconv1expect}}

This proof follows, with minor modifications, the proof of Theorem~1 from~\cite{wright2015coordinate}.
Using Definition~\ref{def:error-vector} , we obtain
\begin{equation}
    \label{eq:approx}
    \P [ \mathcal E ] \geq 1 - \delta \quad \text{where} \quad \mathcal E:= \big\{ \forall j \in  \setint d, \quad \forall t \in [T], \quad 
    \big| \widehat{g}_j(\thetat) - g_j(\thetat)\big| \leq \epsilon_j(\delta ) \big\}.
\end{equation}
Let us recall that $e_j$ stands for the $j$-th canonical basis of $\R^d$ and that, as described in Algorithm~\ref{alg:robust-cgd}, we have
\begin{equation*}
    \theta^{(t+1)} = \theta^{(t)} - \beta_{j_t} \widehat{g}_t e_{j_t},
\end{equation*}
where we use the notations $\widehat{g}_t = \hg_{j_t}(\theta^{(t)})$ and $g_t = g_{j_t}(\theta^{(t)})$ and where we recall that $j_1, \ldots, j_t$ is a i.i.d sequence with distribution $p$.
We introduce also $\epsilon_j:= \epsilon_j(\delta)$.
Using Assumption~\ref{ass:mintheta-and-smoothness}, we obtain
\begin{align}
    R(\theta^{(t+1)}) &= R\big(\theta^{(t)} - \beta_{j_t} \widehat{g}_t e_{j_t}\big) \nonumber \\
    &\leq R(\theta^{(t)}) - \big\langle g(\theta^{(t)}), \beta_{j_t} \widehat{g}_t e_{j_t} \big\rangle + \frac{L_{j_t}}{2}\beta_{j_t}^2\widehat{g}_t^2 \nonumber \\
    &= R(\theta^{(t)}) - \beta_{j_t} g_t^2 - \beta_{j_t} g_t ( \widehat{g}_t - g_t ) + \frac{L_{j_t}\beta_{j_t}^2}{2}\big(g_t^2 + (\widehat{g}_t - g_t)^2 + 2g_t(\widehat{g}_t - g_t)\big) \nonumber \\
    &= R(\theta^{(t)}) - \beta_{j_t} g_t (1 - L_{j_t}\beta_{j_t})( \widehat{g}_t - g_t ) - 
    \beta_{j_t} \Big( 1 - \frac{L_{j_t}\beta_{j_t}}{2} \Big) g_t^2  + \frac{L_{j_t}\beta_{j_t}^2}{2}(\widehat{g}_t - g_t)^2 \nonumber \\
    &= R(\theta^{(t)}) - \frac{1}{2 L_{j_t}}g_t^2  + \frac{1}{2L_{j_t}}(\widehat{g}_t - g_t)^2 \nonumber \\ 
    &\leq R(\theta^{(t)}) - \frac{1}{2 L_{j_t}}g_t^2  + \frac{\epsilon_{j_t}^2}{2L_{j_t}} \label{eq:descentinequality}
\end{align}
on the event $\cE$, where we used the choice $\beta_{j_t} = 1/L_{j_t}$ and the fact that $| \widehat{g}_t - g_t | \leq \epsilon_{j_t}$ on $\mathcal E$.

Since $j_1, \ldots, j_t$ is a i.i.d sequence with distribution $p$, we have for any $(j_1, \ldots, j_{t-1})$-measurable and integrable function $\varphi$ that
\begin{equation*}
    \E_{t-1}\big[\varphi(j_t)\big] = \sum_{j \in  \setint d} \varphi(j) p_j,
\end{equation*}
where we denote for short the conditional expectation $\E_{t-1}[\cdot] = \E_{t-1}[\cdot | j_1, \ldots, j_{t-1}]$.
So, taking $\E_{t-1}[\cdot]$ on both sides of~\eqref{eq:descentinequality} leads, whenever $p_j = L_j / \sum_{k=1}^d L_k$, to
\begin{equation*}
    \E_{t-1}\big[ R(\theta^{(t+1)})\big] \leq R(\theta^{(t)}) - \frac{1}{2 \sum_k L_k}\big\|g(\theta^{(t)})\big\|^2 + \frac{1}{2\sum_k L_k} \Xi,
\end{equation*}
where we introduced $\Xi:= \|\epsilon(\delta)\|_2^2$, while it leads to
\begin{equation*}
    \E_{t-1} \big[R(\theta^{(t+1)})\big] \leq R(\theta^{(t)}) 
    - \frac{1}{2 L_{\max}d} \big\|g(\theta^{(t)})\big\|^2 + \frac{1}{2d L_{\min}} \Xi
\end{equation*}
whenever $p_j = 1 / d$, simply using $L_{\min} \leq L_j \leq L_{\max}$.
In order to treat both cases simultaneously, consider $\bar L = \sum_{k=1} L_k$ and $\bar \epsilon = \Xi / (2 \sum_k L_k)$ whenever $p_j = L_j / \sum_{k=1}^d L_k$ and $\bar L = d L_{\max}$ and $\bar \epsilon / (2 d L_{\min})$ whenever $p_j = 1 / d$ and continue from the inequality
\begin{equation*}
    \E_{t-1}\big[ R(\theta^{(t+1)})\big] \leq R(\theta^{(t)}) - \frac{1}{2 \bar L} \big\|g(\theta^{(t)})\big\|^2 + \bar \epsilon.    
\end{equation*}
Introducing $\phi_t:= \E \big[R(\theta^{(t)})\big] - R^\star$ and taking the expectation w.r.t. all $j_1, \dots, j_t$ we obtain
\begin{equation}
    \label{eq:descent}
    \phi_{t+1} \leq \phi_{t} - \frac{1}{2 \bar L} \E \big\| g(\theta^{(t)}) \big\|^2 + \bar \epsilon.
\end{equation}
Using Inequality~\eqref{eq:strongconvexity} with $\theta_1 = \theta^{(t)}$ gives
\begin{equation*}
    R(\theta_2) \geq R(\theta^{(t)}) + 
    \big \langle \nabla R(\theta^{(t)}), \theta_2 - \theta^{(t)} \big\rangle 
    + \frac{\lambda}{2}\big\|\theta_2 - \theta^{(t)}\big\|^2
\end{equation*}
for any $\theta_2 \in \R^d$, so that by minimizing both sides with respect to $\theta_2$ leads to
\begin{equation*}
    R^\star \geq R(\theta^{(t)}) - \frac{1}{2\lambda}\big\|g(\theta^{(t)})\big\|^2
\end{equation*}
namely
\begin{equation*}
    \phi_t \leq \frac{1}{2\lambda} \E \big\|g(\theta^{(t)})\big\|^2,
\end{equation*}
by taking the expectation on both sides.
Together with~\eqref{eq:descent} this leads to the following approximate contraction property:
\begin{equation*}
    \phi_{t+1} \leq \phi_{t} \Big(1 - \frac{\lambda}{\bar L} \Big) + \bar \epsilon,
\end{equation*}
and by iterating $t=1, \ldots, T$ to
\begin{equation*}
    \phi_T \leq \phi_0 \Big( 1 - \frac{\lambda}{\bar {L}} \Big)^T + \frac{\bar {\epsilon}\bar {L}}{\lambda},
\end{equation*}
which allows to conclude the Proof of Theorem~\ref{thm:linconv1expect}. $\hfill \square$

\subsection{Proof of Theorem~\ref{thm:linconvdeterministic}}

This proof reuses ideas from~\cite{li2017faster} and~\cite{doi:10.1137/120887679} and adapts them to our context where the gradient coordinates are replaced with high confidence approximations.
Without loss of generality, we initially assume that the coordinates are cycled upon in the natural order. We condition on the event~(\ref{eq:approx}) which holds with probability $\geq 1 - \delta$ as in the proof of Theorem~\ref{thm:linconv1expect} and denote $\epsilon_j = \epsilon_j(\delta)$ and $\epsilon_{Euc} = \|\epsilon(\delta)\|$.

Let the iterations be denoted as $\theta^{(t)}$ for $t=0,\dots, T$ and $\theta^{(t)}_{i+1} = \theta^{(t)}_{i} - \beta_{i+1} \widehat{g}(\theta^{(t)}_{i})_{i+1} e_{i+1}$ for $i=0, \dots, d-1$ with $\beta_i = 1/L_i$, $\theta^{(t)}_0 = \theta^{(t)}$ and $\theta^{(t)}_d = \theta^{(t+1)}$. With these notations we have 
\begin{equation*}
    R(\theta^{(t)}) - R(\theta^{(t+1)}) = \sum_{i=0}^{d-1} R(\theta^{(t)}_i) - R(\theta^{(t)}_{i+1}).
\end{equation*}
Similarly to (\ref{eq:descentinequality}) in the proof of Theorem~\ref{thm:linconv1expect} we find:
\begin{equation*}
    R(\theta^{(t)}_i) - R(\theta^{(t)}_{i+1}) \geq \frac{1}{2L_{i+1}}\big( g(\theta^{(t)}_{i})_{i+1}^2 - \epsilon_{i+1}^2\big),
\end{equation*}
leading to
\begin{equation}\label{eq:decr}
    R(\theta^{(t)}) - R(\theta^{(t+1)}) \geq \sum_{i=0}^{d-1} \frac{1}{2L_{i+1}} g(\theta^{(t)}_{i})_{i+1}^2 - \frac{1}{2 L_{\min}} \sum_{i=0}^{d-1}\epsilon_{i+1}^2 .
\end{equation}
The following aims to find a relationship between $\sum_{i=0}^{d-1} \frac{1}{2L_{i+1}} g(\theta^{(t)}_{i})_{i+1}^2$ and $\big\|g(\thetat)\|_2^2$ which we do by comparing coordinates. For the first step in a cycle we have $g(\thetat)_1 = g(\thetat_0)_1$ because $\thetat = \thetat_0$. Let $j \in \{1,\dots, d-1\}$, by the Mean Value Theorem, there exists $\gamma^{(t)}_j \in \R^d$ such that we have:
\begin{align*}
    g(\thetat)_{j+1} &= g(\thetat)_{j+1} - g(\thetat_j)_{j+1} + g(\thetat_j)_{j+1} \\
    &= \big(\nabla g_{j+1}(\gamma^{(t)}_j)\big)^\top \big(\thetat - \thetat_j\big) + g(\thetat_j)_{j+1} \\
    &= \bigg[\frac{\partial R(\gamma^{(t)}_j)}{\partial_{j+1}\partial_1}, \dots, \frac{\partial R(\gamma^{(t)}_j)}{\partial_{j+1}\partial_j}, 0, \dots, 0\bigg] \big[ (\thetat - \thetat_j)_1, \dots, (\thetat - \thetat_j)_j, 0, \dots, 0 \big]^\top \\ & \quad + g(\thetat_j)_{j+1} \\
    &= [H_{j+1, 1}, \dots, H_{j+1, j}, 0, \dots, 0] \bigg[ \frac{\hg_1(\thetat_0)}{L_1}, \dots, \frac{\hg_j(\thetat_{j-1})}{L_j}, 0, \dots, 0 \bigg]^\top + g(\thetat_j)_{j+1} \\
    &= [H_{j+1, 1}, \dots, H_{j+1, j}, 0, \dots, 0] \bigg[ \frac{g_1(\thetat_0) + \delta^{(t)}_{1}}{L_1}, \dots, \frac{g_j(\thetat_{j-1}) + \delta^{(t)}_{j}}{L_j}, 0, \dots, 0 \bigg]^\top \\ & \quad + g(\thetat_j)_{j+1} \\
    &= \underbrace{\bigg[\frac{H_{j+1, 1}}{\sqrt{L_1}}, \dots, \frac{H_{j+1, j}}{\sqrt{L_j}}, \sqrt{L_{j+1}}, 0, \dots, 0\bigg]}_{\widetilde{h}^\top_{j+1}} \underbrace{\bigg[ \frac{g_1(\thetat_0)}{\sqrt{L_1}}, \dots, \frac{g_d(\thetat_{d-1})}{\sqrt{L_d}} \bigg]^\top}_{\widetilde{g}_t} \\ & \quad + \underbrace{[H_{j+1, 1}, \dots, H_{j+1, j}, 0, \dots, 0]}_{h^\top_{j+1}} \bigg[ \frac{\delta^{(t)}_{1}}{L_1}, \dots, \frac{\delta^{(t)}_{d}}{L_d}\bigg]^\top \\
    &= \widetilde{h}_{j+1} \widetilde{g}_t + h_{j+1} A^{-1}\delta^{(t)},
\end{align*}
where we introduced the following quantities: $A \in \R^d$ equal to $A = \diag(L_j)_{j=1}^d$, the vector $\delta^{(t)} \in \R^d$ is such that $\delta^{(t)}_j = \hg(\thetat_{j-1})_{j} - g(\thetat_{j-1})_{j}$ which satisfies $|\delta^{(t)}_{j}| \leq \epsilon_j$, the matrix $H = (h_1, \dots, h_d)^\top $ and $\widetilde{H} = A^{1/2} + H A^{-1/2} = ( \widetilde{h}_1, \dots , \widetilde{h}_d)^\top$. In the case $j=0$ the vector $h_{j+1} = h_1$ is simply zero. This allows us to obtain the following estimation:
\begin{align}
    \big\|g(\thetat)\big\|^2 &= \sum_{j=1}^d g(\thetat)_j^2 = \sum_{j=1}^d (\widetilde{h}_j^\top \widetilde{g}_t + h_j^\top A^{-1} \delta^{(t)})^2 \nonumber  \\ 
    &\leq \sum_{j=1}^d 2(\widetilde{h}_j^\top \widetilde{g}_t)^2 + 2(h_j^\top A^{-1} \delta^{(t)})^2 = 2\big\|\widetilde{H} \widetilde{g}_t\big\|^2 + 2\big\|H A^{-1} \delta^{(t)}\big\|^2 \nonumber \\
    &\leq 2\big\|\widetilde{H}\big\|^2 \big\|\widetilde{g}_t\big\|^2 + \frac{2}{L_{\min}^2}\|H\|^2 \epsilon_{Euc}^2 \nonumber \\
    &= 2\|\widetilde{H}\|^2 \sum_{i=0}^{d-1} \frac{1}{L_{i+1}} g(\theta^{(t)}_{i})_{i+1}^2 + \frac{2}{L_{\min}^2}\|H\|^2 \epsilon_{Euc}^2 \label{grad_norm_ineq}.
\end{align}
We can bound the spectral norm $\|\widetilde{H}\|$ as follows:
\begin{equation*}
    \|{\widetilde{H}\|^2 = \|A^{1/2} + H A^{-1/2}\|^2 \leq 2\|A^{1/2}}^2 + 2\|H A^{-1/2}\|^2 \leq 2\Big(L_{\max} + \frac{\|H\|^2}{L_{\min}}\Big).
\end{equation*}
For $\|H\|$, we use the coordinate-wise Lipschitz-smoothness in order to find
\[\|H\|^2 \leq \|H\|_F^2 = \sum_{j=1}^d \|h_j\|^2 \leq \sum_{j=1}^d \big\|\nabla g_j(\gamma^{(t)}_{j-1})\big\|^2 \leq \sum_{j=1}^d L_j^2 \leq d L_{\max}^2. \]
Combining the previous inequality with (\ref{eq:decr}) and (\ref{grad_norm_ineq}), we find:
\begin{align*}
    R(\theta^{(t)}) - R(\theta^{(t+1)}) &\geq \frac{1}{8L_{\max}(1 + d\frac{L_{\max}}{L_{\min}})}\big\|g(\theta^{(t)})\big\|^2 - \frac{\epsilon_{Euc}^2}{2} \Big((\frac{1}{L_{\min}} + \frac{d\Big(\frac{L_{\max}}{L_{\min}}\Big)^2}{2L_{\max}(1 + d\frac{L_{\max}}{L_{\min}})}\Big) \\
    &\geq \frac{1}{8L_{\max}(1 + d\frac{L_{\max}}{L_{\min}})}\big\|g(\theta^{(t)})\big\|^2 - \frac{\epsilon_{Euc}^2}{2} \Big(\frac{1}{L_{\min}} + \frac{1}{2L_{\min}}\frac{dL_{\max}/L_{\min}}{1 + d\frac{L_{\max}}{L_{\min}}}\Big) \\
    &\geq \underbrace{\frac{1}{8L_{\max}(1 + d\frac{L_{\max}}{L_{\min}})}}_{=: \kappa}\big\|g(\theta^{(t)})\big\|^2 - \frac{3}{4L_{\min}} \epsilon_{Euc}^2,
\end{align*}
where the last step uses that $\frac{dL_{\max}/L_{\min}}{1 + d\frac{L_{\max}}{L_{\min}}} \leq 1$. Using $\lambda$-strong convexity by choosing $\theta_1 = \theta^{(t)}$ in inequality (\ref{eq:strongconvexity}) and minimizing both sides w.r.t. $\theta_2$ we obtain:
\[R(\theta^{(t)}) - R^\star \leq \frac{1}{2\lambda}\|g(\theta^{(t)})\|^2, \]
which combined with the previous inequality yields the contraction inequality:
\[
R(\theta^{(t+1)}) - R^\star \leq (R(\theta^{(t)}) - R^\star)(1 - 2\lambda \kappa) + \frac{3}{4L_{\min}} \epsilon_{Euc}^2,
\]
and after $T$ iterations we have:
\[
R(\theta^{(T)}) - R^\star \leq (R(\theta^{(0)}) - R^\star)(1 - 2\lambda \kappa)^T + \frac{3\epsilon_{Euc}^2}{8 L_{\min}\lambda \kappa},
\]
which concludes the proof of Theorem~\ref{thm:linconvdeterministic}.
To see that the proof still holds for any choice of coordinates satisfying the conditions in the main claim, notice that the computations leading up to Inequality~(\ref{grad_norm_ineq}) work all the same if one were to apply a permutation to the coordinates beforehand.

\subsection{Convergence of the parameter error}

We state and prove a result about the linear convergence of the parameter under strong convexity.

\begin{theorem}
    \label{thm:linconvparam}
    Grant Assumptions~\ref{assump:lipsmoothloss}\textup, \ref{ass:mintheta-and-smoothness} and~\ref{ass:strongconvexity}. 
    Let $\theta^{(T)}$ be the output of Algorithm~\ref{alg:robust-cgd} with constant step-size $\beta = \frac{2}{\lambda + L},$ an initial iterate $\theta^{(0)},$ uniform coordinates sampling $p_j = 1 / d$ and estimators of the partial derivatives with error vector $\epsilon(\cdot)$. 
    Then\textup, we have 
    \begin{equation}
        \label{eq:thm4-uniform}
        \E \big\|\theta^{(T)} - \theta^\star \big\|_2 \leq 
        \big\|\theta^{(0)} - \theta^\star\big\|_2 \Big(1 - \frac{2\beta \lambda L}{d(\lambda + L)}\Big)^T 
        + \frac{\sqrt{d}(\lambda + L)}{\lambda L } \big\| \epsilon ( \delta ) \big\|_2
    \end{equation}
    with probability at least $1 - \delta$, where the expectation is w.r.t. the sampling of the coordinates.
\end{theorem}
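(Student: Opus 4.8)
The plan is to adapt the argument of Theorem~\ref{thm:linconv1expect}, but to track the parameter error $v_t := \theta^{(t)} - \theta^\star$ directly rather than the excess risk. First I would place myself on the event $\mathcal E$ of~\eqref{eq:approx}, which holds with probability at least $1-\delta$ and on which $|\hg_{j}(\theta^{(t)}) - g_j(\theta^{(t)})| \le \epsilon_j(\delta)$ for every $j$ and $t$. Writing the iteration as $v_{t+1} = v_t - \beta\,\hg_{j_t}(\theta^{(t)})\, e_{j_t}$, I split it into an exact coordinate step $w_t := v_t - \beta\, g_{j_t}(\theta^{(t)})\, e_{j_t}$ and a noise contribution $-\beta\,\delta_t e_{j_t}$, where $\delta_t := \hg_{j_t}(\theta^{(t)}) - g_{j_t}(\theta^{(t)})$ obeys $|\delta_t| \le \epsilon_{j_t}(\delta)$ on $\mathcal E$. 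The triangle inequality then gives the pointwise bound $\|v_{t+1}\| \le \|w_t\| + \beta\,\epsilon_{j_t}(\delta)$, which separates a contractive part from a statistical part.

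The core estimate is a conditional mean-square contraction for the exact step. Since only coordinate $j_t$ is modified, $\|w_t\|^2 = \|v_t\|^2 - 2\beta\, v_t^{j_t} g_{j_t}(\theta^{(t)}) + \beta^2 g_{j_t}(\theta^{(t)})^2$, and averaging over the uniform draw $p_j = 1/d$ yields $\E_{t-1}\|w_t\|^2 = \|v_t\|^2 - \frac{2\beta}{d}\langle v_t, g(\theta^{(t)})\rangle + \frac{\beta^2}{d}\|g(\theta^{(t)})\|^2$. Here I would invoke the co-coercivity inequality valid for a $\lambda$-strongly convex (Assumption~\ref{ass:strongconvexity}) and $L$-smooth (Assumption~\ref{ass:mintheta-and-smoothness}) risk, namely $\langle v_t, g(\theta^{(t)})\rangle \ge \frac{\lambda L}{\lambda+L}\|v_t\|^2 + \frac{1}{\lambda+L}\|g(\theta^{(t)})\|^2$, using $g(\theta^\star) = \grad R(\theta^\star) = 0$. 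The whole point of the prescribed step-size $\beta = 2/(\lambda+L)$ is that it makes the two $\|g(\theta^{(t)})\|^2$ terms cancel, leaving the clean bound $\E_{t-1}\|w_t\|^2 \le \big(1 - \frac{2\beta\lambda L}{d(\lambda+L)}\big)\|v_t\|^2$. I would also note that the iterates remain in a bounded region around $\theta^\star$ inside $\Theta$, as remarked after Theorem~\ref{thm:linconv1expect}, so that co-coercivity applies along the trajectory.

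Next I would pass from the squared norm to the norm by conditional Jensen, $\E_{t-1}\|w_t\| \le (\E_{t-1}\|w_t\|^2)^{1/2}$, and combine this with the triangle bound and $\E_{t-1}\epsilon_{j_t}(\delta) = \frac1d\sum_j \epsilon_j(\delta) \le \frac{1}{\sqrt d}\|\epsilon(\delta)\|_2$ (Cauchy--Schwarz) to reach a scalar recursion $\E\|v_{t+1}\| \le \rho\,\E\|v_t\| + \frac{\beta}{\sqrt d}\|\epsilon(\delta)\|_2$, with $\rho = \big(1 - \frac{2\beta\lambda L}{d(\lambda+L)}\big)^{1/2} \le 1 - \frac{\beta\lambda L}{d(\lambda+L)}$. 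Unrolling this geometric recursion over $t=0,\dots,T-1$ gives $\E\|v_T\| \le \rho^T\|v_0\| + \frac{\beta}{\sqrt d}\|\epsilon(\delta)\|_2/(1-\rho)$; bounding $1-\rho \ge \frac{\beta\lambda L}{d(\lambda+L)}$ and substituting $\beta = 2/(\lambda+L)$ collapses the statistical coefficient to exactly $\frac{\sqrt d(\lambda+L)}{\lambda L}$, reproducing the stated statistical term and a leading geometric factor of the announced form.

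I expect the main obstacle to lie in two related places. The first is routing the noise correctly through the passage from square to norm: the triangle inequality forces the quadratic contraction and the linear error to be handled separately (Jensen on the mean square, Cauchy--Schwarz on the coordinatewise errors), and one must be careful to accumulate $\sum_j \epsilon_j(\delta)$ into $\|\epsilon(\delta)\|_2$ with the right power of $d$ so that the final constant is exactly $\sqrt d(\lambda+L)/(\lambda L)$. The second, more delicate point is the monotonicity $\|\theta^{(t)}-\theta^\star\| \le \|\theta^{(t-1)}-\theta^\star\|$: a single coordinate update need not decrease the Euclidean distance to $\theta^\star$ on every path, since off-diagonal curvature couples the coordinates, so this statement is most naturally asserted for the conditionally expected error on $\mathcal E$ and read off the same per-step inequality $\E_{t-1}\|w_t\| \le \rho\|v_t\| \le \|v_t\|$, with the residual noise kept small. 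Establishing the non-increase cleanly, rather than only in the averaged sense, is the step I would scrutinize most.
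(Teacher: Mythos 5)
Your proposal is correct and follows essentially the same route as the paper's proof: the same decomposition into an exact coordinate step plus a noise term, the same co-coercivity inequality $\langle g(\theta^{(t)}),\theta^{(t)}-\theta^\star\rangle \ge \frac{\lambda L}{\lambda+L}\|\theta^{(t)}-\theta^\star\|^2+\frac{1}{\lambda+L}\|g(\theta^{(t)})\|^2$ with the cancellation of the $\|g\|^2$ terms at $\beta=2/(\lambda+L)$, and the same geometric summation via $1/(1-\kappa)\le d(\lambda+L)/(\beta\lambda L)$. The only cosmetic difference is that you run the recursion on $\E\|\theta^{(t)}-\theta^\star\|$ through a pointwise triangle inequality, conditional Jensen and Cauchy--Schwarz on $\frac1d\sum_j\epsilon_j$, whereas the paper runs it on $\sqrt{\E\|\theta^{(t)}-\theta^\star\|^2}$ and applies Jensen only at the end — both yield identical constants (and note that the monotonicity of $\|\theta^{(t)}-\theta^\star\|$ you worry about at the end is not part of this theorem's claim, so no extra work is needed there).
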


\begin{proof}
As in the proof of Theorem~\ref{thm:linconv1expect}, let $(\widehat{g}_j(\theta))_{j=1}^d$ be the estimators used and introduce the notations 
\begin{equation*}
    \widehat{g}_t = \hg_{j_t}(\theta^{(t)}) \quad \text{ and } \quad g_t = g_{j_t}(\theta^{(t)}).
\end{equation*}
We also condition on the event~(\ref{eq:approx}) which holds with probability $1 - \delta$ and use the notations $\epsilon_{Euc} = \|\epsilon(\delta)\|_2$ and $\epsilon_j = \epsilon_j(\delta)$. We denote $\|\cdot\|_{L_2}$ the $L_2$-norm w.r.t. the distribution over $j_t$ i.e. for a random variable $\xi$ we have $\|\xi\|_{L_2} = \sqrt{\E_{j_t} \|\xi\|^2}$. We compute:
\begin{equation}
\label{ineq:th2}
    \big\|\theta^{(t+1)} - \theta^\star\big\|_{L_2} = \big\|\theta^{(t)} - \beta_{j_t} \widehat{g}_t e_{j_t} - \theta^\star\big\|_{L_2} \leq \big\|\theta^{(t)} - \beta_{j_t} g_t e_{j_t} - \theta^\star\big\|_{L_2} + \big\|\beta_{j_t}(\widehat{g}_t - g_t)\big\|_{L_2}.
\end{equation}
We first treat the first term of~\eqref{ineq:th2}, in the case of uniform sampling with equal step-sizes $\beta_j = \beta$ we have:
\begin{equation*}
    \big\|\theta^{(t)} - \beta g_t e_{j_t} - \theta^\star\big\|^2 = \big\|\theta^{(t)} - \theta^\star\big\|^2 + \beta^2 g_t^2 - 2\beta \big\langle g_t e_{j_t},\theta^{(t)} - \theta^\star\big\rangle.
\end{equation*}
By taking the expectation w.r.t. the random coordinate $j_t$ we find:
\begin{align*} &\big\|\theta^{(t)} - \beta g_t e_{j_t} - \theta^\star\big\|_{L_2}^2 = \E \big\|\theta^{(t)} - \beta g_t e_{j_t} - \theta^\star\big\|^2 \\
&=\E\big\|\theta^{(t)} - \theta^\star\big\|^2 +\frac{\beta^2}{d} \E\big\|g(\theta^{(t)})\big\|^2 - 2\frac{\beta}{d} \E\big\langle g(\theta^{(t)}), \theta^{(t)} - \theta^\star\big\rangle \\
&=\E\|\theta^{(t)} - \theta^\star\|^2 +\Big(\frac{\beta}{d}\Big)^2 \E\|g(\theta^{(t)})\|^2 - 2\frac{\beta}{d} \E\big\langle g(\theta^{(t)}), \theta^{(t)} - \theta^\star\big\rangle + \frac{\beta^2}{d} \E\big\|g(\theta^{(t)})\big\|^2\Big(1 - \frac{1}{d}\Big) \\
    &\leq \E\big\|\theta^{(t)} - \theta^\star\big\|^2\Big(1 - \frac{2\beta \lambda L}{d(\lambda + L)}\Big) + \frac{\beta}{d} \Big(\frac{\beta}{d} - \frac{2}{\lambda + L}\Big) \E\big\|g(\theta^{(t)})\big\|^2 + \frac{\beta^2}{d} \E\big\|g(\theta^{(t)})\big\|^2\Big(1 - \frac{1}{d}\Big) \\
    &= \E\big\|\theta^{(t)} - \theta^\star\big\|^2\Big(1 - \frac{2\beta \lambda L}{d(\lambda + L)}\Big) + \frac{\beta}{d} \Big(\beta - \frac{2}{\lambda + L}\Big)\E\big\|g(\theta^{(t)})\big\|^2 \\
    &\leq \E\big\|\theta^{(t)} - \theta^\star\big\|^2\underbrace{\Big(1 - \frac{2\beta \lambda L}{d(\lambda + L)}\Big)}_{=:\kappa^2}.
\end{align*}
The first inequality is obtained by applying inequality (2.1.15) from~\cite{Nesterov} (see also~\cite{bubeck2015convex} Lemma 3.11) and the second one is due to the choice of $\beta$. We can bound the second term as follows:
\[
\big\|\widehat{g}_t - g_t\big\|^2_{L_2} = \E_{j_t}\big|\widehat{g}_t - g_t\big|^2 = \frac{1}{d} \sum_{j=1}^d\big|\widehat{g}_j(\thetat) - g_j(\thetat)\big|^2 \leq \frac{\epsilon^2_{Euc}}{d}.
\]
Combining the latter with the former bound, we obtain the approximate contraction:
\[ 
\big\|\theta^{(t+1)} - \theta^\star\big\|_{L_2} \leq \kappa \big\|\theta^{(t)} - \theta^\star\big\|_{L_2} + \frac{\beta \epsilon_{Euc}}{\sqrt{d}}.
\]
By iterating this argument on $T$ rounds we find that:
\[ 
\big\|\theta^{(T)} - \theta^\star\big\|_{L_2} \leq \kappa^T\big\|\theta^{(0)} - \theta^\star\big\|_{L_2} + \frac{\beta \epsilon_{Euc}}{\sqrt{d}(1 - \kappa)}.
\]
Finally, the following inequality yields the result in the case of uniform sampling:
\[
\frac{1}{1-\kappa} \leq \frac{1 + \sqrt{1 - \frac{2\beta \lambda L}{d(\lambda + L)}}}{\frac{2\beta \lambda L}{d(\lambda + L)}} \leq \frac{d(\lambda + L)}{\beta \lambda L}.
\qedhere
\]
\end{proof}

\subsection{Proof of Lemma~\ref{lem:partial-deriv-moments}}

Let $\theta \in \Theta$, using Assumption~\ref{assump:lipsmoothloss} we have:
\begin{equation*}
    |\ell(\theta^{\top} X, Y)| \leq C_{\ell, 1} + C_{\ell, 2}|\theta^{\top} X- Y|^{q} \leq C_{\ell, 1} + 2^{q-1} C_{\ell, 2}(|\theta^{\top} X|^{q} + |Y|^{q}).
\end{equation*}
Taking the expectation and using Assumption~\ref{assump:data} shows that the risk $R(\theta)$ is well defined (recall that $q\leq 2$). Next, since $1\leq q \leq 2$, simple algebra gives
\begin{align*}
    \big| \ell' (\theta^{\top} X, Y) X_j \big|^{1 + \alpha} 
    &\leq \big| \big(C_{\ell,1}' + C_{\ell,2}'|\theta^\top X - Y|^{q-1}\big)X^j \big|^{1 + \alpha}  \\
    &\leq 2^\alpha \big(\big|C_{\ell,1}' X^j\big|^{1+\alpha}  + (C_{\ell,2}' (|(\theta^\top X)^{q-1} X^j| + |Y^{q-1} X^j|))^{1 + \alpha}\big)  \\
    &\leq 2^\alpha \Big(\big|C_{\ell,1}' X^j\big|^{1+\alpha}  + \Big(C_{\ell,2}' \Big(\sum_{k=1}^d|\theta_k|^{q-1} |(X^k)^{q-1} X^j| + |Y^{q-1} X^j|\Big)\Big)^{1 + \alpha}\Big)  \\
    &\leq 2^\alpha \Big(\big|C_{\ell,1}' X^j\big|^{1+\alpha}\\
    &+ 2^\alpha (C_{\ell,2}')^{1+\alpha} \Big(d^\alpha \sum_{k=1}^d|\theta_k|^{(q-1)(1+\alpha)} |(X^k)^{q-1} X^j|^{1+\alpha} + |Y^{q-1} X^j|^{1 + \alpha}\Big)\Big).
\end{align*}
Given Assumption~\ref{assump:data}, it is straightforward that $\E|X^j\big|^{1+\alpha} < \infty$ and $\E |Y^{q-1} X^j|^{1 + \alpha}< \infty$. Moreover, using a Hölder inequality with exponents $a = \frac{q(1+\alpha)}{(q-1)(1+\alpha)}$ and $b = q$ (the case $q=1$ is trivial) we find:
\begin{align*}
    \E\big|(X^k)^{q-1} X^j\big|^{1+\alpha}\leq \big(\E \big|X^k\big|^{q(1+\alpha)}\big)^{1/a} \big(\E \big|X^j\big|^{q(1+\alpha)}\big)^{1/b},
\end{align*}
which is finite under Assumption~\ref{assump:data}. 
This concludes the proof of Lemma~\ref{lem:partial-deriv-moments}.

\subsection{Proof of Lemma~\ref{lem:basicMOM}}

This proof follows a standard argument from~\cite{lugosi2019mean,geoffrey2020robust} in which we use a Lemma from~\cite{bubeck2013bandits} in order to control the $(1+\alpha)$-moment of the block means instead of their variance.
Indeed, we know from Lemma~\ref{lem:partial-deriv-moments} that under Assumptions~\ref{assump:lipsmoothloss} and~\ref{assump:data}, the gradient coordinates have finite $(1+\alpha)$-moments, namely
$\E[| \ell'(X^\top\theta, Y) X_j |^{1 + \alpha} ] < +\infty$ for any $j \in  \setint d$.
Recall that $(\wh g_j^{(k)}(\theta))_{k \in \setint{K}}$ stands for the block-wise empirical mean given by Equation~\eqref{eq:mom-block-mean} and introduce the set of non-corrupted block indices given by $\cK = \{ k \in \setint{K} \;: \; B_k \cap \mathcal{O} = \emptyset \}$. We will initially assume that the number of outliers satisfies $|\cO| \leq (1 - \varepsilon) K / 2$ for some $0 < \varepsilon < 1$. 
Note that since samples are i.i.d in $B_k$ for $k \in \cK$, we have $\E\big[\wh g_j^{(k)}(\theta)\big] = g_j(\theta)$.
We use the following Lemma from~\cite{bubeck2013bandits}.

\begin{lemma}[Lemma~3 from~\cite{bubeck2013bandits}]
    \label{lem:mean-deviation-weak-moments}
    Let $Z, Z_1, \ldots, Z_n$ be a i.i.d sequence with $m_\alpha = \E[| Z - \E Z|^{1 + \alpha}] < +\infty$ for some $\alpha \in (0, 1]$ and put $\bar Z_n = \frac 1n \sum_{i \in \setint n} Z_i$.
    Then\textup, we have
    \begin{equation*}
        \bar Z_n \leq \E Z + \Big( \frac{3 m_\alpha}{\delta n^{\alpha}} \Big)^{1 / (1 + \alpha)}
    \end{equation*}
    for any $\delta \in (0, 1),$ with a probability $1 - \delta$.
\end{lemma}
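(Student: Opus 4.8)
The plan is to prove the stated one-sided deviation by a Markov inequality applied to the $(1+\alpha)$-th absolute moment of the centered sum, which is the natural order to use when only $1+\alpha \le 2$ moments are available. Write $X_i := Z_i - \E Z$, so that the $X_i$ are i.i.d., centered, and satisfy $\E|X_i|^{1+\alpha} = m_\alpha$, and note that $\bar Z_n - \E Z = \frac 1n \sum_{i \in \setint n} X_i$. The whole point is that, since $\alpha < 1$ in general, a second moment need not exist, so a plain Chebyshev bound is unavailable and one must control the sum directly in $L^{1+\alpha}$.

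First I would invoke the von Bahr--Esseen moment inequality, valid for independent centered variables and any exponent $p = 1+\alpha \in [1,2]$, to get $\E\bigl| \sum_{i \in \setint n} X_i \bigr|^{1+\alpha} \le 2 \sum_{i \in \setint n} \E|X_i|^{1+\alpha} = 2 n m_\alpha$. A single application of Markov's inequality to $|\sum_i X_i|^{1+\alpha}$ at level $(nt)^{1+\alpha}$ then yields
\[
\P\bigl[ \bar Z_n - \E Z > t \bigr] \le \P\Bigl[ \bigl| \textstyle\sum_{i \in \setint n} X_i \bigr| > n t \Bigr] \le \frac{2 n m_\alpha}{(n t)^{1+\alpha}} = \frac{2 m_\alpha}{n^{\alpha} t^{1+\alpha}}.
\]
Setting the right-hand side equal to $\delta$ and solving gives $t = (2 m_\alpha /(\delta n^\alpha))^{1/(1+\alpha)}$, and since $2 \le 3$ this is bounded by $(3 m_\alpha/(\delta n^\alpha))^{1/(1+\alpha)}$, which is exactly the claimed threshold; rephrasing the probability as a $1-\delta$ high-probability statement concludes.

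If a self-contained argument is preferred (this is essentially the route in the original reference, and it explains the constant $3$), I would instead truncate. Split $X_i = U_i + V_i$ with $U_i = X_i \ind{|X_i| \le B}$ at a level $B$ to be chosen. The truncated part has a finite second moment controlled by $\E U_i^2 \le B^{1-\alpha} m_\alpha$ (using $|X_i|^2 \le B^{1-\alpha}|X_i|^{1+\alpha}$ on $\{|X_i|\le B\}$) and a bias $|\E U_i| = |\E V_i| \le B^{-\alpha} m_\alpha$, so Chebyshev applies to $\frac 1n\sum_i(U_i - \E U_i)$; meanwhile the untruncated part vanishes on the event $\{\forall i,\ |X_i|\le B\}$, whose complement has probability at most $n m_\alpha / B^{1+\alpha}$ by Markov. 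Combining gives $\P[\bar X_n > t] \le n m_\alpha B^{-(1+\alpha)} + B^{1-\alpha} m_\alpha / \bigl(n(t - B^{-\alpha}m_\alpha)^2\bigr)$.

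The main obstacle, and the only delicate point, is obtaining a clean constant: the first route hands it to us for free via von Bahr--Esseen, but the second route requires choosing the truncation level $B$ (of order $nt$) and balancing the two tail terms so that their sum is at most $\delta$ while the bias $B^{-\alpha}m_\alpha$ stays a controlled fraction of $t$; tracking these factors is what produces the constant $3$ rather than $2$, and it is also where the restriction $\alpha<1$ matters, since the second-moment term $B^{1-\alpha}m_\alpha$ is what forces truncation in the first place. Everything else is routine, so I would present the von Bahr--Esseen version as the main proof and simply remark that it even improves the constant from $3$ to $2$.
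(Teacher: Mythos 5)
Your argument is correct and complete. Note that the paper does not actually prove this statement: it is imported verbatim as Lemma~3 of the cited reference \citep{bubeck2013bandits}, so there is no in-paper proof to compare against. Your main route --- center the variables, bound $\E\bigl|\sum_{i}(Z_i-\E Z)\bigr|^{1+\alpha}\le 2nm_\alpha$ by the von Bahr--Esseen inequality (valid since $1+\alpha\in(1,2]$), and apply Markov's inequality at order $1+\alpha$ --- is exactly the standard derivation of this bound, and it even yields the constant $2$ in place of $3$, which you correctly observe dominates the claimed threshold. The truncation-based alternative you sketch would also work but is unnecessary here; your only imprecise side remark is that the restriction $\alpha<1$ ``forces truncation'' --- the lemma includes $\alpha=1$, where Chebyshev applies directly, and in any case the von Bahr--Esseen route covers the whole range $\alpha\in(0,1]$ uniformly. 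No gaps.
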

Lemma~\ref{lem:mean-deviation-weak-moments} entails that
\begin{equation*}
    \big| \wh g_j^{(k)}(\theta) - g_j(\theta)\big| \leq \Big( \frac{3 m_{j, \alpha}(\theta)}{\delta' (n / K)^\alpha} \Big)^{1 / (1 + \alpha)} =: \eta_{j, \alpha, \delta'}(\theta)
\end{equation*}
with probability larger than $1 - 2 \delta'$, for each $k \in \cK$, since we have $n / K$ samples in block $B_k$.
Now, recalling that $\wh g_j(\theta)$ is the median (see~\eqref{eq:mom-estimator}), we can upper bound its failure probability as follows:
\begin{align*}
    \P \Big[ \big| \wh g_j^{\mathtt{MOM}}(\theta) - g_j(\theta) \big| \geq \eta_{j, \alpha, \delta'}(\theta) \Big] 
    &\leq \P \bigg[ \sum_{k \in \setint{K}} \ind{}\Big\{\big| \wh g_j^{(k)}(\theta) - g_j(\theta)\big| \geq \eta_{j, \alpha, \delta'}(\theta)\Big\} > K / 2 \bigg] \\
    &\leq \P \bigg[ \sum_{k \in \cK} \ind{}\Big\{\big| \wh g_j^{(k)}(\theta) - g_j(\theta)\big| \geq \eta_{j, \alpha, \delta'}(\theta)\Big\} > K / 2 - |\cO| \bigg],
\end{align*}
since at most $|\cO|$ blocks contain one outlier.
Since the blocks $B_k$ are disjoint and contain i.i.d samples for $k \in \cK$, we know that
\begin{equation*}
    \sum_{k \in \cK} \ind{}\Big\{\big| \wh g_j^{(k)}(\theta) - g_j(\theta)\big| \geq \eta_{j, \alpha, \delta'}(\theta)\Big\}  
\end{equation*}
follows a binomial distribution $\text{Bin}(|\cK|, p)$ with $p \leq 2\delta'$.
Using the fact that $\text{Bin}(|\cK|, p)$ is stochastically dominated by  $\text{Bin}(|\cK|, 2\delta')$ and that $\E[\text{Bin}(|\cK|, 2\delta')] = 2\delta'|\cK|$, we obtain, if $S \sim \text{Bin}(|\cK|, 2\delta')$, that
\begin{align*}
    \P \Big[ \big| \wh g_j^{\mathtt{MOM}}(\theta) - g_j(\theta) \big| \geq \eta_{j, \alpha, \delta'}(\theta) \Big] &\leq \P \big[ S > K/2 - |\cO|  \big] \\
    &= \P \big[ S - \E S > K/2 - |\cO| - 2\delta'|\cK| \big] \\
    &\leq \P \big[ S - \E S > K (\varepsilon - 4\delta') / 2 \big] \\
    &\leq \exp \big(-K (\varepsilon - 4\delta')^2 / 2 \big),
\end{align*}
where we used the fact that $|\cO| \leq (1 - \varepsilon) K / 2$ and $|\cK| \leq K$ for the second inequality and the Hoeffding inequality for the last.
This concludes the proof of Lemma~\ref{lem:basicMOM} for the choice $\varepsilon = 5/6$ and $\delta' = 1/8$.

\subsection{Proof of Proposition~\ref{prop:uniformMOM}}

\paragraph{Step 1.}

First, we fix $\theta \in \Theta$ and try to bound $\big|\wh{g}_j^{\mathtt{MOM}}(\theta) - g_j(\theta)\big|$ in terms of quantities only depending on $\widetilde{\theta}$ which is the closest point to $\theta$ in an $\varepsilon$-net. 
Recall that $\Delta$ is the diameter of the parameter set $\Theta$ and let $\varepsilon > 0$ be a positive number. There exists an $\varepsilon$-net covering $\Theta$ with cardinality no more than $(3\Delta/2\varepsilon)^d$ i.e. a set $N_{\varepsilon}$ such that for all $\theta \in \Theta$ there exists $\widetilde{\theta} \in N_{\varepsilon}$ such that $\|\widetilde{\theta} - \theta\| \leq \varepsilon$. Consider a fixed $\theta \in \Theta$ and $j\in \setint{d}$, we wish to bound the quantity $\big|\wh{g}_j^{\mathtt{MOM}}(\theta) - g_j(\theta)\big|$. Using the $\varepsilon$-net $N_{\varepsilon}$, there exists $\widetilde{\theta}$ such that $\|\widetilde{\theta} - \theta\| \leq \varepsilon$ which we can use as follows:
\begin{align}
    \big|\wh{g}_j^{\mathtt{MOM}}(\theta) - g_j(\theta)\big| &\leq \big|\wh{g}_j^{\mathtt{MOM}}(\theta) - g_j(\widetilde{\theta})\big| + \big|g_j(\widetilde{\theta}) - g_j(\theta)\big| \nonumber \\
    &\leq \big|\wh{g}_j^{\mathtt{MOM}}(\theta) - g_j(\widetilde{\theta})\big| + L_j \varepsilon,
    \label{eq:momstab1}
\end{align}
where we used the gradient's coordinate Lipschitz constant to bound the second term. We now focus on the second term. Introducing the notation $g_j^i(\theta) = \ell'(\theta^\top X_i, Y_i)X_i^j$, we have 
\begin{equation*}
    g_j^i(\theta) = \ell'(\widetilde{\theta}^{\top} X_i, Y_i)X_i^j + \underbrace{(\ell'(\theta^{ \top} X_i, Y_i) - \ell'(\widetilde{\theta}^{\top} X_i, Y_i))X_i^j}_{=:\Delta_i}.
\end{equation*}
Let $(B_k)_{k\in\setint{K}}$ be the blocks used to compute the $\mathtt{MOM}$ estimator and associated block means $\wh g_j^{(k)}(\theta)$ and $\wh g_j^{(k)}(\widetilde{\theta})$. 
Notice that the $\mathtt{MOM}$ estimator is \emph{monotonous} non decreasing w.r.t. to each of the entries $g_j^i(\theta)$ when the others are fixed. Without loss of generality, assume that $\wh g_j^{\mathtt{MOM}}(\theta) - g_j(\widetilde{\theta}) \geq 0$ then we have:
\begin{equation}\label{eq:momstab2}
    \big|\wh g_j^{\mathtt{MOM}}(\theta) - g_j(\widetilde{\theta})\big| \leq \big|\widecheck{g}_j^{\mathtt{MOM}}(\widetilde{\theta}) - g_j(\widetilde{\theta})\big|,
\end{equation}
where $\widecheck{g}_j^{\mathtt{MOM}}(\widetilde{\theta})$ is the $\mathtt{MOM}$ estimator obtained using the entries $\ell'\big(\widetilde{\theta}^{\top} X_i, Y_i\big)X_i^j + \varepsilon \gamma \|X_i\|^2 = g_j^i(\widetilde{\theta}) + \varepsilon \gamma \|X_i\|^2$ instead of $g_j^i(\theta)$. Note that $\widecheck{g}_j^{\mathtt{MOM}}(\widetilde{\theta})$ no longer depends on $\theta$ except through the fact that $\widetilde{\theta}$ is chosen in $N_{\varepsilon}$ so that $\big\|\widetilde{\theta} - \theta\big\| \leq \varepsilon$. Indeed, using the Lipschitz smoothness of the loss function and a Cauchy-Schwarz inequality we find that:
\begin{equation*}
    |\Delta_i| \leq \gamma \|\theta - \widetilde{\theta}\|\cdot\|X_i\| \cdot |X_i^j| \leq \varepsilon \gamma \|X_i\|^2.
\end{equation*}

\paragraph{Step 2.}

We now use the concentration property of $\mathtt{MOM}$ to bound the quantity which is in terms of $\widetilde{\theta}$.
The samples $(g_j^i(\widetilde{\theta}) + \varepsilon \gamma \|X_i\|^2)_{i \in \setint{n}}$ are independent and distributed according to the random variable $\ell'(\widetilde{\theta}^{\top} X, Y)X^j + \varepsilon \gamma \|X\|^2$. Denote $\overline{L} = \gamma \E\|X\|^2$ and for $k \in \setint{K}$ let $\wh g_j^{(k)}(\widetilde{\theta}) = \frac{K}{n}\sum_{i\in B_k} g_j^i(\widetilde{\theta})$ and $\wh L^{(k)} = \frac{K}{n}\sum_{i\in B_k} \gamma \|X_i\|^2$. We use Lemma~\ref{lem:mean-deviation-weak-moments} for each of these pairs of means to obtain that with probability at least $1 - \delta'/2$:
\begin{equation*}
\big| \wh g_j^{(k)}(\widetilde{\theta}) - g_j(\widetilde{\theta}) \big| \leq \Big( \frac{6 m_{j, \alpha}(\widetilde{\theta})}{\delta' (n / K)^\alpha} \Big)^{1 / (1 + \alpha)} =: \eta_{j, \alpha, \delta'/2}(\widetilde{\theta}),
\end{equation*}
and with probability at least $1 - \delta'/2$
\begin{equation*}
\big| \wh L^{(k)} - \overline{L} \big| \leq \Big( \frac{6 m_{L, \alpha}}{\delta' (n / K)^\alpha} \Big)^{1 / (1 + \alpha)} =: \eta_{L, \alpha, \delta'/2},
\end{equation*}
where $m_{L, \alpha} = \E | \gamma \|X\|^2 - \overline{L} |^{1+\alpha}$. 
Hence for all $k \in \setint{K}$
\begin{align*}
    \Proba &\big( \big|\wh g_j^{(k)}(\widetilde{\theta}) + \varepsilon \wh L^{(k)} - g_j(\widetilde{\theta}) \big| > \eta_{j, \alpha, \delta'/2}(\widetilde{\theta}) +  \varepsilon (\overline{L} + \eta_{L, \alpha, \delta'/2}) \big) \\ \quad &\leq \Proba \big( \big|\wh g_j^{(k)}(\widetilde{\theta}) - g_j(\widetilde{\theta}) \big| > \eta_{j, \alpha, \delta'/2}(\widetilde{\theta}) \big) + \Proba \big( \big|\wh L^{(k)} - \overline{L} \big| > \eta_{L, \alpha, \delta'/2} \big)\\ \quad &\leq \delta'/2 + \delta'/2 = \delta'.
\end{align*}
Now defining the Bernoulli variables
\begin{equation*}
    U_k:= \ind{}\Big\{\big|\wh g_j^{(k)}(\widetilde{\theta}) + \delta \wh L^{(k)} - g_j(\widetilde{\theta}) \big| > \eta_{j, \alpha, \delta'/2}(\widetilde{\theta}) +  \varepsilon \big(\overline{L} + \eta_{L, \alpha, \delta'/2}\big) \Big\},
\end{equation*} we have just seen they have success probability $\leq \delta'$, moreover
\begin{align*}
    \P \Big[ \big| \widecheck g_j^{\mathtt{MOM}}(\widetilde{\theta}) - g_j(\widetilde{\theta}) \big| \geq \eta_{j, \alpha, \delta'/2}(\widetilde{\theta}) +  \varepsilon (\overline{L} + \eta_{L, \alpha, \delta'/2}) \Big] 
    &\leq \P \bigg[ \sum_{k \in \setint{K}} U_k > K / 2 \bigg] \\
    &\leq \P \bigg[ \sum_{k \in \cK} U_k > K / 2 - |\cO| \bigg],
\end{align*}
since at most $|\cO|$ blocks contain one outlier. Since the blocks $B_k$ are disjoint and contain i.i.d samples for $k \in \cK$, we know that $\sum_{k \in \cK} U_k$ follows a binomial distribution $\text{Bin}(|\cK|, p)$ with $p \leq \delta'$.
Using the fact that $\text{Bin}(|\cK|, p)$ is stochastically dominated by  $\text{Bin}(|\cK|, \delta')$ and that $\E[\text{Bin}(|\cK|, \delta')] = \delta'|\cK|$, we obtain, if $S \sim \text{Bin}(|\cK|, \delta')$, that
\begin{align*}
    \P \Big[ \big| \widecheck g_j^{\mathtt{MOM}}(\widetilde{\theta}) - g_j(\widetilde{\theta}) \big| \geq \eta_{j, \alpha, \delta'/2}(\widetilde{\theta}) +  \varepsilon \big(\overline{L} + \eta_{L, \alpha, \delta'/2}\big) \Big] &\leq \P \big[ S > K/2 - |\cO|  \big] \\
    &= \P \big[ S - \E S > K/2 - |\cO| - \delta'|\cK| \big] \\
    &\leq \P \big[ S - \E S > K (\varepsilon' - 2\delta') / 2 \big] \\
    &\leq \exp \big(-K (\varepsilon' - 2\delta')^2 / 2 \big),
\end{align*}
where we used the condition $|\cO| \leq (1 - \varepsilon') K / 2$ and $|\cK| \leq K$ for the second inequality and the Hoeffding inequality for the last. 
To conclude, we choose $\varepsilon' = 5/6$ and $\delta' = 1/4$ and combine \eqref{eq:momstab1}, \eqref{eq:momstab2} and the last inequality in which we take $K = \lceil 18\log(1/\delta) \rceil$ and use a union bound argument to obtain that with probability at least $1 - \delta$ for all $j \in \setint{d}$
\begin{equation}\label{eq:fixedtheta}
    \big| \widecheck g_j^{\mathtt{MOM}}(\widetilde{\theta}) - g_j(\widetilde{\theta}) \big| \leq \big((24 m_{j, \alpha}(\widetilde{\theta}))^{1/(1+\alpha)} + \varepsilon (24 m_{L, \alpha})^{1/(1+\alpha)} \big) \Big( \frac{18\log(d/\delta)}{ n} \Big)^{\alpha / (1 + \alpha)} +  \varepsilon \overline{L}.
\end{equation}

\paragraph{Step 3.}

We use the $\varepsilon$-net to obtain a uniform bound.
For $\theta \in \Theta$ denote $\widetilde{\theta}(\theta) \in N_{\varepsilon}$ the closest point in $N_{\varepsilon}$ satisfying in particular $\|\widetilde{\theta}(\theta) - \theta\| \leq \varepsilon$, we write, following previous arguments
\begin{align*}
    \sup_{\theta \in \Theta} \big| \wh g_j^{\mathtt{MOM}} (\theta) - g_j(\theta) \big| &\leq \sup_{\theta \in \Theta} \big| \wh g_j^{\mathtt{MOM}} (\theta) - g_j(\widetilde{\theta}(\theta)) \big| + \big| g_j (\widetilde{\theta}(\theta)) - g_j(\theta) \big| \\
    &\leq \sup_{\theta \in \Theta} \big| \widecheck g_j^{\mathtt{MOM}} (\widetilde{\theta}(\theta)) - g_j(\widetilde{\theta}(\theta)) \big| + \varepsilon L_j \\
    &= \max_{\widetilde{\theta} \in N_{\varepsilon}} \big| \widecheck g_j^{\mathtt{MOM}} (\widetilde{\theta}) - g_j(\widetilde{\theta}) \big| + \varepsilon L_{j}.
\end{align*}
Here, we make a union bound argument over $\widetilde{\theta} \in N_{\varepsilon}$ for the inequality \eqref{eq:fixedtheta} and choose $\varepsilon = n^{-\alpha/(1+\alpha)}$ to obtain the final result concluding the proof of Proposition~\ref{prop:uniformMOM}.

\subsection{Proof of Proposition~\ref{prop:RademacherMOM}}

This proof reuses arguments from the proof of Theorem 2 in~\cite{lecue2020robust1}. We wish to bound $\big|\wh g_j^{\mathtt{MOM}}(\theta) - g_j(\theta) \big|$ with high probability and uniformly on $\theta\in \Theta$. 
Fix $\theta\in\Theta$ and $j \in \setint{d}$, we have $\wh g_j^{\mathtt{MOM}}(\theta) = \median \big( \wh g_j^{(1)}(\theta), \dots, \wh g_j^{(K)}(\theta) \big)$ with $\wh g_j^{(k)}(\theta) = \frac{K}{n}\sum_{i\in B_k} g^i_j(\theta)$ where the blocks $B_1, \dots, B_K$ constitute a partition of $\setint{n}$. 

Define the function $\phi (t) = (t-1)\ind{1\leq t \leq 2} + \ind{t > 2}$, let $\cK = \{ k\in \setint{K}, \:\: B_k \cap \cO = \emptyset \}$ and $\mathcal{J} = \bigcup_{k \in \cK} B_k$. Thanks to the inequality $\phi(t) \geq \ind{t \geq 2}$, we have:
\begin{align*}
    &\sup_{\theta\in \Theta} \sum_{k=1}^K \ind{}\Big\{\big|\wh g_j^{(k)}(\theta) - g_j(\theta)\big| > x\Big\} \leq \sup_{\theta\in \Theta} \sum_{k\in \cK} \E \big[ \phi\big(2\big|\wh g_j^{(k)}(\theta) - g_j(\theta)\big| / x\big)\big]  + |\cO|\\
    &+ \sup_{\theta\in \Theta} \sum_{k\in \cK} \Big(\phi\big(2\big|\wh g_j^{(k)}(\theta) - g_j(\theta)\big| / x\big) - \E \big[\phi\big(2\big|\wh g_j^{(k)}(\theta) - g_j(\theta)\big| / x\big)\big]\Big) .
\end{align*}
Besides, the inequality $\phi(t) \leq \ind{t \geq 1}$, an application of Markov's inequality and Lemma~\ref{lem:mean-deviation-weak-moments} yield:
\begin{equation*}
    \E \big[\phi\big(2\big|\wh g_j^{(k)}(\theta) - g_j(\theta)\big| / x\big)\big] \leq \Proba \big( \big|\wh g_j^{(k)}(\theta) - g_j(\theta)\big| \geq x/2 \big) \leq \frac{3 m_{\alpha, j}(\theta)}{(x/2)^{1 + \alpha} (n/K)^{\alpha}}. 
\end{equation*}
Therefore, recalling that we defined $M_{\alpha, j}:= \sup_{\theta \in \Theta} m_{\alpha, j}(\theta)$ we have
\begin{align*}
    \sup_{\theta\in \Theta} &\sum_{k=1}^K \ind{}\Big\{\big|\wh g_j^{(k)}(\theta) - g_j(\theta)\big| > x\Big\} \leq K\bigg(\frac{3 M_{\alpha, j}}{(x/2)^{1 + \alpha} (n/K)^{\alpha}} + \frac{|\cO|}{K} \\
    &+ \sup_{\theta\in \Theta} \frac 1K \Big( \sum_{k\in\cK} \phi\big(2\big|\wh g_j^{(k)}(\theta) - g_j(\theta)\big| / x\big) - \E \big[\phi\big(2\big|\wh g_j^{(k)}(\theta) - g_j(\theta)\big| / x\big)\big] \Big)\bigg).
\end{align*}
Now since for all $t$ we have $0\leq \phi(t)\leq 1 $, McDiarmid's inequality says with probability $\geq 1 - \exp(-2y^2 K)$ that:
\begin{align*}
    \sup_{\theta\in \Theta} \frac 1K &\Big( \sum_{k\in\cK} \phi\big(2\big|\wh g_j^{(k)}(\theta) - g_j(\theta)\big| / x\big) - \E \big[\phi\big(2\big|\wh g_j^{(k)}(\theta) - g_j(\theta)\big| / x\big)\big] \Big) \leq \\
    &\E \bigg[ \sup_{\theta\in \Theta} \frac 1K \Big( \sum_{k\in\cK} \phi\big(2\big|\wh g_j^{(k)}(\theta) - g_j(\theta)\big| / x\big) - \E \big[\phi\big(2\big|\wh g_j^{(k)}(\theta) - g_j(\theta)\big| / x\big)\big] \Big) \bigg] + y.
\end{align*}
Using a simple symmetrization argument (see for instance Lemma 11.4 in~\cite{boucheron2013concentration}) we find:
\begin{align*}
    \E \bigg[ \sup_{\theta\in \Theta} \frac 1K &\Big( \sum_{k\in\cK} \phi\big(2\big|\wh g_j^{(k)}(\theta) - g_j(\theta)\big| / x\big) - \E \big[\phi\big(2\big|\wh g_j^{(k)}(\theta) - g_j(\theta)\big| / x\big)\big] \Big) \bigg] \leq \\
    & 2\E \bigg[ \sup_{\theta\in\Theta} \frac 1K \sum_{k\in\cK} \varepsilon_k \phi\big(2\big|\wh g^{(k)}(\theta) - g(\theta)\big| / x\big) \bigg],
\end{align*}
where the $\varepsilon_k$s are independent Rademacher variables. Since $\phi$ is 1-Lipschitz and satisfies $\phi(0)=0$ we can use the contraction principle (see Theorem 11.6 in~\cite{boucheron2013concentration}) followed by another symmetrization step to find \begin{align*}
     2\E \bigg[ \sup_{\theta\in\Theta} \frac 1K &\sum_{k\in\cK} \varepsilon_k \phi\big(2\big|\wh g_j^{(k)}(\theta) - g_j(\theta)\big| / x\big) \bigg] \leq  4\E \bigg[ \sup_{\theta\in\Theta} \frac 1K \sum_{k\in\cK} \varepsilon_k \big|\wh g_j^{(k)}(\theta) - g_j(\theta)\big| / x \bigg]\\
     &\leq \frac{8}{x n} \E \bigg[ \sup_{\theta\in\Theta} \sum_{i\in \mathcal{J}} \varepsilon_i g^{i}_j(\theta) \bigg] \leq \frac{8 \mathcal{R}_j(\Theta)}{x n }.
\end{align*}
Taking $|\cO| \leq (1 - \varepsilon)K/2$, we found that with probability $\geq 1 - \exp(-2y^2 K)$
\begin{align*}
    \sup_{\theta\in\Theta} \sum_{k=1}^K \ind{}\Big\{\big|\wh g_j^{(k)}(\theta) - g_j(\theta)\big| > x\Big\} \leq K\bigg(\frac{3 M_{\alpha, j}}{(x/2)^{1 + \alpha} (n/K)^{\alpha}} + \frac{|\cO|}{K} +  \frac{8\mathcal{R}_j(\Theta)}{x n }\bigg).
\end{align*}
Now by choosing $y = 1/4 - |\cO|/K$ and $x = \max \Big(\Big(\frac{36M_{\alpha, j}}{(n/K)^\alpha}\Big)^{1/(1+\alpha)} , \frac{64 \mathcal{R}_j(\Theta)}{n}\Big)$, we obtain the deviation bound:
\begin{align*}
    \Proba \bigg(\sup_{\theta \in\Theta} \big| \wh g_j^{\mathtt{MOM}}(\theta) - g_j(\theta)\big| \geq \max &\Big(\Big(\frac{36M_{\alpha, j}}{(n/K)^\alpha}\Big)^{1/(1+\alpha)} , \frac{64\mathcal{R}_j(\Theta)}{n}\Big)\bigg) \\
    &\leq\Proba \bigg(\sup_{\theta\in\Theta} \sum_{k=1}^K \ind{}\Big\{\big|\wh g_j^{(k)}(\theta) - g_j(\theta)\big| > x\Big\} > K/2\bigg) \\
    &\leq \exp(-2 (\varepsilon - 1/2)^2 K/4)\\ 
    &\leq \exp(-K/18),
\end{align*}
where the last inequality comes from the choice $\varepsilon = 5/6$. A simple union bound argument lets the previous inequality hold for all $j\in\setint{d}$ with high probability.

Finally, assuming that $X^j$ has finite fourth moment for all $j\in\setint{d}$, we can control the Rademacher complexity. In this part, we assume without loss of generality that $\cI = \setint{n}$, we first write
\begin{align*}
    \mathcal{R}_j(\Theta) &= \E \bigg[ \sup_{\theta \in \Theta} \sum_{i=1}^n \varepsilon_i \ell'(\theta^\top X_i, Y_i)X_i^j \bigg] \\
    &= \E \bigg[\sum_{i=1}^n \varepsilon_i \ell'(0, Y_i)X_i^j + \sup_{\theta \in \Theta} \sum_{i=1}^n \varepsilon_i (\ell'(\theta^\top X_i, Y_i) - \ell'(0, Y_i))X_i^j \bigg].
\end{align*}
Denote $\phi_i(t) = (\ell'(t, Y_i) - \ell'(0, Y_i))X_i^j$ and notice that $\E \big[\sum_{i=1}^n \varepsilon_i \ell'(0, Y_i)X_i^j\big] = 0$. Notice also that $\phi_i(0) = 0$ and $\phi_i$ is $\gamma|X_i^j|$-Lipschitz for all $i$. We use a variant of the contraction principle adapted to our case in which functions with different Lipschitz constants appear. We use Lemma 11.7 from~\cite{boucheron2013concentration} and adapt the proof of their Theorem~11.6 to make the following estimations:
\begin{align*}
    \E \bigg[\sup_{\theta\in\Theta} \sum_{i=1}^n \varepsilon_i \phi_i(\theta^\top X_i) \bigg] &= \E \bigg[ \E \bigg[\sup_{\theta\in\Theta} \sum_{i=1}^{n-1}\varepsilon_i \phi_i(\theta^\top X_i) + \varepsilon_n \phi_n(\theta^\top X_n) \Big| (\varepsilon_i)_{i=1}^{n-1}, (X_i, Y_i)_{i\in\setint{n}} \bigg] \bigg] \\
    &\leq \E \bigg[ \E \bigg[\sup_{\theta\in\Theta} \sum_{i=1}^{n-1}\varepsilon_i \phi_i(\theta^\top X_i) + \varepsilon_n \gamma |X_n^j| \theta^\top X_n \Big| (\varepsilon_i)_{i=1}^{n-1}, (X_i, Y_i)_{i\in\setint{n}} \bigg] \bigg] \\
    &=  \E \bigg[\sup_{\theta\in\Theta} \sum_{i=1}^{n-1}\varepsilon_i \phi_i(\theta^\top X_i) + \varepsilon_n \gamma |X_n^j| \theta^\top X_n  \bigg].
\end{align*}
By iterating the previous argument $n$ times we find:
\begin{align*}
    \E &\bigg[\sup_{\theta\in\Theta} \sum_{i=1}^n \varepsilon_i \phi_i(\theta^\top X_i) \bigg] \leq  \E \bigg[\sup_{\theta\in\Theta} \sum_{i=1}^{n-1}\varepsilon_i \gamma |X_i^j| \theta^\top X_i \bigg].
\end{align*}
Now recalling that the diameter of $\Theta$ is $\Delta$, we use Lemma~\ref{lem:khintchine} below with $p=1$ to bound the previous quantity as:
\begin{align*}
    \E \bigg[\sup_{\theta\in\Theta} \sum_{i=1}^{n}\varepsilon_i \gamma |X_i^j| \theta^\top X_i \bigg] 
    &= \gamma \E \bigg[ \sup_{\theta \in \Theta} \bigg\langle \theta, \sum_{i=1}^{n} \varepsilon_i  X_i|X_i^j| \bigg\rangle \bigg] \\
    &\leq \gamma \Delta \E \bigg[ \E \bigg[ \bigg\|\sum_{i=1}^{n} \varepsilon_i  X_i|X_i^j| \bigg\|_1\Big| (X_i)_{i\in\setint{n}} \bigg]\bigg] \\
    &\leq \gamma \Delta C_{\alpha} \E \bigg[ \sum_{i=1}^{n} \| X_i \|^{1+\alpha} |X_i^j|^{1+\alpha} \bigg]^{1/(1+\alpha)} \\
    &\leq \gamma \Delta C_{\alpha} \bigg(n \E \big[(X^j)^{2(1+\alpha)}\big]^{1/2} \sum_{k\in\setint{d}} \E \big[(X^k)^{2(1+\alpha)}\big]^{1/2}\bigg)^{1/(1+\alpha)},
\end{align*}
where we used a Cauchy-Schwarz inequality in the last step, which concludes the proof of Proposition~\ref{prop:RademacherMOM}. $\qed$

\begin{lemma}[Khintchine inequality variant]\label{lem:khintchine}
Let $\alpha\in(0,1]$ and $(x_i)_{i\in\setint{n}}$ be real numbers with $n \in \N$ and $p>0$ and $(\varepsilon_i)_{i\in\setint{n}}$ be i.i.d Rademacher random variables then we have the inequality:
\begin{equation*}
    \E\bigg[ \bigg|\sum_{i=1}^n \varepsilon_i x_i\bigg|^p\bigg]^{1/p} \leq B_{p,\alpha} \bigg(\sum_{i=1}^n |x_i|^{1+\alpha}\bigg)^{1/(1+\alpha)}
\end{equation*}
with the constant $B_{p,\alpha}:= 2p \Big(\frac{1+\alpha}{\alpha}\Big)^{\alpha p/(1+\alpha) - 1} \Gamma\Big(\frac{\alpha p}{1+\alpha}\Big)$. Moreover, for $p=1$ the constant $B_{1,\alpha}$ is bounded for any $\alpha \geq 0$.
\end{lemma}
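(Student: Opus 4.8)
The plan is to prove the inequality through a Chernoff-type tail bound for the Rademacher sum followed by integration of that tail against $p t^{p-1}$; the Gamma function appearing in $B_{p,\alpha}$ is precisely what such an integration produces, and matching the clean constant is the real content. Writing $S=\sum_{i=1}^n \varepsilon_i x_i$ and $\sigma=\sum_{i=1}^n |x_i|^{1+\alpha}$, the first thing I would establish is the elementary pointwise bound
\[
  \log\cosh(u) \leq \frac{|u|^{1+\alpha}}{1+\alpha} \qquad \text{for all } u\in\R,\ \alpha\in[0,1].
\]
Since both sides are even and vanish at $u=0$, I would reduce this to a derivative comparison for $u>0$, namely $\tanh(u)\leq u^{\alpha}$: for $u\geq 1$ it holds because $\tanh(u)<1\leq u^{\alpha}$, and for $0<u<1$ because $\tanh(u)\leq u\leq u^{\alpha}$ (the last step using $u<1$ and $\alpha\leq 1$). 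Using $\E\exp(\lambda \varepsilon_i x_i)=\cosh(\lambda x_i)$, independence, and this bound gives $\E\exp(\lambda S)\leq \exp\big(\frac{\lambda^{1+\alpha}}{1+\alpha}\sigma\big)$ for $\lambda>0$, and a Chernoff bound optimized at $\lambda^\ast=(t/\sigma)^{1/\alpha}$ then yields, after simplification and using the symmetry of $S$, the sub-Weibull tail
\[
  \P\big(|S|>t\big) \leq 2\exp\Big(-\frac{\alpha}{1+\alpha}\Big(\frac{t}{\sigma^{1/(1+\alpha)}}\Big)^{(1+\alpha)/\alpha}\Big).
\]

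Next I would pass from the tail to the moment via $\E|S|^p=\int_0^\infty p t^{p-1}\P(|S|>t)\,dt$. The substitution $u=\frac{\alpha}{1+\alpha}\big(t/\sigma^{1/(1+\alpha)}\big)^{(1+\alpha)/\alpha}$ converts the integral into a Gamma integral with argument $\alpha p/(1+\alpha)$, and collecting the constants reproduces exactly
\[
  \E|S|^p \leq 2p\Big(\tfrac{1+\alpha}{\alpha}\Big)^{\alpha p/(1+\alpha)-1}\Gamma\Big(\tfrac{\alpha p}{1+\alpha}\Big)\,\sigma^{p/(1+\alpha)} = B_{p,\alpha}\,\sigma^{p/(1+\alpha)}.
\]
Taking $p$-th roots delivers the announced bound; for the case $p=1$, which is the only one invoked (in the proof of Proposition~\ref{prop:RademacherMOM}), this reads directly $\E|S|\leq B_{1,\alpha}\,\sigma^{1/(1+\alpha)}$ with no ambiguity in the placement of the exponent.

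For the final ``moreover'' claim I would set $\beta=\alpha/(1+\alpha)\in(0,\tfrac12]$ and use $\beta\,\Gamma(\beta)=\Gamma(\beta+1)$ to rewrite $B_{1,\alpha}=2\beta^{-\beta}\Gamma(\beta+1)$. Both factors $\beta^{-\beta}=\exp(-\beta\log\beta)$ and $\Gamma(\beta+1)$ are continuous and bounded on $[0,\tfrac12]$, with $B_{1,\alpha}\to 2$ as $\alpha\to 0$, so the uniform boundedness follows from a one-line continuity-and-limit check.

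The main obstacle is the first step: everything downstream is a routine Gamma integral, but the clean closed form of $B_{p,\alpha}$ hinges on obtaining the \emph{exact} coefficient $\tfrac{\alpha}{1+\alpha}$ in the tail, which in turn requires the sharp constant $1/(1+\alpha)$ in $\log\cosh(u)\le |u|^{1+\alpha}/(1+\alpha)$. A cruder estimate for $\cosh$ would still give a sub-Weibull tail of the same shape and hence a bound of the same form, but would spoil the constant; the derivative comparison $\tanh(u)\le u^\alpha$ is exactly what makes the sharp constant available uniformly over $\alpha\in[0,1]$, and getting that comparison right (rather than the computation of the ensuing integral) is where the care is needed.
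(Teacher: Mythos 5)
Your proof is correct, and its overall architecture is the same as the paper's: establish the exponential moment bound $\E\exp(\lambda S)\leq\exp\big(\tfrac{\lambda^{1+\alpha}}{1+\alpha}\sum_i|x_i|^{1+\alpha}\big)$ from the pointwise inequality $\cosh(u)\leq\exp\big(\tfrac{|u|^{1+\alpha}}{1+\alpha}\big)$, then apply Markov/Chernoff with $\lambda=(t/\sigma)^{1/\alpha}$ and integrate the resulting sub-Weibull tail into a Gamma integral; your constants check out and coincide exactly with $B_{p,\alpha}$ (whether one optimizes the tail first and then integrates $pt^{p-1}$, as you do, or plugs the unoptimized bound into $\int_0^\infty\P(|S|^p>t)\,dt$ and chooses $\lambda=t^{1/(p\alpha)}$ afterwards, as the paper does, is immaterial). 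The one genuine difference is in the proof of the key pointwise inequality, which you correctly identify as the delicate step. The paper verifies $f_u(\alpha)=\exp\big(\tfrac{|u|^{1+\alpha}}{1+\alpha}\big)-\cosh(u)\geq 0$ by a case analysis over $u\in(0,\sqrt e)$, $(\sqrt e,e)$ and $(e,\infty)$, using monotonicity in $\alpha$ and locating the minimizer $\alpha=1/\log u-1$ on the middle range. Your derivative comparison $\tanh(u)\leq u^{\alpha}$ for $u>0$ (via $\tanh(u)<1\leq u^\alpha$ for $u\geq1$ and $\tanh(u)\leq u\leq u^\alpha$ for $u<1$), integrated from $0$, is a strictly simpler and cleaner route to the same sharp constant, and it works uniformly for $\alpha\in[0,1]$. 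Your rewriting $B_{1,\alpha}=2\beta^{-\beta}\Gamma(\beta+1)$ with $\beta=\alpha/(1+\alpha)$ for the boundedness claim is also tidier than the paper's asymptotic argument (which, incidentally, contains the slip $x^x\to0$ where $x^x\to1$ is meant).
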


\begin{proof}
This proof is a generalization of Lemma 4.1 from~\cite{ledoux1991probability} and uses similar methods.
For all $\lambda > 0$ we have:
\begin{align*}
    \E\exp\Big(\lambda\sum_i \varepsilon_i x_i\Big) &= \prod_i \E\exp(\lambda \varepsilon_i x_i) = \prod_i \cosh(\lambda x_i) \\
    &\leq \prod_i \exp\Big( \frac{|\lambda x_i|^{1+\alpha}}{1+\alpha}\Big) = \exp\Big( \sum_i \frac{|\lambda x_i|^{1+\alpha}}{1+\alpha}\Big),
\end{align*}
where we used the inequality $\cosh(u) \leq \exp\Big( \frac{|u|^{1+\alpha}}{1+\alpha}\Big)$ valid for all $u\in\R$ which can be quickly proven. Since both functions are even, fix $u>0$ and define $f_u(\alpha)=\exp\Big( \frac{|u|^{1+\alpha}}{1+\alpha}\Big) - \cosh(u)$, we can show that $f_u$ is monotonous on $[0,1]$ separately for $u\in (0,\sqrt{e})$ and $(e, +\infty)$ and notice that $f_u(0)$ and $f_u(1)$ are both non-negative for all $u >0$ thanks to the famous inequality $\cosh(u)\leq e^{u^2/2}$. Therefore, the inequality holds for $u\in (0,\sqrt{e})$ and $(e, +\infty)$. Finally, for $u\in (\sqrt{e}, e)$, the function $f_u(\alpha)$ reaches a minimum at $f_u(1/\log(u) - 1) = u^e - \cosh(u)$ and by taking logarithms we have $u^e \geq \cosh(u) \iff \log(1+e^{2u})\leq u + \log(2) + e\log(u)$ but since the derivatives verify $\frac{2}{1+e^{-2u}}\leq2 \leq 1+e/u$ for $u\in(\sqrt{e}, e)$ and $e^{e/2}\geq\cosh(\sqrt{e})$ the desired inequality follows by integration.

By homogeneity, we can focus on the case $\big(\sum_{i=1}^n |x_i|^{1+\alpha}\big)^{1/(1+\alpha)} = 1$, we compute: 
\begin{align*}
    \E\Big|\sum_i \varepsilon_i x_i\Big|^p &= \int_0^{+\infty} \Proba \Big(\Big|\sum_i \varepsilon_i x_i\Big|^p > t\Big)dt \\
    &\leq 2\int_0^{+\infty} \exp\Big(\frac{\lambda^{1+\alpha}}{1+\alpha} -\lambda t^{1/p}\Big) dt \\
    &= 2\int_0^{+\infty} \exp\Big(-\frac{\alpha}{1+\alpha} u^{(1+\alpha)/\alpha}\Big) du^p \\
    &= 2p \Big(\frac{1+\alpha}{\alpha}\Big)^{\alpha p/(1+\alpha) - 1} \Gamma\Big(\frac{\alpha p}{1+\alpha}\Big) = B_{p,\alpha}^p,
\end{align*}
where we used the previous inequality and chose $\lambda = (t^{1/p})^{1/\alpha}$ in the last step. This proves the main inequality. Finally, it is easy to see that $B_{1,\alpha}$ is bounded for high values of $\alpha$ while for $\alpha \sim 0$ it is consequence of the fact that $\Gamma(x) \sim 1/x$ near $0$ and the limit $x^x \to 0$ when $x \to 0^+$.
\end{proof}

\subsection{Proof of Lemma~\ref{lem:basicTMean}}

As previously, Lemma~\ref{lem:partial-deriv-moments} along with Assumptions \ref{assump:lipsmoothloss} and \ref{assump:data} guarantee that the gradient coordinates have finite $(1+\alpha)$-moments. From here, Lemma~\ref{lem:basicTMean} is a direct application of Lemma~\ref{lem:tmeanGeneric} stated and proved below.
In the following lemma, for any sequence $(z_i)_{i=1}^N$ of real numbers, $(z_i^*)_{i=1}^N$ denotes a non-decreasing reordering of it. 

\begin{lemma}
\label{lem:tmeanGeneric}
Let $\widetilde{X}_1, \dots, \widetilde{X}_N, \widetilde{Y}_1, \dots, \widetilde{Y}_N$ denote an $\eta$-corrupted i.i.d sample with rate $\eta$ from a random variable $X$ with expectation $\mu = \E X$ and with finite $1 + \gamma$ centered moment $\E|X - \mu|^{1+\gamma} =M < \infty$ for some $0 < \gamma \leq 1$.
Denote $\widehat{\mu}$ the $\epsilon$-trimmed mean estimator computed as $\widehat{\mu}=\frac{1}{N}\sum_{i=1}^N \phi_{\alpha, \beta}(\widetilde{X}_i)$ with $\phi_{\alpha, \beta}(x) = \max(\alpha, \min(x, \beta))$ and the thresholds $\alpha = \widetilde{Y}^*_{\epsilon N}$ and $\beta = \widetilde{Y}^*_{(1-\epsilon) N}$.
Let $1 > \delta \geq e^{-N}/4,$ taking $\epsilon = 8\eta +12 \frac{\log(4/\delta)}{n},$ we have 
\begin{equation}
    |\widehat{\mu} - \mu| \leq 7 M^{\frac{1}{1 + \gamma}}(\epsilon/2)^{\frac{\gamma}{1 + \gamma}}
\end{equation}
with probability at least $1 - \delta$.
\end{lemma}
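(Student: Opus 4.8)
The plan is to adapt the univariate trimmed-mean analysis of~\cite{lugosi2021robust} from the variance case to the weaker $(1+\gamma)$-moment assumption, by splitting the error $|\widehat{\mu} - \mu|$ into three pieces: a deterministic \emph{clipping bias}, a \emph{stochastic fluctuation}, and a \emph{corruption} term, each of which I expect to be of order $M^{1/(1+\gamma)}\epsilon^{\gamma/(1+\gamma)}$. The decisive structural feature of the estimator is that the thresholds $\alpha,\beta$ are computed from the $\widetilde{Y}$-half of the sample while the clipped average is taken over the $\widetilde{X}$-half, so the two halves are independent. The choice $\epsilon = 8\eta + 12\log(4/\delta)/N$ is engineered precisely so that the corruption rate $\eta$ and the sampling error of the empirical quantiles are both absorbed into $\epsilon$, leaving a clean single-parameter bound. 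The elementary inequality I would lean on is $\big|\phi_{\alpha,\beta}(x) - x\big| = \mathrm{dist}(x,[\alpha,\beta])$, together with the truncated-moment bound $\min(|x-\mu|,C)^2 \leq C^{1-\gamma}|x-\mu|^{1+\gamma}$ valid for $\gamma\leq 1$, which is the exact place where the moment exponent enters and produces the factor $\gamma/(1+\gamma)$.

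First I would introduce the population quantiles $Q_s = \inf\{t : \P(X\leq t)\geq s\}$ and record, via Markov's inequality $\P(|X-\mu|>t)\leq M/t^{1+\gamma}$, that the quantiles at levels proportional to $\epsilon$ sit within $(M/\epsilon)^{1/(1+\gamma)}$ of $\mu$. The next step handles the data-dependent thresholds. Exploiting the independence of the two halves, I would apply binomial/Chernoff tail bounds to the number of \emph{clean} $\widetilde{Y}$-points lying below a fixed population quantile; combined with $\eta N \leq \epsilon N/8$ and the definition of $\epsilon$, this shows that on an event of probability at least $1-\delta/4$ the empirical thresholds are sandwiched, e.g. $Q_{\epsilon/4}\leq \alpha \leq Q_{2\epsilon}$ and symmetrically $Q_{1-2\epsilon}\leq \beta \leq Q_{1-\epsilon/4}$. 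In particular $\P\big(X\notin[\alpha,\beta]\big)\leq 4\epsilon$ and the window captures the bulk of the mass, localizing $\mu$ inside $[\alpha,\beta]$.

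On this event I would then bound the three contributions, using the monotonicity of $\phi_{\alpha,\beta}$ in $(\alpha,\beta)$ to sandwich $\widehat{\mu}$ between clipped averages formed with the deterministic population thresholds, so that no union bound over the continuum of thresholds is needed. The bias is handled by Hölder: $\big|\E\phi_{\alpha,\beta}(X)-\mu\big| \leq \E\big[|X-\mu|\,\ind{\{X\notin[\alpha,\beta]\}}\big] \leq M^{1/(1+\gamma)}(4\epsilon)^{\gamma/(1+\gamma)}$. The fluctuation $\frac1N\sum_i \phi_{\alpha,\beta}(\widetilde{X}_i) - \E\phi_{\alpha,\beta}(X)$ over the clean indices is controlled by Bernstein's inequality with variance proxy $\E[\min(|X-\mu|,C)^2]\leq C^{1-\gamma}M$ where $C\approx (M/\epsilon)^{1/(1+\gamma)}$ is the half-width of the window; substituting $\log(4/\delta)/N \leq \epsilon/12$ makes both the sub-Gaussian term $\sqrt{C^{1-\gamma}M\,\log(4/\delta)/N}$ and the sub-exponential term $C\log(4/\delta)/N$ collapse to order $M^{1/(1+\gamma)}\epsilon^{\gamma/(1+\gamma)}$. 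Finally, since every clipped value lies in $[\alpha,\beta]$, the at most $\eta N$ corrupted $\widetilde{X}_i$ shift the average by at most $\eta(\beta-\alpha)\leq 2\eta C$, which is again of the same order because $\eta\leq\epsilon/8$. Summing the three pieces with their explicit constants and taking a union bound over the quantile event and the Bernstein event (total failure probability $\leq\delta$) yields the stated $7M^{1/(1+\gamma)}(\epsilon/2)^{\gamma/(1+\gamma)}$ bound.

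The main obstacle, and the genuine departure from a textbook clipped-mean estimate, is the treatment of the random, data-dependent thresholds $\alpha,\beta$: one must simultaneously certify that they localize the population quantiles despite the corruption, and transfer deterministic bias/variance bounds to them without incurring a union bound — which the two-half splitting and the monotone sandwich accomplish. The secondary difficulty is purely bookkeeping: tracking all constants so that they fold into the tidy prefactor $7(\epsilon/2)^{\gamma/(1+\gamma)}$, and checking that the hypothesis $\delta \geq e^{-N}/4$ keeps $\log(4/\delta)/N$ small enough for both the quantile-concentration and the Bernstein steps to go through.
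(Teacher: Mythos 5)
Your proposal is correct and follows essentially the same route as the paper's proof: both localize the data-dependent thresholds between deterministic population quantiles via binomial/Bernstein concentration on the $\widetilde{Y}$-half (exploiting the two-half independence and $\eta\leq\epsilon/8$), use the monotone sandwich to reduce to fixed thresholds, bound the clipping bias by a Hölder/truncated-moment argument, control the fluctuation by Bernstein with the truncated second moment of order $M^{2/(1+\gamma)}\epsilon^{(\gamma-1)/(1+\gamma)}$, and absorb the corruption shift using the quantile magnitude bounds. The only differences are constant-level bookkeeping (quantile levels, probability allocation, and the count of corrupted points per half), plus a glossed-over edge case where $\mu$ may fall outside the clipping window, which the paper also dispatches in one line.
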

\begin{proof}
This proof goes along the lines of the proof of Theorem 1 from~\cite{lugosi2021robust} with the main difference that only the $(1+\gamma)$-moment is used instead of the variance. Denote $X$ the random variable whose expectation $\mu = \E X$ is to be estimated and $\overline{X} = X - \mu$. Let $X_1, \dots, X_N, Y_1, \dots, Y_N$ the original uncorrupted i.i.d. sample from $X$ and let $\widetilde{X}_1, \dots, \widetilde{X}_N, \widetilde{Y}_1, \dots, \widetilde{Y}_N$ denote the corrupted sample with rate $\eta$.
We define the following quantity which will intervene in the proof:
\begin{equation}
    \overline{\mathcal{E}} (\epsilon, X):= \max \Big\{\E \big[ \big|\overline{X} - Q_{\epsilon/2}(\overline{X})\big| \ind{\overline{X} \leq Q_{\epsilon/2}(\overline{X})}\big] , \E \big[ \big|\overline{X} - Q_{1 - \epsilon/2}(\overline{X})\big| \ind{\overline{X} \geq Q_{1 - \epsilon/2}(\overline{X})} \big]\Big\}.
\end{equation}

\paragraph{Step 1.} 
We first derive confidence bounds on the truncation thresholds.
Define the random variable $U = \ind{\overline{X} \geq Q_{1-2\epsilon}(\overline{X})}$. Its standard deviation satisfies $\sigma_U \leq \Proba^{1/2}(\overline{X} \geq Q_{1-2\epsilon}(\overline{X})) = \sqrt{2\epsilon}$. 
By applying Bernstein's inequality we find with probability $\geq 1 - \exp(-\epsilon N/12)$ that:
\[
\big|\big\{i \:: \: Y_i \geq \mu + Q_{1-2\epsilon}(\overline{X})\big\}\big| \geq 3\epsilon N/2,
\]
a similar argument with $U = \ind{\overline{X} > Q_{1 - \epsilon/2}(\overline{X})}$ yields with probability $\geq 1 - \exp(-\epsilon N /12)$ that:
\[
\big|\big\{i \:: \: Y_i \leq \mu + Q_{1-\epsilon/2}(\overline{X})\big\}\big| \geq (1 - (3/4)\epsilon) N,
\]
and similarly with probability $\geq 1 - \exp(-\epsilon N /12)$ we have:
\[
\big|\big\{i \:: \: Y_i \leq \mu + Q_{2\epsilon}(\overline{X})\big|\}\big| \geq 3\epsilon N/2,
\]
and with probability $\geq 1 - \exp(-\epsilon N /12)$:
\[
\big|\big\{i \:: \: Y_i \geq \mu + Q_{\epsilon/2}(\overline{X})\big\}\big| \geq (1 - (3/4)\epsilon) N,
\]
so that with probability $\geq 1 - 4\exp(-\epsilon N /12) \geq 1 - \delta/2$ the four previous inequalities hold simultaneously. 
We call this event $E$ which only depends on the variables $Y_1, \dots, Y_N$. 
Since $\eta \leq \epsilon/8$, if $2\eta N$ samples are corrupted we still have:
\[\big|\big\{i \:: \: \widetilde{Y}_i \geq \mu + Q_{1 - 2\epsilon}(\overline{X})\big|\}\big| \geq ((3/2)\epsilon - 2\eta) N \geq \epsilon N\]
and 
\[\big|\big\{i \:: \: \widetilde{Y}_i \leq \mu + Q_{1 - \epsilon/2}(\overline{X})\big\}\big| \geq (1 - (3/4)\epsilon - 2\eta) N \geq (1 - \epsilon) N\]
consequently, the two following bounds hold
\[Q_{1-2\epsilon}(\overline{X})\leq \widetilde{Y}^*_{(1-\epsilon)N} - \mu \leq Q_{1-\epsilon/2}(\overline{X})\]
and similarly
\[
Q_{\epsilon/2}(\overline{X})\leq \widetilde{Y}^*_{\epsilon N} - \mu \leq Q_{2\epsilon}(\overline{X}).
\]
This provides guarantees on the truncation levels used which are $\alpha = \widetilde{Y}^*_{\epsilon N}$ and $\beta = \widetilde{Y}^*_{(1-\epsilon) N}$.

\paragraph{Step 2.} 

We first bound the deviation $\Big|\frac{1}{N} \sum_{i=1}^N \phi_{\alpha, \beta}(X_i) - \mu \Big|$ in the absence of corruption. W
e write:
\begin{align}
\frac{1}{N} \sum_{i=1}^N \phi_{\alpha, \beta}(X_i) &\leq \frac{1}{N}\sum_{i=1}^N \phi_{\mu + Q_{2\epsilon}(\overline{X}), \mu + Q_{1 - \epsilon/2}(\overline{X})}(X_i) = \E \big[ \phi_{\mu + Q_{2\epsilon}(\overline{X}), \mu + Q_{1 - \epsilon/2}(\overline{X})}(X)\big] \nonumber \\ &+ \frac{1}{N}\sum_{i=1}^N\Big( \phi_{\mu + Q_{2\epsilon}(\overline{X}), \mu + Q_{1 - \epsilon/2}(\overline{X})}(X_i) - \E \big[ \phi_{\mu + Q_{2\epsilon}(\overline{X}), \mu + Q_{1 - \epsilon/2}(\overline{X})}(X) \big]\Big) \label{sumboundedterms}.
\end{align}
The first term is dominated by:
\begin{align*}
    \E \big[&\phi_{\mu + Q_{2\epsilon}(\overline{X}), \mu + Q_{1 - \epsilon/2}(\overline{X})}(X)\big] = \E \big[\phi_{Q_{2\epsilon}(X), Q_{1 - \epsilon/2}(X)}(X)\big]\\
    &= \E \big[ Q_{2\epsilon}(X)\ind{X \leq Q_{2\epsilon}(X)} + X\ind{Q_{2\epsilon}(X) < X < Q_{1 - \epsilon/2}(X)} + Q_{1 - \epsilon/2}(X)\ind{X \geq Q_{1 - \epsilon/2}(X)}\big] \\
    &= \mu + \E \big[(Q_{2\epsilon}(X) - X)\ind{X \leq Q_{2\epsilon}(X)} + (Q_{1 - \epsilon/2}(X) - X)\ind{X \geq Q_{1 - \epsilon/2}(X)}\big] \\
    &\leq \mu + \E \big[(Q_{2\epsilon}(X) - X)\ind{X \leq Q_{2\epsilon}(X)}\big] = \mu + \E \big[(Q_{2\epsilon}(\overline{X}) - \overline{X})\ind{\overline{X} \leq Q_{2\epsilon}(\overline{X})}\big] \\
    &\leq \mu + \overline{\mathcal{E}} (4\epsilon, X),
\end{align*}
and lower bounded by:
\begin{align*}
    \E \big[&\phi_{\mu + Q_{2\epsilon}(\overline{X}), \mu + Q_{1 - \epsilon/2}(\overline{X})}(X)\big] \\
    &= \mu + \E \big[(Q_{2\epsilon}(X) - X)\ind{X \leq Q_{2\epsilon}(X)} + (Q_{1 - \epsilon/2}(X) - X)\ind{X \geq Q_{1 - \epsilon/2}(X)}\big] \\
    &\geq \mu + \E \big[(Q_{1 - \epsilon/2}(X) - X)\ind{X \geq Q_{1 - \epsilon/2}(X)}\big]
    = \mu + \E \big[(Q_{1 - \epsilon/2}(\overline{X}) - \overline{X})\ind{\overline{X} \geq Q_{1 - \epsilon/2}(\overline{X})}\big] \\
    &\geq \mu - \overline{\mathcal{E}} (\epsilon, X).
\end{align*}
The sum in~\eqref{sumboundedterms} above has terms upper bounded by $Q_{1 - \epsilon/2}(\overline{X}) + \overline{\mathcal{E}}(\epsilon, X)$. 
We need to work with the knowledge that $\E |\overline{X}|^{1+\gamma} = M < \infty$ in order to bound their variance:
\begin{align*}
    \E \big[ &\phi_{\mu + Q_{2\epsilon}(\overline{X}), \mu + Q_{1 - \epsilon/2}(\overline{X})}(X) - \E [\phi_{\mu + Q_{2\epsilon}(\overline{X}), \mu + Q_{1 - \epsilon/2}(\overline{X})}(X)] \big]^2 \\
    &\leq \E \big[ \phi_{\mu + Q_{2\epsilon}(\overline{X}), \mu + Q_{1 - \epsilon/2}(\overline{X})}(X) - \mu\big]^2 
    = \E \big[\phi_{Q_{2\epsilon}(\overline{X}), Q_{1 - \epsilon/2}(\overline{X})}(\overline{X})^2\big] \\
    &= \E \big[ Q_{2\epsilon}(\overline{X})\ind{\overline{X} \leq Q_{2\epsilon}(\overline{X})} + \overline{X}\ind{Q_{2\epsilon}(\overline{X}) < \overline{X} < Q_{1 - \epsilon/2}(\overline{X})} + Q_{1 - \epsilon/2}(\overline{X})\ind{\overline{X} \geq Q_{1 - \epsilon/2}(\overline{X})}\big]^2 \\
    &= \E \big[ Q_{2\epsilon}(\overline{X})^2\ind{\overline{X} \leq Q_{2\epsilon}(\overline{X})} + \overline{X}^2\ind{Q_{2\epsilon}(\overline{X}) < \overline{X} < Q_{1 - \epsilon/2}(\overline{X})} + Q_{1 - \epsilon/2}(\overline{X})^2\ind{\overline{X} \geq Q_{1 - \epsilon/2}(\overline{X})}\big].
\end{align*} 
To control the three terms in the previous expression we mimic the proof of Chebyshev's inequality to obtain that, when $Q_{2\epsilon}(\overline{X}) < 0$:
\begin{equation}
    2\epsilon = \Proba \big(\overline{X} \leq Q_{2\epsilon}(\overline{X})\big) \leq \Proba\big(|\overline{X}|^{1+\gamma} \geq |Q_{2\epsilon}(\overline{X})|^{1+\gamma}\big) 
    \leq \frac{M}{|Q_{2\epsilon}(\overline{X})|^{1+\gamma}},
    \label{ineq:moment1}
\end{equation}
analogously, when $Q_{1 - \epsilon/2}(\overline{X}) > 0$ we have:
\begin{equation}
    \epsilon/2 = \Proba\big(\overline{X} \geq Q_{1-\epsilon/2}(\overline{X})\big) \leq \Proba\big(|\overline{X}|^{1+\gamma} \geq |Q_{1-\epsilon/2}(\overline{X})|^{1+\gamma}\big) 
    \leq \frac{M}{|Q_{1-\epsilon/2}(\overline{X})|^{1+\gamma}},
    \label{ineq:moment2}
\end{equation}
from~\eqref{ineq:moment1}, we deduce that 
\begin{equation*}
    \E \big[Q_{2\epsilon}(\overline{X})^2\ind{\overline{X} \leq Q_{2\epsilon}(\overline{X})}\big] = 2\epsilon Q_{2\epsilon}(\overline{X})^2 \leq 2\epsilon \Big(\frac{M}{2\epsilon}\Big)^{\frac{2}{1 + \gamma}} \leq 2\epsilon \Big(\frac{2M}{\epsilon}\Big)^{2/(1 + \gamma)},
\end{equation*}
and from~\eqref{ineq:moment2} we find
\begin{equation*}
    \E \big[Q_{1 - \epsilon/2}(\overline{X})^2\ind{\overline{X} \geq Q_{1 - \epsilon/2}(\overline{X})}\big] = Q_{1 - \epsilon/2}(\overline{X})^2 \epsilon/2 \leq 2\epsilon \Big(\frac{2M}{\epsilon}\Big)^{2/(1 + \gamma)}.
\end{equation*}
In the pathological case where we have $Q_{2\epsilon}(\overline{X}) \geq 0$ we use that $Q_{2\epsilon}(\overline{X}) \leq Q_{1 - \epsilon/2}(\overline{X})$ (for $\epsilon \leq 2/5$) we deduce $|Q_{2\epsilon}(\overline{X})| \leq |Q_{1 - \epsilon/2}(\overline{X})|$ and hence we still have
\begin{equation*}
    \E \big[Q_{2\epsilon}(\overline{X})^2\ind{\overline{X} \leq Q_{2\epsilon}(\overline{X})}\big] \leq 2\epsilon Q_{1 - \epsilon/2}(\overline{X})^2 \leq 2\epsilon \Big(\frac{2M}{\epsilon}\Big)^{2/(1 + \gamma)}.
\end{equation*}
The case $Q_{1 - \epsilon/2}(\overline{X})\leq 0$ is similarly handled. Moreover, a simple calculation yields
\begin{equation*}
    \E \big[\overline{X}^2\ind{Q_{2\epsilon}(\overline{X}) \leq \overline{X} \leq Q_{1 - \epsilon/2}(\overline{X})}\big] \leq M \max \big\{|Q_{2\epsilon}(\overline{X})|, |Q_{1 - \epsilon/2}(\overline{X})|\big\}^{1 - \gamma} \leq 2\epsilon \Big(\frac{2M}{\epsilon}\Big)^{2/(1 + \gamma)}.
\end{equation*} 
All in all, we have shown the inequality:
\begin{equation*}
    \E \big[ \phi_{\mu + Q_{2\epsilon}(\overline{X}), \mu + Q_{1 - \epsilon/2}(\overline{X})}(X) - \E [\phi_{\mu + Q_{2\epsilon}(\overline{X}), \mu + Q_{1 - \epsilon/2}(\overline{X})}(X)]\big]^2 \leq 6\epsilon \Big(\frac{2M}{\epsilon}\Big)^{2/(1 + \gamma)},
\end{equation*}
which we now use to apply Bernstein's inequality on the sum in~\eqref{sumboundedterms} to find, conditionally on $Y_1, \dots, Y_n$, with probability at least $1 - \delta/4$:
\begin{align*}
    \frac{1}{N}\sum_{i=1}^N \phi_{\alpha, \beta}(X_i) &\leq \mu + \overline{\mathcal{E}}(4\epsilon, X) + \sqrt{\frac{6 \epsilon \log(4/\delta)}{N}} \Big(\frac{2M}{\epsilon}\Big)^{1/(1+\gamma)} + \frac{\log(4/\delta)}{3N}(Q_{1 - \epsilon/2}(\overline{X}) + \overline{\mathcal{E}}(\epsilon, X)) \\
    &\leq \mu + 2\overline{\mathcal{E}}(4\epsilon, X) + \sqrt{\frac{6 \epsilon \log(4/\delta)}{N}} \Big(\frac{2M}{\epsilon}\Big)^{1/(1+\gamma)} + \frac{\log(4/\delta)}{3N}Q_{1 - \epsilon/2}(\overline{X}) \\
    &\leq \mu + 2\overline{\mathcal{E}}(4\epsilon, X) + (3/2) M^{1/(1+\gamma)} (\epsilon/2)^{\gamma/(1+\gamma)},
\end{align*}
where we used (\ref{ineq:moment2}), the fact that $\frac{\log(4/\delta)}{N} \leq \epsilon/12$ and the assumption that $\delta \geq e^{-N}/4$.
Using the same argument on the lower tail, we obtain, on the event $E$, that with probability at least $1 - \delta/2$
\begin{equation*}
    \Big|\frac{1}{N}\sum_{i=1}^N \phi_{\alpha, \beta}(X_i) - \mu\Big| \leq 2\overline{\mathcal{E}}(4\epsilon, X) + (3/2)M^{\frac{1}{1+\gamma}} (\epsilon/2)^{\gamma/(1+\gamma)}.
\end{equation*}

\paragraph{Step 3.} 

Now we show that $\Big|\frac{1}{N}\sum_{i=1}^N \phi_{\alpha, \beta}(X_i) - \frac{1}{N}\sum_{i=1}^N \phi_{\alpha, \beta}(\widetilde{X}_i)\Big|$ is of the same order as the previous bounds. 
There are at most $2\eta N$ indices such that $X_i \neq \widetilde{X}_i$ and for such differences we have the bound:
\begin{equation*}
\big|\phi_{\alpha, \beta}(X_i) - \phi_{\alpha, \beta}(\widetilde{X}_i)\big| \leq |Q_{\epsilon/2}(\overline{X})| +  |Q_{1 - \epsilon/2}(\overline{X})|,
\end{equation*}
 and since we have $\eta \leq \epsilon/8$ then 
 \begin{align*}
     \Big|\frac{1}{N} \sum_{i=1} \phi_{\alpha, \beta}(X_i) - \frac{1}{N} \sum_{i=1} \phi_{\alpha, \beta}(\widetilde{X}_i)\Big| &\leq 2\eta \big(|Q_{\epsilon/2}(\overline{X})| +  |Q_{1 - \epsilon/2}(\overline{X})|\big) \\
     &\leq \frac{\epsilon}{2}\max \big\{|Q_{\epsilon/2}(\overline{X})|, |Q_{1 - \epsilon/2}(\overline{X})|\big\} \\
     &\leq M^{1/(1+\gamma)}(\epsilon/2)^{\gamma/(1+\gamma)},
 \end{align*}
where the last step follows from (\ref{ineq:moment1}) and (\ref{ineq:moment2}). Finally, using similar arguments along with Hölder's inequality, we show that:
\begin{align*}
    \E \big[ |\overline{X} - Q_{\epsilon/2}(\overline{X})| \ind{\overline{X} \leq Q_{\epsilon/2}(\overline{X})}\big] &\leq \E \big[ |\overline{X}| \ind{\overline{X} \leq Q_{\epsilon/2}(\overline{X})}\big] + \E \big[ |Q_{\epsilon/2}(\overline{X})| \ind{\overline{X} \leq Q_{\epsilon/2}(\overline{X})}\big] \\
    &\leq M^{1/(1+\gamma)}(\epsilon/2)^{\gamma/(1+\gamma)} + |Q_{\epsilon/2}(\overline{X})|(\epsilon/2) \\
    &\leq 2M^{1/(1+\gamma)}(\epsilon/2)^{\gamma/(1+\gamma)},
\end{align*}
and a similar computation for $\E \big[ |\overline{X} - Q_{1 - \epsilon/2}(\overline{X})| \ind{\overline{X} \geq Q_{1 - \epsilon/2}(\overline{X})}\big]$ leads to \begin{equation*}
    \overline{\mathcal{E}}(4\epsilon, X) \leq 2M^{1/(1+\gamma)}(2\epsilon)^{\gamma/(1+\gamma)}.
\end{equation*}
This completes the proof of Lemma~\ref{lem:tmeanGeneric}.
\end{proof}

\subsection{Proof of Proposition~\ref{prop:uniformTM}}

\textbf{Step 1.} Notice that the $\mathtt{TM}$ estimator is also a monotonous non decreasing function of each of its entries when the others are fixed. This allows us to replicate Step 1 of the proof of Proposition~\ref{prop:uniformMOM}.
We define an $\varepsilon$-net $N_{\varepsilon}$ on the set $\Theta$, fix $\theta \in\Theta$ and let $\widetilde{\theta}$ be the closest point in $N_{\varepsilon}$. We obtain, for all $j\in\setint{d}$, the inequalities:
\begin{align}
    \big|\wh g_j^{\mathtt{TM}}(\theta) - g_j(\theta)\big| &\leq \big|\wh g_j^{\mathtt{TM}}(\theta) - g_j(\widetilde{\theta})\big| + \big|g_j(\widetilde{\theta}) - g_j(\theta)\big| \nonumber \\
    &\leq \big|\widecheck{g}_j^{\mathtt{TM}}(\widetilde{\theta}) - g_j(\widetilde{\theta})\big| + \varepsilon L_j,
    \label{eq:tmstab1}
\end{align}
where $\widecheck{g}_j^{\mathtt{TM}}(\widetilde{\theta})$ is the $\mathtt{TM}$ estimator obtained for the entries $\ell'\big(\widetilde{\theta}^{\top} X_i, Y_i\big)X_i^j + \varepsilon \gamma \|X_i\|^2 = g_j^i(\widetilde{\theta}) + \varepsilon \gamma \|X_i\|^2$.

\paragraph{Step 2.}

We use the concentration property of the $\mathtt{TM}$ estimator to bound the previous quantity which is in terms of $\widetilde{\theta}$.
The terms $\big(g_j^i(\widetilde{\theta}) + \varepsilon \gamma \|X_i\|^2\big)_{i\in\setint{n}}$ are independent and distributed according to $Z:= \ell'\big(\widetilde{\theta}^\top X, Y\big)X^j + \gamma \varepsilon \|X\|^2.$ 
Obviously we have $\E \ell'\big(\theta^\top X, Y\big)X^j = g_j(\theta).$ 
Furthermore, let $\overline{L} = \E \gamma \|X\|^2$, so that $\E\big[g_j^i(\widetilde{\theta}) + \varepsilon \gamma \|X_i\|^2\big] = g_j(\theta) + \varepsilon \overline{L}$.
We will apply Lemma~\ref{lem:tmeanGeneric} for $\widecheck{g}_j^{\mathtt{TM}}(\widetilde{\theta}).$ Before we do so, we need to compute the centered $(1+\alpha)$-moment of $Z$. Let $m_{j,\alpha}(\widetilde{\theta})$ and $m_{L,\alpha}$ be the centered $(1+\alpha)$-moments of $\ell'(\theta^\top X, Y)X^j$ and $\gamma \|X\|^2$ respectively, we have:
\begin{equation*}
    \E\big|Z - \E Z\big|^{1+\alpha} \leq 2^{\alpha}\big(m_{j, \alpha}(\theta) + \varepsilon^{1+\alpha}m_{L,\alpha}\big).
\end{equation*}
Now applying Lemma~\ref{lem:tmeanGeneric} we find with probability no less than $1-\delta$
\begin{equation*}
    \big|\widecheck{g}^{\mathtt{TM}}_j(\widetilde{\theta}) - g_j(\widetilde{\theta}) - \varepsilon\overline{L}\big|  \leq 7\big(m_{j, \alpha}(\widetilde{\theta}) + \varepsilon^{1+\alpha}m_{L,\alpha}\big)^{1/(1+\alpha)}(2\epsilon)^{\alpha/(1+\alpha)},
\end{equation*}
with $\epsilon_{\delta} = 8\eta +12 \frac{\log(4/\delta)}{n}$. By combining with \eqref{eq:tmstab1} and using a union bound argument, we deduce that with the same probability, we have for all $j\in\setint{d}$
\begin{equation}\label{eq:fixedthetaTM}
    \big|\widecheck{g}^{\mathtt{TM}}_j(\widetilde{\theta}) - g_j(\widetilde{\theta})\big| \leq 7\big(m_{j, \alpha}(\widetilde{\theta}) + \varepsilon^{(1+\alpha)^2} m_{L,\alpha}\big)^{1/(1+\alpha)}(4\epsilon_{d\delta})^{\alpha/(1+\alpha)} + \varepsilon \overline{L}.
\end{equation}

\paragraph{Step 3.} 

We use the $\varepsilon$-net to obtain a uniform bound.
We proceed similarly as in the proof of Proposition~\ref{prop:uniformMOM}. 
For $\theta \in \Theta$ denote $\widetilde{\theta}(\theta) \in N_{\varepsilon}$ the closest point in $N_{\varepsilon}$ satisfying in particular $\|\widetilde{\theta}(\theta) - \theta\| \leq \varepsilon$, we write, following previous arguments
\begin{align*}
    \sup_{\theta \in \Theta} \big| \wh g_j^{\mathtt{TM}} (\theta) - g_j(\theta) \big| &\leq \sup_{\theta \in \Theta} \big| \wh g_j^{\mathtt{TM}} (\theta) - g_j(\widetilde{\theta}(\theta)) \big| + \big| g_j (\widetilde{\theta}(\theta)) - g_j(\theta) \big| \\
    &\leq \sup_{\theta \in \Theta} \big| \widecheck g_j^{\mathtt{TM}} (\widetilde{\theta}(\theta)) - g_j(\widetilde{\theta}(\theta)) \big| + \varepsilon L_j \\
    &= \max_{\widetilde{\theta} \in N_{\varepsilon}} \big| \widecheck g_j^{\mathtt{TM}} (\widetilde{\theta}) - g_j(\widetilde{\theta}) \big| + \varepsilon L_{j}.
\end{align*}
Taking union bound over $\widetilde{\theta} \in N_{\varepsilon}$ for the inequality \eqref{eq:fixedthetaTM} and choosing $\varepsilon = n^{-\alpha/(1+\alpha)}$ concludes the proof of Proposition~\ref{prop:uniformTM}.

\subsection{Proof of Lemma~\ref{lem:basicCH}}

Similarly to the proof of Lemma~\ref{lem:basicMOM}, the assumptions, this time taken with $\alpha = 1$, imply that the gradient has a second moment so that the existence of $\sigma_j^2 = \V (g_j(\theta))$ is guaranteed. 
We apply Lemma~1 from~\cite{pmlr-v97-holland19a} with $\delta/2$ to obtain:
\begin{equation*}
     \frac{1}{2}|\widehat{g}_j^{\mathtt{CH}}(\theta) - g_j(\theta)| \leq \frac{C \sigma_j^2}{s} + \frac{s \log(4\delta^{-1})}{n} 
\end{equation*}
with probability at least $1 - \delta/2$, where $C$ is a constant such that we have:
\[
-\log(1 - u + Cu^2) \leq \psi(u) \leq \log(1 + u + Cu^2),
\]
and one can easily check that our choice of $\psi$, the Gudermannian function, satisfies the previous inequality for $C = 1/2$. 
This, along with the choice of scale $s$ according to~\eqref{eq:ch-scale-estimator} and our assumption on $\widehat{\sigma}_j$ yields the announced deviation bound by a simple union bound argument.

\subsection{Proof of Proposition~\ref{prop:uniformCH}}

In this proof, for a scale $s > 0$ and a set of real numbers $(x_i)_{i\in\setint{n}}$, we let $\bar{x}=\frac 1 n \sum_{i\in\setint{n}} x_i$ be their mean and define the function $\zeta_s\big((x_i)_{i\in\setint{n}}\big)$ as the unique $x$ satisfying \begin{equation*}
    \sum_{i\in\setint{n}} \psi\Big( \frac{x - \bar{x}}{s} \Big) = 0.
\end{equation*}
Since the function $\psi$ is increasing the previous equation has a unique solution. Moreover, for fixed scale $s$, the function $\zeta_s\big((x_i)_{i\in\setint{n}}\big)$ is monotonous non decreasing w.r.t. each $x_i$ when the others are fixed.

\paragraph{Step 1.}

We proceed similarly as in the proof of Proposition~\ref{prop:uniformMOM} except that we only use the monotonicity of the $\mathtt{CH}$ estimator with fixed scale.
Let $N_{\varepsilon}$ be an $\varepsilon$-net for $\Theta$ with $\varepsilon = 1/\sqrt{n}$. We have $|N_{\varepsilon}| \leq (3\Delta/2\varepsilon)^d$ with $\Delta$ the diameter of $\Theta$. Fix a coordinate $j\in\setint{d}$, a point $\theta \in \Theta$ and let $\widetilde \theta$ be the closest point to it in the $\varepsilon$-net. We wish to bound the difference 
\begin{align*}
    \big|\wh g_j^{\mathtt{CH}}(\theta) - g_j(\theta)\big| &\leq \big|\wh g_j^{\mathtt{CH}}(\theta) - g_j(\widetilde \theta)\big| + \big| g_j(\widetilde \theta) - g_j(\theta)\big| \\
    &\leq \big|\wh g_j^{\mathtt{CH}}(\theta) - g_j(\widetilde \theta)\big| + \varepsilon L_j,
\end{align*}
where we have the $\mathtt{CH}$ estimator $\wh g_j^{\mathtt{CH}}(\theta) = \zeta_{s(\theta)}\big((g_j^i(\theta))_{i\in\setint{n}}\big)$ with scale $s(\theta)$ computed according to \eqref{eq:ch-scale-estimator} and \eqref{eq:scaleMestimator}. Assume, without loss of generality that $\wh g_j^{\mathtt{CH}}(\theta) - g_j(\widetilde \theta) \geq 0$. Using the non-decreasing property of the $\mathtt{CH}$ estimator at a fixed scale, we find that
\begin{align*}
    \big|\wh g_j^{\mathtt{CH}}(\theta) - g_j(\widetilde \theta)| &= \big|\zeta_{s(\theta)}\big((g_j^i(\theta))_{i\in\setint{n}}\big) - g_j(\widetilde \theta)\big| \\
    &\leq \big|\zeta_{s(\theta)}\big((g_j^i(\widetilde \theta) + \varepsilon \gamma \|X_i\|^2)_{i\in\setint{n}}\big) - g_j(\widetilde \theta)\big|.
\end{align*}
Indeed, one has
\begin{align*}
    g_j^i(\theta) &= g_j^i(\widetilde \theta) + \big(g_j^i(\theta) - g_j^i(\widetilde \theta)\big) \\
    &\leq g_j^i(\widetilde \theta) + \gamma \|\widetilde \theta - \theta\| \cdot \|X_i\| \cdot |X_i^j| \\
    &\leq g_j^i(\widetilde \theta) + \varepsilon \gamma \|X_i\|^2.
\end{align*}
We introduce the notation $\widecheck{g}_j^{\mathtt{CH}}(\widetilde \theta):= \zeta_{s(\theta)}\big((g_j^i(\widetilde \theta) + \varepsilon \gamma \|X_i\|^2)_{i\in\setint{n}}\big)$ so that:
\begin{equation*}
    \big|\wh g_j^{\mathtt{CH}}(\theta) - g_j(\widetilde \theta)\big| \leq \big|\widecheck g_j^{\mathtt{CH}}(\widetilde \theta) - g_j(\widetilde \theta)\big| .
\end{equation*}

\paragraph{Step 2.}

We now use the concentration property of $\mathtt{CH}$ to bound the previous quantity which is in terms of $\widetilde{\theta}$.
We apply Lemma~1 from~\cite{pmlr-v97-holland19a} with $\delta/2$ and scale $s(\theta)$ to the samples $(g_j^i(\widetilde \theta) + \varepsilon \gamma \|X_i\|^2)_{i\in\setint{n}}$ which are independent and distributed according to the random variable $\ell'\big(\widetilde \theta^\top X, Y\big)X^j + \varepsilon \gamma \|X\|^2$ with expectation $g_j(\widetilde \theta) + \varepsilon \overline{L}$. Using our assumptions on $\sigma_L, \sigma_j(\theta), \sigma_j(\widetilde \theta), \wh \sigma_j(\theta)$ and the definition of the scale $s(\theta)$ according to \eqref{eq:ch-scale-estimator} we find:
\begin{align*}
    \frac 1 2 \big|\widecheck{g}_j^{\mathtt{CH}}(\widetilde \theta)- g_j(\widetilde \theta) - \varepsilon \overline{L}\big| &=
    \frac 1 2 \big|\zeta_{s(\theta)}\big((g_j^i(\widetilde \theta) + \varepsilon \gamma \|X_i\|^2)_{i\in\setint{n}}\big) - g_j(\widetilde \theta) - \varepsilon \overline{L}\big| \\ 
    &\leq \frac{C \V (g_j^i(\widetilde \theta) + \varepsilon \gamma \|X_i\|^2)}{s(\theta)} + \frac{s(\theta) \log(4/\delta)}{n} \\
    &\leq \frac{CC' \V (g_j^i(\widetilde \theta) + \varepsilon \gamma \|X_i\|^2)}{\sigma_j(\theta)}\sqrt{\frac{2\log(4/\delta)}{n}} + C'\sigma_j(\theta) \sqrt{\frac{2\log(4/\delta)}{n}}\\
    &\leq \frac{CC' 2(\sigma^2_j(\widetilde \theta) + \varepsilon^2 \sigma_L^2)}{\sigma_j(\theta)}\sqrt{\frac{2\log(4/\delta)}{n}} + C'\sigma_j(\theta)\sqrt{\frac{ 2\log(4/\delta)}{n}}\\
    &\leq CC' 2\big(\sqrt{2}\sigma_j(\widetilde \theta) + \varepsilon \sigma_L\big)\sqrt{\frac{2\log(4/\delta)}{n}} +  2C'\sigma_j(\widetilde \theta) \sqrt{\frac{\log(4/\delta)}{n}} \\
    &\leq 4C'\sigma_j(\widetilde \theta)\sqrt{\frac{\log(4/\delta)}{n}} + 2C'\varepsilon \sigma_L \sqrt{\frac{\log(4/\delta)}{n}}\\
    &\leq 2C'(2\sigma_j(\widetilde \theta) + \varepsilon \sigma_L)\sqrt{\frac{\log(4/\delta)}{n}}.
\end{align*}
A simple union bound yields that for all $j\in\setint{d}$
\begin{equation}\label{eq:fixedthetaCH}
    \big|\widecheck g_j^{\mathtt{CH}}(\widetilde \theta) - g_j(\widetilde \theta)\big|\leq 4C'(2\sigma_j(\widetilde \theta) + \varepsilon \sigma_L)\sqrt{\frac{\log(4d/\delta)}{n}} + \varepsilon \overline{L}.
\end{equation}

\paragraph{Step 3.}

We use the $\varepsilon$-net to obtain a uniform bound.
We proceed similarly to the proof of Proposition~\ref{prop:uniformMOM}. For $\theta \in \Theta$ denote $\widetilde{\theta}(\theta) \in N_{\varepsilon}$ the closest point in $N_{\varepsilon}$ satisfying in particular $\|\widetilde{\theta}(\theta) - \theta\| \leq \varepsilon$, we write, following previous arguments
\begin{align*}
    \sup_{\theta \in \Theta} \big| \wh g_j^{\mathtt{CH}} (\theta) - g_j(\theta) \big| &\leq \sup_{\theta \in \Theta} \big| \wh g_j^{\mathtt{CH}} (\theta) - g_j(\widetilde{\theta}(\theta)) \big| + \big| g_j (\widetilde{\theta}(\theta)) - g_j(\theta) \big| \\
    &\leq \sup_{\theta \in \Theta} \big| \widecheck g_j^{\mathtt{CH}} (\widetilde{\theta}(\theta)) - g_j(\widetilde{\theta}(\theta)) \big| + \varepsilon L_j \\
    &= \max_{\widetilde{\theta} \in N_{\varepsilon}} \big| \widecheck g_j^{\mathtt{CH}} (\widetilde{\theta}) - g_j(\widetilde{\theta}) \big| + \varepsilon L_{j}.
\end{align*}
Taking union bound over $\widetilde{\theta} \in N_{\varepsilon}$ for the inequality \eqref{eq:fixedthetaCH} and using the choice $\varepsilon = 1/\sqrt{n}$ concludes the proof of Proposition~\ref{prop:uniformCH}.

\subsection{Proof of Corollary~\ref{cor:estimate-lipschitz-constants}}

Under the assumptions made, the constants $(L_j)_{j\in  \setint d}$ are estimated using the $\mathtt{MOM}$ estimator and we obtain the bounds $(\overline L_j)_{j\in \setint d}$ which hold with probability at least $1 - \delta/2$ by a union bound argument. The rest of the proof is the same as that of Theorem~\ref{thm:linconv1expect} using a failure probability $\delta/2$ instead of $\delta$ and replacing the constants $(L_j)_{j\in \setint d}$ by their upperbounds accordingly. The result then follows after a simple union bound argument.

\subsection{Proof of Lemma~\ref{lem:mom-for-Lj}}

Let $B_1, \dots, B_K$ be the blocks used for the estimation so that $B_1 \cup \dots \cup B_K = \setint{n}$ and $B_{k_1} \cap B_{k_2} = \emptyset$ for $k_1\neq k_2$. Let $\cK$ denote the uncorrupted block indices $\cK = \{k \in\setint{K} \text{ such that  } B_k \cap \cO = \emptyset\}$ and assume $|\cO| \leq (1 - \varepsilon)K/2$.
For $k \in \setint{K}$ let $\wh{\sigma}^2_k = \frac{K}{n}\sum_{i\in B_k}X_i^2$ be the block means computed by MOM. Denote $N = n/K$, by using (a slight generalization of) Lemma~\ref{lem:mean-deviation-weak-moments} and the $L^{(1+\alpha)}$-$L^{1}$ condition satisfied by $X^2$ with a known constant $C$, we obtain that with probability at least $1-\delta$ we have 
\begin{equation*}
    |\wh{\sigma}^2_k - \sigma^2| \leq \Big( \frac{3\E |X^2 - \sigma^2|^{1+\alpha}}{\delta N^{\alpha}} \Big)^{\frac{1}{1+\alpha}} \leq \Big( \frac{3}{\delta N^{\alpha}} \Big)^{\frac{1}{1+\alpha}}C \E |X^2 - \sigma^2| \leq \Big( \frac{3}{\delta N^{\alpha}} \Big)^{\frac{1}{1+\alpha}}C \sigma^2,
\end{equation*}
which implies the inequality 
\begin{equation*}
    \sigma^2 \leq \Big(1 - C\Big( \frac{3}{\delta N^{\alpha}} \Big)^{\frac{1}{1+\alpha}}\Big)^{-1} \wh{\sigma}^2_k.
\end{equation*}
Define the Bernoulli random variables $U_k = \ind{}\Big\{\sigma^2 > \Big(1 - C\big( \frac{3}{\delta N^{\alpha}} \big)^{\frac{1}{1+\alpha}}\Big)^{-1} \wh{\sigma}^2_k\Big\}$ for $k \in \setint{K}$ which have success probability $\leq \delta$. Denote $S = \sum_k U_k$, we can bound the failure probability of the estimator as follows:
\begin{align*}
    \Proba \Big( \Big(1 - C\Big( \frac{3}{\delta N^{\alpha}} \Big)^{\frac{1}{1+\alpha}}\Big)^{-1} \wh{\sigma}^2 < \sigma^2 \Big) &\leq \P \big[ S > K/2 - |\cO|  \big] \\
    &= \P \big[ S - \E S > K/2 - |\cO| - \delta|\cK| \big] \\
    &\leq \P \big[ S - \E S > K (\varepsilon - 2\delta) / 2 \big] \\
    &\leq \exp \big(-K (\varepsilon - 2\delta)^2 / 2 \big),
\end{align*}
where we used the fact that $|\cO| \leq (1 - \varepsilon) K / 2$ and $|\cK| \leq K$ for the second inequality and Hoeffding's inequality for the last. The proof is finished by taking $\varepsilon = 5/6$ and $\delta = 1/4.$

\subsection{Proof of Lemma~\ref{lem:mom-for-grad-moment}}

Lemma~\ref{lem:mom-for-grad-moment} is a direct consequence of the following result.
\begin{lemma}\label{lem:mom-for-moment}
Let $X_1, \dots, X_n$ an i.i.d sample of a random variable $X$ with expectation $\E X = \mu$ and $(1+\alpha)$-moment $\E|X - \mu|^{1+\alpha} = m_{\alpha}< \infty$.
Assume that the variable $X$ satisfies the $L^{(1+\alpha)^2}$-$L^{(1+\alpha)}$ condition with constant $C >1$.
Let $\wh \mu$ be the median-of-means estimate of $\mu$ with $K$ blocks and $\wh{m}_{\alpha}$ a similarly obtained estimate of $m_{\alpha}$ from the samples $(|X_i - \wh \mu|^{1+\alpha})_{i\in \setint{n}}$. 
Then, with probability at least $1 - 2\exp(-K/18)$ we have
\begin{equation*}
    \wh{m}_{\alpha} \geq  (1 - \kappa)m_{\alpha},
\end{equation*}
with $\kappa = \epsilon + 24(1+\alpha) \Big(\frac{1 + \epsilon}{n/K}\Big)^{\frac{\alpha}{1+\alpha}}$ and $\epsilon = \Big( \frac{3\times 2^{2+\alpha}(1 + C^{(1+\alpha)^2})}{(n/K)^{\alpha}} \Big)^{\frac{1}{1+\alpha}}$.
\end{lemma}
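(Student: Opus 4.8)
The plan is to reduce the statement to two high-probability events and then combine a single deterministic per-block inequality with the concentration of the two median-of-means layers. Throughout write $W_i := |X_i - \mu|^{1+\alpha}$ and $Z_i := |X_i - \wh\mu|^{1+\alpha}$, so that $\E W_i = m_\alpha$ and $\wh m_\alpha$ is the median of the block averages $\wh m_\alpha^{(k)} := |B_k|^{-1}\sum_{i \in B_k} Z_i$, while $\mu$ is estimated by the median of the block averages of the $X_i$. First I would invoke Lemma~\ref{lem:basicMOM}, applied to the scalar variable $X$, to obtain an event $A$ on which $|\wh\mu - \mu| \leq r$ with $r := (24 m_\alpha)^{1/(1+\alpha)}(K/n)^{\alpha/(1+\alpha)}$ and $\P[A] \geq 1 - e^{-K/18}$.

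Next I would establish the key deterministic inequality valid on $A$. Since $t \mapsto |t|^{1+\alpha}$ is convex with subgradient $(1+\alpha)|t|^{\alpha}\sign(t)$, the subgradient inequality at $b = X_i - \mu$ evaluated at $a = X_i - \wh\mu$ gives $Z_i \geq W_i - (1+\alpha)|X_i - \mu|^{\alpha}\,|\wh\mu - \mu| \geq W_i - (1+\alpha) r\,|X_i - \mu|^{\alpha}$. Averaging over a block $B_k$ and applying Jensen's inequality to the concave map $x \mapsto x^{\alpha/(1+\alpha)}$, so that $|B_k|^{-1}\sum_{i\in B_k}|X_i-\mu|^{\alpha} \leq (\wh m_\alpha^{(k),W})^{\alpha/(1+\alpha)}$ where $\wh m_\alpha^{(k),W}$ is the block average of the $W_i$, yields
\begin{equation*}
    \wh m_\alpha^{(k)} \geq h\big(\wh m_\alpha^{(k),W}\big), \qquad h(t) := t - (1+\alpha) r\, t^{\alpha/(1+\alpha)} .
\end{equation*}
This isolates the entire cost of centering at $\wh\mu$ rather than at $\mu$ into the single scalar function $h$.

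To control $\wh m_\alpha^{(k),W}$ from below I would use the $L^{(1+\alpha)^2}$-$L^{(1+\alpha)}$ condition: by Definition~\ref{def:lzeta-lxi-condition} with $Z = X$, one has $\E|X-\mu|^{(1+\alpha)^2} \leq C^{(1+\alpha)^2} m_\alpha^{1+\alpha}$, whence the centered $(1+\alpha)$-moment of $W_i$ is bounded by $2^{\alpha}\big(\E|X-\mu|^{(1+\alpha)^2} + m_\alpha^{1+\alpha}\big) \leq 2^{\alpha}\big(1 + C^{(1+\alpha)^2}\big)m_\alpha^{1+\alpha}$, which is precisely what makes the second median-of-means layer well behaved. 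Feeding this bound into Lemma~\ref{lem:mean-deviation-weak-moments} with confidence parameter $\delta' = 1/4$ gives, for each block free of outliers (all blocks, in the purely i.i.d.\ statement), $\wh m_\alpha^{(k),W} \geq (1-\epsilon) m_\alpha$ with probability at least $1 - \delta'$ and $\epsilon$ as in the statement, the constant $3\cdot 2^{2+\alpha}$ being $3/\delta'$ times the $2^{\alpha}$ centering factor. A binomial/Hoeffding counting argument identical to the one in the proof of Lemma~\ref{lem:basicMOM}, run with $\varepsilon = 5/6$ and $\delta' = 1/4$ so that $\varepsilon - 2\delta' = 1/3$, then shows that strictly more than $K/2$ blocks satisfy this lower bound on an event $B$ with $\P[B] \geq 1 - e^{-K/18}$; crucially, using only the one-sided lower deviation is what keeps the exponent at $K/18$ while retaining the constant $12$ rather than $24$.

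Finally, on $A \cap B$ I would conclude as follows. For $n/K$ large enough that $(1-\epsilon)m_\alpha \geq (\alpha r)^{1+\alpha}$, the function $h$ is nondecreasing on $[(1-\epsilon)m_\alpha,\infty)$, so on each of the more than $K/2$ good blocks $\wh m_\alpha^{(k)} \geq h\big((1-\epsilon)m_\alpha\big)$; bounding the correction crudely via $(1-\epsilon)^{\alpha/(1+\alpha)} \leq (1+\epsilon)^{\alpha/(1+\alpha)}$ and $24^{1/(1+\alpha)} \leq 24$ turns this into $\wh m_\alpha^{(k)} \geq (1-\kappa)m_\alpha$ with exactly the stated $\kappa$. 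Since a strict majority of the block values exceed $(1-\kappa)m_\alpha$, their median $\wh m_\alpha$ does too, and a union bound over $A$ and $B$ delivers the probability $1 - 2e^{-K/18}$. I expect the main obstacle to be the honest bookkeeping of the two \emph{coupled} estimation errors: quantifying how the location error $r$ propagates through $h$ while simultaneously arranging the one-sided counting so the deviation constant is $12$; the monotonicity of $h$ combined with the over-estimate $(1-\epsilon)\to(1+\epsilon)$ is the device that reconciles these two competing requirements.
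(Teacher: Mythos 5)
Your proof is correct and follows essentially the same route as the paper's: the MOM deviation bound for $\wh\mu$, the convexity/subgradient plus Jensen step reducing each block average of $|X_i-\wh\mu|^{1+\alpha}$ to the corresponding block average of $|X_i-\mu|^{1+\alpha}$, the $L^{(1+\alpha)^2}$-$L^{(1+\alpha)}$ condition to control the centered moment in the second MOM layer, and the binomial/Hoeffding counting argument for the median. The only divergence is cosmetic: where you invoke monotonicity of $h(t)=t-(1+\alpha)r\,t^{\alpha/(1+\alpha)}$ on $[(1-\epsilon)m_\alpha,\infty)$ — which requires $(1-\epsilon)m_\alpha\geq(\alpha r)^{1+\alpha}$, a condition that is automatic when $\kappa<1$ and irrelevant otherwise since $\wh m_\alpha\geq 0$ makes the claim vacuous — the paper instead uses the two-sided deviation of the block means of $|X_i-\mu|^{1+\alpha}$, plugging the lower bound $(1-\epsilon)m_\alpha$ into the leading term and the upper bound $(1+\epsilon)m_\alpha$ into the correction term, and both variants produce the same $\kappa$.
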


\begin{proof}
Let $\wh \mu$ be the MOM estimate of $\mu$ with $K$ blocks, using Lemma~\ref{lem:basicMOM}, we have with probability at least $1 -\exp(-K/18)$,
\begin{equation}\label{eq:basicMOMholds}
    |\mu - \wh \mu| > (24 m_{\alpha})^{\frac{1}{1+\alpha}} \Big(\frac{K}{n}\Big)^{\frac{\alpha}{1+\alpha}}.
\end{equation}
Let $\wh{m}_{\alpha}$ be the MOM estimate of $m_{\alpha}$ obtained from the samples $\big(|X_i - \wh \mu|^{1+\alpha}\big)_{i\in\setint{n}}$. 
Denote $B_1, \dots, B_K$ the blocks we use, we have:
\begin{equation*}
    \wh{m}_{\alpha} = \median \bigg(\frac{K}{n} \sum_{i\in B_j} |X_i - \wh \mu|^{1+\alpha}\bigg)_{j \in \setint{K}}
\end{equation*}
for any $i \in \setint{n}$. Let $N = n/K$, using the convexity of the function $f(x) = |x|^{1+\alpha}$ we find that:
\begin{align}
    \frac{1}{N} \sum_{i\in B_j} \big|X_i - \wh \mu\big|^{1+\alpha} &= \frac{1}{N} \sum_{i\in B_j} \big|(X_i - \mu) + (\mu - \wh \mu)\big|^{1+\alpha} \nonumber \\
    &\geq \frac{1}{N} \sum_{i\in B_j} |X_i - \mu|^{1+\alpha} +\frac{1}{N} (1+\alpha) \sum_{i\in B_j} |X_i - \mu|^{\alpha}\sign (X_i - \mu)(\mu - \wh \mu) \nonumber \\
    &\geq \frac{1}{N} \sum_{i\in B_j} |X_i - \mu|^{1+\alpha} - (1+\alpha) |\mu - \wh \mu| \Big[  \frac{1}{N}\sum_{i\in  B_j} |X_i - \mu|^{\alpha} \Big] \nonumber \\
    &\geq \frac{1}{N} \sum_{i\in B_j} |X_i - \mu|^{1+\alpha} - (1+\alpha) |\mu - \wh \mu| \Big[  \frac{1}{N}\sum_{i\in  B_j} |X_i - \mu|^{1+\alpha} \Big]^{\frac{\alpha}{1+\alpha}}, \label{eq:mom-for-grad-moment-bound1}
\end{align}
where the last step uses Jensen's inequality. 
Using Lemma~\ref{lem:mean-deviation-weak-moments} we have, for $\delta > 0$, the concentration bound
\begin{equation*}
    \Proba \bigg( \Big|\frac{1}{N} \sum_{i\in B_j} \big|X_i - \mu\big|^{1+\alpha} - m_{\alpha}\Big| > \Big( \frac{3 \E\big||X - \mu|^{1+\alpha} - m_{\alpha}\big|^{1+\alpha}}{\delta N^{\alpha}} \Big)^{\frac{1}{1+\alpha}} \bigg) \leq \delta
\end{equation*}
which, using that $X$ satisfies the $L^{(1+\alpha)^2}$-$L^{(1+\alpha)}$ condition, translates to 
\begin{align*}
    \Proba \bigg( \Big|\frac{1}{N} \sum_{i\in B_j} \big|X_i - \mu\big|^{1+\alpha} - m_{\alpha}\Big| > \epsilon \bigg) &\leq \frac{3\E\big||X - \mu|^{1+\alpha} - m_{\alpha}\big|^{1+\alpha}}{\epsilon^{1+\alpha} N^{\alpha}} \\
    &\leq \frac{3\times 2^{\alpha} \big( \E |X-\mu|^{(1+\alpha)^2} + m_{\alpha}^{1+\alpha}\big)}{\epsilon^{1+\alpha} N^{\alpha}}\\
    &\leq \frac{3\times 2^{\alpha}m_{\alpha}^{1+\alpha} \big( 1 + C^{(1+\alpha)^2}\big)}{\epsilon^{1+\alpha} N^{\alpha}}.
\end{align*}
Replacing $\epsilon$ with $\epsilon m_{\alpha}$ we find
\begin{equation*}
    \Proba \bigg( \Big|\frac{1}{N} \sum_{i\in B_j} |X_i - \mu|^{1+\alpha} - m_{\alpha}\Big| > \epsilon m_{\alpha} \bigg) \leq \frac{3\times 2^{\alpha}\big(1 + C^{(1+\alpha)^2}\big)}{N^{\alpha} \epsilon^{1+\alpha}}.
\end{equation*}
Now conditioning on the event~\eqref{eq:basicMOMholds} and using the previous bound with $\epsilon = \Big( \frac{3\times 2^{\alpha}\big(1 + C^{(1+\alpha)^2}\big)}{N^{\alpha}\delta} \Big)^{\frac{1}{1+\alpha}}$ in~\eqref{eq:mom-for-grad-moment-bound1}, we obtain that
\begin{align*}
    \Proba \bigg( \frac{1}{N}&\sum_{i\in B_j} \big|X_i - \wh \mu\big|^{1+\alpha} \leq (1 - \epsilon)m_{\alpha} - (1+\alpha) \Big(\frac{24 m_{\alpha}}{N^\alpha}\Big)^{\frac{1}{1+\alpha}}((1 + \epsilon) m_{\alpha})^{\frac{\alpha}{1+\alpha}} \bigg) \leq \delta \\
    \implies &\Proba \bigg( \frac{1}{N}\sum_{i\in B_j} \big|X_i - \wh \mu\big|^{1+\alpha} \leq \underbrace{\Big(1 - \epsilon  - 24(1+\alpha) \Big(\frac{1 + \epsilon}{N}\Big)^{\frac{\alpha}{1+\alpha}}  \Big)}_{=:(1- \kappa)}m_{\alpha} \bigg) \leq \delta.
\end{align*}
Now define $U_j$ as the indicator variable of the event in the last probability. We have just seen it has success rate less than $\delta$. We can use the MOM trick, assuming the number of outliers satisfies $|\cO| \leq K(1- \varepsilon)/2$ for $\varepsilon \in (0,1)$, we have for $S = \sum_j U_j$
\begin{align*}
    \Proba (\wh{m}_{\alpha} \leq  (1 - \kappa)m_{\alpha} ) &\leq \Proba (S > K/2 - |\cO|) \\
    &= \P \big[ S - \E S > K/2 - |\cO| - \delta|\cK| \big] \\
    &\leq \P \big[ S - \E S > K (\varepsilon - 2\delta) / 2 \big] \\
    &\leq \exp \big(-K (\varepsilon - 2\delta)^2 / 2 \big).
\end{align*}
Taking $\varepsilon = 5/6$ and $\delta = 1/4$ yields that the previous probability is $\leq \exp(-K/18)$. 
Finally, recall that we conditioned on the event where the deviation $|\mu - \wh \mu|$ is bounded as previously stated and that this event holds with $\geq 1 - \exp(-K/18)$. Taking this conditioning into account and using a union bound argument leads to the fact that the bound
\begin{equation*}
    \wh{m}_{\alpha} \geq  (1 - \kappa)m_{\alpha}
\end{equation*}
holds with probability at least $1 - 2\exp(-K/18)$.
\end{proof}

\subsection{Proof of Theorem~\ref{thm:not-strgly-cvx}}

This proof is inspired from Theorem 5 in~\cite{nesterov2012efficiency} and Theorem 1 in~\cite{shalev2011stochastic} while keeping track of the degradations caused by the errors on the gradient coordinates.

We condition on the event~(\ref{eq:approx}) and denote $\epsilon_j = \epsilon_j(\delta)$ and $\epsilon_{Euc} = \|\epsilon(\delta)\|_2$. 
We define for all $\theta \in\Theta$ \begin{align*}u_j(\theta) &= \argmin_{\vartheta \in \Theta_j} \wh g_j(\theta)(\vartheta - \theta_j) + \frac{L_j}{2} (\vartheta - \theta_j)^2 + \epsilon_j|\vartheta - \theta_j|\\
    &= \proj_{\Theta_j} \big(\theta_j - \beta_j \tau_{\epsilon_{j}}\big(\wh g_j(\theta)\big)\big)
\end{align*}
and denote $\thetat$ the optimization iterates for $t=0,\dots, T$ and $j_t$ the random coordinate sampled at step $t$ and let $\wh g_t = \wh g_{j_t}(\thetat)$ for brevity. We have that $u_{j_t}(\thetat)$ satisfies the following optimality condition
\begin{equation*}
    \forall \vartheta \in \Theta_{j_t} \quad \big( \wh g_t + L_{j_t}\big(u_{j_t}(\thetat) - \thetat_{j_t}\big) + \epsilon_{j_t} \rho_t \big) \big( \vartheta - u_{j_t}(\thetat) \big) \geq 0,
\end{equation*}
where $\rho_t = \sign \big(u_{j_t}(\thetat) - \thetat_{j_t}\big)$. 
Using this condition for $\vartheta = \thetat_{j_t}$ and the coordinate-wise Lipschitz smoothness property of $R$ we find
\begin{align}
    R(\theta^{(t+1)}) &\leq R(\thetat) + g_{j_t}(\thetat)\big(u_{j_t}(\thetat) - \thetat_{j_t}\big) + \frac{L_{j_t}}{2}\big(u_{j_t}(\thetat) - \thetat_{j_t}\big)^2 \nonumber \\
    &\leq R(\thetat) + (\wh g_t + \epsilon_{j_t}\rho_t)\big(u_{j_t}(\thetat) - \thetat_{j_t}\big) + \frac{L_{j_t}}{2}\big(u_{j_t}(\thetat) - \thetat_{j_t}\big)^2 \label{eq:descent_var_thm3_1} \\
    &\leq R(\thetat) - \frac{L_{j_t}}{2}\big(u_{j_t}(\thetat) - \thetat_{j_t}\big)^2. \label{eq:descent_var_thm3}
\end{align}
Defining the potential $\Phi(\theta) = \sum_{j=1}^d L_j(\theta_j - \theta^\star_j)^2$, we have:
\begin{align*}
    \Phi(\theta^{(t+1)}) &= \Phi(\thetat) + 2L_{j_t}\big(u_{j_t}(\thetat) - \thetat_{j_t}\big)\big(\thetat_{j_t} - \theta^\star_{j_t}\big) + L_{j_t} \big(u_{j_t}(\thetat) - \thetat_{j_t}\big)^2 \\
    &= \Phi(\thetat) + 2L_{j_t}\big(u_{j_t}(\thetat) - \thetat_{j_t}\big)\big(u_{j_t}(\thetat) - \theta^\star_{j_t}\big) - L_{j_t} \big(u_{j_t}(\thetat) - \thetat_{j_t}\big)^2 \\
    &\leq \Phi(\thetat) - 2(\wh g_t + \epsilon_{j_t} \rho_t)\big(u_{j_t}(\thetat) - \theta^\star_{j_t}\big) - L_{j_t} \big(u_{j_t}(\thetat) - \thetat_{j_t}\big)^2 \\
    &= \Phi(\thetat) + 2(\wh g_t + \epsilon_{j_t} \rho_t)\big(\theta^\star_{j_t} - \thetat_{j_t}\big) - 2 \Big( (\wh g_t + \epsilon_{j_t} \rho_t)\big(u_{j_t}(\thetat)- \thetat_{j_t}\big) \\ & \quad +  \frac{L_{j_t}}{2} \big(u_{j_t}(\thetat) - \thetat_{j_t}\big)^2 \Big) \\
    &\leq \Phi(\thetat) + 2(\wh g_t + \epsilon_{j_t} \rho_t)\big(\theta^\star_{j_t} - \thetat_{j_t}\big) + 2 \big( R(\thetat) - R(\theta^{(t+1)}) \big) \\
    &\leq \Phi(\thetat) + 2g_{j_t}(\thetat) \big(\theta^\star_{j_t} - \thetat_{j_t}\big) + 2 \big( R(\thetat) - R(\theta^{(t+1)}) \big) + 4\epsilon_{j_t} \big|\theta^\star_{j_t} - \thetat_{j_t}\big|,
\end{align*}
where the first inequality uses the optimality condition with $\vartheta = \theta^\star_{j_t}$ and the second one uses~\eqref{eq:descent_var_thm3_1}. 
Now, defining $\Psi(\theta) = \frac 12 \Phi(\theta) + R(\theta)$, taking the expectation w.r.t. $j_t$ and using the convexity of $R$ and a Cauchy-Schwarz inequality, we find 
\begin{align*}
\E \big[\Psi(\thetat) - \Psi(\theta^{(t+1)})\big] \geq \frac{1}{d} \big(R(\thetat) - R(\theta^\star) - 2 \epsilon_{Euc} \big\|\thetat - \theta^\star\big\|_2\big).
\end{align*}
Recall that according to~\eqref{eq:descent_var_thm3}, we have $R (\theta^{(t+1)}) \leq R (\theta^{(t)})$, summing over $t = 0,\dots, T$ we find:
\begin{align*}
    \E\Big[\frac{T+1}{d}\big(R (\theta^{(T)}) - R (\theta^\star)\big)\Big] &\leq \E\Big[\frac{1}{d}\sum_{t=0}^T\big(R (\theta^{(t)}) - R (\theta^\star)\big)\Big] \\
    &\leq \sum_{t=0}^T \Big(\E\big[\Psi(\theta^{(t)}) - \Psi(\theta^{(t+1)})\big] + \frac{2\epsilon_{Euc}}{d}\E\big[\big\|\theta^{(t)} - \theta^\star\big\|_2\big]\Big)\\
    &=  \E\big[\Psi(\theta^{(0)}) - \Psi(\theta^{(t+1)})\big] + \frac{2\epsilon_{Euc}}{d}\sum_{t=0}^T\E\big[\big\|\theta^{(t)} - \theta^\star\big\|_2\big] \\
    &\leq  \Psi(\theta^{(0)}) + \frac{2\epsilon_{Euc}}{d}\sum_{t=0}^T\E\big[\big\|\theta^{(t)} - \theta^\star\big\|_2\big],
\end{align*}
which yields the result after multiplying by $\frac{d}{T+1}$. To finish, we show that conditionally on any choice of $j_t$ we have $\|\theta^{(t+1)} - \theta^\star\|_2 \leq \|\theta^{(t)} - \theta^\star\|_2.$ Indeed a straightforward computation yields 
\begin{equation*}
    \big\|\theta^{(t+1)} - \theta^\star\big\|_2^2 = \big\|\theta^{(t)} - \theta^\star\big\|_2^2 + \big(u_{j_t}(\thetat) - \thetat_{j_t}\big)^2 +2 \big(u_{j_t}(\thetat) - \thetat_{j_t}\big) \big(\theta^{(t)}_{j_t} - \theta^\star_{j_t}\big).
\end{equation*}
We need to show that $\delta_t^2  \leq -2 \delta_t \big(\theta^{(t)}_{j_t} - \theta^\star_{j_t}\big)$ with $\delta_t = (u_{j_t}(\thetat) - \thetat_{j_t})$. Notice that $\delta_t$ always has the opposite sign of $g_{j_t}(\theta^{(t)})$ (thanks to the thresholding) so by convexity of $R$ along the coordinate $j_t$ we have $\delta_t \big(\theta^{(t)}_{j_t} - \theta^\star_{j_t}\big) \leq 0$ and so it is down to showing $|\delta_t| \leq 2\big|\theta^{(t)}_{j_t} - \theta^\star_{j_t}\big|$ which can be seen from 
\begin{equation*}
    |\delta_t| \leq \frac{\big|g_{j_t}(\theta^{(t)})\big|}{L_{j_t}} = \frac{\big|g_{j_t}(\theta^{(t)}) - g_{j_t}(\theta^\star)\big|}{L_{j_t}} \leq \big|\theta^{(t)}_{j_t} - \theta^\star_{j_t}\big|,
\end{equation*}
which concludes the proof of Theorem~\ref{thm:not-strgly-cvx}.


\begin{thebibliography}{}

\bibitem[\protect\citeauthoryear{Alon, Matias, and Szegedy}{Alon
  et~al.}{1999}]{alon1999space}
Alon, N., Y.~Matias, and M.~Szegedy (1999).
\newblock The space complexity of approximating the frequency moments.
\newblock {\em Journal of Computer and system sciences\/}~{\em 58\/}(1),
  137--147.

\bibitem[\protect\citeauthoryear{Armijo}{Armijo}{1966}]{pjm/1102995080}
Armijo, L. (1966).
\newblock {Minimization of functions having Lipschitz continuous first partial
  derivatives.}
\newblock {\em Pacific Journal of Mathematics\/}~{\em 16\/}(1), 1 -- 3.

\bibitem[\protect\citeauthoryear{Audibert, Munos, and Szepesvári}{Audibert
  et~al.}{2009}]{AUDIBERT20091876}
Audibert, J.-Y., R.~Munos, and C.~Szepesvári (2009).
\newblock Exploration–exploitation tradeoff using variance estimates in
  multi-armed bandits.
\newblock {\em Theoretical Computer Science\/}~{\em 410\/}(19), 1876--1902.
\newblock Algorithmic Learning Theory.

\bibitem[\protect\citeauthoryear{Ballester-Ripoll, Paredes, and
  Pajarola}{Ballester-Ripoll et~al.}{2019}]{ballester2019sobol}
Ballester-Ripoll, R., E.~G. Paredes, and R.~Pajarola (2019).
\newblock Sobol tensor trains for global sensitivity analysis.
\newblock {\em Reliability Engineering \& System Safety\/}~{\em 183}, 311--322.

\bibitem[\protect\citeauthoryear{Bartlett, Bousquet, and Mendelson}{Bartlett
  et~al.}{2005}]{bartlett2005local}
Bartlett, P.~L., O.~Bousquet, and S.~Mendelson (2005).
\newblock Local rademacher complexities.
\newblock {\em The Annals of Statistics\/}~{\em 33\/}(4), 1497--1537.

\bibitem[\protect\citeauthoryear{Beck and Tetruashvili}{Beck and
  Tetruashvili}{2013}]{doi:10.1137/120887679}
Beck, A. and L.~Tetruashvili (2013).
\newblock On the convergence of block coordinate descent type methods.
\newblock {\em SIAM Journal on Optimization\/}~{\em 23\/}(4), 2037--2060.

\bibitem[\protect\citeauthoryear{Bhatia, Jain, Kamalaruban, and Kar}{Bhatia
  et~al.}{2017}]{bhatia2017consistent}
Bhatia, K., P.~Jain, P.~Kamalaruban, and P.~Kar (2017).
\newblock Consistent robust regression.
\newblock In {\em NIPS}, pp.\  2110--2119.

\bibitem[\protect\citeauthoryear{Blondel, Seki, and Uehara}{Blondel
  et~al.}{2013}]{blondel2013block}
Blondel, M., K.~Seki, and K.~Uehara (2013).
\newblock Block coordinate descent algorithms for large-scale sparse multiclass
  classification.
\newblock {\em Machine learning\/}~{\em 93\/}(1), 31--52.

\bibitem[\protect\citeauthoryear{Boucheron, Lugosi, Massart, and
  Ledoux}{Boucheron et~al.}{2013}]{boucheron2013concentration}
Boucheron, S., G.~Lugosi, P.~Massart, and M.~Ledoux (2013).
\newblock {\em Concentration Inequalities: A Nonasymptotic Theory of
  Independence}.
\newblock Oxford: Oxford University Press.

\bibitem[\protect\citeauthoryear{Brownlees, Joly, Lugosi, et~al.}{Brownlees
  et~al.}{2015}]{brownlees2015empirical}
Brownlees, C., E.~Joly, G.~Lugosi, et~al. (2015).
\newblock Empirical risk minimization for heavy-tailed losses.
\newblock {\em Annals of Statistics\/}~{\em 43\/}(6), 2507--2536.

\bibitem[\protect\citeauthoryear{Bubeck}{Bubeck}{2015}]{bubeck2015convex}
Bubeck, S. (2015).
\newblock Convex optimization: Algorithms and complexity.
\newblock {\em Foundations and Trends{\textregistered} in Machine
  Learning\/}~{\em 8\/}(3-4), 231--357.

\bibitem[\protect\citeauthoryear{Bubeck, Cesa-Bianchi, and Lugosi}{Bubeck
  et~al.}{2013}]{bubeck2013bandits}
Bubeck, S., N.~Cesa-Bianchi, and G.~Lugosi (2013).
\newblock Bandits with heavy tail.
\newblock {\em IEEE Transactions on Information Theory\/}~{\em 59\/}(11),
  7711--7717.

\bibitem[\protect\citeauthoryear{Candanedo and Feldheim}{Candanedo and
  Feldheim}{2016}]{candanedo2016accurate}
Candanedo, L.~M. and V.~Feldheim (2016).
\newblock Accurate occupancy detection of an office room from light,
  temperature, humidity and co2 measurements using statistical learning models.
\newblock {\em Energy and Buildings\/}~{\em 112}, 28--39.

\bibitem[\protect\citeauthoryear{Candanedo, Feldheim, and Deramaix}{Candanedo
  et~al.}{2017}]{CANDANEDO201781}
Candanedo, L.~M., V.~Feldheim, and D.~Deramaix (2017).
\newblock Data driven prediction models of energy use of appliances in a
  low-energy house.
\newblock {\em Energy and Buildings\/}~{\em 140}, 81--97.

\bibitem[\protect\citeauthoryear{Cand{\`e}s, Li, Ma, and Wright}{Cand{\`e}s
  et~al.}{2011}]{candes2011robust}
Cand{\`e}s, E.~J., X.~Li, Y.~Ma, and J.~Wright (2011).
\newblock Robust principal component analysis?
\newblock {\em Journal of the ACM (JACM)\/}~{\em 58\/}(3), 1--37.

\bibitem[\protect\citeauthoryear{Catoni}{Catoni}{2012}]{catoni2012challenging}
Catoni, O. (2012).
\newblock Challenging the empirical mean and empirical variance: a deviation
  study.
\newblock In {\em Annales de l'Institut Henri Poincar{\'e}, Probabilit{\'e}s et
  Statistiques}, Volume~48, pp.\  1148--1185. Institut Henri Poincar{\'e}.

\bibitem[\protect\citeauthoryear{Charikar, Steinhardt, and Valiant}{Charikar
  et~al.}{2017}]{charikar2017learning}
Charikar, M., J.~Steinhardt, and G.~Valiant (2017).
\newblock Learning from untrusted data.
\newblock In {\em Proceedings of the 49th Annual ACM SIGACT Symposium on Theory
  of Computing}, pp.\  47--60.

\bibitem[\protect\citeauthoryear{Chen, Gao, and Ren}{Chen
  et~al.}{2018}]{chen2017robust}
Chen, M., C.~Gao, and Z.~Ren (2018).
\newblock Robust covariance and scatter matrix estimation under huber’s
  contamination model.
\newblock {\em The Annals of Statistics\/}~{\em 46\/}(5), 1932--1960.

\bibitem[\protect\citeauthoryear{Chen, Jin, Li, and Xu}{Chen
  et~al.}{2021}]{chen2021generalized}
Chen, P., X.~Jin, X.~Li, and L.~Xu (2021).
\newblock A generalized {C}atoni’s {M}-estimator under finite $\alpha$-th
  moment assumption with $\alpha\in(1, 2)$.
\newblock {\em Electronic Journal of Statistics\/}~{\em 15\/}(2), 5523--5544.

\bibitem[\protect\citeauthoryear{Chen, Su, and Xu}{Chen
  et~al.}{2017}]{chen2017distributed}
Chen, Y., L.~Su, and J.~Xu (2017).
\newblock Distributed statistical machine learning in adversarial settings:
  Byzantine gradient descent.
\newblock {\em Proceedings of the ACM on Measurement and Analysis of Computing
  Systems\/}~{\em 1\/}(2), 1--25.

\bibitem[\protect\citeauthoryear{Cherapanamjeri, Aras, Tripuraneni, Jordan,
  Flammarion, and Bartlett}{Cherapanamjeri
  et~al.}{2020}]{cherapanamjeri2020optimal}
Cherapanamjeri, Y., E.~Aras, N.~Tripuraneni, M.~I. Jordan, N.~Flammarion, and
  P.~L. Bartlett (2020).
\newblock Optimal robust linear regression in nearly linear time.
\newblock {\em arXiv preprint arXiv:2007.08137\/}.

\bibitem[\protect\citeauthoryear{Cherapanamjeri, Flammarion, and
  Bartlett}{Cherapanamjeri et~al.}{2019}]{pmlr-v99-cherapanamjeri19b}
Cherapanamjeri, Y., N.~Flammarion, and P.~L. Bartlett (2019).
\newblock Fast mean estimation with sub-gaussian rates.
\newblock In {\em Conference on Learning Theory}, pp.\  786--806. PMLR.

\bibitem[\protect\citeauthoryear{Cormen, Leiserson, Rivest, and Stein}{Cormen
  et~al.}{2009}]{cormen2009introduction}
Cormen, T.~H., C.~E. Leiserson, R.~L. Rivest, and C.~Stein (2009).
\newblock {\em Introduction to algorithms}.
\newblock MIT press.

\bibitem[\protect\citeauthoryear{Depersin and Lecu{\'e}}{Depersin and
  Lecu{\'e}}{2019}]{Depersin2019RobustSE}
Depersin, J. and G.~Lecu{\'e} (2019).
\newblock Robust subgaussian estimation of a mean vector in nearly linear time.
\newblock {\em arXiv preprint arXiv:1906.03058\/}.

\bibitem[\protect\citeauthoryear{Devroye and Gy{\"o}rfi}{Devroye and
  Gy{\"o}rfi}{1985}]{Devroye1987NonparametricDE}
Devroye, L. and L.~Gy{\"o}rfi (1985).
\newblock {\em Nonparametric Density Estimation: The L1 View}.
\newblock Wiley Interscience Series in Discrete Mathematics. Wiley.

\bibitem[\protect\citeauthoryear{Devroye, Lerasle, Lugosi, and
  Oliveira}{Devroye et~al.}{2016}]{devroye2016sub}
Devroye, L., M.~Lerasle, G.~Lugosi, and R.~I. Oliveira (2016).
\newblock Sub-gaussian mean estimators.
\newblock {\em The Annals of Statistics\/}~{\em 44\/}(6), 2695--2725.

\bibitem[\protect\citeauthoryear{Diakonikolas, Kamath, Kane, Li, Moitra, and
  Stewart}{Diakonikolas et~al.}{019a}]{diakonikolas2019robust}
Diakonikolas, I., G.~Kamath, D.~Kane, J.~Li, A.~Moitra, and A.~Stewart
  ({2019a}).
\newblock Robust estimators in high-dimensions without the computational
  intractability.
\newblock {\em SIAM Journal on Computing\/}~{\em 48\/}(2), 742--864.

\bibitem[\protect\citeauthoryear{Diakonikolas, Kamath, Kane, Li, Steinhardt,
  and Stewart}{Diakonikolas et~al.}{019b}]{diakonikolas2019sever}
Diakonikolas, I., G.~Kamath, D.~Kane, J.~Li, J.~Steinhardt, and A.~Stewart
  ({2019b}).
\newblock Sever: A robust meta-algorithm for stochastic optimization.
\newblock In {\em International Conference on Machine Learning}, pp.\
  1596--1606. PMLR.

\bibitem[\protect\citeauthoryear{Diakonikolas, Kong, and Stewart}{Diakonikolas
  et~al.}{2019}]{diakonikolas2019efficient}
Diakonikolas, I., W.~Kong, and A.~Stewart (2019).
\newblock Efficient algorithms and lower bounds for robust linear regression.
\newblock In {\em Proceedings of the Thirtieth Annual ACM-SIAM Symposium on
  Discrete Algorithms}, pp.\  2745--2754. SIAM.

\bibitem[\protect\citeauthoryear{Dixon}{Dixon}{1950}]{dixon1950analysis}
Dixon, W.~J. (1950).
\newblock Analysis of extreme values.
\newblock {\em The Annals of Mathematical Statistics\/}~{\em 21\/}(4),
  488--506.

\bibitem[\protect\citeauthoryear{Donoho and Liu}{Donoho and
  Liu}{1988}]{10.1214/aos/1176350820}
Donoho, D.~L. and R.~C. Liu (1988).
\newblock {The ``Automatic'' Robustness of Minimum Distance Functionals}.
\newblock {\em The Annals of Statistics\/}~{\em 16\/}(2), 552 -- 586.

\bibitem[\protect\citeauthoryear{Dua and Graff}{Dua and Graff}{2017}]{Dua:2019}
Dua, D. and C.~Graff (2017).
\newblock {UCI} machine learning repository.

\bibitem[\protect\citeauthoryear{Edgeworth}{Edgeworth}{1887}]{10.2307/23036355}
Edgeworth, F.~Y. (1887).
\newblock On observations relating to several quantities.
\newblock {\em Hermathena\/}~{\em 6\/}(13), 279--285.

\bibitem[\protect\citeauthoryear{Fanaee-T and Gama}{Fanaee-T and
  Gama}{2014}]{fanaee2014event}
Fanaee-T, H. and J.~Gama (2014).
\newblock Event labeling combining ensemble detectors and background knowledge.
\newblock {\em Progress in Artificial Intelligence\/}~{\em 2\/}(2), 113--127.

\bibitem[\protect\citeauthoryear{Fischler and Bolles}{Fischler and
  Bolles}{1981}]{10.1145/358669.358692}
Fischler, M.~A. and R.~C. Bolles (1981, jun).
\newblock Random sample consensus: A paradigm for model fitting with
  applications to image analysis and automated cartography.
\newblock {\em Commun. ACM\/}~{\em 24\/}(6), 381–395.

\bibitem[\protect\citeauthoryear{Gao et~al.}{Gao et~al.}{2020}]{gao2020robust}
Gao, C. et~al. (2020).
\newblock Robust regression via mutivariate regression depth.
\newblock {\em Bernoulli\/}~{\em 26\/}(2), 1139--1170.

\bibitem[\protect\citeauthoryear{Geer and van~de Geer}{Geer and van~de
  Geer}{2000}]{geer2000empirical}
Geer, S.~A. and S.~van~de Geer (2000).
\newblock {\em Empirical Processes in M-estimation}, Volume~6.
\newblock Cambridge university press.

\bibitem[\protect\citeauthoryear{Genkin, Lewis, and Madigan}{Genkin
  et~al.}{2007}]{genkin2007large}
Genkin, A., D.~D. Lewis, and D.~Madigan (2007).
\newblock Large-scale bayesian logistic regression for text categorization.
\newblock {\em technometrics\/}~{\em 49\/}(3), 291--304.

\bibitem[\protect\citeauthoryear{Geoffrey, Guillaume, and Matthieu}{Geoffrey
  et~al.}{2020}]{geoffrey2020robust}
Geoffrey, C., L.~Guillaume, and L.~Matthieu (2020).
\newblock Robust high dimensional learning for {L}ipschitz and convex losses.
\newblock {\em Journal of Machine Learning Research\/}~{\em 21}.

\bibitem[\protect\citeauthoryear{Grubbs}{Grubbs}{1969}]{grubbs1969procedures}
Grubbs, F.~E. (1969).
\newblock Procedures for detecting outlying observations in samples.
\newblock {\em Technometrics\/}~{\em 11\/}(1), 1--21.

\bibitem[\protect\citeauthoryear{Gupta and Kohli}{Gupta and
  Kohli}{2016}]{Gupta2016}
Gupta, A. and S.~Kohli (2016, Jun).
\newblock An {MCDM} approach towards handling outliers in web data: a case
  study using {OWA} operators.
\newblock {\em Artificial Intelligence Review\/}~{\em 46\/}(1), 59--82.

\bibitem[\protect\citeauthoryear{Hampel}{Hampel}{1971}]{hampel1971}
Hampel, F.~R. (1971).
\newblock {A General Qualitative Definition of Robustness}.
\newblock {\em The Annals of Mathematical Statistics\/}~{\em 42\/}(6), 1887 --
  1896.

\bibitem[\protect\citeauthoryear{Hampel, Ronchetti, Rousseeuw, and
  Stahel}{Hampel et~al.}{2011}]{hampel2011robust}
Hampel, F.~R., E.~M. Ronchetti, P.~J. Rousseeuw, and W.~A. Stahel (2011).
\newblock {\em Robust statistics: the approach based on influence functions},
  Volume 196.
\newblock John Wiley \& Sons.

\bibitem[\protect\citeauthoryear{Hawkins}{Hawkins}{1980}]{hawkins1980identification}
Hawkins, D.~M. (1980).
\newblock {\em Identification of outliers}, Volume~11.
\newblock Springer.

\bibitem[\protect\citeauthoryear{Hoare}{Hoare}{1961}]{10.1145/366622.366647}
Hoare, C. A.~R. (1961, jul).
\newblock Algorithm 65: Find.
\newblock {\em Commun. ACM\/}~{\em 4\/}(7), 321–322.

\bibitem[\protect\citeauthoryear{Holland}{Holland}{2021}]{pmlr-v130-holland21a}
Holland, M. (2021).
\newblock Robustness and scalability under heavy tails, without strong
  convexity.
\newblock In {\em International Conference on Artificial Intelligence and
  Statistics}, pp.\  865--873. PMLR.

\bibitem[\protect\citeauthoryear{Holland and Ikeda}{Holland and
  Ikeda}{2019a}]{pmlr-v97-holland19a}
Holland, M. and K.~Ikeda (2019a).
\newblock Better generalization with less data using robust gradient descent.
\newblock In {\em International Conference on Machine Learning}, pp.\
  2761--2770. PMLR.

\bibitem[\protect\citeauthoryear{Holland}{Holland}{2019}]{holland2019robust}
Holland, M.~J. (2019).
\newblock Robust descent using smoothed multiplicative noise.
\newblock In {\em The 22nd International Conference on Artificial Intelligence
  and Statistics}, pp.\  703--711. PMLR.

\bibitem[\protect\citeauthoryear{Holland and Ikeda}{Holland and
  Ikeda}{2019b}]{holland2019efficient}
Holland, M.~J. and K.~Ikeda (2019b).
\newblock Efficient learning with robust gradient descent.
\newblock {\em Machine Learning\/}~{\em 108\/}(8), 1523--1560.

\bibitem[\protect\citeauthoryear{Hopkins}{Hopkins}{2018}]{Hopkins2018MeanEW}
Hopkins, S.~B. (2018).
\newblock Mean estimation with sub-{G}aussian rates in polynomial time.
\newblock {\em arXiv: Statistics Theory\/}.

\bibitem[\protect\citeauthoryear{Hsu and Sabato}{Hsu and
  Sabato}{2016}]{hsu2016loss}
Hsu, D. and S.~Sabato (2016).
\newblock Loss minimization and parameter estimation with heavy tails.
\newblock {\em The Journal of Machine Learning Research\/}~{\em 17\/}(1),
  543--582.

\bibitem[\protect\citeauthoryear{Huber}{Huber}{1964}]{huber1964robust}
Huber, P.~J. (1964).
\newblock Robust estimation of a location parameter.
\newblock {\em The Annals of Mathematical Statistics\/}~{\em 35\/}(1), 73--101.

\bibitem[\protect\citeauthoryear{Huber}{Huber}{1972}]{huber19721972}
Huber, P.~J. (1972).
\newblock The 1972 wald lecture robust statistics: A review.
\newblock {\em The Annals of Mathematical Statistics\/}~{\em 43\/}(4),
  1041--1067.

\bibitem[\protect\citeauthoryear{Huber}{Huber}{1981}]{huber1981wiley}
Huber, P.~J. (1981).
\newblock Wiley series in probability and mathematics statistics.
\newblock {\em Robust statistics\/}, 309--312.

\bibitem[\protect\citeauthoryear{Huber}{Huber}{2004}]{huber2004robust}
Huber, P.~J. (2004).
\newblock {\em Robust statistics}, Volume 523.
\newblock John Wiley \& Sons.

\bibitem[\protect\citeauthoryear{Jerrum, Valiant, and Vazirani}{Jerrum
  et~al.}{1986}]{JERRUM1986169}
Jerrum, M.~R., L.~G. Valiant, and V.~V. Vazirani (1986).
\newblock Random generation of combinatorial structures from a uniform
  distribution.
\newblock {\em Theoretical Computer Science\/}~{\em 43}, 169--188.

\bibitem[\protect\citeauthoryear{Juditsky, Kulunchakov, and Tsyntseus}{Juditsky
  et~al.}{2020}]{Juditsky2020SparseRB}
Juditsky, A., A.~Kulunchakov, and H.~Tsyntseus (2020).
\newblock Sparse recovery by reduced variance stochastic approximation.
\newblock {\em arXiv preprint arXiv:2006.06365\/}.

\bibitem[\protect\citeauthoryear{Klivans, Kothari, and Meka}{Klivans
  et~al.}{2018}]{klivans2018efficient}
Klivans, A., P.~K. Kothari, and R.~Meka (2018).
\newblock Efficient algorithms for outlier-robust regression.
\newblock In {\em Conference On Learning Theory}, pp.\  1420--1430. PMLR.

\bibitem[\protect\citeauthoryear{Klivans, Long, and Servedio}{Klivans
  et~al.}{2009}]{klivans2009learning}
Klivans, A.~R., P.~M. Long, and R.~A. Servedio (2009).
\newblock Learning halfspaces with malicious noise.
\newblock {\em Journal of Machine Learning Research\/}~{\em 10\/}(12).

\bibitem[\protect\citeauthoryear{Knuth}{Knuth}{1997}]{knuth1997seminumerical}
Knuth, D.~E. (1997).
\newblock Seminumerical algorithms.
\newblock {\em The art of computer programming\/}~{\em 2}.

\bibitem[\protect\citeauthoryear{Koklu and Ozkan}{Koklu and
  Ozkan}{2020}]{KOKLU2020105507}
Koklu, M. and I.~A. Ozkan (2020).
\newblock Multiclass classification of dry beans using computer vision and
  machine learning techniques.
\newblock {\em Computers and Electronics in Agriculture\/}~{\em 174}, 105507.

\bibitem[\protect\citeauthoryear{Koltchinskii}{Koltchinskii}{2006}]{koltchinskii2006local}
Koltchinskii, V. (2006).
\newblock Local {R}ademacher complexities and oracle inequalities in risk
  minimization.
\newblock {\em The Annals of Statistics\/}~{\em 34\/}(6), 2593--2656.

\bibitem[\protect\citeauthoryear{Kuhn and Johnson}{Kuhn and
  Johnson}{2019}]{kuhn2019feature}
Kuhn, M. and K.~Johnson (2019).
\newblock {\em Feature engineering and selection: {A} practical approach for
  predictive models}.
\newblock CRC Press.

\bibitem[\protect\citeauthoryear{Lai, Rao, and Vempala}{Lai
  et~al.}{2016}]{lai2016agnostic}
Lai, K.~A., A.~B. Rao, and S.~Vempala (2016).
\newblock Agnostic estimation of mean and covariance.
\newblock In {\em 2016 IEEE 57th Annual Symposium on Foundations of Computer
  Science (FOCS)}, pp.\  665--674. IEEE.

\bibitem[\protect\citeauthoryear{Lecu{\'e}, Lerasle, et~al.}{Lecu{\'e}
  et~al.}{2020}]{lecue2020robust2}
Lecu{\'e}, G., M.~Lerasle, et~al. (2020).
\newblock Robust machine learning by median-of-means: theory and practice.
\newblock {\em Annals of Statistics\/}~{\em 48\/}(2), 906--931.

\bibitem[\protect\citeauthoryear{Lecu{\'e}, Lerasle, and Mathieu}{Lecu{\'e}
  et~al.}{2020}]{lecue2020robust1}
Lecu{\'e}, G., M.~Lerasle, and T.~Mathieu (2020).
\newblock Robust classification via mom minimization.
\newblock {\em Machine Learning\/}~{\em 109\/}(8), 1635--1665.

\bibitem[\protect\citeauthoryear{Lecu{\'e} and Mendelson}{Lecu{\'e} and
  Mendelson}{2013}]{lecue2013learning}
Lecu{\'e}, G. and S.~Mendelson (2013).
\newblock {Learning subgaussian classes: Upper and minimax bounds}.
\newblock {\em arXiv preprint arXiv:1305.4825\/}.

\bibitem[\protect\citeauthoryear{Ledoux and Talagrand}{Ledoux and
  Talagrand}{1991}]{ledoux1991probability}
Ledoux, M. and M.~Talagrand (1991).
\newblock {\em Probability in Banach Spaces: isoperimetry and processes},
  Volume~23.
\newblock Springer Science \& Business Media.

\bibitem[\protect\citeauthoryear{Lei, Luh, Venkat, and Zhang}{Lei
  et~al.}{2020}]{lei2020fast}
Lei, Z., K.~Luh, P.~Venkat, and F.~Zhang (2020).
\newblock A fast spectral algorithm for mean estimation with sub-gaussian
  rates.
\newblock In {\em Conference on Learning Theory}, pp.\  2598--2612. PMLR.

\bibitem[\protect\citeauthoryear{Li}{Li}{2017}]{li2017robust}
Li, J. (2017).
\newblock Robust sparse estimation tasks in high dimensions.
\newblock {\em arXiv preprint arXiv:1702.05860\/}.

\bibitem[\protect\citeauthoryear{Li, Zhao, Arora, Liu, and Hong}{Li
  et~al.}{2017}]{li2017faster}
Li, X., T.~Zhao, R.~Arora, H.~Liu, and M.~Hong (2017).
\newblock On faster convergence of cyclic block coordinate descent-type methods
  for strongly convex minimization.
\newblock {\em The Journal of Machine Learning Research\/}~{\em 18\/}(1),
  6741--6764.

\bibitem[\protect\citeauthoryear{Liu, Li, and Caramanis}{Liu
  et~al.}{2019}]{DBLP:journals/corr/abs-1901-08237}
Liu, L., T.~Li, and C.~Caramanis (2019).
\newblock High dimensional robust estimation of sparse models via trimmed hard
  thresholding.
\newblock {\em CoRR\/}~{\em abs/1901.08237}.

\bibitem[\protect\citeauthoryear{Liu, Shen, Li, and Caramanis}{Liu
  et~al.}{2020}]{liu2020high}
Liu, L., Y.~Shen, T.~Li, and C.~Caramanis (2020).
\newblock High dimensional robust sparse regression.
\newblock In {\em International Conference on Artificial Intelligence and
  Statistics}, pp.\  411--421. PMLR.

\bibitem[\protect\citeauthoryear{Liu and Tao}{Liu and
  Tao}{2015}]{liu2015classification}
Liu, T. and D.~Tao (2015).
\newblock Classification with noisy labels by importance reweighting.
\newblock {\em IEEE Transactions on pattern analysis and machine
  intelligence\/}~{\em 38\/}(3), 447--461.

\bibitem[\protect\citeauthoryear{Lugosi and Mendelson}{Lugosi and
  Mendelson}{2019a}]{lugosi2019mean}
Lugosi, G. and S.~Mendelson (2019a).
\newblock Mean estimation and regression under heavy-tailed distributions--a
  survey.

\bibitem[\protect\citeauthoryear{Lugosi and Mendelson}{Lugosi and
  Mendelson}{2019b}]{lugosi2019sub}
Lugosi, G. and S.~Mendelson (2019b).
\newblock Sub-gaussian estimators of the mean of a random vector.
\newblock {\em Annals of Statistics\/}~{\em 47\/}(2), 783--794.

\bibitem[\protect\citeauthoryear{Lugosi and Mendelson}{Lugosi and
  Mendelson}{2021}]{lugosi2021robust}
Lugosi, G. and S.~Mendelson (2021).
\newblock Robust multivariate mean estimation: the optimality of trimmed mean.
\newblock {\em The Annals of Statistics\/}~{\em 49\/}(1), 393--410.

\bibitem[\protect\citeauthoryear{Massart and N{\'e}d{\'e}lec}{Massart and
  N{\'e}d{\'e}lec}{2006}]{massart2006risk}
Massart, P. and {\'E}.~N{\'e}d{\'e}lec (2006).
\newblock Risk bounds for statistical learning.
\newblock {\em The Annals of Statistics\/}~{\em 34\/}(5), 2326--2366.

\bibitem[\protect\citeauthoryear{Maurer and Pontil}{Maurer and
  Pontil}{2009}]{Maurer2009EmpiricalBB}
Maurer, A. and M.~Pontil (2009).
\newblock Empirical bernstein bounds and sample-variance penalization.
\newblock In {\em COLT}.

\bibitem[\protect\citeauthoryear{Minsker et~al.}{Minsker
  et~al.}{2015}]{minsker2015geometric}
Minsker, S. et~al. (2015).
\newblock Geometric median and robust estimation in banach spaces.
\newblock {\em Bernoulli\/}~{\em 21\/}(4), 2308--2335.

\bibitem[\protect\citeauthoryear{Minsker et~al.}{Minsker
  et~al.}{2018}]{minsker2018sub}
Minsker, S. et~al. (2018).
\newblock {Sub-Gaussian estimators of the mean of a random matrix with
  heavy-tailed entries}.
\newblock {\em Annals of Statistics\/}~{\em 46\/}(6A), 2871--2903.

\bibitem[\protect\citeauthoryear{Mizera et~al.}{Mizera
  et~al.}{2002}]{mizera2002depth}
Mizera, I. et~al. (2002).
\newblock On depth and deep points: a calculus.
\newblock {\em The Annals of Statistics\/}~{\em 30\/}(6), 1681--1736.

\bibitem[\protect\citeauthoryear{Mnih, Szepesv{\'a}ri, and Audibert}{Mnih
  et~al.}{2008}]{mnih2008empirical}
Mnih, V., C.~Szepesv{\'a}ri, and J.-Y. Audibert (2008).
\newblock Empirical bernstein stopping.
\newblock In {\em Proceedings of the 25th international conference on Machine
  learning}, pp.\  672--679.

\bibitem[\protect\citeauthoryear{Nemirovskij and Yudin}{Nemirovskij and
  Yudin}{1983}]{nemirovskij1983problem}
Nemirovskij, A.~S. and D.~B. Yudin (1983).
\newblock Problem complexity and method efficiency in optimization.

\bibitem[\protect\citeauthoryear{Nesterov}{Nesterov}{2012}]{nesterov2012efficiency}
Nesterov, Y. (2012).
\newblock Efficiency of coordinate descent methods on huge-scale optimization
  problems.
\newblock {\em SIAM Journal on Optimization\/}~{\em 22\/}(2), 341--362.

\bibitem[\protect\citeauthoryear{Nesterov}{Nesterov}{2014}]{Nesterov}
Nesterov, Y. (2014).
\newblock {\em Introductory Lectures on Convex Optimization: A Basic Course\/}
  (1 ed.).
\newblock Springer Publishing Company, Incorporated.

\bibitem[\protect\citeauthoryear{Owen}{Owen}{2007}]{owen2007}
Owen, A. (2007, 01).
\newblock A robust hybrid of lasso and ridge regression.
\newblock {\em Contemp. Math.\/}~{\em 443}.

\bibitem[\protect\citeauthoryear{Paul, Chakraborty, and Das}{Paul
  et~al.}{2021}]{paul2021robust}
Paul, D., S.~Chakraborty, and S.~Das (2021).
\newblock {Robust Principal Component Analysis: A Median of Means Approach}.
\newblock {\em arXiv preprint arXiv:2102.03403\/}.

\bibitem[\protect\citeauthoryear{Pedregosa, Varoquaux, Gramfort, Michel,
  Thirion, Grisel, Blondel, Prettenhofer, Weiss, Dubourg, Vanderplas, Passos,
  Cournapeau, Brucher, Perrot, and Duchesnay}{Pedregosa
  et~al.}{2011}]{pedregosa2011scikit-learn}
Pedregosa, F., G.~Varoquaux, A.~Gramfort, V.~Michel, B.~Thirion, O.~Grisel,
  M.~Blondel, P.~Prettenhofer, R.~Weiss, V.~Dubourg, J.~Vanderplas, A.~Passos,
  D.~Cournapeau, M.~Brucher, M.~Perrot, and {\'E}.~Duchesnay (2011).
\newblock Scikit-learn: Machine learning in {P}ython.
\newblock {\em Journal of Machine Learning Research\/}~{\em 12}, 2825--2830.

\bibitem[\protect\citeauthoryear{Prasad, Balakrishnan, and Ravikumar}{Prasad
  et~al.}{2020}]{pmlr-v108-prasad20a}
Prasad, A., S.~Balakrishnan, and P.~Ravikumar (2020, 26--28 Aug).
\newblock {A Robust Univariate Mean Estimator is All You Need}.
\newblock In S.~Chiappa and R.~Calandra (Eds.), {\em Proceedings of the Twenty
  Third International Conference on Artificial Intelligence and Statistics},
  Volume 108 of {\em Proceedings of Machine Learning Research}, pp.\
  4034--4044. PMLR.

\bibitem[\protect\citeauthoryear{Prasad, Suggala, Balakrishnan, and
  Ravikumar}{Prasad et~al.}{2020}]{HeavyTails}
Prasad, A., A.~S. Suggala, S.~Balakrishnan, and P.~Ravikumar (2020).
\newblock Robust estimation via robust gradient estimation.
\newblock {\em Journal of the Royal Statistical Society: Series B (Statistical
  Methodology)\/}~{\em 82\/}(3), 601--627.

\bibitem[\protect\citeauthoryear{Shalev-Shwartz and Tewari}{Shalev-Shwartz and
  Tewari}{2011}]{shalev2011stochastic}
Shalev-Shwartz, S. and A.~Tewari (2011).
\newblock {Stochastic Methods for $\ell_1$-Regularized Loss Minimization}.
\newblock {\em The Journal of Machine Learning Research\/}~{\em 12},
  1865--1892.

\bibitem[\protect\citeauthoryear{Shevade and Keerthi}{Shevade and
  Keerthi}{2003}]{Shevade2003ASA}
Shevade, S.~K. and S.~S. Keerthi (2003).
\newblock A simple and efficient algorithm for gene selection using sparse
  logistic regression.
\newblock {\em Bioinformatics\/}~{\em 19\/}(17), 2246--2253.

\bibitem[\protect\citeauthoryear{Srebro, Sridharan, and Tewari}{Srebro
  et~al.}{2010}]{srebro2010optimistic}
Srebro, N., K.~Sridharan, and A.~Tewari (2010).
\newblock Optimistic rates for learning with a smooth loss.
\newblock {\em arXiv preprint arXiv:1009.3896\/}.

\bibitem[\protect\citeauthoryear{Tu, Liu, Mao, and Chen}{Tu
  et~al.}{2021}]{tu2021variance}
Tu, J., W.~Liu, X.~Mao, and X.~Chen (2021).
\newblock {Variance Reduced Median-of-Means Estimator for Byzantine-Robust
  Distributed Inference}.
\newblock {\em Journal of Machine Learning Research\/}~{\em 22\/}(84), 1--67.

\bibitem[\protect\citeauthoryear{Tukey}{Tukey}{1960}]{tukey1960}
Tukey, J.~W. (1960).
\newblock A survey of sampling from contaminated distributions.
\newblock {\em Contributions to Probability and Statistics\/}, 448--485.

\bibitem[\protect\citeauthoryear{Van~der Vaart}{Van~der
  Vaart}{2000}]{van2000asymptotic}
Van~der Vaart, A.~W. (2000).
\newblock {\em Asymptotic statistics}, Volume~3.
\newblock Cambridge university press.

\bibitem[\protect\citeauthoryear{van Erven, Sachs, Koolen, and Kotlowski}{van
  Erven et~al.}{2021}]{pmlr-v134-vanerven21a}
van Erven, T., S.~Sachs, W.~M. Koolen, and W.~Kotlowski (2021, 15--19 Aug).
\newblock {Robust Online Convex Optimization in the Presence of Outliers}.
\newblock In M.~Belkin and S.~Kpotufe (Eds.), {\em Proceedings of Thirty Fourth
  Conference on Learning Theory}, Volume 134 of {\em Proceedings of Machine
  Learning Research}, pp.\  4174--4194. PMLR.

\bibitem[\protect\citeauthoryear{Vapnik}{Vapnik}{1999}]{vapnik2013nature}
Vapnik, V. (1999).
\newblock {\em The nature of statistical learning theory}.
\newblock Springer science \& business media.

\bibitem[\protect\citeauthoryear{Vardi and Zhang}{Vardi and
  Zhang}{2000}]{Vardi1423}
Vardi, Y. and C.-H. Zhang (2000).
\newblock The multivariate $l_1$-median and associated data depth.
\newblock {\em Proceedings of the National Academy of Sciences\/}~{\em
  97\/}(4), 1423--1426.

\bibitem[\protect\citeauthoryear{Vergara, Vembu, Ayhan, Ryan, Homer, and
  Huerta}{Vergara et~al.}{2012}]{vergara2012chemical}
Vergara, A., S.~Vembu, T.~Ayhan, M.~A. Ryan, M.~L. Homer, and R.~Huerta (2012).
\newblock Chemical gas sensor drift compensation using classifier ensembles.
\newblock {\em Sensors and Actuators B: Chemical\/}~{\em 166}, 320--329.

\bibitem[\protect\citeauthoryear{Wright}{Wright}{2015}]{wright2015coordinate}
Wright, S. (2015, 02).
\newblock Coordinate descent algorithms.
\newblock {\em Mathematical Programming\/}~{\em 151}.

\bibitem[\protect\citeauthoryear{Wu and Lange}{Wu and Lange}{2008}]{wuLange}
Wu, T.~T. and K.~Lange (2008).
\newblock {Coordinate descent algorithms for lasso penalized regression}.
\newblock {\em The Annals of Applied Statistics\/}~{\em 2\/}(1), 224 -- 244.

\bibitem[\protect\citeauthoryear{Zhang and Zhou}{Zhang and
  Zhou}{2018}]{Zhang20181regressionWH}
Zhang, L. and Z.-H. Zhou (2018).
\newblock {$\ell_1$-regression with Heavy-tailed Distributions}.
\newblock {\em ArXiv\/}~{\em abs/1805.00616}.

\bibitem[\protect\citeauthoryear{Zhang}{Zhang}{2004}]{10.1145/1015330.1015332}
Zhang, T. (2004).
\newblock {Solving Large Scale Linear Prediction Problems Using Stochastic
  Gradient Descent Algorithms}.
\newblock In {\em Proceedings of the Twenty-First International Conference on
  Machine Learning}, ICML '04, New York, NY, USA, pp.\  116. Association for
  Computing Machinery.

\bibitem[\protect\citeauthoryear{Zheng and Casari}{Zheng and
  Casari}{2018}]{zheng2018feature}
Zheng, A. and A.~Casari (2018).
\newblock {\em Feature engineering for machine learning: principles and
  techniques for data scientists}.
\newblock ``O'Reilly Media, Inc.''.

\end{thebibliography}
\end{document}